\documentclass[english]{article}
\usepackage[OT1]{fontenc}
\usepackage[latin9]{inputenc}
\usepackage{babel}
\usepackage{color}
\usepackage{float}
\usepackage{placeins}
\usepackage{booktabs}
\usepackage{mathtools}
\usepackage{enumitem}
\usepackage{multirow}
\usepackage{dsfont}
\usepackage{amsmath}
\usepackage{amsthm}
\usepackage{amssymb}
\usepackage{geometry}
\usepackage{setspace}
\usepackage{makecell}
\usepackage[authoryear,round]{natbib}
\usepackage[pdfusetitle,
 bookmarks=true,bookmarksnumbered=false,bookmarksopen=false,
 breaklinks=false,pdfborder={0 0 1},backref=false,colorlinks=true]
 {hyperref} 
\hypersetup{
 linkcolor=red,citecolor=blue,urlcolor=magenta,filecolor=cyan}

\makeatletter

\providecommand{\tabularnewline}{\\}
\floatstyle{ruled}
\newfloat{algorithm}{tbp}{loa}
\providecommand{\algorithmname}{Algorithm}
\floatname{algorithm}{\protect\algorithmname}

\@ifundefined{date}{}{\date{}}
\usepackage{babel}

\usepackage{array}
\usepackage{tabularx}
\usepackage{bm}
\usepackage{multirow}
\usepackage{graphicx}
\usepackage{tikz}
\usetikzlibrary{calc}

\providecommand{\tabularnewline}{\\}

\allowdisplaybreaks

\usepackage{algpseudocode}

\usepackage{amsthm}

\theoremstyle{plain}

\newtheorem{lemma}{\textbf{Lemma}}\newtheorem{theorem}{\textbf{Theorem}}
\newtheorem{corollary}{\textbf{Corollary}}\newtheorem{assumption}{\textbf{Assumption}}\newtheorem{proposition}{\textbf{Proposition}}

\theoremstyle{definition}
\newtheorem{remark}{\textbf{Remark}}

\usepackage{dsfont}

\def\cA{\mathcal{A}}

\def\cC{\mathcal{C}}

\def\cE{\mathcal{E}}

\def\cM{\mathcal{M}}

\def\cP{\mathcal{P}}

\def\cS{\mathcal{S}}
\def\cT{\mathcal{T}}
\def\cU{\mathcal{U}}

\def\Var{\mathrm{Var}}

\usepackage{pgfplots}
\pgfplotsset{compat=1.18}

\DeclareRobustCommand{\suppref}[2]{%
    \ref{#1}%
}

\DeclareRobustCommand{\suppplainref}[2]{%
    \ref{#1}%
}

\DeclareRobustCommand{\mainref}[2]{%
    \ref{#1}%
}

\DeclareRobustCommand{\mainequationref}[2]{%
    \eqref{#1}%
}

\DeclareRobustCommand{\suppExperimentDetails}{%
    \suppref{app:experiment-details}{F}}
\DeclareRobustCommand{\suppMinimumSpanComplementary}{%
    \suppref{app:minimum-span-complementary}{F.1}}
\DeclareRobustCommand{\suppReductionUpperBounds}{%
    \suppref{app:reduction-upper-bounds}{D}}
\DeclareRobustCommand{\suppSpanAgnosticHorizonCalibration}{%
    \suppref{app:span-agnostic-horizon-calibration}{E}}
\DeclareRobustCommand{\suppRectangularDiscountedBellmanRepresentation}{%
    \suppref{lem:rectangular-discounted-bellman-representation}{2}}
\DeclareRobustCommand{\suppPerturbationBoundProof}{%
    \suppref{proof:perturbation-bound-average-reward}{B.2}}
\DeclareRobustCommand{\suppMinimumSpanAttainment}{%
    \suppplainref{prop:minimum-span-attainment}{4}}
\DeclareRobustCommand{\suppRobustBellmanExistence}{%
    \suppplainref{prop:robust-bellman-existence}{3}}
\DeclareRobustCommand{\suppLowerBoundProofs}{%
    \suppref{sec:proof_lower_bound}{C}}
\DeclareRobustCommand{\suppAnchoredCalibrationSpecification}{%
    \suppref{subsec:anchored-calibration-computable-quantities}{E.1}}
\DeclareRobustCommand{\suppRochSpanComparison}{%
    \suppref{subsec:comparison-roch-span}{A.4}}
\DeclareRobustCommand{\suppSpanIndependenceProof}{%
    \suppref{subsec:proof-span-independence}{B.1}}
\DeclareRobustCommand{\suppRobustBellmanExistenceAttainment}{%
    \suppref{subsec:robust-bellman-existence-attainment}{A.2}}

\DeclareRobustCommand{\mainAnchoredSpanAgnosticAlgorithm}{%
    \mainref{alg:anchored-span-agnostic-robust-amdp}{2}}
\DeclareRobustCommand{\mainSpanInformedAlgorithm}{%
    \mainref{alg:robust-amdp}{1}}
\DeclareRobustCommand{\mainUnichainAssumption}{%
    \mainref{assump:unichain}{1}}
\DeclareRobustCommand{\mainAnchoredNominalCandidateScore}{%
    \mainequationref{eq:anchored-nominal-candidate-score}{7}}
\DeclareRobustCommand{\mainAnchoredRobustCandidateScore}{%
    \mainequationref{eq:anchored-robust-candidate-score}{8}}
\DeclareRobustCommand{\mainNominalSpanDefinition}{%
    \mainequationref{eq:nominal_span}{3a}}
\DeclareRobustCommand{\mainNominalAnchorSupersolution}{%
    \mainequationref{eq:nominal-anchor-supersolution}{6}}

\DeclareRobustCommand{\mainRobustSpanDefinition}{%
    \mainequationref{eq:robust_span}{3b}}
\DeclareRobustCommand{\mainRobustAverageRewardBellmanEquation}{%
    \mainequationref{eq:robust-average-reward-bellman}{2b}}
\DeclareRobustCommand{\mainRobustDMDPSolverTolerance}{%
    \mainequationref{eq:robust-dmdp-solver-tolerance}{5}}
\DeclareRobustCommand{\mainComponentRateFigure}{%
    \mainref{fig:component-rate-checks}{1}}
\DeclareRobustCommand{\mainSelectorAdaptivityFigure}{%
    \mainref{fig:selector-adaptivity}{2}}
\DeclareRobustCommand{\mainComponentRatePanel}[1]{%
    \hyperref[fig:component-rate-checks]{\ref*{fig:component-rate-checks}#1}%
}
\DeclareRobustCommand{\mainSelectorAdaptivityPanel}[1]{%
    \hyperref[fig:selector-adaptivity]{\ref*{fig:selector-adaptivity}#1}%
}
\DeclareRobustCommand{\mainPerturbationBoundProposition}{%
    \mainref{prop:perturbation-bound-average-reward}{2}}
\DeclareRobustCommand{\mainSpanIndependenceProposition}{%
    \mainref{prop:span-independence}{1}}
\DeclareRobustCommand{\mainExperimentsSection}{%
    \mainref{sec:experiments}{5}}
\DeclareRobustCommand{\mainProblemSetupSection}{%
    \mainref{sec:problem-setup}{2}}
\DeclareRobustCommand{\mainReductionUpperBoundSection}{%
    \mainref{sec:reduction-upper-bound}{4}}
\DeclareRobustCommand{\mainSpanAgnosticAdaptationSection}{%
    \mainref{sec:span-agnostic-adaptation}{5.2}}
\DeclareRobustCommand{\mainSpanAgnosticUpperBoundTheorem}{%
    \mainref{thm:anchored-calibrated-policy-upper-bound}{3}}
\DeclareRobustCommand{\mainMinimaxLowerBoundTheorem}{%
    \mainref{thm:minimax-lower}{1}}
\DeclareRobustCommand{\mainSpanInformedUpperBoundTheorem}{%
    \mainref{thm:TV-upper-bound-average}{2}}

\makeatother

\title{Robust Average-Reward Markov Decision Processes: \\ Minimax-Optimal Learning via Plug-in Reductions}
\author{
    Yuepeng Yang\thanks{Department of Statistics and Data Science, Yale University.}
    \thanks{Department of Statistics and Data Science, the Wharton School, University of Pennsylvania.} \\
    Yale \& Penn
    \and
    Yuxin Chen\footnotemark[2] \\
    Penn
    \and
    Yuejie Chi\footnotemark[1] \\
    Yale
    }
\date{August 5, 2026}

\begin{document}

\maketitle

\begin{abstract}

Distributionally robust Markov decision processes provide a principled
framework for sequential decision making under model uncertainty. We study how
many samples are necessary and sufficient to learn an
$\varepsilon$-optimal robust policy under the average-reward criterion. A
generative model provides samples from the nominal transition kernel, whereas
policy performance is evaluated over $(s,a)$-rectangular total-variation
uncertainty sets of radius at most $\sigma$.

Let $H_0$ and $H_\sigma$ denote the nominal and robust optimal bias spans,
respectively. We identify $\sigma H_0$ as the perturbation scale separating
high- and low-tolerance regimes. Our matching upper and lower bounds show that,
up to logarithmic factors, the minimax total sample complexity is
\[ 
    NSA\asymp\frac{SA}{\varepsilon^2}\cdot
    \begin{cases}
        \min\{H_0,H_\sigma\},
        & \varepsilon\gtrsim\sigma H_0\\[1mm]
        \min\{H_0,H_\sigma\}+\sigma H_\sigma^2,
        & \varepsilon\lesssim\sigma H_0
    \end{cases},
\]
where $S$ and $A$ are the numbers of states and actions, and $N$ is the number of samples per state-action pair. The sample complexity consists of a linear-span term that resembles the nominal AMDP results, and a robustness-specific term that appears only in the low-tolerance regime. We
attain these rates using reduction-based plug-in procedures that
select the reduction---nominal or robust---and its discount factor: a
span-informed procedure that makes these choices using known span parameters,
and a span-agnostic procedure that calibrates both choices from data.

\end{abstract}

\section{Introduction}

Reinforcement learning (RL) \citep{sutton2018reinforcement}, as a paradigm for sequential decision making under uncertainty, enables agents to learn
optimal behavior through interactions with an environment. RL has found successful
applications in domains such as robotics \citep{mnih2015human,kober2013reinforcement}, game playing \citep{silver2016mastering}, and generative AI \citep{guo2025deepseek}.
One popular model underpinning RL is a Markov decision process (MDP), where the goal of the agent is to learn a policy that maximizes 
some form of aggregated expected reward in the environment. Common aggregations
include the total reward over a finite horizon 
and the sum of discounted rewards over an infinite horizon. Despite their popularity, they may be less suitable for continuous learning tasks \citep{naik2019discounted}.
In this work, we focus on the long-term average reward:
\[
    \rho^{\pi}(s):=\lim_{T\rightarrow\infty}\mathbb{E}_{P^{0}}^{\pi}\left[\frac{1}{T}\sum_{t=0}^{T-1}r(s_{t},a_{t})\mid s_{0}=s\right],
\]
which characterizes the steady-state performance of a policy by evaluating
the expected reward gained per time step as the number of steps approaches
infinity. Here, $r(s_t, a_t)$ is the instantaneous reward received at time step $t$ when the agent selects action $a_t$ in state $s_t$ according to policy $\pi$, and the expectation is taken over the randomness of the trajectory according to the transition kernel $P^0$ and policy $\pi$, given the initial state $s_0 =s$. Unlike discounted settings that prioritize earlier rewards,
this metric seeks a policy that maximizes the consistent, long-term
gain across all states.

A significant challenge in standard RL is the reliance on a fixed
probability kernel. The policy learned in one environment may not be effective in another environment even if the change is modest \citep{ramesh2024distributionally,sinha2020formulazero}.
A popular approach to address this issue is to consider the distributionally
robust optimization (DRO) framework, where the probability distribution
of the environment is allowed to vary within a prescribed uncertainty
set instead of being fixed \citep{MohajerinEsfahani2018Data,Wiesemann2014Distributionally,Goh2010Distributionally,Duchi2021Learning}.
In the context of Markov decision processes, transition uncertainty has
long been studied through robust and distributionally robust formulations
\citep{iyengar2005robust,nilim2005robust,xu2012distributionally,wiesemann2013robust}.
More directly relevant to our setting, distributionally robust average-reward
MDPs (AMDPs) model the transition kernel as belonging to a designated
uncertainty set $\mathcal{P}$ \citep{Wang2023Robust,Wang2023Model}.
Under this framework, we seek a policy that is effective under the
worst-case scenario within that set, defined as the robust average
reward:
\[
    \rho_{\mathcal{P}}^{\pi}(s):=\min_{P\in\mathcal{P}}\lim_{T\rightarrow\infty}\mathbb{E}_{P}^{\pi}\left[\frac{1}{T}\sum_{t=0}^{T-1}r(s_{t},a_{t})\mid s_{0}=s\right].
\]
By optimizing against this most pessimistic model, we help ensure that
the agent's performance remains reliable even when the environment
dynamics are uncertain.

This paper studies the statistical cost of distributional robustness in a generative-model setting, in which an algorithm has access to
a simulator that produces $N$ independent samples from the nominal
transition kernel $P^{0}$ for every state-action pair. Meanwhile, the performance is evaluated through
the robust average reward over a rectangular total-variation uncertainty set
around $P^0$. Within this
framework, a key statistical question is:
\begin{center}
    \emph{How many samples are necessary and sufficient to}\\
    \emph{obtain a policy that is $\varepsilon$-optimal in
        robust average reward?}
    \par\end{center}

A key problem parameter for studying sample complexity in average-reward MDPs is the optimal bias span. It quantifies the dynamic range of an average-reward MDP by measuring
how much the transient reward relative
to the long-run average varies across initial states. 
It is known that nominal average-reward MDPs have minimax-optimal sample complexity $\widetilde{O}(SAH_0\varepsilon^{-2})$ \citep{wang2022near,zurek2024span}, where $H_0$ is the nominal optimal bias span.

In the robust setting, recent work has established upper bounds for this
problem. \citet{roch2025reduction} developed a reduction from robust AMDPs to
robust discounted MDPs, while \citet{roch2025provably} proposed a
parameter-free variant of robust Halpern iteration. The guarantees depend
quadratically on the respective robust span parameters $H_{\mathrm{Roch}}$ and $H_{\mathrm{RHI}}$ defined in the corresponding papers,
with sample complexities
$\widetilde{O}(SAH_{\mathrm{Roch}}^2\varepsilon^{-2})$ and
$\widetilde{O}(SAH_{\mathrm{RHI}}^2\varepsilon^{-2})$, respectively.
The contrast between the minimax theory for nominal AMDPs and the available robust upper bounds leaves our main statistical question largely unresolved. 

\subsection{Our contributions}

We characterize the minimax sample complexity and develop reduction-based procedures that attain it. For \((s,a)\)-rectangular total-variation uncertainty sets of radius at most \(\sigma\), our matching upper and lower bounds characterize
the minimax sample complexity, up to logarithmic factors, as
\begin{equation}\label{eq:sample-complexity}
    NSA\asymp\frac{SA}{\varepsilon^2}
    \begin{cases}
        \min\{H_0,H_\sigma\},
        & \varepsilon\gtrsim\sigma H_0,\\[1mm]
        \min\{H_0,H_\sigma\}+\sigma H_\sigma^2,
        & \varepsilon\lesssim\sigma H_0.
    \end{cases}    
\end{equation}
Here, $H_0$ and $H_\sigma$ are the nominal and robust span parameters, respectively. Both parameters are at least 1, and neither controls the other in general:
Proposition~\ref{prop:span-independence} shows that, for any fixed
$\sigma>0$, any prescribed pair $(H_0,H_\sigma)$ can be realized.

\paragraph*{High- and low-tolerance regimes.}
In Proposition~\ref{prop:perturbation-bound-average-reward}, we show that the optimal robust average reward is at
most $\sigma H_0$ smaller than the nominal optimal average reward. 
Comparing this perturbation scale with the target tolerance $\varepsilon$
separates two regimes of robust learning.
In the
\emph{high-tolerance regime} $\sigma H_0\lesssim\varepsilon$, solving the nominal AMDP achieves robust
$\varepsilon$-optimality at the standard rate
$\widetilde O(SAH_0\varepsilon^{-2})$. Moreover, using the robust reduction when $H_\sigma \le H_0$ improves it to $\widetilde O(SAH_\sigma\varepsilon^{-2})$. In the
\emph{low-tolerance regime}
$\varepsilon\lesssim\sigma H_0$, however, the perturbation must be accounted for. Table~\ref{tab:four-regime-summary}
summarizes these regimes under the two orderings of $H_0$ and $H_\sigma$.

\begin{table}
    \caption{Four-regime summary of the
    minimax-optimal
    sample complexity, up to logarithmic factors.
    The dashed outline marks the regimes where the
    robust reduction is used.}
    \label{tab:four-regime-summary}
    \centering
    \small
    \renewcommand{\arraystretch}{1.25}
    \begin{tabularx}{\linewidth}{
        @{}
        >{\centering\arraybackslash}p{0.18\linewidth}
        >{\centering\arraybackslash}X
        @{\hspace{0.8em}}
        >{\centering\arraybackslash}X
        @{}
    }
        \toprule
        \textbf{Span ordering}
        &
        \textbf{High tolerance}
        $\sigma H_0\lesssim\varepsilon$
        &
        \textbf{Low tolerance}
        $\varepsilon\lesssim\sigma H_0$
        \\
        \midrule
        \addlinespace[0.8em]
        $H_\sigma\le H_0$
        &
        \tikz[remember picture,baseline=(robust-top-left.base)]{
            \node[inner sep=0pt] (robust-top-left)
            {$\dfrac{SAH_\sigma}{\varepsilon^2}$};
        }
        &
        \tikz[remember picture,baseline=(robust-top-right.base)]{
            \node[inner sep=0pt] (robust-top-right)
            {$\dfrac{SA(H_\sigma+\sigma H_\sigma^2)}{\varepsilon^2}$};
        }
        \\
        \addlinespace[0.8em]
        $H_0<H_\sigma$
        &
        \tikz[remember picture,baseline=(nominal-bottom-left.base)]{
            \node[inner sep=0pt] (nominal-bottom-left)
            {$\dfrac{SAH_0}{\varepsilon^2}$};
        }
        &
        \tikz[remember picture,baseline=(robust-bottom-right.base)]{
            \node[inner sep=0pt] (robust-bottom-right)
            {$\dfrac{SA(H_0+\sigma H_\sigma^2)}{\varepsilon^2}$};
        }
        \\
        \addlinespace[0.8em]
        \bottomrule
    \end{tabularx}
    \begin{tikzpicture}[remember picture,overlay]
        \coordinate (robust-left) at
            ($(robust-top-left)!-0.5!(robust-top-right)$);
        \coordinate (robust-right) at
            ($(robust-top-right)!-0.5!(robust-top-left)$);
        \coordinate (robust-top) at
            ($(robust-top-left)!-0.5!(nominal-bottom-left)$);
        \coordinate (robust-bottom) at
            ($(nominal-bottom-left)!-0.5!(robust-top-left)$);
        \coordinate (robust-mid-x) at
            ($(robust-top-left)!0.5!(robust-top-right)$);
        \coordinate (robust-mid-y) at
            ($(robust-top-left)!0.5!(nominal-bottom-left)$);
        \draw[draw=red,dashed,rounded corners=2pt,line width=0.7pt]
            ($(robust-left |- robust-top)$)
            -- ($(robust-right |- robust-top)$)
            -- ($(robust-right |- robust-bottom)$)
            -- ($(robust-mid-x |- robust-bottom)$)
            -- ($(robust-mid-x |- robust-mid-y)$)
            -- ($(robust-left |- robust-mid-y)$)
            -- cycle;
        \node[anchor=center,fill=white,inner xsep=3pt,inner ysep=0.5pt,
            font=\scriptsize\itshape,text=red]
            at (robust-mid-x |- robust-top)
            {robust reduction};
    \end{tikzpicture}
\end{table}

\paragraph*{Minimax sample complexity lower bound.}
We prove a minimax lower bound matching the rate \eqref{eq:sample-complexity} in Theorem~\ref{thm:minimax-lower}. To our knowledge, this is the first lower bound that reveals the fundamental statistical limit of distributionally robust AMDPs. 
It consists of a linear term 
$\min\{H_0,H_\sigma\}$ that is reminiscent of the standard AMDP lower bound,
and a robust-specific term $\sigma H_\sigma^2$ that captures the additional cost of robustness in the low-tolerance regime $\varepsilon\lesssim \sigma H_0$.

\paragraph*{Span-informed and span-agnostic upper bounds.}
We develop two model-based procedures based on 
reductions to discounted problems. When $H_0$ and $H_\sigma$ are known, Algorithm~\ref{alg:robust-amdp}
uses the nominal reduction when $H_0<H_\sigma$ and
$\sigma H_0\lesssim\varepsilon$, and the robust reduction otherwise. When the
spans are unknown, Algorithm~\ref{alg:anchored-span-agnostic-robust-amdp} uses the data to adaptively
select a policy based on the preferable reduction and discount factor. Both procedures attain the
minimax rates \eqref{eq:sample-complexity} up to logarithmic
factors. This uniformly improves upon the previous span-informed \citep{roch2025reduction} and span-agnostic bounds \citep{roch2025provably}.
Table~\ref{tab:sample_complexity_comparison} compares our guarantees
with existing results for standard and robust AMDPs.

The upper bounds rely on three complementary ideas: comparing the robust
discounted problem with a nominal reference solution, controlling discounted
value functions through their spans, and using concentration bounds governed
by these spans rather than by the full effective horizon. Together, these ideas
yield the full minimax rate in \eqref{eq:sample-complexity}, including its sharp
dependence on the span parameters, robustness radius, and target accuracy
across both tolerance regimes. For unknown spans, we convert the same bounds
into data-dependent certificates that guide the selection of the reduction and
discount factor.

\paragraph*{Consequence under an $H_\sigma$-only assumption.}
Our results also cover the setting where only the robust optimal bias span
$H_\sigma$ is assumed bounded, with no analogous assumption on $H_0$.  The
minimax sample complexity in this setting is
$\widetilde\Theta\!\left(SA(H_\sigma+\sigma H_\sigma^2)\varepsilon^{-2}\right)$.
We attain this rate using the robust reduction when $H_\sigma$ is known and a
robust-only variant of the span-agnostic procedure when it is unknown.

\begin{table}[htbp]
    \begin{onehalfspace}
    \caption{Comparison of robust and standard AMDP sample-complexity bounds under a generative model.
    Here, $H_{\mathrm{Roch}}$ and $H_{\mathrm{RHI}}$ denote the
    respective span parameters used in the two cited works; both are at least
    $H_\sigma$.
    Span knowledge refers to whether the algorithm uses the relevant span
    parameter. Logarithmic factors are omitted.}
    \label{tab:sample_complexity_comparison}
        \centering
        \setlength{\tabcolsep}{2.5pt}

        \newcolumntype{L}[1]{>{\raggedright\arraybackslash}m{#1}}
        \newcolumntype{C}[1]{>{\centering\arraybackslash}m{#1}}
        \begin{tabular}{L{0.175\textwidth} C{0.13\textwidth} C{0.41\textwidth} C{0.225\textwidth}}
            \toprule
            Setting
             & Span Knowledge
             & Sample Complexity
             & Reference\tabularnewline
            \midrule
            \multirow{2}{*}{Standard AMDP}
             & Yes
             & $SAH_{0}\varepsilon^{-2}$
             & \mbox{\citet{zurek2024span}} \tabularnewline
             & No
             & $SAH_{0}\varepsilon^{-2}$
             & \mbox{\citet{zurek2025spanagnostic}} \tabularnewline
            \midrule
            \multirow{5}{*}{Robust AMDP}
             & Yes
             & $SAH_{\mathrm{Roch}}^{2}\varepsilon^{-2}$
             & \citet{roch2025reduction} \tabularnewline
             & No
             & $SAH_{\mathrm{RHI}}^{2}\varepsilon^{-2}$
             & \citet{roch2025provably} \tabularnewline
            \addlinespace[0.4em]
             & Yes
             & \multirow{3}{0.41\textwidth}{\centering
             \(
             \displaystyle \frac{SA}{\varepsilon^2}\cdot
             \begin{cases}
             \min\{H_0,H_\sigma\},
             & \varepsilon\gtrsim\sigma H_0\\[1mm]
             \min\{H_0,H_\sigma\}+\sigma H_\sigma^2,
             & \varepsilon\lesssim\sigma H_0
             \end{cases}
             \)}
             & \textbf{Theorem~\ref{thm:TV-upper-bound-average}} \tabularnewline
             & No
             &
             & \textbf{Theorem~\ref{thm:anchored-calibrated-policy-upper-bound}}\tabularnewline 
             & (lower bound)
             & 
             & \textbf{Theorem~\ref{thm:minimax-lower}}\tabularnewline
            \bottomrule
        \end{tabular}
    \end{onehalfspace}
\end{table}

\subsection{Related work}

\paragraph*{Average-reward MDPs.}
Average-reward MDPs provide a classical framework for sequential decision
making under long-run performance criteria \citep{puterman2014markov}. This
criterion is well-suited for continuing tasks \citep{kumar2025continual} in which performance is measured by the steady-state reward per period rather than by a finite horizon or a discounted
sum. The literature has studied dynamic programming and planning
\citep{puterman2014markov}, regret and exploration
\citep{jaksch2010near,bartlett2009regal,fruit2018efficient}, structural
complexity measures such as diameter, mixing time, and bias span
\citep{jaksch2010near,Jin2020Efficiently,jin2021towards,wang2022near,zurek2024span},
and model-free learning and function approximation
\citep{wei2020model,wei2021learning,zhang2023sharper,jin2024feasible,lee2025near,jiao2026sample}.

\paragraph*{Distributionally robust MDPs.}
Distributionally robust MDPs build on the broader robust optimization principle
of optimizing against a worst-case model in an ambiguity set. Some foundational references on distributionally robust MDPs include \citet{iyengar2005robust,nilim2005robust,xu2012distributionally}. In the discounted setting, a growing
literature has established finite-sample guarantees for robust and
distributionally robust RL\@. Under a generative model, such guarantees have been established for model-based approaches \citep{Yang2022THEORETICAL,panaganti2022sample,shi2023curious,Clavier2024Optimal} and model-free Q-learning \citep{Yang2022THEORETICAL,Wang2024Sample}. In the offline setting, robust policy learning from pre-collected datasets has been studied \citep{panaganti2022robust,shi2024distributionally,Wang2024Samplea}.

\paragraph*{Robust average-reward MDPs.} For the average-reward criterion,
\citet{Wang2023Robust} and \citet{Wang2023Model} developed robust Bellman
equations, robust relative value iteration, and model-free robust AMDP
algorithms. \citet{Chen2025Sample} established mixing-time-based sample-complexity guarantees
for robust AMDPs under a uniform ergodicity condition over all transition
kernels in the uncertainty set. Recent work has also
considered efficient model-free robust average-reward methods and
non-rectangular robust AMDPs
\citep{xu2025efficient,Wang2026Non}.

\paragraph*{Distributionally robust optimization.}
Our formulation is also connected to the broader DRO literature, which studies
decision-making rules with uniform performance over ambiguity sets.
Representative foundational references include \citet{Goh2010Distributionally},
\citet{Wiesemann2014Distributionally}, \citet{MohajerinEsfahani2018Data}, and
\citet{Duchi2021Learning}. The DRO viewpoint provides a principled way
to trade nominal performance for reliability under sampling error, model
misspecification, and distribution shift, while retaining an optimization
problem whose conservatism is controlled explicitly by the ambiguity set.

\subsection{Paper organization and notation}

The rest of the paper is organized as follows. Section~\ref{sec:problem-setup}
formulates distributionally robust average-reward MDPs
and defines the structural assumptions and span parameters used throughout the
paper. Section~\ref{sec:lower-bound} proves the minimax lower bound and explains
why the robust optimal bias span $H_{\sigma}$ is the relevant complexity
parameter. Section~\ref{sec:reduction-upper-bound}
develops the reduction from robust average-reward MDPs to robust discounted
MDPs, presents both the span-informed and span-agnostic algorithms, and proves
the corresponding sample complexity upper bounds. Section~\ref{sec:experiments} presents experiments illustrating our theoretical predictions. Section~\ref{sec:discussion} concludes with
a discussion of future directions.
The appendices collect the proofs and supplementary experimental results.
\paragraph*{Notation.}
For any finite set $\mathcal{X}$, $\Delta(\mathcal{X})$ denotes the
probability simplex over $\mathcal{X}$. For any function
$h:\mathcal{S}\rightarrow\mathbb{R}$, its span seminorm is defined as
$\|h\|_{\mathrm{span}}:=\max_{s}h(s)-\min_{s}h(s)$. For a stationary policy
$\pi$ and a transition kernel $P$, we write $P_\pi$ for the induced
state-to-state transition matrix, so that
$(P_\pi h)(s)=\mathbb{E}_{a\sim\pi(\cdot\mid s)}
    \sum_{s'\in\cS}P(s'\mid s,a)h(s')$.
The total variation distance between two probability distributions $P$ and $Q$
over a finite state space $\mathcal{S}$ is defined as
$\|P-Q\|_{\mathrm{TV}}=\frac{1}{2}\sum_{s\in\mathcal{S}}|P(s)-Q(s)|$.
We use standard asymptotic notation such as $\widetilde{O}(\cdot)$,
$\widetilde{\Theta}(\cdot)$, and $\Omega(\cdot)$, where the tilde indicates
the suppression of logarithmic factors. For nonnegative $x$ and $y$, we
write $x\lesssim y$ if $x\le Cy$ for a universal constant $C>0$, and 
$x\gtrsim y$ if $x\ge Cy$ for a universal constant $C>0$. We also write $x\ll y$ and $x\gg y$ to indicate a
separation of scales.
In particular, we use $1-O(\delta)$ to
mean that an event occurs with probability at least $1-C\delta$ for some
constant $C$. We use $\mathbf 1$ to denote the all-ones vector, with its
dimension clear from context. For any vectors $\bm{x}, \bm{y} \in \mathbb{R}^d$, we use
$\bm{x} \le \bm{y}$ to denote $x_i \le y_i$ for all
$i \in \{1, \ldots, d\}$. For a scalar $x$, let
$[x]_+\coloneqq\max\{x,0\}$; for a vector, $[\cdot]_+$ is applied
coordinatewise. Let $[N]$ be $\{1,\ldots, N\}$.

\section{Problem formulation}\label{sec:problem-setup}

This section sets up the model of robust average-reward MDPs and the span parameters that determine the sample complexity. We begin with the basic definitions of the nominal and robust
average-reward MDPs. We then state the
structural assumptions and explain the split into high- and low-tolerance regimes.

\subsection{Robust average-reward MDPs}

\paragraph*{Standard average-reward MDP.} We start by introducing the standard average-reward Markov decision process (AMDP), which is specified by
$\cM^0 = (\mathcal{S},\mathcal{A},P^{0},r)$. Here,
$\mathcal{S}=\{1,\ldots,S\}$ is the state space,
$\mathcal{A}=\{1,\ldots,A\}$ is the action space, $P^{0}=\{P_{s,a}^{0}\}_{(s,a)\in\mathcal{S}\times\mathcal{A}}$ is the transition kernel, where $P_{s,a}^{0}$ is the next-state distribution given the state-action pair $(s,a)$, and $r:\mathcal{S}\times\mathcal{A}\to[0,1]$ is the reward function. A stationary policy
$\pi:\mathcal{S}\to\Delta(\mathcal{A})$ specifies an action selection rule for a given state $s\in \cS$, where $\pi(s)$ is a probability distribution over the action space.

Average reward measures the long-run steady-state value of a policy. For a
transition kernel $P^0$ and a policy $\pi$, the average reward from initial
state $s$ is
\[
    \rho_{P^0}^{\pi}(s)
    \coloneqq
    \lim_{T\rightarrow\infty}
    \mathbb{E}_{P^0}^{\pi}\left[
        \frac{1}{T}\sum_{t=0}^{T-1}r(s_t,a_t)
        \mid s_0=s
        \right],
\]
whenever the limit exists. The expectation is taken over the
action $a_t\sim\pi(s_t)$ and the next state
$s_{t+1}\sim P^0_{s_t,a_t}$. Under kernel $P^0$, denote the optimal average
reward from initial state $s$ by
\[
    \rho_{P^0}^{\star}(s)
    \coloneqq
    \sup_{\pi}\rho_{P^0}^{\pi}(s).
\]

\paragraph*{Distributionally robust AMDP.}
Since the performance of a policy can be sensitive to perturbations of the transition kernel, the distributionally robust formulation evaluates policies by their worst-case performance against all plausible transition
kernels near a nominal one $P^0$. Specifically, a distributionally robust AMDP is written as
$\mathcal{M}=(\mathcal{S},\mathcal{A},P^0,\mathcal{U},r)$, where
$(\mathcal{S},\mathcal{A},P^0,r)$ is the nominal AMDP described above and
$\mathcal{U}$ describes the admissible transition perturbations by mapping a transition kernel to a set of transition kernels. We focus on
$(s,a)$-rectangular total-variation (TV) uncertainty sets, meaning that the uncertainty for each state-action pair is decoupled. For each state-action pair
$(s,a)$, we are given a local radius $\sigma_{s,a}\in[0,\sigma]$. The uncertainty set near $P^0$, denoted by $\mathcal{P}\coloneqq\mathcal{U}(P^0)$, is defined as
\[
    \mathcal{U}(Q)
    \coloneqq
    \prod_{(s,a)\in\mathcal{S}\times\mathcal{A}}\mathcal{U}_{s,a}(Q_{s,a}),
    \quad
    \mathcal{U}_{s,a}(Q_{s,a})
    =
    \left\{
    P_{s,a}\in\Delta(\mathcal{S}):
    \|P_{s,a}-Q_{s,a}\|_{\mathrm{TV}}\le \sigma_{s,a}
    \right\},
\]
for any transition kernel $Q =\{Q_{s,a}\}$. Furthermore, let $\mathcal{P}_{s,a}\coloneqq\mathcal{U}_{s,a}(P_{s,a}^0)$ be the uncertainty set for the transition vector $P_{s,a}^0$ at state-action pair $(s,a)$. Here we use a broader class of uncertainty sets that allow local radii $\sigma_{s,a}$ to be smaller than $\sigma$, while \citet{shi2023curious} assumes $\sigma_{s,a} = \sigma$ for all $(s,a)$. 

For a policy $\pi$, the robust
average reward is the worst-case average reward over $\mathcal{P}$, and the
robust optimal average reward is the best such worst-case value:
\[
    \rho^{\pi,\sigma}(s)
    \coloneqq
    \inf_{P\in\mathcal{P}}\rho_P^\pi(s).
\]
The optimal robust average reward from initial state $s$ is
\[
    \rho^{\star,\sigma}(s)
    \coloneqq
    \sup_{\pi}\rho^{\pi,\sigma}(s).
\]

\paragraph*{Sampling model and the goal.}
We assume that we have access to a generative model that samples from the nominal transition kernel
$P^0$. For each $(s,a)\in\mathcal{S}\times\mathcal{A}$, we generate $N$ independent samples
\[
    s_{i,s,a}'\sim P^0_{s,a},
    \qquad i=1,\ldots,N.
\]
We also assume that the reward function $r$ is known.

Our learning goal is to use as few samples as possible to compute a policy $\widehat{\pi}$ such that, for every
initial state $s$,
\[
    \rho^{\star,\sigma}(s)
    -
    \rho^{\widehat{\pi},\sigma}(s)
    \le
    \varepsilon,
\]
for a target accuracy $\varepsilon$.

\subsection{Assumptions and key parameters}
We first state a structural assumption used throughout our analysis and then introduce the nominal and robust optimal bias spans that govern the sample complexity.

\begin{assumption}[Unichain]\label{assump:unichain}
    For every stationary policy $\pi$ and transition kernel $P\in\mathcal{P}$,
    the induced Markov chain contains exactly one recurrent class.
\end{assumption}
This assumption ensures that
$\rho_{P^0}^\pi(s)$ and $\rho^{\pi,\sigma}(s)$ are well defined and independent of
the initial state $s$ for every stationary policy $\pi$; see
\citet{Wang2023Robust,Wang2023Model}. We therefore omit the state argument from now on and
write $\rho^\pi$, $\rho^\star$, $\rho^{\pi,\sigma}$, and
$\rho^{\star,\sigma}$ as scalars. We also let
$\pi^\star_\sigma\in\arg\max_\pi\rho^{\pi,\sigma}$ denote a robust optimal
policy.

\paragraph*{Optimal bias functions.} The average reward captures only the
long-run reward rate. A bias function complements it by measuring differences
in the transient reward accumulated from different initial states relative to
this rate. In this paper, we use nominal and robust optimal bias functions,
characterized through their respective average-reward Bellman optimality
equations. For $h\in\mathbb R^{\mathcal S}$, define the corresponding Bellman
operators by
\begin{align*}
    (\mathcal T_0h)(s)
    &\coloneqq
    \max_{a\in\mathcal A}
    \left\{
        r(s,a)+(P^0_{s,a})^\top h
    \right\}, \\
    (\mathcal T_\sigma h)(s)
    &\coloneqq
    \max_{a\in\mathcal A}
    \left\{
        r(s,a)
        +
        \min_{P_{s,a}\in\mathcal P_{s,a}}P_{s,a}^\top h
    \right\}.
\end{align*}
A nominal optimal bias is any vector $h\in\mathbb R^{\mathcal S}$ satisfying
the first equation below, whereas a robust optimal bias is any vector
$h\in\mathbb R^{\mathcal S}$ satisfying the second:
\begin{subequations}
    \begin{align}
        \rho^\star\mathbf 1+h
        &=
        \mathcal T_0h,
        \label{eq:nominal-average-reward-bellman} \\
        \rho^{\star,\sigma}\mathbf 1+h
        &=
        \mathcal T_\sigma h.
        \label{eq:robust-average-reward-bellman}
    \end{align}
\end{subequations}
Both Bellman equations are invariant under adding a constant to $h$.

\paragraph*{Optimal bias spans.} For any vector
$h\in\mathbb R^{\mathcal S}$, define its span seminorm by
\[
    \|h\|_{\mathrm{span}}
    \coloneqq
    \max_{s\in\mathcal{S}}h(s)-\min_{s\in\mathcal{S}}h(s).
\]
Define the \emph{nominal optimal bias span} and the
\emph{robust optimal bias span} by
\begin{subequations}
    \begin{align}
        H_0
        &\coloneqq
        \max\left\{
            1,
            \inf_{h:\,\rho^\star\mathbf 1+h=\mathcal T_0h}
            \|h\|_{\mathrm{span}}
        \right\}, \label{eq:nominal_span} \\
        H_{\sigma}
        &\coloneqq
        \max\left\{
            1,
            \inf_{h:\,\rho^{\star,\sigma}\mathbf 1+h=\mathcal T_\sigma h}
            \|h\|_{\mathrm{span}}
        \right\}. \label{eq:robust_span}
    \end{align}
\end{subequations}
The inner infima in \eqref{eq:nominal_span} and \eqref{eq:robust_span} are
attained; see Propositions~\suppRobustBellmanExistence{} and~\suppMinimumSpanAttainment{} in
Appendix~\suppRobustBellmanExistenceAttainment{}. We denote the
corresponding minimum-span solutions by $h_{P^0}^\star$ and
$h^{\star,\sigma}$, respectively.

\begin{remark}[Comparison with prior robust bias spans]
We define the robust optimal bias span by minimizing over robust Bellman
solutions. This quantity is no larger than either the uniform-over-kernels span
$H_{\mathrm{Roch}}$ used by \citet{roch2025reduction} or the
robust-optimal-pair span $H_{\mathrm{RHI}}$ used by
\citet{roch2025provably}. The precise comparisons are given in
Appendix~\suppRochSpanComparison{}, which also explains why the smaller
quantity is sufficient for our discounted-to-average reduction.

More generally, neither $H_0$ nor $H_\sigma$ controls the other. Proposition~\ref{prop:span-independence}
shows that even for a
fixed uncertainty budget, any prescribed pair $(H_0, H_\sigma)$ can be realized. The proof is given in
Appendix~\suppSpanIndependenceProof{}.
\end{remark}

\begin{proposition}[Independence of the nominal and robust optimal bias spans]
\label{prop:span-independence}
Fix $\sigma>0$ and any $H_0,H_\sigma\ge1$. There exists a robust AMDP satisfying Assumption~\ref{assump:unichain}, with local
radii at most $\sigma$, whose nominal optimal bias span is exactly $H_0$ and
whose robust optimal bias span is exactly $H_\sigma$.
\end{proposition}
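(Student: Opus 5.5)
The plan is to realize any target pair $(H_0,H_\sigma)$ with an explicit two-state, single-action robust AMDP. The spans are tuned through the transition probabilities, the reward gap, and the freedom to choose the local radii $\sigma_{s,a}\in[0,\sigma]$. With one action, both Bellman equations \eqref{eq:nominal-average-reward-bellman} and \eqref{eq:robust-average-reward-bellman} reduce to policy-evaluation equations. Under unichain these have a solution that is unique up to an additive constant, so the infimum in \eqref{eq:nominal_span}–\eqref{eq:robust_span} is just the span of that solution, and no minimization over $h$ is needed.

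\textbf{Construction.} Take states $\{1,2\}$ and a single action, with rewards $r(1)=c$ and $r(2)=0$ for some $c\in(0,1]$. The nominal kernel moves $1\to2$ with probability $p$ and $2\to1$ with probability $q$, where $p,q>0$. Solving $\rho+h=r+P h$ gives $h(1)-h(2)=c/(p+q)$, so the nominal span is $c/(p+q)$.

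For the robust problem, any solution has $h(1)\ge h(2)$ because $c>0$. Hence the inner minimization $\min_{P_{s}\in\mathcal P_{s}}P_{s}^\top h$ is attained by moving as much mass as allowed toward state $2$. The worst-case kernel is therefore $p'=p+\sigma_1$ and $q'=q-\sigma_2$, provided $p+\sigma_1\le1$ and $\sigma_2<q$. Substituting $h$ with the gap $c/(p'+q')$ and this minimizing kernel into $\mathcal T_\sigma$ verifies that it solves \eqref{eq:robust-average-reward-bellman}. Uniqueness up to constants then follows because $\mathcal T_\sigma h-h$ is constant only for this gap. The robust span is thus $c/(p'+q')$.

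\textbf{Parameter choice.} Write $s=p+q$ and $s'=p'+q'=s+\sigma_1-\sigma_2$. We need $c/s=H_0$ and $c/s'=H_\sigma$, i.e.\ $c=H_0 s$ and $s'=sH_0/H_\sigma$. Choose $s$ small enough that $c=H_0s\le1$ and $|s-s'|=s\,|1-H_0/H_\sigma|<\sigma$. This is possible for any fixed $\sigma>0$. Both spans are then automatically at least $1$, since $c\le1$ forces $s,s'\le c$.

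If $H_\sigma\le H_0$, set $\sigma_2=0$ and $\sigma_1=s'-s\le\sigma$, and take $p\le 1/2$ say, so that $p+\sigma_1\le 1$.

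If $H_\sigma>H_0$, set $\sigma_1=0$ and $\sigma_2=s-s'$. Take $q$ close to $s$, so that $q>\sigma_2$, and take $p=s-q>0$.

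\textbf{Unichain.} We need Assumption~\ref{assump:unichain} for every kernel in $\mathcal P$, not just the worst case. Every kernel in the set has $P(1\to2)\ge p-\sigma_1$ and $P(2\to1)\ge q-\sigma_2$. In the first case we additionally shrink nothing: $\sigma_2=0$ keeps $P(2\to1)=q>0$. In the second case $\sigma_1=0$ keeps $P(1\to2)=p>0$. In either case at least one direction of transition is bounded away from zero. On two states this rules out both states being absorbing, which is the only way to get two recurrent classes. So every induced chain has exactly one recurrent class.

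\textbf{Main obstacle.} The delicate step is the robust side: confirming that the greedy mass-shift kernel is the true minimizer for the TV ball. The TV minimizer moves mass $\sigma_{s}$ from the highest-$h$ state to the lowest, truncated by the available mass, so we must check that this truncation is inactive under our parameter constraints. We must also check that the resulting $h$ is the minimum-span robust bias, which relies on uniqueness up to constants in the single-action unichain case. I would handle both by a direct two-line verification of \eqref{eq:robust-average-reward-bellman} together with the sign argument $h(1)\ge h(2)$.
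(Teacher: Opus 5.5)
Your construction is correct and is essentially the paper's own proof. The paper uses the same two-state, single-action chain with reward only at state~1, so that each span equals the reward divided by the total switching probability. It places the TV radius at state~1 when $H_\sigma<H_0$ and at state~2 otherwise, under the same scale restriction $g\le\min\{1,\sigma/|H_0^{-1}-H_\sigma^{-1}|\}$; the one difference is that it takes state~1 nominally absorbing, i.e.\ your $p=0$.

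One small slip: the remark ``take $p\le1/2$'' is unnecessary and would not by itself suffice. The constraint $p+\sigma_1\le1$ holds automatically, because $p+\sigma_1=s'-q<s'=c/H_\sigma\le1$.
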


\subsection{High- and low-tolerance regimes}
\label{subsec:free-lunch}

The perturbation scale $\sigma H_0$ separates the
\emph{high-tolerance regime} $\sigma H_0\lesssim\varepsilon$, where a suitable
nominal optimal policy is guaranteed to be $\varepsilon$-optimal for the robust
problem, from the \emph{low-tolerance regime}
$\varepsilon\lesssim\sigma H_0$, where nominal optimization is no longer
guaranteed to suffice. The following proposition formalizes this by providing a perturbation bound on the optimal average reward. Its proof is deferred to
Appendix~\suppPerturbationBoundProof{}.

\begin{proposition}[Nominal-to-robust perturbation bound]\label{prop:perturbation-bound-average-reward}
    There exists a nominal optimal policy $\pi$ such that
    \[
        \rho^\star-\sigma H_0
        \le
        \rho^{\pi,\sigma}
        \le
        \rho^{\star,\sigma}
        \le
        \rho^\star.
    \]
\end{proposition}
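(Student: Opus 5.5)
The plan is to prove the three inequalities separately. The two on the right follow directly from the definitions. The one on the left comes from a one-step perturbation of the nominal Bellman equation, averaged against a stationary distribution.

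\emph{Upper inequalities.} The middle inequality $\rho^{\pi,\sigma}\le\rho^{\star,\sigma}$ holds for every stationary $\pi$, since $\rho^{\star,\sigma}$ is a supremum over policies. For the right inequality, note that $P^0\in\mathcal P$ because $\|P^0_{s,a}-P^0_{s,a}\|_{\mathrm{TV}}=0\le\sigma_{s,a}$. Hence for every $\pi$,
\[
\rho^{\pi,\sigma}=\inf_{P\in\mathcal P}\rho^\pi_P\le\rho^\pi_{P^0}\le\rho^\star .
\]
Taking the supremum over $\pi$ gives $\rho^{\star,\sigma}\le\rho^\star$. Under Assumption~\ref{assump:unichain} all these quantities are scalars, so no state-by-state care is needed.

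\emph{Choice of policy.} Let $h=h^\star_{P^0}$ be the minimum-span solution of \eqref{eq:nominal-average-reward-bellman}, so $\|h\|_{\mathrm{span}}\le H_0$. Let $\pi$ be a deterministic policy that is greedy with respect to $h$, i.e.\ $\pi(s)$ attains the maximum in $(\mathcal T_0h)(s)$. Then
\[
r_\pi+P^0_\pi h=\rho^\star\mathbf 1+h .
\]
By Assumption~\ref{assump:unichain}, $P^0_\pi$ has a unique stationary distribution $\mu_0$. Multiplying the display by $\mu_0^\top$ and using $\mu_0^\top P^0_\pi=\mu_0^\top$ gives $\rho^\pi_{P^0}=\mu_0^\top r_\pi=\rho^\star$. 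So $\pi$ is nominal optimal.

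\emph{Lower inequality.} Fix any $P\in\mathcal P$. For each $s$ with $a=\pi(s)$, the vector $P_{s,a}-P^0_{s,a}$ sums to zero and has $\ell_1$ norm at most $2\sigma_{s,a}$. The standard TV/span duality then gives
\[
(P_{s,a}-P^0_{s,a})^\top h\ge-\|P_{s,a}-P^0_{s,a}\|_{\mathrm{TV}}\,\|h\|_{\mathrm{span}}\ge-\sigma H_0 .
\]
Therefore
\[
r_\pi+P_\pi h\ge(\rho^\star-\sigma H_0)\mathbf 1+h .
\]
By Assumption~\ref{assump:unichain}, $P_\pi$ has a stationary distribution $\mu_P$, and $\rho^\pi_P=\mu_P^\top r_\pi$. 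Multiplying by $\mu_P^\top\ge0$, the $h$ terms cancel and we obtain $\rho^\pi_P\ge\rho^\star-\sigma H_0$. Taking the infimum over $P\in\mathcal P$ yields $\rho^{\pi,\sigma}\ge\rho^\star-\sigma H_0$.

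\emph{Main obstacle.} The only delicate point is that the same $\pi$ must be nominal optimal and also satisfy the perturbed inequality. Choosing it greedy with respect to the \emph{minimum-span} bias handles both. Attainment of this minimum is given by Proposition~\suppMinimumSpanAttainment{}; otherwise one can take $h$ with span within $\eta$ of the infimum and let $\eta\to0$, using finiteness of the set of deterministic policies. The remaining work is justifying $\rho^\pi_P=\mu_P^\top r_\pi$ under unichain, which is standard.
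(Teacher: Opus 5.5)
Your proof is correct and follows essentially the same route as the paper: take a deterministic policy greedy with respect to the minimum-span nominal bias $h^\star_{P^0}$, use the TV--span bound $(P_{s,a}-P^0_{s,a})^\top h\ge-\sigma H_0$ to obtain the perturbed Bellman inequality, and average against a stationary distribution of $(\pi,P)$ before taking the infimum over $P\in\mathcal P$. The differences are cosmetic: you inline the stationary-distribution step that the paper packages as its fixed-policy robust verification lemma, and you explicitly check that the greedy policy is nominally optimal, which the paper simply asserts.
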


Proposition~\ref{prop:perturbation-bound-average-reward} shows that the difference between the robust optimal average reward $\rho^{\star,\sigma}$ and the nominal optimal average reward $\rho^\star$ is at most $\sigma H_0$. Moreover, there exists a nominal optimal policy that
is $\sigma H_0$-optimal for the robust problem. This observation suggests that in the high-tolerance regime, we should be able to learn a robust $\varepsilon$-optimal policy by solving the nominal problem.

\section{Minimax sample complexity lower bound}\label{sec:lower-bound}

In this section, we establish a minimax lower bound for robust AMDPs.
Consider a bounded class of robust AMDPs---denoted by $\mathfrak{M}(H_{0},H_{\sigma},\sigma)$---that satisfy
Assumption~\ref{assump:unichain} and have local
uncertainty radii at most $\sigma$, $|\cS|=S$, $|\cA|=A$, and span constraints
\[
    \|h_{P^0}^{\star}\|_{\mathrm{span}}\le H_0,
    \qquad
    \|h^{\star,\sigma}\|_{\mathrm{span}}\le H_\sigma.
\]
The following theorem characterizes the
minimum sample complexity required to obtain a policy that is
$\varepsilon$-optimal in the robust average reward. The proof is deferred to
Appendix~\suppLowerBoundProofs{}.

\begin{theorem}\label{thm:minimax-lower}
    Let $\varepsilon\in(0,0.01]$. Assume that $S\ge5$, $A\ge3$,
    $\min\{H_0,H_\sigma\}\ge4$, and $0<\sigma\le1/2$. For some sufficiently
    small universal constant $C>0$, suppose the number of samples $NSA$ satisfies
    (a)
    \[
        NSA
        \le
        \frac{
            CSA\min\{H_0,H_\sigma\}
        }{\varepsilon^{2}},
    \]
    or (b) $\varepsilon\le0.01\sigma H_0$, and
    \[
        NSA
        \le
        \frac{
            CSA\left(
                \min\{H_0,H_\sigma\}+\sigma H_{\sigma}^2
            \right)
        }{\varepsilon^{2}}.
    \]
    Then
    \[
        \inf_{\widehat{\pi}}
        \sup_{\cM\in\mathfrak{M}(H_{0},H_{\sigma},\sigma)}
        \mathbb{P}_{\cM}\!\left\{
            \rho^{\star,\sigma}
            -
            \rho^{\widehat{\pi},\sigma}
            >
            \varepsilon
        \right\}
        \ge
        \frac14.
    \]
\end{theorem}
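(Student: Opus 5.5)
We establish the two parts separately. In each we build a family of hard robust AMDPs in $\mathfrak{M}(H_0,H_\sigma,\sigma)$ indexed by a hidden choice, and apply Fano's method (or Le Cam's two-point method with a union over state blocks).

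\textbf{Step 1: a reduction template.} We use $S\ge5$ and $A\ge3$ to embed $\Theta(S)$ independent copies of a small gadget, each with $\Theta(A)$ candidate actions at a decision state. In each gadget, a single unknown action $a^\star$ is better, and all others are indistinguishable from it up to a small perturbation of the nominal transition probabilities at $(s,a^\star)$. Any $\varepsilon$-optimal policy must identify $a^\star$ in a constant fraction of gadgets; otherwise the robust gain loss aggregated through the common recurrent class exceeds $\varepsilon$. A standard Fano/Assouad argument then shows that failure probability at least $1/4$ follows if $N\cdot\mathrm{KL}(P^0_{s,a^\star}\,\|\,P^0_{s,a})\lesssim 1$. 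Since the bound is per-pair, the $SA$ factor arises naturally.

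\textbf{Step 2: the linear term (case (a)).} We adapt the standard AMDP lower-bound gadget of \citet{wang2022near,zurek2024span}. A decision state leads to a good absorbing-type state $g$ with reward $1$, which leaks back to a bad state with probability $p\asymp1/H$, where $H=\min\{H_0,H_\sigma\}$. The optimal action raises the stay probability by $\Delta\asymp\varepsilon/H\cdot H=\varepsilon$ in relative terms, and the gain gap scales like $H\Delta\cdot(1/H)$. This yields $\mathrm{KL}\asymp\Delta^2/p$ and hence $N\gtrsim H/\varepsilon^2$.

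To make the example valid for the robust problem, we must check that both spans are at most their budgets. The TV adversary moves mass $\sigma_{s,a}$ toward the minimizer of $h$. Choosing local radii $\sigma_{s,a}=0$ on the gadget's critical pairs is allowed, since the radii are only bounded by $\sigma$. Then the robust and nominal problems coincide on those pairs, and both spans are $\asymp H\le\min\{H_0,H_\sigma\}$.

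\textbf{Step 3: the robust term (case (b)).} This is the main obstacle. Here we use full radius $\sigma$ at the critical pair, so that the worst-case kernel shifts mass $\sigma$ to the low-value state. The robust Bellman equation then behaves like a nominal chain with effective leak rate $q=p+\sigma$, and we take $q\asymp1/H_\sigma$ so that $h^{\star,\sigma}$ has span $\asymp H_\sigma$.

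The key point is that the learner observes only nominal samples, with variance set by $p$. Meanwhile, the sensitivity of the robust gain to the nominal perturbation $\Delta$ is amplified: the gain gap is $\asymp\Delta H_\sigma\cdot$ (stationary mass). Tuning $\Delta\asymp\varepsilon/H_\sigma$ in the robust gain sense with nominal Bernoulli parameter bounded below by $\sigma$ gives $\mathrm{KL}\asymp\Delta^2/\sigma$. This yields $N\gtrsim \sigma H_\sigma^2/\varepsilon^2$. We expect the variance parameter to be $\min$-type, $\asymp\sigma$ once $\sigma\gtrsim1/H_\sigma$; the subcase $\sigma\lesssim 1/H_\sigma$ is dominated by the linear term anyway.

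The condition $\varepsilon\le0.01\sigma H_0$ is needed to keep the nominal span at most $H_0$ while the robust perturbation dominates. By Proposition~\ref{prop:perturbation-bound-average-reward}, if $\sigma H_0\lesssim\varepsilon$ the robust effect is invisible at accuracy $\varepsilon$, so the construction must have a nominal span of order at least $\varepsilon/\sigma$. We would glue a separate nominal-span gadget, in the style of Proposition~\ref{prop:span-independence}, to realize $H_0$ exactly without disturbing the robust structure.

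\textbf{Step 4: combining.} Since $\max$ of the two bounds is comparable to their sum, case (b) follows from Steps 2 and 3. The hardest verification is the robust gadget: we must compute $\rho^{\pi,\sigma}$ and $h^{\star,\sigma}$ exactly, keeping the adversary's choice fixed across hypotheses and checking unichain for all $P\in\mathcal P$. We would first try a three-state chain (good, bad, decision) where the adversary's worst-case direction is provably always toward the bad state.
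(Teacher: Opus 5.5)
\paragraph{Gap in Step~3: the robust construction.}
Your Step~2 matches the paper's zero-radius construction. The multi-gadget Fano/Assouad layer in Step~1 is not needed: the statement only bounds $N$, and the paper uses a single two-point Le Cam test between two instances that differ only in the rows $(S,2)$ and $(S,3)$.

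The genuine gap is in Step~3. There the adversarial mass enters the very rate that sets the robust span, $q=p+\sigma\asymp1/H_\sigma$. This forces $\sigma\lesssim1/H_\sigma$, and hence $\sigma H_\sigma^2\lesssim H_\sigma$. So your gadget cannot exist in the regime $\sigma\gtrsim1/H_\sigma$ that you say it handles, which is exactly where $\sigma H_\sigma^2$ exceeds $H_\sigma$.

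Conversely, the subcase you discard, $\sigma\lesssim1/H_\sigma$, is \emph{not} dominated by the linear term when $H_0<H_\sigma$. For instance, $\sigma=1/H_\sigma$ gives $\sigma H_\sigma^2=H_\sigma\gg H_0=\min\{H_0,H_\sigma\}$, and this is compatible with $\varepsilon\le0.01\sigma H_0$. So neither regime of the robust-specific term is actually covered.

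\paragraph{The missing idea: put the uncertainty on the entry rows.}
The paper decouples the sampling variance from the robust span by placing the uncertainty on the decision-state \emph{entry} rows.
\begin{itemize}
\item At $(S,a)$, $a\in\{2,3\}$, the nominal probability of entering reward state $a$ is $\sigma_1+p_1$, plus $\delta$ for the better action.
\item The worst-case kernel diverts the mass $\sigma_1\asymp\sigma$ to a zero-reward delay state.
\item The reward state returns to $S$ with probability $q_1$ through a zero-radius row.
\end{itemize}
The data therefore see a Bernoulli parameter at least $\sigma_1$, so $\mathrm{KL}\lesssim\delta^2/\sigma$. Meanwhile, the robust span ($\asymp r_1/q_1$) and the sensitivity of the robust gain to $\delta$ ($\asymp r_1/p_1$) do not involve $\sigma$. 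Making both $\asymp H_\sigma$ and taking $\delta\asymp\varepsilon/H_\sigma$ yields $N\gtrsim\sigma H_\sigma^2/\varepsilon^2$.

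\paragraph{The nominal-span constraint.}
You also misdiagnose this constraint. Membership in $\mathfrak{M}(H_0,H_\sigma,\sigma)$ only requires $\|h_{P^0}^\star\|_{\mathrm{span}}\le H_0$, so gluing a gadget to realize $H_0$ exactly is beside the point. The real obstruction is that the nominal chain sees the inflated entry rate. In such a gadget with unit rewards, the nominal bias is of order $\min\{1/\sigma,H_\sigma\}$, which can exceed $H_0$ when $\sigma\ll1/H_0$ and $H_0\ll H_\sigma$.

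For $\sigma<4/\min\{H_0,H_\sigma\}$, the paper therefore scales the rewards down to $r_1\asymp\sigma\min\{H_0,H_\sigma\}$, with $p_1=q_1\asymp\sigma\min\{H_0,H_\sigma\}/H_\sigma$. This keeps $r_1/p_1=r_1/q_1\asymp H_\sigma$ but makes the nominal span $O(\min\{H_0,H_\sigma\})$.

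Since all gains are then $O(\sigma\min\{H_0,H_\sigma\})$, this is why the hypothesis $\varepsilon\le0.01\sigma H_0$ is needed. The paper first reduces to $\varepsilon\le0.01\sigma\min\{H_0,H_\sigma\}$: the complementary range forces $H_\sigma<H_0$ and $\sigma H_\sigma<1$, so the linear term already covers it. For $\sigma\ge4/\min\{H_0,H_\sigma\}$, unit rewards with $p_1=q_1=2/H_\sigma\le\sigma/2$ suffice.
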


Theorem~\ref{thm:minimax-lower} shows that to achieve an
$\varepsilon$-optimal policy in the robust average reward, every algorithm
requires at least
\[
    \Omega\Bigg(\frac{SA}{\varepsilon^2}
            \Big(
                \underbrace{\min\{H_0,H_\sigma\}}_{\text{linear min-span term}}
                +
                \underbrace{\sigma H_{\sigma}^2}_{\text{robust-specific term}}
            \Big)
    \Bigg)
\]
samples, where the robust-specific term is active when $\varepsilon\le0.01\sigma H_0$. This gives the lower-bound side of
the four-regime summary in Table~\ref{tab:four-regime-summary}. To the best of
our knowledge, this is the first characterization of the minimax sample
complexity of robust AMDPs.

The lower bound consists of two components.
The linear min-span term $\min\{H_0,H_\sigma\}$ is analogous to the $H_0$ term
in the standard AMDP lower bound, but depends on the smaller of the two span
parameters. This reflects the fact that the MDP class
$\mathfrak{M}(H_0,H_\sigma,\sigma)$ is constrained by both span parameters.

On the other hand, the robust-specific term $\sigma H_{\sigma}^2$ captures the
additional cost of distributional robustness. For comparison,
\citet{shi2023curious} establish a robust discounted-MDP lower bound when every
local uncertainty radius is fixed at $\sigma$ and find that robustness can
require fewer samples than the standard problem. This contrast emphasizes
that the specification of the uncertainty sets affects sample complexity.

\paragraph*{Consequence under an $H_\sigma$-only assumption.}
If only the robust optimal bias span $H_\sigma$ is assumed bounded, Proposition~\ref{prop:span-independence} implies that $H_0$ can be arbitrarily large. Then Theorem~\ref{thm:minimax-lower} gives the
minimax lower bound
$\Omega\!\left(SA(H_\sigma+\sigma H_\sigma^2)\varepsilon^{-2}\right)$.

\section{The plug-in approach for robust AMDPs}\label{sec:reduction-upper-bound}

\subsection{Motivation for a reduction to DMDPs}

Directly solving the robust average-reward problem is difficult because the
objective is defined through the long-run robust gain, and its Bellman operator is not contractive. The discounted MDP, on the other hand, offers a more tractable surrogate: if the discount factor $\gamma$ is close
enough to $1$, then the robust discounted value approximates the robust
average reward after multiplying by $1-\gamma$. Formally, for a fixed policy
$\pi$ and discount factor $\gamma\in(0,1)$, let
\[
    V_{\gamma}^{\pi,\sigma}(s)
    =
    \inf_{P\in\mathcal{P}}
    \mathbb{E}_{P}^{\pi}\left[
        \sum_{t=0}^{\infty}\gamma^t r(s_t,a_t)
        \mid s_0=s
        \right]
\]
be the robust discounted value function. The robust optimal discounted value
is defined componentwise by
\[
    V_\gamma^{\star,\sigma}(s)
    \coloneqq
    \sup_\pi V_\gamma^{\pi,\sigma}(s),
    \qquad s\in\mathcal S.
\]
Lemma~\suppRectangularDiscountedBellmanRepresentation{} shows that
compactness and $(s,a)$-rectangularity identify these values with the unique
fixed points of their respective robust discounted Bellman operators. The
fixed-policy discounted value approximates the robust average reward
$\rho^{\pi,\sigma}$, especially when $\gamma$ is close to $1$.

This motivates a reduction-based route:
choose a large effective horizon $(1-\gamma)^{-1}$ and return the optimal policy for the corresponding robust discounted MDP as the solution to the robust AMDP problem. The choice of $\gamma$ is subtle as it
balances two competing requirements: when it is too far from 1, 
the discounted objective does not approximate the average-reward objective well; when it is too close to 1, 
the statistical error of estimating the discounted problem becomes
 too large, since the sample complexity of DMDPs grows polynomially with the effective horizon $(1-\gamma)^{-1}$ \citep{li2024breaking}. In the next two sections, we consider two algorithms based on this reduction principle: a span-informed procedure for known $H_0$ and $H_\sigma$, and a span-agnostic procedure for unknown spans.

\subsection{The span-informed reduction}
\label{subsec:span-informed-reduction}
\label{subsec:glimpse-upper-bound-analysis}

For each state-action pair $(s,a)\in\cS\times\cA$, we observe $N$ independent
next-state samples from the nominal transition kernel $P_{s,a}^0$ and form the
empirical nominal kernel
\begin{equation}
    \widehat{P}^0_{s,a}(s')
    =
    \frac{1}{N}\sum_{i=1}^N \mathbf{1}_{s_i'=s'},
    \qquad
    (s,a,s')\in\cS\times\cA\times\cS.
    \label{eq:empirical-nominal-kernel}
\end{equation}
Given $\widehat P^0$, we assume access to a distributionally robust DMDP
solver that returns a
policy $\widehat\pi$ satisfying
\begin{equation}
\label{eq:robust-dmdp-solver-tolerance}    
    \left\|
    \widehat V_\gamma^{\star,\sigma}
    -\widehat V_\gamma^{\widehat\pi,\sigma}
    \right\|_\infty
    \le
    \varepsilon_{\mathrm{opt}},
\end{equation}
where
\[
    \widehat V_\gamma^{\pi,\sigma}(s)
    =
    \inf_{P\in\cU(\widehat P^0)}
    \mathbb E_P^\pi\left[
    \sum_{t=0}^\infty\gamma^t r(s_t,a_t)
    \mid s_0=s
    \right],
\]
and $\widehat V_\gamma^{\star,\sigma}$ is the corresponding optimal value. We
refer to $\varepsilon_{\mathrm{opt}}$ as the solver tolerance. This solver also serves as a nominal DMDP solver when $\cU(\widehat P^0)$ is the singleton $\{\widehat P^0\}$.
Under the $(s,a)$-rectangular TV uncertainty sets in
Section~\ref{sec:problem-setup}, robust value iteration or policy iteration can
solve this discounted problem to arbitrary accuracy
\citep{iyengar2005robust,nilim2005robust,ho2018fast}.

As discussed in Section~\ref{subsec:free-lunch}, in the high-tolerance regime, it is possible
to find a nominally optimal policy that is robustly $\varepsilon$-optimal.
Moreover, intuitively, the nominal problem could be statistically easier to
solve than the robust problem. This suggests that we should consider both the
nominal and robust discounted reductions and select the one that is more
sample-efficient.

The span-informed procedure treats $H_0$ and $H_\sigma$ as known. It uses them
to select between nominal and robust discounted reductions and pick a discount factor $\gamma$ of the right scale.
\begin{algorithm}
    \caption{Span-informed procedure for robust AMDPs}
    \label{alg:robust-amdp}
    \noindent\textbf{Input.}
    For every $(s,a)$, $N$ nominal transition samples
    $\{(s,a,s_i')\}_{i=1}^N$; the uncertainty set rule $\cU$; the nominal and
    robust optimal bias spans $H_0$ and $H_\sigma$; the desired accuracy
    $\varepsilon$; and the solver tolerance $\varepsilon_{\mathrm{opt}}$.
    \begin{enumerate}[leftmargin=*,label=\textbf{\arabic*.},itemsep=0.1em,topsep=0.5em]
        \item \textbf{Estimate the nominal model.}
        Construct $\widehat P^0$ from the $N$ samples according to
        \eqref{eq:empirical-nominal-kernel}.

        \item \textbf{Nominal or robust reduction.}
        \begin{itemize}[leftmargin=1.5em,itemsep=0.2em,topsep=0.2em]
            \item If $H_0<H_\sigma$ and $7\sigma H_0\le\varepsilon$, set
            $\gamma=1-\varepsilon/(20H_0)$ and run a nominal DMDP solver on
            $(\cS,\cA,r,\widehat P^0,\gamma)$.
            \item Otherwise, set $\gamma=1-\varepsilon/(3H_\sigma)$ and run a
            robust DMDP solver on
            $(\cS,\cA,r,\cU(\widehat P^0),\gamma)$.
        \end{itemize}

        \item Return the resulting policy $\widehat\pi$.
    \end{enumerate}
\end{algorithm}
This span-informed procedure satisfies the following sample-complexity
guarantee.
\begin{theorem}[Span-informed upper bound]\label{thm:TV-upper-bound-average}
    There exist a sufficiently large universal constant $C>0$ and a
    sufficiently small universal constant $c_{\mathrm{opt}}>0$ such that the
    following holds.
    Suppose that $\cM$ is a robust AMDP
    satisfying Assumption~\ref{assump:unichain}. Let $\varepsilon\in(0,1]$ be
    the target accuracy and let $\delta\in(0,1/2]$. 
    Assume that $\varepsilon_{\mathrm{opt}}\le c_{\mathrm{opt}}\varepsilon$
    and that $\widehat \pi$ is the output of
    Algorithm~\ref{alg:robust-amdp}.
    Suppose that either of the following conditions holds:
    \begin{enumerate}[label=\textup{(\alph*)},leftmargin=2.5em]
        \item When $7\sigma H_0\le\varepsilon$,
        \[
            NSA
            \ge
            CSA\log\!\left(
            \frac{H_\sigma SAN}{\varepsilon\delta}
            \right)\cdot
            \frac{\min\{H_0,H_\sigma\}}{\varepsilon^2};
        \]
        \item when $7\sigma H_0>\varepsilon$,
        \[
            NSA
            \ge
            CSA\log\!\left(
            \frac{H_\sigma SAN}{\varepsilon\delta}
            \right)\cdot
            \frac{\min\{H_0,H_\sigma\}+\sigma H_\sigma^2}
            {\varepsilon^2}.
        \]
    \end{enumerate}
    Then, with probability at least $1-O(\delta)$, the returned policy
    $\widehat{\pi}$ satisfies
    $\rho^{\star,\sigma}-\rho^{\widehat{\pi},\sigma}\le\varepsilon$.
\end{theorem}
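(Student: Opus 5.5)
We plan to prove Theorem~\ref{thm:TV-upper-bound-average} by analyzing the two branches of Algorithm~\ref{alg:robust-amdp} separately. Each branch follows the same three-step template: a discounted-to-average reduction, a span bound on the relevant discounted value functions, and a span-governed concentration bound for the plug-in estimate.

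\textbf{Reduction step.} For any policy $\pi$ and any bias solution $h$, we first show a sandwich relation between average and discounted values. For the robust problem, take the minimum-span solution $h^{\star,\sigma}$ of \eqref{eq:robust-average-reward-bellman}. A comparison argument for the monotone, $\gamma$-contractive robust discounted Bellman operator should give
\[
\Bigl\|V_\gamma^{\star,\sigma}-\tfrac{\rho^{\star,\sigma}}{1-\gamma}\mathbf 1\Bigr\|_{\mathrm{span}}\le H_\sigma .
\]
Similarly, for a fixed policy $\pi$ we would show $(1-\gamma)V_\gamma^{\pi,\sigma}\le\rho^{\pi,\sigma}+(1-\gamma)\|V_\gamma^{\pi,\sigma}\|_{\mathrm{span}}$. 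This step needs care because the worst-case kernel depends on the objective. The plan is to use the robust Bellman equation for $\rho^{\pi,\sigma}$ under Assumption~\ref{assump:unichain} and telescope. Combining the two bounds, a policy $\widehat\pi$ that is $\eta$-optimal in the robust discounted problem satisfies
\[
\rho^{\star,\sigma}-\rho^{\widehat\pi,\sigma}\lesssim (1-\gamma)\bigl(H_\sigma+\eta+\|V_\gamma^{\widehat\pi,\sigma}\|_{\mathrm{span}}\bigr).
\]
With $1-\gamma=\varepsilon/(3H_\sigma)$, we therefore need $\eta\lesssim H_\sigma$ and $\|V^{\widehat\pi,\sigma}_\gamma\|_{\mathrm{span}}\lesssim H_\sigma$, i.e., discounted suboptimality of order $\varepsilon/(1-\gamma)$. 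For the nominal branch, the analogous statements with $H_0$ and $1-\gamma=\varepsilon/(20H_0)$ yield nominal $\varepsilon/2$-optimality of $\widehat\pi$. We then transfer to the robust criterion via the argument of Proposition~\ref{prop:perturbation-bound-average-reward} applied to $\widehat\pi$: the robust loss is at most $\sigma\|\text{bias of }\widehat\pi\|_{\mathrm{span}}\lesssim\sigma H_0\le\varepsilon/7$, while $\rho^{\star,\sigma}\le\rho^\star$.

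\textbf{Statistical step.} Using a leave-one-out or absorbing-MDP decoupling in the style of \citet{li2024breaking} and \citet{zurek2024span}, together with Bernstein's inequality, we bound $\|(\widehat P^0-P^0)V\|_\infty$. For fixed $V$ with span $\lesssim H$, this is $\sqrt{\Var(V)\log/N}+H\log/N$. The total-variance argument $\sum_t\gamma^t\Var_t\lesssim H/(1-\gamma)$ then controls the nominal evaluation error by $\sqrt{H/((1-\gamma)^2N)}\cdot(1-\gamma)^{-1/2}$-type quantities. After multiplying by $1-\gamma\asymp\varepsilon/H$, this gives the $SAH\varepsilon^{-2}$ term; in the robust branch, the $\min\{H_0,H_\sigma\}$ enters by comparing to a nominal reference solution. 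For the robust operator, the TV dual
\[
\min_{P\in\mathcal P_{s,a}}P^\top V=\max_{\alpha}\bigl\{P^{0\top}_{s,a}[V]_\alpha-\sigma_{s,a}\|[V]_\alpha\|_{\mathrm{span}}\bigr\}
\]
shows that the robust correction depends on $\widehat P^0$ only through clipped values. The perturbation caused by robustness is at most $\sigma H_\sigma$ per step, and its variance contribution accumulates to an extra $\sigma H_\sigma\cdot H_\sigma$ factor, giving the $\sigma H_\sigma^2$ term.

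\textbf{Main obstacle.} The hardest part is the robust branch when $H_0<H_\sigma$ in the low-tolerance regime, where the bound must be $H_0+\sigma H_\sigma^2$ rather than $H_\sigma$. The plan is to decompose $V_\gamma^{\pi,\sigma}=V_\gamma^{\pi}+(V_\gamma^{\pi,\sigma}-V_\gamma^\pi)$. The nominal piece has variance governed by $H_0$. The robust gap solves a Bellman-type recursion driven by per-step penalties of size $\sigma_{s,a}\|[V]_\alpha\|_{\mathrm{span}}\lesssim\sigma H_\sigma$, so its variance proxy is about $\sigma H_\sigma^2/(1-\gamma)$. If this decomposition fails to close, we would fall back on bounding the robust-gap estimation error directly by $\sigma H_\sigma\sqrt{\log/N}/(1-\gamma)$ and check that the resulting rate still matches case (b). The solver tolerance $\varepsilon_{\mathrm{opt}}\le c_{\mathrm{opt}}\varepsilon$ is absorbed at the end via $(1-\gamma)^{-1}$ scaling, and a union bound over $SA$ pairs and a net over $\alpha$ produces the logarithmic factor.
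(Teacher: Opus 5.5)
Your outer structure matches the paper: two branches, the discounted-to-average sandwich, span control of $V_\gamma^{\star,\sigma}$, Bernstein with TV duality and clipping, and leave-one-out for the data-dependent policy. However, the step you call the main obstacle fails as proposed.

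\textbf{Main gap: the robust branch with $H_0<H_\sigma$.} In $V_\gamma^{\pi,\sigma}=V_\gamma^{\pi}+(V_\gamma^{\pi,\sigma}-V_\gamma^{\pi})$, the policy $\pi$ is the robust-optimal (or learned robust) policy, and $H_0$ gives no control of its \emph{nominal} value. $H_0$ bounds only the nominal \emph{optimal} bias, hence $\|V_\gamma^{\star,0}\|_{\mathrm{span}}\le 2H_0$. Consider a transient state where the robust policy plays a safer, slowly exiting action with per-step nominal deficit about $\rho^\star-\rho^{\star,\sigma}$, which can be up to $\sigma H_0$. There $\|V_\gamma^{\pi}\|_{\mathrm{span}}$ is of order $(\rho^\star-\rho^{\star,\sigma})/(1-\gamma)$, up to $\sigma H_0H_\sigma/\varepsilon$, and the robust gap has a span of the same order. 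The two pieces cancel only in the sum, so neither has variance $O(H_0)$, resp.\ $O(\sigma H_\sigma^2)$. Your fallback bound $\sigma H_\sigma\sqrt{\log/N}/(1-\gamma)$ for the gap also ignores that $\widehat P^0-P^0$ acts on this large-span gap. Your instinct to compare with a nominal reference is right, but the reference must be the \emph{policy-independent} nominal supersolution $(\bar\rho,\bar h)=(\rho^\star,h_{P^0}^\star)$, which satisfies $\bar\rho\mathbf 1+\bar h\ge r^\pi+P^{0,\pi}\bar h$ for every $\pi$. The paper then splits the variance, not the value: $\Var_P(V)\le 2\Var_P(\bar h)+2\Var_P(V-\bar h)$. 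Each term is bounded by the one-sided drift inequality $\nu_s\Var_P(f)\le\frac{1-\gamma}{\gamma}\|f\|_{\mathrm{span}}^2+\frac{2}{\gamma}\|f\|_{\mathrm{span}}\,\nu_s[f-Pf]_+$, with $\nu_s=(1-\gamma)e_s^\top(I-\gamma P)^{-1}$ for the worst-case or empirical kernel $P$. For $u=V-\bar h$ the rewards cancel, so $[u-Pu]_+\le\beta+C\sigma H_0+|\text{residuals}|$. The anchor defect is $\beta=[\rho^\star-(1-\gamma)\min_sV(s)]_+\lesssim\sigma H_0+(1-\gamma)H_\sigma$, by Proposition~\ref{prop:perturbation-bound-average-reward} and the sandwich. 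At radius $R\asymp H_\sigma$ this gives $\nu_s\Var_P(V)\lesssim H_0+\sigma H_\sigma^2+\varepsilon H_\sigma$. This bound must hold both for $V_\gamma^{\star,\sigma}$ and for the data-dependent $\widehat V_\gamma^{\widehat\pi,\sigma}$. Your plan never controls the random spans of the empirical values; the paper closes that loop with a dyadic radius grid and a self-bounding argument.

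\textbf{Second gap: the nominal-branch transfer.} The discounted guarantee $\|V_\gamma^{\star,0}-V_\gamma^{\widehat\pi,0}\|_\infty\le H_0$ does not bound the span of the average-reward bias of $\widehat\pi$. Suppose at a transient state $\widehat\pi$ plays an action with per-step deficit $c\varepsilon$ and exit probability $p\to0$. Its discounted cost is only about $c\varepsilon/(1-\gamma)=20cH_0$, yet the bias span grows like $c\varepsilon/p$, so $\sigma\|h^{\widehat\pi}\|_{\mathrm{span}}$ is useless. The fix is to use $h=\gamma V_\gamma^{\widehat\pi,0}$, whose span is at most $4H_0$, as the test function in Lemma~\ref{lem:fixed-policy-robust-verification}. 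This gives $\rho^{\widehat\pi,\sigma}\ge(1-\gamma)\min_sV_\gamma^{\widehat\pi,0}(s)-\gamma\sigma\|V_\gamma^{\widehat\pi,0}\|_{\mathrm{span}}$. Together with $\rho^{\star,\sigma}\le\rho^\star$, the loss is at most $\varepsilon/5+\varepsilon/5+4\varepsilon/7<\varepsilon$. With your nominal $\varepsilon/2$, the budget against the fixed threshold $7\sigma H_0\le\varepsilon$ would not close.

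\textbf{Minor point.} Your displayed rate $\sqrt{H/((1-\gamma)^2N)}\,(1-\gamma)^{-1/2}$ would give $H^2/\varepsilon^3$. The total-variance argument yields $\sqrt{H/((1-\gamma)^2N)}$ up to logarithms, with no extra factor.
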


The two displayed conditions give the high- and low-tolerance rates,
respectively. Comparing Theorem~\ref{thm:TV-upper-bound-average} with the lower
bound in Theorem~\ref{thm:minimax-lower} shows that
Algorithm~\ref{alg:robust-amdp} is minimax-optimal, up to logarithmic factors. Notably, the high-tolerance regime does not necessarily favor the nominal reduction: when $H_\sigma\le H_0$, the robust reduction yields the smaller span dependence and sample complexity.

\paragraph*{Variant when only $H_\sigma$ is known.}
If $H_\sigma$ is known but $H_0$ is not, always taking the robust reduction
gives the sample-complexity upper bound
$\widetilde O\!\left(SA(H_\sigma+\sigma H_\sigma^2)\varepsilon^{-2}\right)$.
Comparing with the $H_\sigma$-only lower bound, this rate is minimax-optimal
up to logarithmic factors.

\paragraph*{Comparison with a prior reduction framework.}
\citet{roch2025reduction} gave a robust-only reduction-based procedure with sample
complexity
$\widetilde O(SAH_{\mathrm{Roch}}^2\varepsilon^{-2})$, where
$H_{\mathrm{Roch}}$ is a robust span parameter different from $H_\sigma$. We compare them in Appendix~\suppRochSpanComparison{} and prove
that $H_\sigma\le H_{\mathrm{Roch}}$. Consequently, since $\sigma \le 1$ and all span parameters are at least $1$,
our span-informed rate uniformly matches or improves upon their result. 

\paragraph*{Analysis overview.}
We first focus on the robust branch of Algorithm~\ref{alg:robust-amdp}. The robust reduction sets
\[
    \gamma
    =
    1-\frac{\varepsilon}{3H_{\sigma}}.
\]
At this discount factor, it is enough to find a policy satisfying
\[
    \left\|
    V_{\gamma}^{\star,\sigma}
    -V_{\gamma}^{\widehat\pi,\sigma}
    \right\|_{\infty}
    \le
    H_{\sigma}.
\]
A generic robust discounted-MDP bound at this horizon and accuracy would give
the suboptimal sample complexity
$\widetilde O(SAH_{\sigma}\varepsilon^{-3})$. 
Theorem~\ref{thm:TV-upper-bound-average} obtains the sharper rate through three
refinements.

The first refinement compares the discounted problem with a nominal Bellman
supersolution. We call a pair $(\bar\rho,\bar h)$ a \emph{nominal anchor} if
\begin{equation}
    \bar\rho+\bar h(s)
    \ge
    \max_{a\in\cA}
    \left\{
    r(s,a)+P_{s,a}^0\bar h
    \right\},
    \qquad s\in\cS.
    \label{eq:nominal-anchor-supersolution}
\end{equation}
We choose the nominal optimal pair $(\rho^\star,h_{P^0}^\star)$ as the anchor
when $H_0<H_\sigma$ and the trivial pair $(1,0)$ otherwise. Using this anchor
as a reference in the variance analysis yields $\min\{H_0,H_\sigma\}$, rather
than $H_\sigma$ alone, in the leading statistical term.

The second refinement proves that the true and empirical discounted value
functions used in the analysis have spans of order $H_\sigma$. Their
fluctuations can therefore be controlled by $H_\sigma$, instead of the
worst-case discounted scale $(1-\gamma)^{-1}$. 

The third refinement replaces the blanket sample-size requirement
$N\gtrsim(1-\gamma)^{-2}$ from generic robust discounted analyses with refined
localized requirements. Together, these refinements show that, up to
logarithmic factors, achieving discounted error $H_\sigma$ requires
\[
    N
    \gtrsim
    \frac{\min\{H_0,H_\sigma\}+\sigma H_{\sigma}^2}
    {(1-\gamma)^2H_{\sigma}^2}
    +\frac{1}{1-\gamma}.
\]
Substituting $1-\gamma\asymp\varepsilon/H_\sigma$ gives
\[
    N
    \gtrsim
    \frac{\min\{H_0,H_\sigma\}+\sigma H_{\sigma}^2}{\varepsilon^2}
    +\frac{H_{\sigma}}{\varepsilon},
\]
where the last term is absorbed when the robust reduction is used. 

The nominal branch of Algorithm~\ref{alg:robust-amdp} applies the same discounted plug-in reduction with degenerate uncertainty sets. We use an additional argument to transfer its nominal performance guarantee to a robust performance guarantee. 
The complete discounted theorem and the analysis of Theorem~\ref{thm:TV-upper-bound-average} are given in
Appendix~\suppReductionUpperBounds{}. 

\subsection{The span-agnostic reduction}
\label{subsec:span-agnostic-reduction}

When $H_0$ and $H_\sigma$ are unknown, neither the reduction nor the
discount factor in Algorithm~\ref{alg:robust-amdp} can be selected directly.
This motivates
a span-agnostic procedure that adaptively selects the appropriate discount
factor and chooses between the nominal and robust discounted reductions.

Here we describe the framework of the span-agnostic reduction and present the main theoretical result. For conciseness, we defer the full details of the algorithm to Appendix~\suppAnchoredCalibrationSpecification{}. The key idea of this procedure is to construct a set of candidate policies from both nominal and robust discounted reductions and then select the best one based on a lower-confidence bound on its robust average reward.

\paragraph*{Two independent data batches.}

We use two batches so that the anchor certificate and the nominal policy
candidates are independent of the data used to assess robust performance. For
each state-action pair, split the $N$ transition samples into
a \emph{nominal batch} $\mathcal D_{\mathrm{nom}}$ of size
$N_{\mathrm{nom}}$ and a \emph{robust batch}
$\mathcal D_{\mathrm{rob}}$ of size $N_{\mathrm{rob}}$, where both sizes are
constant fractions of $N$. Let
$\widehat P_{\mathrm{nom}}^0$ and $\widehat P_{\mathrm{rob}}^0$ be the two
empirical nominal kernels, constructed as in
\eqref{eq:empirical-nominal-kernel}.
The nominal batch is used to compute the anchor certificate and the nominal
policy candidates. The robust
batch is then used to compute the robust policy candidates and to evaluate the
nominal candidates in the empirical robust MDP.

\paragraph*{Anchor certificate.}
Recall from the span-informed analysis in
Section~\ref{subsec:span-informed-reduction} that, when $H_0<H_\sigma$, the
nominal optimal pair $(\rho^\star,h_{P^0}^\star)$ provides an analytical anchor
satisfying \eqref{eq:nominal-anchor-supersolution}. Its bias span $H_0$ yields
the $H_0$-dependent part of the upper bound in
Theorem~\ref{thm:TV-upper-bound-average}. In the span-agnostic setting,
however, $H_0$ is unknown and therefore cannot be used directly to calibrate
the confidence penalty. We instead use the nominal batch to compute an anchor
certificate
$(\widehat\rho_{\mathrm{anc}}^+,\widehat H_{\mathrm{anc}}^+)$.
With probability at least $1-O(\delta)$, this certificate is associated with a
nominal anchor $(\bar\rho,\bar h)$ satisfying
\eqref{eq:nominal-anchor-supersolution} and
\[
    \bar\rho
    \le
    \widehat\rho_{\mathrm{anc}}^+,
    \qquad
    \max\{1,\|\bar h\|_{\mathrm{span}}\}
    \le
    \widehat H_{\mathrm{anc}}^+.
\]
Moreover, the certificate satisfies
$\widehat H_{\mathrm{anc}}^+=O(H_0)$, so it permits the confidence penalty to
retain the $H_0$ scale without requiring $H_0$ as an input.
For conciseness, we defer the exact algorithm and the lemma confirming these properties to
Appendix~\suppAnchoredCalibrationSpecification{}.

\paragraph*{Candidate policies.}
To adapt to the unknown effective horizon, we consider a dyadic grid
$\Gamma_N$ consisting of discount factors $\gamma = 1-2^{-k}$, $k=1,2,\ldots$, whose exact cutoff is sample-size-dependent and given in
Appendix~\suppAnchoredCalibrationSpecification{}. For each $\gamma\in\Gamma_N$, we compute two candidate policies. 
First, we solve the empirical nominal discounted MDP
with kernel $\widehat P_{\mathrm{nom}}^0$ to obtain
$\widehat\pi_\gamma^0$. Then, in the empirical robust discounted MDP centered at
$\widehat P_{\mathrm{rob}}^0$, we evaluate $\widehat\pi_\gamma^0$ to obtain
$\widehat V_\gamma^{\widehat\pi_\gamma^0,\sigma}$ and solve the robust
discounted problem to obtain $\widehat\pi_\gamma$ and
$\widehat V_\gamma^{\widehat\pi_\gamma,\sigma}$.

\paragraph*{Policy selection by lower-confidence bounds.}
For each $\gamma\in\Gamma_N$, we assign each of the two candidate policies a lower-confidence bound on its robust
average reward. The superscripts
$\mathrm{nom}$ and $\mathrm{rob}$ distinguish the nominal and robust candidate
families, respectively. Let
$\operatorname{pen}_\gamma^{\mathrm{nom}}(\widehat\pi_\gamma^0)$ denote the
confidence penalty for the nominal candidate. Its lower-confidence bound is
\begin{equation}
    \operatorname{LCB}_{\gamma}^{\mathrm{nom}}(\widehat\pi_\gamma^0)
    \coloneqq
    (1-\gamma)\min_s
    \widehat V_\gamma^{\widehat\pi_\gamma^0,\sigma}(s)
    -\operatorname{pen}_\gamma^{\mathrm{nom}}(\widehat\pi_\gamma^0).
    \label{eq:anchored-nominal-candidate-score}
\end{equation}
For the robust candidate, the anchor certificate and the trivial anchor
$(\rho,h)=(1,0)$ give
two valid penalties,
$\operatorname{pen}_\gamma^{\mathrm{anc}}(\widehat\pi_\gamma)$ and
$\operatorname{pen}_\gamma^{\mathrm{triv}}(\widehat\pi_\gamma)$,
respectively. Since both induce valid lower-confidence bounds, we
subtract the smaller penalty and define
\begin{equation}
    \operatorname{LCB}_{\gamma}^{\mathrm{rob}}(\widehat\pi_\gamma)
    \coloneqq
    (1-\gamma)\min_s
    \widehat V_\gamma^{\widehat\pi_\gamma,\sigma}(s)
    -
    \min\left\{
    \operatorname{pen}_\gamma^{\mathrm{anc}}(\widehat\pi_\gamma),
    \operatorname{pen}_\gamma^{\mathrm{triv}}(\widehat\pi_\gamma)
    \right\}.
    \label{eq:anchored-robust-candidate-score}
\end{equation}
The complete data-dependent definitions of the penalties
$\operatorname{pen}_\gamma^{\mathrm{nom}}$,
$\operatorname{pen}_\gamma^{\mathrm{anc}}$, and
$\operatorname{pen}_\gamma^{\mathrm{triv}}$ are given in
Appendix~\suppAnchoredCalibrationSpecification{}. With
probability at least $1-O(\delta)$, simultaneously for every
$\gamma\in\Gamma_N$,
\[
    \operatorname{LCB}_{\gamma}^{\mathrm{nom}}(\widehat\pi_\gamma^0)
    \le
    \rho^{\widehat\pi_\gamma^0,\sigma},
    \qquad
    \operatorname{LCB}_{\gamma}^{\mathrm{rob}}(\widehat\pi_\gamma)
    \le
    \rho^{\widehat\pi_\gamma,\sigma}.
\]
We therefore return the candidate policy with the largest lower-confidence
bound.

Algorithm~\ref{alg:anchored-span-agnostic-robust-amdp} summarizes the procedure; the full implementable specification is deferred to Appendix~\suppAnchoredCalibrationSpecification{}.
\begin{algorithm}
    \caption{Span-agnostic reduction with nominal and robust candidates}
    \label{alg:anchored-span-agnostic-robust-amdp}
    \noindent\textbf{Input.}
    For every $(s,a)$, $N$ independent nominal transition samples;
    $\varepsilon$, $\delta$, $\sigma$, and $\cU$.
    \begin{enumerate}[leftmargin=*,label=\textbf{\arabic*.},itemsep=0.1em,topsep=0.5em]
        \item \textbf{Prepare the empirical models.}
        Split the samples into $\mathcal D_{\mathrm{nom}}$ and
        $\mathcal D_{\mathrm{rob}}$, form
        $\widehat P_{\mathrm{nom}}^0$ and
        $\widehat P_{\mathrm{rob}}^0$, construct $\Gamma_N$, and compute the
        anchor certificate from $\mathcal D_{\mathrm{nom}}$.
        \item \textbf{Construct the candidate policies.}
        For every $\gamma\in\Gamma_N$, compute the nominal candidate
        $\widehat\pi_\gamma^0$ from $\widehat P_{\mathrm{nom}}^0$, evaluate it
        in the empirical robust MDP centered at
        $\widehat P_{\mathrm{rob}}^0$, and compute the robust candidate
        $\widehat\pi_\gamma$ in the same empirical robust MDP.
        \item \textbf{Select a policy.}
        Compute the lower-confidence bounds in
        \eqref{eq:anchored-nominal-candidate-score} and
        \eqref{eq:anchored-robust-candidate-score}, and return a policy with
        the largest lower-confidence bound.
    \end{enumerate}
\end{algorithm}

The following theorem gives the sample-complexity guarantee for the resulting
span-agnostic procedure.
\begin{theorem}[Span-agnostic robust policy learning]
    \label{thm:anchored-calibrated-policy-upper-bound}
    There exist a sufficiently large universal constant $C>0$ and
    sufficiently small universal constants
    $c,c_{\mathrm{opt}}>0$ such that the following holds.
    Let $\cM$ be a robust AMDP satisfying
    Assumption~\ref{assump:unichain}, and fix
    $\varepsilon\in(0,1]$, $\delta\in(0,1)$, and $N\ge 16$. Let
    $\widehat{\pi}$ be the output of
    Algorithm~\ref{alg:anchored-span-agnostic-robust-amdp}, and assume
    $\varepsilon_{\mathrm{opt}}\le c_{\mathrm{opt}}\varepsilon$.
    Suppose that either of the following conditions holds:
    \begin{enumerate}[label=\textup{(\alph*)},leftmargin=2.5em]
        \item When $\sigma H_0\le c\varepsilon$,
        \[
            NSA
            \ge
            CSA\log\!\left(\frac{SAN}{\delta}\right)\cdot
            \frac{\min\{H_0,H_\sigma\}}{\varepsilon^2};
        \]
        \item when $\sigma H_0>c\varepsilon$,
        \[
            NSA
            \ge
            CSA\log\!\left(\frac{SAN}{\delta}\right)\cdot
            \frac{\min\{H_0,H_\sigma\}+\sigma H_\sigma^2}
            {\varepsilon^2}.
        \]
    \end{enumerate}
    Then, with probability at least $1-O(\delta)$, the returned policy
    $\widehat\pi$ satisfies
    \[
        \rho^{\star,\sigma}-\rho^{\widehat\pi,\sigma}
        \le
        \varepsilon.
    \]
\end{theorem}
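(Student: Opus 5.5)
The argument is an oracle-inequality comparison. On a single high-probability event, every candidate's lower-confidence bound is valid. The oracle candidate, the one Algorithm~\ref{alg:robust-amdp} would have produced with the spans known, has a penalty small enough that its lower-confidence bound is within $\varepsilon$ of $\rho^{\star,\sigma}$. Since the selected policy maximizes the lower-confidence bound,
\[
    \rho^{\widehat\pi,\sigma}
    \ge
    \operatorname{LCB}(\widehat\pi)
    \ge
    \operatorname{LCB}(\text{oracle})
    \ge
    \rho^{\star,\sigma}-\varepsilon .
\]

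\textbf{Step 1 (good event).} I would collect three properties on one event $\mathcal E$ of probability $1-O(\delta)$, using a union bound over the dyadic grid $\Gamma_N$. The grid has $O(\log N)$ elements, which accounts for the $\log(SAN/\delta)$ factor. The properties are:
\begin{enumerate}[label=(\roman*)]
    \item the anchor certificate is valid, with $\widehat H^+_{\mathrm{anc}}=O(H_0)$;
    \item $\operatorname{LCB}^{\mathrm{nom}}_\gamma(\widehat\pi^0_\gamma)\le\rho^{\widehat\pi^0_\gamma,\sigma}$ and $\operatorname{LCB}^{\mathrm{rob}}_\gamma(\widehat\pi_\gamma)\le\rho^{\widehat\pi_\gamma,\sigma}$ for all $\gamma\in\Gamma_N$;
    \item the discounted concentration estimates from the span-informed analysis (Appendix~\suppReductionUpperBounds{}) hold at every $\gamma\in\Gamma_N$.
\end{enumerate}
The data split makes the nominal candidates and the anchor independent of $\mathcal D_{\mathrm{rob}}$. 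Conditioning on $\mathcal D_{\mathrm{nom}}$ therefore lets me apply fixed-policy concentration for $\widehat\pi^0_\gamma$ in the robust empirical model.

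\textbf{Step 2 (oracle candidate, robust case).} Take $\gamma^\dagger\in\Gamma_N$ with $1-\gamma^\dagger\in[\varepsilon/(6H_\sigma),\varepsilon/(3H_\sigma)]$. The sample-size condition together with $N\ge16$ should ensure that such a $\gamma^\dagger$ lies inside the cutoff of $\Gamma_N$. The robust-branch argument behind Theorem~\ref{thm:TV-upper-bound-average} gives three facts:
\begin{itemize}
    \item the empirical robust discounted values have span $O(H_\sigma)$;
    \item $(1-\gamma)\min_s\widehat V^{\widehat\pi_\gamma,\sigma}_\gamma$ is within $O(\varepsilon)$ of $\rho^{\star,\sigma}$, because $(1-\gamma)V^{\star,\sigma}_\gamma\ge\rho^{\star,\sigma}-(1-\gamma)H_\sigma$;
    \item the penalty $\min\{\operatorname{pen}^{\mathrm{anc}},\operatorname{pen}^{\mathrm{triv}}\}$ is $O(\varepsilon)$.
\end{itemize}
The third fact is the key one. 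I must show that the data-dependent penalties are bounded by the same quantity as the analytic error,
\[
    \sqrt{\frac{(\min\{\widehat H^+_{\mathrm{anc}},1\cdot\text{trivial}\}+\sigma\widehat H^2)\log}{N}}+\frac{\widehat H\log}{N},
\]
with empirical span proxies that are $O(H_0)$ and $O(H_\sigma)$ respectively. The anchor penalty then scales with $H_0$ and the trivial penalty with $H_\sigma$, and taking the smaller recovers $\min\{H_0,H_\sigma\}$. This handles case~(b), and also case~(a) when $H_\sigma\le H_0$.

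\textbf{Step 3 (nominal oracle, case (a) with $H_0<H_\sigma$).} Take $1-\gamma\asymp\varepsilon/H_0$. The nominal discounted plug-in analysis makes $\widehat\pi^0_\gamma$ nominally $O(\varepsilon)$-optimal with value span $O(H_0)$. The transfer argument (Proposition~\ref{prop:perturbation-bound-average-reward}, applied to this policy's near-Bellman pair) costs only $\sigma H_0\le c\varepsilon$ in robust gain. Evaluating $\widehat\pi^0_\gamma$ in the independent robust empirical model, with a penalty of order $\sqrt{H_0\log/N}$ plus $\sigma H_0$-type terms, gives $\operatorname{LCB}^{\mathrm{nom}}\ge\rho^{\star,\sigma}-\varepsilon$.

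\textbf{Main obstacle.} The hardest part is Step 2's third fact: proving that the computable penalties, built from empirical spans and variances, are simultaneously valid upper bounds on the error and not much larger than the oracle error terms. I would first show that the empirical span of $\widehat V_\gamma$ is within a constant factor of $\|V_\gamma\|_{\mathrm{span}}=O(H_\sigma)$ at $\gamma^\dagger$. This requires a self-bounding argument: bound the span error by a term involving the span itself and solve the resulting quadratic inequality. I would then use empirical Bernstein bounds, so that variance proxies replace their true counterparts up to constants.
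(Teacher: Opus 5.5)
Your overall architecture (validity of every lower-confidence bound on one event, a good candidate at a dyadic horizon of order $H_\sigma/\varepsilon$ or $H_0/\varepsilon$, then the max-LCB comparison) is exactly the paper's. The gap is in the step you single out as key.

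\textbf{The defect term.} The penalty budget in Algorithm~\ref{alg:anchored-span-agnostic-robust-amdp} is $\mathcal B_\gamma=\bar H+2\mathcal R\beta+\sigma\mathcal R^2+(1-\gamma)\mathcal R^2$. Here $\mathcal R\lesssim H_\sigma$ at $\gamma^\dagger$, and $\beta=[\bar\rho-(1-\gamma)\min_s\widehat V(s)]_+$ is the anchor-level defect. Your displayed penalty has no defect term, and that is exactly where the mechanism sits.
\begin{itemize}
\item For the trivial anchor $\bar H=1$, so the $H_\sigma$ in its budget comes from $2\mathcal R\beta$ with $\beta\le1$, not from any span proxy.
\item An anchor of span $O(H_0)$ whose level sits $\Theta(1)$ above $\rho^{\star,\sigma}$ would be no better than the trivial one. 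So $\widehat H^+_{\mathrm{anc}}=O(H_0)$ alone does not give $\min\{H_0,H_\sigma\}$.
\item The missing ingredient is the level certificate $\rho^\star\le\widehat\rho^+_{\mathrm{anc}}\le\rho^\star+C\varepsilon$, which the $\widehat Q_\lambda$-based selection in the anchor calibration provides. Combine it with $\rho^\star-\rho^{\star,\sigma}\le\sigma H_0$ from Proposition~\ref{prop:perturbation-bound-average-reward} and with $(1-\gamma^\dagger)\min_s\widehat V(s)\ge\rho^{\star,\sigma}-O(\varepsilon)$. Together these give $\beta\lesssim\sigma H_0+\varepsilon$, hence $\mathcal R\beta\lesssim\sigma H_\sigma^2+\varepsilon H_\sigma$ when $H_0<H_\sigma$. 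This is where the scale $\sigma H_0$ enters the upper bound.
\item The leftover $\varepsilon H_\sigma$, equivalently the cutoff requirement $N_{\mathrm{rob}}\gtrsim H_\sigma\iota_N/\varepsilon$ for $\gamma^\dagger\in\Gamma_N$, is absorbed in case~(b) with $H_0<H_\sigma$ only because $\sigma H_0>c\varepsilon$ gives $H_\sigma/\varepsilon<\sigma H_\sigma^2/(c\varepsilon^2)$. The condition $N\ge16$ plays no role there.
\item $\widehat H^+_{\mathrm{anc}}=O(H_0)$ needs $N_{\mathrm{nom}}\gtrsim H_0\iota_N/\varepsilon^2$, so it belongs in your good event only when $H_0<H_\sigma$.
\end{itemize}

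\textbf{Uniform validity.} Your Step~1(ii)--(iii) hides the second main point. Theorem~\ref{thm:TV-upper-bound-discount} cannot simply be invoked at every $\gamma\in\Gamma_N$. Its sample condition contains $(H_{\mathrm{anc}}+R_0\beta_\star)/((1-\gamma)^2R_0^2)$, which is not guaranteed at the long horizons of $\Gamma_N$. Its conclusion is also stated in unknown population quantities ($R_0$, $\beta_\star$), so it cannot serve as a computable penalty.

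Validity has to come from a self-certifying argument, which the paper builds as follows.
\begin{itemize}
\item It runs the localized recursive comparisons simultaneously over a radius grid fixed before seeing $\mathcal D_{\mathrm{rob}}$. These comparisons need only $N_{\mathrm{rob}}\gtrsim\iota_N/(1-\gamma)$ and $N_{\mathrm{rob}}\gtrsim\sigma\iota_N/(1-\gamma)^2$.
\item After seeing the data, it selects the first radius exceeding a constant multiple of $\widehat R+\Delta_\gamma+\varepsilon_{\mathrm{opt}}$, and uses $\beta_\star\le\widehat\beta+(1-\gamma)\Delta_\gamma$.
\item It keeps the Young remainder as the penalty term $\widehat\beta\iota_N/(N_{\mathrm{rob}}(1-\gamma))$, and absorbs every other $\Delta_\gamma$-dependent term using the two cutoffs of $\Gamma_N$. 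This is precisely why the grid is truncated.
\end{itemize}
Your self-bounding argument appears only at $\gamma^\dagger$, where it serves the ``not too large'' direction. Empirical Bernstein is also not the relevant tool. The penalties contain no variance estimates; the resolvent-weighted standard deviation is converted deterministically into the span budget $B(R,\beta)$ by the anchored total-variance bound, Lemma~\ref{lem:span-aware-resolvent-variance}.

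\textbf{Step~3.} The nominal LCB is built from robust discounted values, so you need the discounted transfer $V^{\widehat\pi,0}_\gamma-\frac{\gamma\sigma}{1-\gamma}\|V^{\widehat\pi,0}_\gamma\|_{\mathrm{span}}\mathbf 1\le V^{\widehat\pi,\sigma}_\gamma\le V^{\widehat\pi,0}_\gamma$. This also shows that the robust span entering the nominal penalty is $O(H_0)$, since $\sigma H_0^2/\varepsilon\lesssim H_0$. A transfer of the robust average reward via Proposition~\ref{prop:perturbation-bound-average-reward} does not by itself lower-bound the LCB.
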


This guarantee is adaptive: Algorithm~\ref{alg:anchored-span-agnostic-robust-amdp}
uses neither $H_0$ nor $H_\sigma$ and is not told which regime holds. By
maximizing its lower-confidence bounds over the discount-factor grid and both
candidate families, it automatically uses the data to select both the discount
factor and whether to use the nominal or robust reduction.
Comparing with Theorem~\ref{thm:minimax-lower}, Theorem~\ref{thm:anchored-calibrated-policy-upper-bound} shows that
Algorithm~\ref{alg:anchored-span-agnostic-robust-amdp} is minimax-optimal
across all regimes up to logarithmic factors:
\begin{itemize}
    \item In the high-tolerance regime, the $H_0$ rate is certified by a
    nominal candidate when $H_0<H_\sigma$, whereas the $H_\sigma$ rate is
    certified by a robust candidate when $H_\sigma\le H_0$.
    \item In the low-tolerance regime, the robust candidate gives sample
    complexity
    \[
        \widetilde O\!\left(
        SA\frac{\min\{H_0,H_\sigma\}+\sigma H_\sigma^2}{\varepsilon^2}
        \right).
    \]
    The minimum in the leading span term comes from the two anchor choices:
    the calibrated nominal anchor supplies $H_0$ when $H_0<H_\sigma$, while
    the trivial anchor supplies $H_\sigma$ when $H_\sigma\le H_0$.
\end{itemize}
Both rates match the minimax lower bound in
Theorem~\ref{thm:minimax-lower} up to logarithmic factors.

\paragraph*{Robust-only variant.}
A simpler variant uses only the robust candidates and the trivial-anchor
penalty.  Its uniform sample complexity is
$\widetilde O\!\left(SA(H_\sigma+\sigma H_\sigma^2)\varepsilon^{-2}\right)$.
This robust-only variant attains the $H_\sigma$-only minimax rate, while the
full span-agnostic procedure can additionally exploit a smaller $H_0$ when
$H_0<H_\sigma$.

\paragraph*{Comparison with Roch et al.}
\citet{roch2025provably} propose a model-free, span-agnostic approach with
sample complexity
$\widetilde O(SAH_{\mathrm{RHI}}^2\varepsilon^{-2})$, where
$H_{\mathrm{RHI}}$ is a robust span parameter satisfying
$H_{\mathrm{RHI}}\ge H_\sigma$. See Appendix~A.4 for a detailed
comparison. Our sample-complexity guarantee uniformly matches or improves upon theirs.

\paragraph*{Analysis overview.}
If we knew $H_0$, $H_\sigma$, and which reduction is more
sample-efficient, we could choose a single discount factor and use that
reduction to obtain an $O(\varepsilon)$-optimal policy.
Algorithm~\ref{alg:anchored-span-agnostic-robust-amdp} does not know these
quantities, but its dyadic grid contains a discount factor 
whose effective horizon is within a factor of two of the ideal horizon. At this
grid point, one of the two candidate policies is $O(\varepsilon)$-optimal, and
its lower-confidence bound is at least
$\rho^{\star,\sigma}-O(\varepsilon)$. Because all the lower-confidence bounds
are valid and the algorithm selects the largest one,
\[
    \begin{aligned}
        \rho^{\widehat\pi,\sigma}
        &\ge
        \max_{\gamma\in\Gamma_N}
        \left\{
        \operatorname{LCB}_{\gamma}^{\mathrm{nom}}
        (\widehat\pi_\gamma^0),
        \operatorname{LCB}_{\gamma}^{\mathrm{rob}}
        (\widehat\pi_\gamma)
        \right\} 
        \ge
        \rho^{\star,\sigma}-O(\varepsilon).
    \end{aligned}
\]
The complete proof is given in
Appendix~\suppSpanAgnosticHorizonCalibration{}.

\begin{figure}[t]
    \centering
    \includegraphics[width=\linewidth]{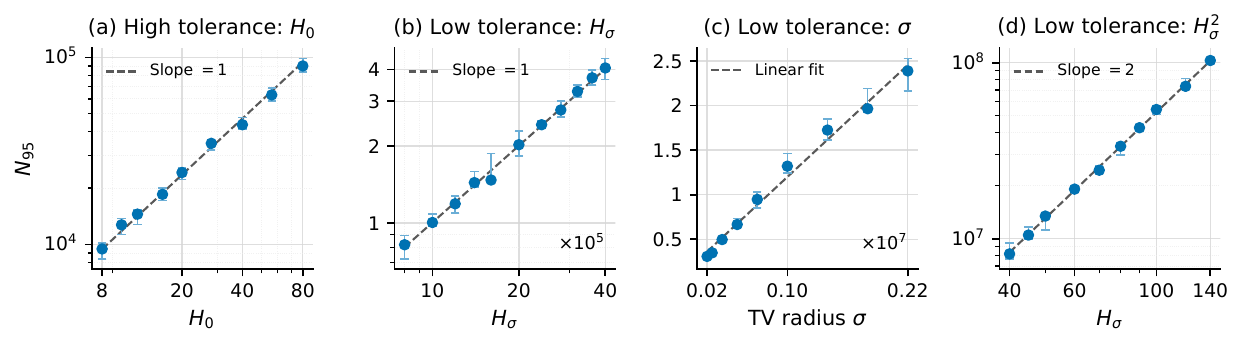}
    \caption{High- and low-tolerance sample-complexity checks.
        All four panels report $N_{95}$.
        (a) High tolerance, with $H_0 < H_\sigma$: $N_{95}$ against $H_0$.
        (b) Low tolerance, with $H_0 > H_\sigma$: the minimum-span component.
        (c)--(d) Low tolerance: dependence on $\sigma$ and
        $H_\sigma^2$ in the robustness-specific component.
        Bars are $95\%$ bootstrap intervals for $N_{95}$.
        Panels (a), (b), and (d) use log-log axes; the dashed lines have
        the indicated slopes. The tolerance $\varepsilon$ is held fixed
        within each panel.}
    \label{fig:component-rate-checks}
\end{figure}

\begin{figure}[t]
    \centering
    \includegraphics[width=\linewidth]{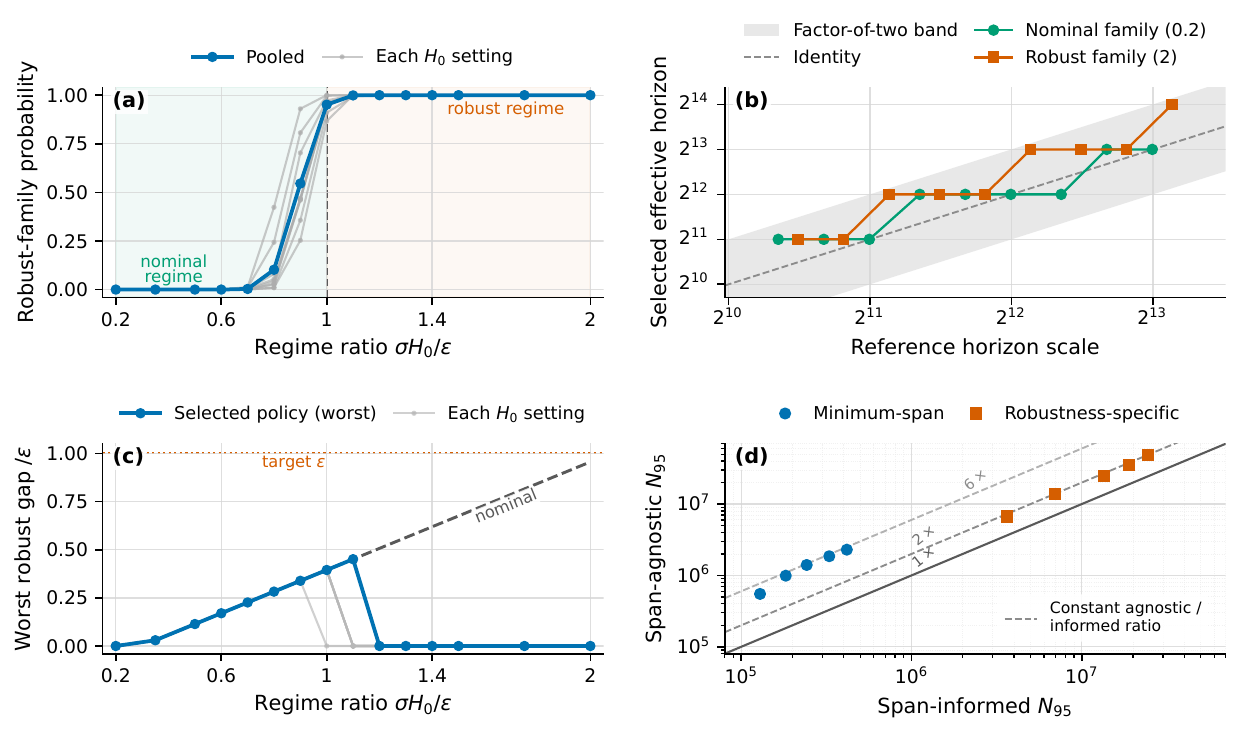}
    \caption{Span-agnostic adaptation.
        (a) Policy-family selection across the transition from the nominal
        to the robust family.
        (b) Selected effective horizon $(1-\gamma)^{-1}$ at the nominal-family
        and robust-family endpoints, compared with $H_0/\varepsilon$ and
        $H_\sigma/\varepsilon$, respectively; the shaded region is the
        factor-of-two band. (c) Robust performance of the selected and nominal policies.
        (d) Comparison of span-agnostic and span-informed $N_{95}$.}
    \label{fig:selector-adaptivity}
\end{figure}

\section{Experiments}
\label{sec:experiments}

We use controlled simulations to evaluate two parts of the theory. We first
verify the sample-complexity rates in
Theorem~\ref{thm:TV-upper-bound-average}, including the uniform linear dependence on
$\min\{H_0,H_\sigma\}$ and the robustness-specific dependence on
$\sigma H_\sigma^2$ in the low-tolerance regime. We then showcase the adaptivity of the span-agnostic
approach in Algorithm~\ref{alg:anchored-span-agnostic-robust-amdp}.

The experiments use finite AMDPs with various parameters. In each setting, the learner
receives $N$ nominal next-state samples per state-action pair.
We summarize sample cost by $N_{95}$, the sample size at which the estimated probability of returning a policy that is $\varepsilon$-optimal in robust average reward reaches $0.95$.
Because success should increase with $N$, we estimate $N_{95}$ by fitting a
nondecreasing success curve and interpolating its $0.95$ crossing.
Appendix~\suppExperimentDetails{} gives further experimental details and
additional figures.

\subsection{Empirical sample-complexity rates of Algorithm~\ref{alg:robust-amdp}}

For per-state-action sample complexity, Theorem~\ref{thm:TV-upper-bound-average} predicts the high-tolerance rate
\[
    \frac{\min\{H_0,H_\sigma\}}{\varepsilon^2}
\]
and the low-tolerance rate
\[
    \frac{\min\{H_0,H_\sigma\}+\sigma H_\sigma^2}{\varepsilon^2}.
\]

We first test the high-tolerance branch with $7\sigma H_0\le\varepsilon$ in the ordering $H_0<H_\sigma$.
We vary $H_0$ while holding $\varepsilon$ and $\sigma$ fixed. The
construction gives $H_\sigma=H_0/(1-\sigma H_0)>H_0$, so
$\min\{H_0,H_\sigma\}=H_0$. Figure~\hyperref[fig:component-rate-checks]
{\ref*{fig:component-rate-checks}a} shows the predicted linear increase of
$N_{95}$ with $H_0$.

For the low-tolerance branch, we isolate its two rate components when $H_\sigma\le H_0$.
Figure~\hyperref[fig:component-rate-checks]
{\ref*{fig:component-rate-checks}b} varies $H_\sigma$ below a fixed $H_0$
while keeping $\sigma H_\sigma^2$ small relative to the minimum-span component.
It shows the expected linear dependence on $H_\sigma$. The complementary
ordering $H_0<H_\sigma$ yields a similar result and is reported in
Appendix~\suppMinimumSpanComplementary{}.
Finally, Figures~\hyperref[fig:component-rate-checks]
{\ref*{fig:component-rate-checks}c} and~\hyperref[fig:component-rate-checks]
{\ref*{fig:component-rate-checks}d} isolate the robustness-specific term.
The first varies $\sigma$ at fixed $H_\sigma$ and the second varies
$H_\sigma$ at fixed $\sigma$; the resulting linear and quadratic trends are
consistent with the $\sigma H_\sigma^2$ component in the low-tolerance rate. 

Together, Figure~\ref{fig:component-rate-checks} shows that the sample-complexity rates in Theorem~\ref{thm:TV-upper-bound-average} are consistent with the empirical behavior of Algorithm~\ref{alg:robust-amdp}.

\subsection{Span-agnostic adaptation of Algorithm~\ref{alg:anchored-span-agnostic-robust-amdp}}
\label{sec:span-agnostic-adaptation}

We next evaluate the adaptive behavior of Algorithm~\ref{alg:anchored-span-agnostic-robust-amdp}.
We test whether the span-agnostic algorithm can choose between the nominal and
robust policy families and select the discounted horizon without knowing $H_0$
or $H_\sigma$. We vary the regime ratio $\sigma H_0/\varepsilon$ across the
nominal-to-robust transition and repeat the experiment over several $H_0$
settings.

Figures~\hyperref[fig:selector-adaptivity]
{\ref*{fig:selector-adaptivity}a}
and~\hyperref[fig:selector-adaptivity]
{\ref*{fig:selector-adaptivity}b} make this adaptation visible. As the regime
ratio crosses the transition, Figure~\hyperref[fig:selector-adaptivity]
{\ref*{fig:selector-adaptivity}a} shows that the algorithm shifts from the
nominal family to the robust family. The thin gray curves show the
robust-family selection probability separately for each $H_0$ setting, while
the blue curve pools trials across settings; their close agreement shows that
the transition is driven by the regime ratio rather than by a particular
$H_0$ setting. Figure~\hyperref[fig:selector-adaptivity]
{\ref*{fig:selector-adaptivity}b} examines horizon selection at the two
endpoints of this transition. At $\sigma H_0/\varepsilon=0.2$, the algorithm
selects the nominal family, and we compare its selected horizon with
$H_0/\varepsilon$. At $\sigma H_0/\varepsilon=2$, it selects the robust
family, and we compare its selected horizon with $H_\sigma/\varepsilon$. In
both cases, the selected effective horizon $(1-\gamma)^{-1}$ remains within a
factor of two of the reference scale for the corresponding policy family.

Figures~\hyperref[fig:selector-adaptivity]
{\ref*{fig:selector-adaptivity}c}
and~\hyperref[fig:selector-adaptivity]
{\ref*{fig:selector-adaptivity}d} connect these adaptive choices to performance.
Figure~\hyperref[fig:selector-adaptivity]
{\ref*{fig:selector-adaptivity}c} shows that the adaptively selected policies continue to meet the target robust accuracy as uncertainty grows, while the
nominal policy deteriorates. Figure~\hyperref[fig:selector-adaptivity]
{\ref*{fig:selector-adaptivity}d} further illustrates that the span-agnostic sample complexity thresholds remain
within constant factors of the span-informed benchmark. Together,
Figure~\ref{fig:selector-adaptivity} shows that
Algorithm~\ref{alg:anchored-span-agnostic-robust-amdp} adapts both its policy
family and effective horizon without span information, while maintaining
robust accuracy and sample complexity within constant factors of the
span-informed benchmark.

\section{Discussion}\label{sec:discussion}

In this paper, we develop a minimax theory for learning distributionally robust
average-reward MDPs from a generative model. We provide 
span-informed and span-agnostic reduction-based procedures that achieve minimax-optimal sample complexity. We identify the scale $\sigma H_0$ that separates
the high- and low-tolerance regimes.
In the high-tolerance regime, the optimal sample complexity is determined by
$SA\min\{H_0,H_\sigma\}\varepsilon^{-2}$, while an additional term $SA\sigma H_\sigma^2\varepsilon^{-2}$ appears in the low-tolerance regime. We supplement our theoretical results with numerical experiments that support our findings.

While this work provides a tight characterization of robust AMDPs in the generative model setting,
several avenues remain open for future work. We assume $(s, a)$-rectangular TV uncertainty sets. Exploring more coupled uncertainty structures or other divergences could expand our understanding
of the cost of distributional robustness. We also focus on the generative-model setting, where the agent has
access to a simulator to obtain samples for each state-action pair. A natural next step is to investigate regret
bounds or sample complexity in the online setting, where the agent must explore the environment without a
simulator.

\section*{Acknowledgments}

The work of Y. Yang and Y. Chi is supported in part by NSF under ECCS-2537078, ECCS-2537189, and CNS-2148212.

Y.~Chen is supported in part by the Alfred P. Sloan Research Fellowship, the NSF grants IIS-2218773 and CIF-2221009, the ONR grant N00014-25-1-2344, the AFOSR grant FA9550261B178, and the Wharton AI \& Analytics Initiative's AI Research Fund. This work is also supported in part by the NSF under Cooperative Agreement No. 2433450.

\bibliography{reference,bibfileRL,bibfileDRO}
\bibliographystyle{apalike}

\appendix

\section{Robust Bellman equations and optimal bias spans}
\label{app:technical-lemmas}
This appendix develops the Bellman theory underlying the nominal and robust
optimal bias spans. It establishes existence and minimum-span attainment of
optimal Bellman solutions, shows that $H_\sigma$ controls the span of the
robust discounted optimal value, and compares $H_\sigma$ with the
uniform-over-kernels span used in prior work.

\subsection{Fixed-policy robust average-reward verification}
\label{subsec:fixed-policy-robust-verification}

The following lemma is the fixed-policy robust analogue of the
average-reward verification argument in \citet[Section~8]{puterman2014markov}.
For a stationary policy $\pi$, write
$r^\pi(s)=\sum_{a\in\cA}\pi(a\mid s)r(s,a)$. Recall that $P_\pi$ denotes the
transition matrix induced by $\pi$ and a transition kernel $P$, so that
\[
    (P_\pi h)(s)
    =
    \sum_{a\in\cA}\pi(a\mid s)
    \sum_{s'\in\cS}P_{s,a}(s')h(s').
\]

\begin{lemma}[Fixed-policy robust average-reward verification]
    \label{lem:fixed-policy-robust-verification}
    Fix a stationary policy $\pi$ and suppose Assumption~\mainUnichainAssumption{}
    holds. Let $h:\cS\to\mathbb{R}$ be bounded and let $\rho\in\mathbb{R}$.
    If, for all $s\in\cS$,
    \[
        \rho+h(s)
        \ge
        r^\pi(s)+\inf_{P\in\cP}(P_\pi h)(s),
    \]
    then $\rho^{\pi,\sigma}\le\rho$. Conversely, if, for all $s\in\cS$,
    \[
        \rho+h(s)
        \le
        r^\pi(s)+\inf_{P\in\cP}(P_\pi h)(s),
    \]
    then $\rho^{\pi,\sigma}\ge\rho$.
\end{lemma}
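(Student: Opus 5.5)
The plan is to prove both directions by iterating the one-step inequality along a stationary Markov chain and dividing by the horizon, in the spirit of the classical verification argument. In both directions the relevant limit exists, because Assumption~\mainUnichainAssumption{} guarantees that $\rho_P^\pi$ is well defined for every $P\in\cP$.

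\textbf{Step 1 (reduce the infimum to a single kernel).} For the first claim I will show that the pointwise infimum $\inf_{P\in\cP}(P_\pi h)(s)$ is attained by one kernel $\widetilde P\in\cP$ simultaneously for all $s$. The argument uses $(s,a)$-rectangularity. Since
\[
    (P_\pi h)(s)=\sum_a\pi(a\mid s)\,P_{s,a}^\top h,
\]
and the constraint set is the product $\prod_{(s,a)}\cP_{s,a}$, the infimum separates into independent problems $\min_{P_{s,a}\in\cP_{s,a}}P_{s,a}^\top h$, one for each $(s,a)$. Each $\cP_{s,a}$ is a TV ball intersected with the simplex, hence compact, and the objective is linear, so a minimizer $\widetilde P_{s,a}$ exists. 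Assembling these gives $\widetilde P=\{\widetilde P_{s,a}\}\in\cP$ with $\inf_{P\in\cP}P_\pi h=\widetilde P_\pi h$ coordinatewise. This is the one step that uses the structure of $\cP$; I expect it to be the main point needing care, and it is routine given compactness and rectangularity.

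\textbf{Step 2 (supersolution gives an upper bound).} From the hypothesis we get
\[
    \rho\mathbf 1+h\ge r^\pi+\widetilde P_\pi h .
\]
The matrix $\widetilde P_\pi$ is nonnegative, so it preserves vector inequalities. Applying it repeatedly and telescoping yields, for every $T\ge1$,
\[
    T\rho\mathbf 1+h\ \ge\ \sum_{t=0}^{T-1}\widetilde P_\pi^{\,t}r^\pi+\widetilde P_\pi^{\,T}h .
\]
Now divide by $T$. Because $h$ is bounded and $\widetilde P_\pi^{\,T}$ is stochastic, we have $\|\widetilde P_\pi^{\,T}h\|_\infty\le\|h\|_\infty$. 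Letting $T\to\infty$ therefore gives $\rho\ge\rho^\pi_{\widetilde P}(s)$ for every $s$. Since $\widetilde P\in\cP$, it follows that $\rho\ge\inf_{P\in\cP}\rho_P^\pi=\rho^{\pi,\sigma}$.

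\textbf{Step 3 (subsolution gives a lower bound).} Here no attainment is needed. Fix an arbitrary $P\in\cP$. Since $(P_\pi h)(s)\ge\inf_{P'\in\cP}(P'_\pi h)(s)$, the hypothesis gives
\[
    \rho\mathbf 1+h\le r^\pi+P_\pi h .
\]
The same monotone iteration and division by $T$ then yield $\rho\le\rho_P^\pi$. As $P\in\cP$ was arbitrary, taking the infimum over $P$ gives $\rho\le\rho^{\pi,\sigma}$.
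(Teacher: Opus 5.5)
Your proof is correct. Its skeleton matches the paper's: for the supersolution direction you build an attaining kernel by compactness and rectangularity, and for the subsolution direction you work with an arbitrary kernel and then take the infimum. The difference is in how the bias term $h$ is eliminated.

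The paper multiplies the one-step inequality by a stationary distribution $\mu$ of the chain induced by $(\pi,P)$. It then uses $\mu^\top P_\pi=\mu^\top$ and $\mu^\top r^\pi=\rho_P^\pi$, so that $\mu^\top h$ cancels in one line. This relies on Assumption~\ref{assump:unichain} to identify the state-independent gain with $\mu^\top r^\pi$.

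You instead iterate the inequality with the nonnegative matrix $\widetilde P_\pi$ (or $P_\pi$), telescope to obtain $T\rho\mathbf 1+h\gtrless\sum_{t<T}P_\pi^t r^\pi+P_\pi^T h$, and divide by $T$. This buys three things:
\begin{itemize}
\item It works directly with the Ces\`aro-limit definition of $\rho_P^\pi(s)$.
\item It gives the bound for every initial state without invoking stationary distributions.
\item It uses only the boundedness of $h$. The argument would therefore survive outside the unichain or finite-state setting, with $\limsup$ or $\liminf$ in place of the limit if needed.
\end{itemize}
The paper's route is shorter once the ergodic identity is available. Either argument suffices here.
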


\begin{proof}

We prove the two directions separately.

\paragraph*{Upper bound.}
We first prove $\rho^{\pi,\sigma}\le\rho$.
For every $(s,a)$, compactness of $\mathcal P_{s,a}$ allows us to choose
\[
    P^h_{s,a}
    \in
    \arg\min_{P_{s,a}\in\mathcal P_{s,a}}P_{s,a}^\top h.
\]
By rectangularity, these transition vectors define a kernel
$P^h\in\mathcal P$. Consequently,
\[
    (P^h_\pi h)(s)
    =
    \inf_{P\in\mathcal P}(P_\pi h)(s),
    \qquad s\in\mathcal S.
\]
The assumed inequality gives
\[
    \rho+h(s)\ge r^\pi(s)+(P^h_\pi h)(s),\qquad \forall s\in\cS.
\]
By Assumption~\mainUnichainAssumption{}, the Markov chain induced by
$(\pi,P^h)$ has a stationary distribution $\mu^h$ supported on its unique
recurrent class, and its average reward is state independent. Multiplying the
preceding inequality by $(\mu^h)^\top$ gives
\[
    \rho+(\mu^h)^\top h
    \ge
    (\mu^h)^\top r^\pi+(\mu^h)^\top P^h_\pi h.
\]
Since $(\mu^h)^\top P^h_\pi=(\mu^h)^\top$ and the stationary average reward
equals $\rho_{P^h}^\pi$, we have
\[
    (\mu^h)^\top P^h_\pi h=(\mu^h)^\top h,
    \qquad
    (\mu^h)^\top r^\pi=\rho_{P^h}^\pi.
\]
Consequently,
\[
    \rho+(\mu^h)^\top h
    \ge
    \rho_{P^h}^\pi+(\mu^h)^\top h.
\]
Thus $\rho_{P^h}^{\pi}\le\rho$. Since
$\rho^{\pi,\sigma}=\inf_{P\in\cP}\rho_P^\pi$, we obtain
$\rho^{\pi,\sigma}\le\rho$.

\paragraph*{Lower bound.} We now prove $\rho^{\pi,\sigma}\ge\rho$.
Fix an arbitrary $P\in\cP$. Since
$\inf_{P'\in\cP}(P'_\pi h)(s)\le(P_\pi h)(s)$, the assumed inequality
implies
\[
    \rho+h(s)\le r^\pi(s)+(P_\pi h)(s),\qquad \forall s\in\cS.
\]
Let $\mu$ be a stationary distribution supported on the unique recurrent
class induced by $(\pi,P)$. Multiplying the preceding inequality by
$\mu^\top$ gives
\[
    \rho+\mu^\top h
    \le
    \mu^\top r^\pi+\mu^\top P_\pi h.
\]
Since $\mu^\top P_\pi=\mu^\top$ and $\mu^\top r^\pi=\rho_P^\pi$, it follows
that
\[
    \rho+\mu^\top h
    \le
    \rho_P^\pi+\mu^\top h.
\]
Therefore $\rho_P^\pi\ge\rho$. Taking the infimum over $P\in\cP$ gives
$\rho^{\pi,\sigma}\ge\rho$.
\end{proof}

\subsection{Existence and attainment of robust optimal biases}
\label{subsec:robust-bellman-existence-attainment}
\label{subsec:robust-bellman-existence}

This subsection establishes the Bellman foundations needed for the robust
optimal bias span. It proves that the
robust average-reward Bellman equation has a solution and that a minimum-span
solution exists.
The same results hold for the nominal Bellman equation by applying the
same arguments to the singleton uncertainty set $\mathcal P=\{P^0\}$.

We begin by connecting the discounted value definition in
Section~\mainReductionUpperBoundSection{} with the robust Bellman equations used
below. This connection uses compactness and $(s,a)$-rectangularity, but not the
unichain assumption.

For $\gamma\in(0,1)$ and a stationary policy $\pi$, define the fixed-policy
robust discounted Bellman operator and the robust discounted Bellman
optimality operator by
\begin{subequations}
    \begin{align}
        (\mathcal T_{\gamma,\sigma}^{\pi}v)(s)
        &\coloneqq
        \sum_{a\in\mathcal A}\pi(a\mid s)
        \left\{
            r(s,a)
            +
            \gamma
            \min_{P_{s,a}\in\mathcal P_{s,a}}P_{s,a}^{\top}v
        \right\},
        \label{eq:fixed-policy-robust-discounted-bellman-operator} \\
        (\mathcal T_{\gamma,\sigma}v)(s)
        &\coloneqq
        \max_{a\in\mathcal A}
        \left\{
            r(s,a)
            +
            \gamma
            \min_{P_{s,a}\in\mathcal P_{s,a}}P_{s,a}^{\top}v
        \right\}.
        \label{eq:robust-discounted-bellman-optimality-operator}
    \end{align}
\end{subequations}

\begin{lemma}[Bellman characterization of robust discounted values]
    \label{lem:rectangular-discounted-bellman-representation}
    Fix $\gamma\in(0,1)$. For every stationary policy $\pi$, the unique fixed
    point of the operator in
    \eqref{eq:fixed-policy-robust-discounted-bellman-operator} equals the robust
    discounted value $V_\gamma^{\pi,\sigma}$ defined in
    Section~\mainReductionUpperBoundSection{}. The unique fixed point of the
    operator in \eqref{eq:robust-discounted-bellman-optimality-operator} is
    $V_\gamma^{\star,\sigma}$.
\end{lemma}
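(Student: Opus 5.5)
The plan is to use the standard contraction argument for the operators, followed by two inequalities that identify the fixed point with the inf-over-kernels value. Each operator in \eqref{eq:fixed-policy-robust-discounted-bellman-operator} and \eqref{eq:robust-discounted-bellman-optimality-operator} is a $\gamma$-contraction in $\|\cdot\|_\infty$. The reason is that $v\mapsto\min_{P_{s,a}\in\mathcal P_{s,a}}P_{s,a}^\top v$ is $1$-Lipschitz in sup norm, being a minimum of $1$-Lipschitz maps, and that averaging over $\pi(\cdot\mid s)$ and maximizing over $a$ both preserve this Lipschitz constant. Banach's theorem then gives unique fixed points, which we call $v^\pi$ and $v^\star$. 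Compactness of each $\mathcal P_{s,a}$ (a closed TV ball intersected with the simplex) ensures that the minima are attained.

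\textbf{Fixed policy.} We show $v^\pi=V_\gamma^{\pi,\sigma}$ via two inequalities.

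For $v^\pi\ge V_\gamma^{\pi,\sigma}$: choose, for each $(s,a)$, a minimizer $P^{v}_{s,a}\in\arg\min_{P_{s,a}\in\mathcal P_{s,a}}P_{s,a}^\top v^\pi$. By rectangularity these assemble into a kernel $P^v\in\mathcal P$ with $v^\pi=r^\pi+\gamma P^v_\pi v^\pi$. This linear equation says that $v^\pi$ is exactly the discounted value of $\pi$ under the stationary kernel $P^v$. Since $V_\gamma^{\pi,\sigma}$ is an infimum over $\mathcal P$, it is at most this value.

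For $v^\pi\le V_\gamma^{\pi,\sigma}$: fix any $P\in\mathcal P$. Its value $V^\pi_P$ satisfies
\[
V^\pi_P=r^\pi+\gamma P_\pi V^\pi_P\ge \mathcal T^\pi_{\gamma,\sigma}V^\pi_P ,
\]
because $P_{s,a}\in\mathcal P_{s,a}$. The operator $\mathcal T^\pi_{\gamma,\sigma}$ is monotone. Iterating the inequality gives $V^\pi_P\ge(\mathcal T^\pi_{\gamma,\sigma})^k V^\pi_P\to v^\pi$. Taking the infimum over $P$ yields the claim.

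If $\mathcal P$ is read as allowing history-dependent or time-varying adversaries, the same argument goes through. Write the value as $r^\pi+\gamma\,\mathbb E[\text{continuation}]$. For each continuation kernel sequence, bound the one-step term below by the minimum over $\mathcal P_{s,a}$, and induct over a truncated horizon; the truncation error is $O(\gamma^k/(1-\gamma))$. I expect this to be the only delicate point: ensuring that the admissible adversary class in the definition is handled by rectangularity, so that a stationary minimizer suffices.

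\textbf{Optimal value.} We show $v^\star=V_\gamma^{\star,\sigma}$ via two inequalities.

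For $v^\star\ge V_\gamma^{\star,\sigma}$: for any $\pi$, $\mathcal T^\pi_{\gamma,\sigma}v\le\mathcal T_{\gamma,\sigma}v$ for all $v$. Hence $v^\pi=(\mathcal T^\pi_{\gamma,\sigma})^k v^\pi\le\cdots$. More cleanly, $v^\pi\le \mathcal T_{\gamma,\sigma}v^\pi$, and monotone iteration gives $v^\pi\le v^\star$. Therefore $V_\gamma^{\pi,\sigma}=v^\pi\le v^\star$ componentwise, and taking the supremum over $\pi$ gives $V_\gamma^{\star,\sigma}\le v^\star$.

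For $v^\star\le V_\gamma^{\star,\sigma}$: take the greedy deterministic policy $\pi^\star(s)\in\arg\max_a\{r(s,a)+\gamma\min_{P_{s,a}\in\mathcal P_{s,a}}P_{s,a}^\top v^\star\}$. Then $\mathcal T^{\pi^\star}_{\gamma,\sigma}v^\star=v^\star$, so by uniqueness $v^\star=v^{\pi^\star}=V_\gamma^{\pi^\star,\sigma}\le V_\gamma^{\star,\sigma}$.

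Combining the two directions gives equality. The supremum is moreover attained simultaneously at every state by a single stationary policy.
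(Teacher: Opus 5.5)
Your proposal is correct and follows essentially the same route as the paper. Both arguments use the same ingredients:
\begin{itemize}
\item a contraction argument based on the $1$-Lipschitz inner minimum;
\item a rowwise minimizing kernel $P^v\in\mathcal P$, assembled by compactness and rectangularity, showing that the fixed point is the value under some stationary kernel;
\item a monotone-iteration comparison against every $P\in\mathcal P$;
\item a greedy policy for the optimality operator.
\end{itemize}

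The differences are cosmetic. You iterate $\mathcal T^\pi_{\gamma,\sigma}$ downward from $V^\pi_P$ and $\mathcal T_{\gamma,\sigma}$ upward from $v^\pi$. The paper instead iterates $r^\pi+\gamma P_\pi(\cdot)$ upward from the fixed point and $\mathcal T^\pi_{\gamma,\sigma}$ downward from $V$. Your remark on history-dependent adversaries is not needed, since $\mathcal P$ is defined as a product of stationary rows.
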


\begin{proof}
    These are the standard discounted Bellman characterizations for
    rectangular robust MDPs
    \citep{iyengar2005robust,nilim2005robust}. We give the short argument that
    also identifies their fixed points with the stationary-kernel definitions
    used in this paper. For any $f,g\in\mathbb R^{\mathcal S}$ and $(s,a)$,
    \[
        \left|
            \min_{P_{s,a}\in\mathcal P_{s,a}}P_{s,a}^{\top}f
            -
            \min_{P_{s,a}\in\mathcal P_{s,a}}P_{s,a}^{\top}g
        \right|
        \le \lVert f-g\rVert_\infty.
    \]
    Hence both operators are monotone $\gamma$-contractions and have unique
    fixed points.

    Let $v$ be the fixed point of
    $\mathcal T_{\gamma,\sigma}^{\pi}$. Compactness and rectangularity give a
    kernel $P^v\in\mathcal P$ whose rows attain the minima at $v$, so
    $v=r^\pi+\gamma P^v_\pi v=V_{\gamma,P^v}^{\pi}$. For every
    $P\in\mathcal P$,
    \[
        \mathcal T_{\gamma,\sigma}^{\pi}f
        \le r^\pi+\gamma P_\pi f.
    \]
    Iterating the right-hand side from $v$ gives
    $v\le V_{\gamma,P}^{\pi}$. Thus $P^v$ attains the componentwise infimum and
    $v=V_\gamma^{\pi,\sigma}$.

    Finally, let $V$ be the fixed point of
    $\mathcal T_{\gamma,\sigma}$ and choose a deterministic policy $\pi_V$
    greedy with respect to $V$. Then
    $\mathcal T_{\gamma,\sigma}^{\pi_V}V=V$, whereas
    $\mathcal T_{\gamma,\sigma}^{\pi}V\le V$ for every stationary policy
    $\pi$. The preceding fixed-policy characterization and monotone iteration
    therefore give
    \[
        V=V_\gamma^{\pi_V,\sigma}
        \quad\text{and}\quad
        V_\gamma^{\pi,\sigma}\le V
        \quad\text{for every }\pi.
    \]
    Hence $V=V_\gamma^{\star,\sigma}$.
\end{proof}

The next proposition proves directly that the robust average-reward Bellman
equation used in \mainRobustSpanDefinition{} has a solution for the paper's
uncertainty set. More generally, the proof applies whenever each local
uncertainty set is a nonempty compact subset of the probability simplex and
the resulting product set satisfies Assumption~\mainUnichainAssumption{}.

\begin{proposition}[Existence of a robust optimal Bellman solution]
    \label{prop:robust-bellman-existence}
    There exists $h\in\mathbb R^{\mathcal S}$ satisfying
    \[
        \rho^{\star,\sigma}\mathbf 1+h
        =
        \mathcal T_\sigma h.
    \]
\end{proposition}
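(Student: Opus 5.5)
We will use the vanishing-discount approach. Fix a reference state $s_0\in\mathcal S$. For $\gamma\in(0,1)$, let $V_\gamma\coloneqq V_\gamma^{\star,\sigma}$ be the fixed point of $\mathcal T_{\gamma,\sigma}$ given by Lemma~\ref{lem:rectangular-discounted-bellman-representation}, and set $h_\gamma\coloneqq V_\gamma-V_\gamma(s_0)\mathbf 1$. Because each row term $\min_{P_{s,a}\in\mathcal P_{s,a}}P_{s,a}^\top v$ commutes with adding constants, the fixed-point equation $V_\gamma=\mathcal T_{\gamma,\sigma}V_\gamma$ can be rewritten as
\[
    (1-\gamma)V_\gamma(s_0)\mathbf 1+h_\gamma
    =\max_{a}\Bigl\{r(\cdot,a)+\gamma\min_{P_{s,a}\in\mathcal P_{s,a}}P_{s,a}^\top h_\gamma\Bigr\}.
\]
The argument then has two parts.

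\emph{First}, show that $\sup_{\gamma}\|h_\gamma\|_{\mathrm{span}}<\infty$ along some sequence $\gamma_n\uparrow1$. \emph{Second}, extract a subsequence along which $h_{\gamma_n}\to h$ and $(1-\gamma_n)V_{\gamma_n}(s_0)\to g$. Since $\mathcal T_\sigma$ is continuous (indeed nonexpansive in $\|\cdot\|_\infty$), passing to the limit gives $g\mathbf 1+h=\mathcal T_\sigma h$. It then remains to identify $g=\rho^{\star,\sigma}$. Take a greedy deterministic policy $\pi$ for $h$. Then $g+h=r^\pi+\inf_P P_\pi h$, and Lemma~\ref{lem:fixed-policy-robust-verification} gives $\rho^{\pi,\sigma}=g$, so $g\le\rho^{\star,\sigma}$. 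Conversely, for any stationary $\pi'$ we have $g+h\ge r^{\pi'}+\inf_P P_{\pi'}h$, so the same lemma gives $\rho^{\pi',\sigma}\le g$. Taking the supremum over stationary policies yields $g=\rho^{\star,\sigma}$. Here $\rho^{\star,\sigma}$ is the sup over stationary policies, as in the paper's setup; history-dependent policies, if needed, are handled by the standard sample-path telescoping argument against the kernel $P^h$.

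The main obstacle is the uniform span bound. Our plan is to exploit the unichain assumption together with compactness. Fix the finitely many deterministic policies $\pi$. Over the compact set $\mathcal P$, the unichain property makes the fundamental matrices $(I-P_\pi+\mathbf 1\mu_{P,\pi}^\top)^{-1}$ depend continuously on $P$, hence they are uniformly bounded. This bounds $\|V_{\gamma,P}^{\pi}\|_{\mathrm{span}}$ by a constant $K$ independent of $\gamma$ and $P$, via the Laurent-type decomposition $V_{\gamma,P}^\pi=\rho_P^\pi/(1-\gamma)\mathbf 1+h_P^\pi+O(1-\gamma)$, where the error is uniform by compactness.

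To pass from these per-kernel bounds to $V_\gamma^{\star,\sigma}$, choose the greedy policy $\pi_\gamma$ and the worst-case kernel $P^\gamma\in\mathcal P$. Then $V_\gamma=V_{\gamma,P^\gamma}^{\pi_\gamma}$ exactly, as in the proof of Lemma~\ref{lem:rectangular-discounted-bellman-representation}, so $\|h_\gamma\|_{\mathrm{span}}\le K$. The delicate point is the uniformity of the fundamental-matrix bound over $P\in\mathcal P$. We would verify it by showing that unichain on a compact set gives a uniform lower bound on the probability of hitting the recurrent class within $S$ steps; this probability is continuous in $P$ and positive pointwise, but the recurrent class can change with $P$. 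If this route proves awkward, the fallback is a coupling bound on $|V_{\gamma,P}^\pi(s)-V_{\gamma,P}^\pi(s')|$ via the expected meeting time of two copies of the chain, again uniformized by compactness.
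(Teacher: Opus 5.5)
Your proposal is correct and follows essentially the same route as the paper: the same uniform span bound from deviation matrices that are uniformly bounded over the compact unichain family (the paper simply cites \citet{Wang2023Robust} for this, and it also follows from the eigenvalue $1$ being simple for unichain matrices), the same identification $V_\gamma^{\star,\sigma}=V_{\gamma,P_\gamma}^{\pi_\gamma}$ via a greedy policy and rowwise worst-case kernel, the same vanishing-discount limit of the normalized Bellman equation, and the same identification of the constant through both directions of Lemma~\ref{lem:fixed-policy-robust-verification}. The one simplification in the paper is that it replaces your Laurent-type $O(1-\gamma)$ asymptotics with the exact identity $V_\gamma^{\star,\sigma}=\frac{\rho}{1-\gamma}\mathbf 1+h-q_\gamma$, where $q_\gamma=(1-\gamma)\sum_{t\ge0}\gamma^tQ^{t+1}h$ has every coordinate between $\min h$ and $\max h$; this gives $\|V_\gamma^{\star,\sigma}\|_{\mathrm{span}}\le 2B$ for every $\gamma$ and needs no uniform-error argument.
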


\begin{proof}
    We first establish the uniform discounted span bound needed for a
    vanishing-discount argument. Fix a deterministic stationary policy $\pi$
    and $P\in\mathcal P$. By Assumption~\mainUnichainAssumption{}, $P_\pi$ has
    a unique stationary distribution $\mu_P^\pi$, and
    $\rho_P^\pi=(\mu_P^\pi)^\top r^\pi$. Select the normalized bias
    \begin{equation}
        h_P^\pi
        =
        \left(I-P_\pi+\mathbf 1(\mu_P^\pi)^\top\right)^{-1}
        \left(r^\pi-\rho_P^\pi\mathbf 1\right).
        \label{eq:policy-bias-continuous-selection}
    \end{equation}
    Equivalently,
    \begin{equation}
        \left(I-P_\pi+\mathbf 1(\mu_P^\pi)^\top\right)h_P^\pi
        =r^\pi-\rho_P^\pi\mathbf 1.
        \label{eq:policy-bias-linear-system}
    \end{equation}
    Standard finite-state unichain theory gives
    $(\mu_P^\pi)^\top h_P^\pi=0$ and the Poisson equation
    \begin{equation}
        \rho_P^\pi\mathbf 1+h_P^\pi
        =r^\pi+P_\pi h_P^\pi.
        \label{eq:policy-poisson-equation}
    \end{equation}
    The vector in \eqref{eq:policy-bias-continuous-selection} is the
    corresponding deviation matrix applied to $r^\pi$. That matrix is
    continuous and uniformly bounded over compact unichain families
    \citep[Appendix, Lemma~2]{Wang2023Robust}. Since rewards are bounded and
    the deterministic stationary policies form a finite set,
    \begin{equation}
        B\coloneqq
        \sup_{\substack{P\in\mathcal P,\pi}}
        \|h_P^\pi\|_{\mathrm{span}}
        <\infty,
        \label{eq:uniform-policy-bias-bound}
    \end{equation}
    where the supremum is over deterministic stationary policies.

    Fix $\gamma\in(0,1)$. Choose rowwise minimizers at
    $V_\gamma^{\star,\sigma}$ and assemble them into
    $P_\gamma\in\mathcal P$, and let $\pi_\gamma$ be a deterministic greedy
    policy. Lemma~\ref{lem:rectangular-discounted-bellman-representation}
    yields
    \begin{equation}
        V_\gamma^{\star,\sigma}
        =r^{\pi_\gamma}
        +\gamma(P_\gamma)_{\pi_\gamma}V_\gamma^{\star,\sigma}.
        \label{eq:robust-value-as-fixed-kernel-value}
    \end{equation}
    Write $Q=(P_\gamma)_{\pi_\gamma}$,
    $\rho=\rho_{P_\gamma}^{\pi_\gamma}$, and
    $h=h_{P_\gamma}^{\pi_\gamma}$. Combining
    \eqref{eq:robust-value-as-fixed-kernel-value} with the Poisson equation
    gives the resolvent identity
    \[
        V_\gamma^{\star,\sigma}
        =
        \frac{\rho}{1-\gamma}\mathbf 1+h-q_\gamma,
        \qquad
        q_\gamma
        =(1-\gamma)(I-\gamma Q)^{-1}Qh
        =(1-\gamma)\sum_{t=0}^\infty\gamma^tQ^{t+1}h.
    \]
    Each coordinate of $q_\gamma$ lies between the minimum and maximum
    coordinates of $h$. Hence
    $\|q_\gamma\|_{\mathrm{span}}\le\|h\|_{\mathrm{span}}$, and therefore
    \begin{equation}
        \sup_{\gamma\in(0,1)}
        \|V_\gamma^{\star,\sigma}\|_{\mathrm{span}}
        \le 2B<\infty.
        \label{eq:uniform-robust-discounted-span}
    \end{equation}

    We now apply the vanishing-discount argument. Choose
    $\gamma_n\to1$, fix $s_0\in\mathcal S$, and set
    \[
        u_n
        =V_{\gamma_n}^{\star,\sigma}
        -V_{\gamma_n}^{\star,\sigma}(s_0)\mathbf 1.
    \]
    By \eqref{eq:uniform-robust-discounted-span}, $(u_n)$ is bounded; moreover,
    rewards in $[0,1]$ give
    $0\le(1-\gamma_n)V_{\gamma_n}^{\star,\sigma}(s_0)\le1$.
    Passing to a subsequence, let
    \[
        u_n\to h,
        \qquad
        (1-\gamma_n)V_{\gamma_n}^{\star,\sigma}(s_0)\to\bar\rho.
    \]
    Translation of the discounted Bellman equation gives
    \begin{equation}
        (1-\gamma_n)V_{\gamma_n}^{\star,\sigma}(s_0)\mathbf 1+u_n
        =\mathcal T_{\gamma_n,\sigma}u_n.
        \label{eq:normalized-discounted-robust-bellman}
    \end{equation}
    Since
    \[
        \|\mathcal T_{\gamma_n,\sigma}u_n-\mathcal T_\sigma h\|_\infty
        \le
        \gamma_n\|u_n-h\|_\infty+(1-\gamma_n)\|h\|_\infty
        \longrightarrow0,
    \]
    taking limits yields
    \begin{equation}
        \bar\rho\mathbf 1+h=\mathcal T_\sigma h.
        \label{eq:limit-robust-bellman}
    \end{equation}

    For every stationary policy $\pi$, rectangularity and
    \eqref{eq:limit-robust-bellman} give
    \[
        \bar\rho\mathbf 1+h
        \ge r^\pi+\inf_{P\in\mathcal P}P_\pi h.
    \]
    Lemma~\ref{lem:fixed-policy-robust-verification} therefore implies
    $\rho^{\pi,\sigma}\le\bar\rho$, and hence
    $\rho^{\star,\sigma}\le\bar\rho$. Conversely, a deterministic policy
    $\bar\pi$ greedy with respect to $h$ makes the preceding inequality an
    equality. The reverse direction of the same lemma gives
    $\bar\rho\le\rho^{\bar\pi,\sigma}\le\rho^{\star,\sigma}$. Thus
    $\bar\rho=\rho^{\star,\sigma}$, and
    \eqref{eq:limit-robust-bellman} proves the claim.

\end{proof}

\begin{proposition}[Existence of a minimum-span robust Bellman solution]
    \label{prop:minimum-span-attainment}
    There exists $h^{\star,\sigma}\in\mathbb R^{\mathcal S}$ satisfying
    \[
        \rho^{\star,\sigma}\mathbf 1+h^{\star,\sigma}
        =
        \mathcal T_\sigma h^{\star,\sigma}
    \]
    and
    \[
        \|h^{\star,\sigma}\|_{\mathrm{span}}
        =
        \inf_{h:\,\rho^{\star,\sigma}\mathbf 1+h=\mathcal T_\sigma h}
        \|h\|_{\mathrm{span}}.
    \]
    Thus the inner infimum in \mainRobustSpanDefinition{} is attained.
\end{proposition}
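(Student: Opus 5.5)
The plan is a compactness argument on the solution set of the robust Bellman equation, taken modulo constants. Let $\mathcal H\coloneqq\{h\in\mathbb R^{\mathcal S}:\rho^{\star,\sigma}\mathbf 1+h=\mathcal T_\sigma h\}$. By Proposition~\ref{prop:robust-bellman-existence}, $\mathcal H$ is nonempty. Set $m\coloneqq\inf_{h\in\mathcal H}\|h\|_{\mathrm{span}}<\infty$.

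Since $\mathcal T_\sigma(h+c\mathbf 1)=\mathcal T_\sigma h+c\mathbf 1$, the set $\mathcal H$ is invariant under adding constants. We may therefore normalize and work with
\[
\mathcal H_0\coloneqq\{h\in\mathcal H:h(s_0)=0\}
\]
for a fixed $s_0\in\mathcal S$, without changing the infimum.

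Next, take a minimizing sequence $h_n\in\mathcal H_0$ with $\|h_n\|_{\mathrm{span}}\to m$. We may assume $\|h_n\|_{\mathrm{span}}\le m+1$. Because $h_n(s_0)=0$, this gives $\|h_n\|_\infty\le m+1$, so the sequence is bounded in $\mathbb R^{\mathcal S}$. By Bolzano--Weierstrass, a subsequence converges to some $h^{\star,\sigma}$.

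It remains to show that $h^{\star,\sigma}$ solves the Bellman equation and achieves the infimum. Both follow from continuity.

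\textbf{Continuity of $\mathcal T_\sigma$.} As in the proof of Lemma~\ref{lem:rectangular-discounted-bellman-representation}, for each $(s,a)$,
\[
\Bigl|\min_{P\in\mathcal P_{s,a}}P^\top f-\min_{P\in\mathcal P_{s,a}}P^\top g\Bigr|\le\|f-g\|_\infty .
\]
Taking a max over actions preserves this bound, so $\mathcal T_\sigma$ is $1$-Lipschitz in $\|\cdot\|_\infty$.

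\textbf{Closedness of $\mathcal H$.} The map $h\mapsto\mathcal T_\sigma h-h-\rho^{\star,\sigma}\mathbf 1$ is continuous, so $\mathcal H$, its zero set, is closed. Passing to the limit in $\rho^{\star,\sigma}\mathbf 1+h_n=\mathcal T_\sigma h_n$ gives $h^{\star,\sigma}\in\mathcal H$.

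\textbf{Attainment.} The span seminorm satisfies $\bigl|\|f\|_{\mathrm{span}}-\|g\|_{\mathrm{span}}\bigr|\le2\|f-g\|_\infty$, so it is continuous. Hence $\|h^{\star,\sigma}\|_{\mathrm{span}}=\lim_n\|h_n\|_{\mathrm{span}}=m$.

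The only step needing care is the boundedness of the minimizing sequence. Span alone does not bound a vector, and this is exactly why we normalize $h(s_0)=0$ using translation invariance. There is no real obstacle beyond this; the result is a routine closed-set-plus-compactness argument.

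The same argument applies to the singleton set $\mathcal P=\{P^0\}$, which covers the nominal infimum in \eqref{eq:nominal_span}.
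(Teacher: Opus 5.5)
Your proposal is correct and follows essentially the same route as the paper's proof: you normalize at $s_0$ via translation invariance, bound the minimizing sequence in sup-norm, extract a convergent subsequence, and then use the $1$-Lipschitz property of $\mathcal T_\sigma$ (for closedness) together with continuity of the span seminorm (for attainment). You also spell out the bounds $\|h_n\|_\infty\le\|h_n\|_{\mathrm{span}}$ and $\bigl|\|f\|_{\mathrm{span}}-\|g\|_{\mathrm{span}}\bigr|\le 2\|f-g\|_\infty$, which the paper uses only implicitly.
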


\begin{proof}
    Proposition~\ref{prop:robust-bellman-existence} makes the Bellman-solution
    set nonempty. Fix $s_0\in\mathcal S$ and normalize its elements at $s_0$:
    \[
        \mathcal H_0
        \coloneqq
        \left\{
            h:h(s_0)=0,\quad
            \rho^{\star,\sigma}\mathbf 1+h=\mathcal T_\sigma h
        \right\}.
    \]
    This normalization does not change the span because
    $\mathcal T_\sigma(h+c\mathbf 1)=\mathcal T_\sigma h+c\mathbf 1$.
    The operator $\mathcal T_\sigma$ is $1$-Lipschitz in the supremum norm, so
    $\mathcal H_0$ is closed. Moreover, if $h\in\mathcal H_0$ and
    $\|h\|_{\mathrm{span}}\le C$, then
    $\|h\|_\infty\le C$. Hence every bounded-span sublevel set of
    $\mathcal H_0$ is compact.

    Let
    \[
        m=\inf_{h\in\mathcal H_0}\|h\|_{\mathrm{span}}
    \]
    and choose $h_n\in\mathcal H_0$ with
    $\|h_n\|_{\mathrm{span}}\le m+1/n$. The sequence lies in a compact
    bounded-span sublevel set, so a subsequence converges to some
    $h^{\star,\sigma}\in\mathcal H_0$. Continuity of the span seminorm gives
    $\|h^{\star,\sigma}\|_{\mathrm{span}}=m$, proving the claim.
\end{proof}

\subsection{Discounted-value control by the robust bias span}
\label{subsec:robust-discounted-span-control}
\label{subsec:bias-to-discounted-comparison}

The next lemma shows that $H_\sigma$ controls the span of the robust discounted
optimal value. 

\begin{lemma}[Robust discounted optimal-value span]
    \label{lem:robust-value-function-span-bound}
    For every $\gamma\in(0,1)$,
    \[
        \|V_\gamma^{\star,\sigma}\|_{\mathrm{span}}
        \le
        2H_\sigma.
    \]
\end{lemma}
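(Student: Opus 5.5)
The plan is to compare $V_\gamma^{\star,\sigma}$ with the affine function $\frac{\rho^{\star,\sigma}}{1-\gamma}\mathbf 1+h^{\star,\sigma}$, where $h^{\star,\sigma}$ is the minimum-span robust Bellman solution given by Proposition~\ref{prop:minimum-span-attainment}. By definition of $H_\sigma$ we have $\|h^{\star,\sigma}\|_{\mathrm{span}}\le H_\sigma$. The goal is to sandwich
\[
    \frac{\rho^{\star,\sigma}}{1-\gamma}\mathbf 1+h^{\star,\sigma}-\bigl(\max_s h^{\star,\sigma}(s)\bigr)\mathbf 1
    \le V_\gamma^{\star,\sigma}
    \le \frac{\rho^{\star,\sigma}}{1-\gamma}\mathbf 1+h^{\star,\sigma}-\bigl(\min_s h^{\star,\sigma}(s)\bigr)\mathbf 1 .
\]
Each side has span at most $H_\sigma$, and the offset between the two constant shifts equals $\|h^{\star,\sigma}\|_{\mathrm{span}}$. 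The sandwich therefore gives $\|V_\gamma^{\star,\sigma}\|_{\mathrm{span}}\le \|h^{\star,\sigma}\|_{\mathrm{span}}+\|h^{\star,\sigma}\|_{\mathrm{span}}\le 2H_\sigma$.

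To obtain the sandwich, we work with the robust discounted Bellman operator $\mathcal T_{\gamma,\sigma}$ and use monotonicity together with contraction, as in Lemma~\ref{lem:rectangular-discounted-bellman-representation}. Write $h=h^{\star,\sigma}$ and $w_c=\frac{\rho^{\star,\sigma}}{1-\gamma}\mathbf 1+h-c\mathbf 1$. Using constant-translation equivariance $\mathcal T_{\gamma,\sigma}(v+c\mathbf 1)=\mathcal T_{\gamma,\sigma}v+\gamma c\mathbf 1$, we get
\[
    \mathcal T_{\gamma,\sigma}w_c
    =\gamma\Bigl(\tfrac{\rho^{\star,\sigma}}{1-\gamma}-c\Bigr)\mathbf 1
    +\max_a\Bigl\{r+\gamma\min_{P}P^\top h\Bigr\}.
\]
The remaining task is to compare $\max_a\{r+\gamma\min P^\top h\}$ with $\mathcal T_\sigma h=\rho^{\star,\sigma}\mathbf 1+h$. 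Since
\[
    \gamma\min P^\top h=\min P^\top h-(1-\gamma)\min P^\top h ,
\]
and $\min_P P^\top h\in[\min h,\max h]$, we obtain
\[
    \mathcal T_\sigma h-(1-\gamma)(\max h)\mathbf 1
    \le \max_a\{\cdot\}
    \le \mathcal T_\sigma h-(1-\gamma)(\min h)\mathbf 1 .
\]

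Substituting, with $c=\min h$ we get $\mathcal T_{\gamma,\sigma}w_c\le w_c$, so $w_c$ is a supersolution. Here the constants combine as $\gamma\frac{\rho}{1-\gamma}+\rho=\frac{\rho}{1-\gamma}$ and $-\gamma c-(1-\gamma)c=-c$. Iterating the operator then gives $V_\gamma^{\star,\sigma}=\lim_k\mathcal T_{\gamma,\sigma}^k w_c\le w_c$. Symmetrically, with $c=\max h$ we get $\mathcal T_{\gamma,\sigma}w_c\ge w_c$ and hence $V_\gamma^{\star,\sigma}\ge w_c$.

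The only delicate step is this constant bookkeeping: matching $\gamma c+(1-\gamma)c$ in the supersolution and subsolution inequalities so that the same $c$ appears on both sides. Everything else follows directly from the definitions and the earlier lemma. No unichain argument is needed beyond the existence of $h^{\star,\sigma}$.
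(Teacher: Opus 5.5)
Your proof is correct and is essentially the paper's argument. The paper also sandwiches $V_\gamma^{\star,\sigma}$ between a subsolution and a supersolution of $\mathcal T_{\gamma,\sigma}$ built from the minimum-span robust bias $h^{\star,\sigma}$, using the same comparison of $\mathcal T_{\gamma,\sigma}h$ with $\mathcal T_\sigma h$, the shift identity, and monotone iteration; the only cosmetic difference is that the paper first normalizes $\min_s h(s)=0$, whereas you carry the constants $c=\min h$ and $c=\max h$ explicitly.
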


\begin{proof}
    Fix $\gamma\in(0,1)$.
    We first state the comparison used below. Let $h\in\mathbb R^{\mathcal S}$
    satisfy
    \[
        \rho^{\star,\sigma}\mathbf 1+h
        =
        \mathcal T_\sigma h,
    \]
    and suppose that $\min_{s\in\mathcal S}h(s)=0$. Then
    \begin{equation}
        h+
        \left(
            \frac{\rho^{\star,\sigma}}{1-\gamma}
            -
            \|h\|_{\mathrm{span}}
        \right)\mathbf 1
        \le
        V_\gamma^{\star,\sigma}
        \le
        h+
        \frac{\rho^{\star,\sigma}}{1-\gamma}\mathbf 1.
        \label{eq:bias-discounted-comparison}
    \end{equation}

    By Proposition~\ref{prop:minimum-span-attainment}, the robust Bellman
    solution $h^{\star,\sigma}$ attaining the minimum in
    \mainRobustSpanDefinition{} exists. Shift it by a constant so that
    $\min_s h^{\star,\sigma}(s)=0$. 
    For any $s,t\in\mathcal S$, applying the upper comparison in
    \eqref{eq:bias-discounted-comparison} at $s$ and the lower comparison at
    $t$ gives
    \[
        V_\gamma^{\star,\sigma}(s)
        -V_\gamma^{\star,\sigma}(t)
        \le
        h^{\star,\sigma}(s)-h^{\star,\sigma}(t)
        +\|h^{\star,\sigma}\|_{\mathrm{span}}
        \le
        2H_\sigma.
    \]
    Taking the maximum over $s,t$ proves the claimed span bound.

    \paragraph*{Proof of \eqref{eq:bias-discounted-comparison}.}
    The condition $\min_{s\in\mathcal S}h(s)=0$ gives
    \[
        0
        \le
        h
        \le
        \|h\|_{\mathrm{span}}\mathbf 1.
    \]
    Hence, for every $(s,a)$,
    \[
        0
        \le
        \min_{P_{s,a}\in\mathcal P_{s,a}}P_{s,a}^{\top}h
        \le
        \|h\|_{\mathrm{span}}.
    \]
    Comparing the definition of $\mathcal T_\sigma$ in
    Section~\mainProblemSetupSection{} with the definition of
    $\mathcal T_{\gamma,\sigma}$ in
    \eqref{eq:robust-discounted-bellman-optimality-operator} gives
    \[
        \mathcal T_\sigma h
        -(1-\gamma)\|h\|_{\mathrm{span}}\mathbf 1
        \le
        \mathcal T_{\gamma,\sigma}h
        \le
        \mathcal T_\sigma h.
    \]
    Using the robust average-reward Bellman equation
    \mainRobustAverageRewardBellmanEquation{},
    \begin{equation}
        \rho^{\star,\sigma}\mathbf 1+h
        -(1-\gamma)\|h\|_{\mathrm{span}}\mathbf 1
        \le
        \mathcal T_{\gamma,\sigma}h
        \le
        \rho^{\star,\sigma}\mathbf 1+h.
        \label{eq:discounted-operator-at-bias}
    \end{equation}
    For every $v\in\mathbb R^{\mathcal S}$ and $c\in\mathbb R$, the discounted
    Bellman operator satisfies the shift identity
    \[
        \mathcal T_{\gamma,\sigma}(v+c\mathbf 1)
        =
        \mathcal T_{\gamma,\sigma}v+\gamma c\mathbf 1.
    \]
    Applying the lower bound in
    \eqref{eq:discounted-operator-at-bias} yields
    \begin{align*}
        &\mathcal T_{\gamma,\sigma}
        \left(
            h+
            \left(
                \frac{\rho^{\star,\sigma}}{1-\gamma}
                -
                \|h\|_{\mathrm{span}}
            \right)\mathbf 1
        \right)
        \\
        &\qquad\ge
        h+
        \left(
            \frac{\rho^{\star,\sigma}}{1-\gamma}
            -
            \|h\|_{\mathrm{span}}
        \right)\mathbf 1.
    \end{align*}
    Thus the lower comparison vector in
    \eqref{eq:bias-discounted-comparison} is a subsolution. Similarly, the
    upper bound in
    \eqref{eq:discounted-operator-at-bias} yields
    \[
        \mathcal T_{\gamma,\sigma}
        \left(
            h+
            \frac{\rho^{\star,\sigma}}{1-\gamma}\mathbf 1
        \right)
        \le
        h+
        \frac{\rho^{\star,\sigma}}{1-\gamma}\mathbf 1,
    \]
    so the upper comparison vector is a supersolution. By
    Lemma~\ref{lem:rectangular-discounted-bellman-representation}, the unique
    fixed point of $\mathcal T_{\gamma,\sigma}$ is
    $V_\gamma^{\star,\sigma}$. Iterating this monotone contraction from the
    subsolution and the supersolution proves
    \eqref{eq:bias-discounted-comparison}.
\end{proof}

\subsection{Comparison with other robust bias span parameters}
\label{subsec:comparison-roch-span}

Our bounds are expressed in terms of the minimum robust optimal bias span
$H_\sigma$, whereas the bounds of \citet{roch2025reduction} and
\citet{roch2025provably} use different robust span parameters. Comparing the
rates therefore requires comparing the parameters themselves. We show
below that $H_\sigma$ is no larger than either of the two parameters and then
explain why $H_\sigma$ suffices for our analysis. Throughout this subsection, we
truncate all span parameters from below at $1$.\footnote{\citet{roch2025reduction} define their parameter
without this truncation, although their reduction proof uses the corresponding
normalization truncated below at $1$; \citet{roch2025provably} assume that their
parameter is at least $1$.}

\paragraph*{Prior span parameters.}
We start with the span parameter used in \citet{roch2025reduction}.
Let
$(\pi^\star_{\mathrm{Roch}},h^\star_{\mathrm{Roch}})$ be a robust-optimal
policy--bias pair satisfying
\[
    \rho^{\star,\sigma}\bm{1}_S+h^\star_{\mathrm{Roch}}
    =
    \mathcal T_\sigma h^\star_{\mathrm{Roch}},
\]
with $\pi^\star_{\mathrm{Roch}}$ attaining the statewise maximum in
$\cT_\sigma$. Their span parameter, in the form used by their proof, is
\[
    H_{\mathrm{Roch}}
    \coloneqq
    \max\left\{
        1,
        \max_{P\in\mathcal P}
        \|h_P^{\pi^\star_{\mathrm{Roch}}}\|_{\mathrm{span}}
    \right\},
\]
where $h_P^{\pi^\star_{\mathrm{Roch}}}$ is the ordinary average-reward bias of
$\pi^\star_{\mathrm{Roch}}$ under kernel $P$. Thus,
$H_{\mathrm{Roch}}$ controls the bias of one robust-optimal policy uniformly
over every kernel in the uncertainty set.

The robust Halpern iteration result of \citet{roch2025provably} instead uses
\[
    H_{\mathrm{RHI}}
    \coloneqq
    \max\left\{
        1,\,
        \max_{\substack{
            \pi:\,\rho^{\pi,\sigma}=\rho^{\star,\sigma}\\
            P\in\mathcal P:\,\rho_P^\pi=\rho^{\pi,\sigma}
        }}
        \|h_P^\pi\|_{\mathrm{span}}
    \right\}.
\]
This parameter controls the ordinary biases associated with all
robust-optimal policies and their worst-case kernels.

\paragraph*{Comparison with $H_\sigma$.}
Our definition minimizes the span over robust Bellman solutions. Since
$h^\star_{\mathrm{Roch}}$ is one such solution,
\[
    H_\sigma
    \le
    \max\left\{
        1,\,
        \|h^\star_{\mathrm{Roch}}\|_{\mathrm{span}}
    \right\}.
\]
To relate $h^\star_{\mathrm{Roch}}$ to the ordinary bias functions appearing in
the two prior parameters, we associate it with a kernel in the uncertainty set.
For every state, choose transition rows attaining the minima in the
fixed-policy robust Bellman operator. Compactness guarantees that these
minimizers exist, and rectangularity allows them to be assembled into a single
kernel $P^\dagger\in\mathcal P$ satisfying
\[
    \bigl(P^\dagger_{\pi^\star_{\mathrm{Roch}}}
    h^\star_{\mathrm{Roch}}\bigr)(s)
    =
    \inf_{P\in\mathcal P}
    \bigl(P_{\pi^\star_{\mathrm{Roch}}}
    h^\star_{\mathrm{Roch}}\bigr)(s),
    \qquad s\in\mathcal S.
\]
Because $\pi^\star_{\mathrm{Roch}}$ attains the maximum in the robust Bellman
equation, we therefore have
\[
    \rho^{\star,\sigma}\bm{1}_S+h^\star_{\mathrm{Roch}}
    =
    r^{\pi^\star_{\mathrm{Roch}}}
    +\inf_{P\in\mathcal P}
    P_{\pi^\star_{\mathrm{Roch}}}h^\star_{\mathrm{Roch}}
    =
    r^{\pi^\star_{\mathrm{Roch}}}
    +P^\dagger_{\pi^\star_{\mathrm{Roch}}}
    h^\star_{\mathrm{Roch}}.
\]
Applying both directions of
Lemma~\ref{lem:fixed-policy-robust-verification} to the first equality gives
$\rho^{\pi^\star_{\mathrm{Roch}},\sigma}=\rho^{\star,\sigma}$. The second
equality, together with Assumption~\mainUnichainAssumption{}, identifies
$\rho^{\star,\sigma}$ as the average reward of
$\pi^\star_{\mathrm{Roch}}$ under $P^\dagger$. Hence
$\rho_{P^\dagger}^{\pi^\star_{\mathrm{Roch}}}=\rho^{\star,\sigma}$ and
$h^\star_{\mathrm{Roch}}$ is, up to an additive constant, the ordinary bias
$h_{P^\dagger}^{\pi^\star_{\mathrm{Roch}}}$. Thus $P^\dagger$ is a worst-case
kernel for $\pi^\star_{\mathrm{Roch}}$.

The definition of $H_{\mathrm{Roch}}$ maximizes over all kernels in
$\mathcal P$ for the fixed policy $\pi^\star_{\mathrm{Roch}}$, so its
maximization includes $P^\dagger$. Moreover,
$\pi^\star_{\mathrm{Roch}}$ is robust optimal and $P^\dagger$ is worst-case for
this policy, so the pair $(\pi^\star_{\mathrm{Roch}},P^\dagger)$ is included in
the maximization defining $H_{\mathrm{RHI}}$. Therefore,
\[
    \max\left\{
        1,
        \|h^\star_{\mathrm{Roch}}\|_{\mathrm{span}}
    \right\}
    \le
    H_{\mathrm{Roch}}
    \qquad\text{and}\qquad
    \max\left\{
        1,
        \|h^\star_{\mathrm{Roch}}\|_{\mathrm{span}}
    \right\}
    \le
    H_{\mathrm{RHI}}.
\]
Combining these inequalities with the preceding bound on $H_\sigma$ gives
\[
    H_\sigma
    \le
    \min\left\{
        H_{\mathrm{Roch}},
        H_{\mathrm{RHI}}
    \right\}.
\]

\paragraph*{Why $H_\sigma$ suffices.}
Our reduction only requires uniform control of the span of the robust discounted
optimal value. Lemma~\ref{lem:robust-value-function-span-bound} provides exactly
this control: for every $\gamma\in(0,1)$,
\[
    \|V_\gamma^{\star,\sigma}\|_{\mathrm{span}}
    \le
    2H_\sigma.
\]
This bound is obtained directly from a minimum-span robust Bellman solution.
Consequently, our analysis does not require uniform control over all kernels or
over all robust-optimal bias functions.

\section{Proofs for Section~\mainProblemSetupSection{}}
\label{app:proofs-problem-setup}
This appendix contains the proofs of the propositions stated in
Section~\mainProblemSetupSection{}.

\subsection{Proof of Proposition~\mainSpanIndependenceProposition{}}
\label{subsec:proof-span-independence}

Fix $\sigma>0$ and $H_0,H_\sigma\ge1$. If $H_0\ne H_\sigma$, choose
\[
    0<g\le
    \min\left\{
        1,
        \frac{\sigma}{\left|H_0^{-1}-H_\sigma^{-1}\right|}
    \right\}.
\]
If $H_0=H_\sigma$, set $g=1$. Consider a two-state AMDP with one action and
rewards
\[
    r(1)=g,
    \qquad
    r(2)=0.
\]
Because there is only one action, we suppress its action index below.
Denote the local TV radii at states $1$ and $2$ by $\sigma_1$ and
$\sigma_2$, respectively.
In both cases below, let the nominal transition vectors be
\[
    P^0_1=(1,0),
    \qquad
    P^0_2=
    \left(\frac{g}{H_0},1-\frac{g}{H_0}\right).
\]
The nominal Bellman equation is
\[
    \rho+h(1)=g+h(1),
    \qquad
    \rho+h(2)
    =
    \frac{g}{H_0}h(1)
    +\left(1-\frac{g}{H_0}\right)h(2).
\]
Thus every nominal Bellman solution satisfies $\rho=g$ and
$h(1)-h(2)=H_0$. Its span is therefore exactly $H_0$.

We now consider the two cases $H_\sigma\ge H_0$ and $H_\sigma<H_0$ separately. In each case, we set the local TV radii so that the robust Bellman solution has span exactly $H_\sigma$.
\paragraph*{Case $H_\sigma\ge H_0$.}
Set the local TV radii to
\[
    \sigma_1=0,
    \qquad
    \sigma_2
    =g\left(\frac{1}{H_0}-\frac{1}{H_\sigma}\right).
\]
Since $H_\sigma\ge H_0$, $\sigma_2$ is nonnegative and at most $\sigma$ by
the choice of $g$. Every admissible transition vector at state $2$ has the
form
\[
    P_2=(p,1-p),
    \qquad
    \frac{g}{H_\sigma}
    =\frac{g}{H_0}-\sigma_2
    \le p\le
    \min\left\{1,\frac{g}{H_0}+\sigma_2\right\}.
\]
Because $\sigma_1=0$, state $1$ is absorbing. Moreover,
$p\ge g/H_\sigma>0$, so from state $2$ the chain reaches state $1$ with
positive probability at each step and state $2$ is transient. Hence every
admissible kernel has the unique recurrent class $\{1\}$. Since there is only
one action, this verifies Assumption~\mainUnichainAssumption{}.

For a robust Bellman solution, the equation at state $1$ is
\[
    \rho^{\star,\sigma}+h(1)=g+h(1),
\]
so $\rho^{\star,\sigma}=g$. The equation at state $2$ is
\[
    \rho^{\star,\sigma}+h(2)
    =
    \min_{P_2\in\mathcal P_2}
    \left\{P_2(1)h(1)+P_2(2)h(2)\right\}.
\]
Using $P_2(1)+P_2(2)=1$ and $\rho^{\star,\sigma}=g$, this equation becomes
\[
    g
    =
    \min_{P_2\in\mathcal P_2}
    P_2(1)\bigl(h(1)-h(2)\bigr).
\]
The right-hand side cannot be positive if $h(1)-h(2)\le0$. Hence
$h(1)-h(2)>0$, so the minimizing transition vector assigns the smallest possible probability
$g/H_\sigma$ to state $1$. It follows that every robust Bellman solution
satisfies
\[
    h(1)-h(2)=H_\sigma.
\]

\paragraph*{Case $H_\sigma<H_0$.}
Set
\[
    \sigma_1
    =g\left(\frac{1}{H_\sigma}-\frac{1}{H_0}\right),
    \qquad
    \sigma_2=0.
\]
Since $H_\sigma<H_0$, we have $0<\sigma_1\le\sigma$ and $\sigma_1<1$.
Every admissible kernel has transition vectors
\[
    P_1=(1-x,x),
    \qquad
    0\le x\le \sigma_1,
    \qquad
    P_2=\left(\frac{g}{H_0},1-\frac{g}{H_0}\right).
\]
When $x=0$, state $1$ is the unique recurrent class and state $2$ is
transient. When $x>0$, the chain is irreducible. Thus every admissible kernel
is unichain.

For a robust Bellman solution, the equations at states $1$ and $2$ are
\begin{align*}
    \rho^{\star,\sigma}+h(1)
    &=
    g+
    \min_{0\le x\le\sigma_1}
    \left\{(1-x)h(1)+xh(2)\right\}, \\
    \rho^{\star,\sigma}+h(2)
    &=
    \frac{g}{H_0}h(1)
    +\left(1-\frac{g}{H_0}\right)h(2).
\end{align*}
If $h(1)-h(2)\le0$, the minimum in the first equation equals $h(1)$.
The two equations would then give
\[
    \rho^{\star,\sigma}=g,
    \qquad
    \rho^{\star,\sigma}
    =\frac{g}{H_0}\bigl(h(1)-h(2)\bigr)\le0,
\]
which is impossible. Hence $h(1)-h(2)>0$, and the minimum in the first
equation is attained at $x=\sigma_1$. The two Bellman equations therefore
give
\[
    \rho^{\star,\sigma}
    =g-\sigma_1\bigl(h(1)-h(2)\bigr),
    \qquad
    \rho^{\star,\sigma}
    =\frac{g}{H_0}\bigl(h(1)-h(2)\bigr).
\]
Equating these two expressions yields
\[
    g
    =
    \left\{
        \frac{g}{H_0}
        +\sigma_1
    \right\}
    \bigl(h(1)-h(2)\bigr)
    =
    \frac{g}{H_\sigma}\bigl(h(1)-h(2)\bigr).
\]
Thus every robust Bellman solution again satisfies
$h(1)-h(2)=H_\sigma$.

In either ordering, the nominal and robust Bellman solution sets therefore
have spans exactly $H_0$ and $H_\sigma$, respectively. Since both prescribed
spans are at least one, the maxima with one in \mainNominalSpanDefinition{} and
\mainRobustSpanDefinition{} do not change these values. This proves the proposition.

\subsection{Proof of Proposition~\mainPerturbationBoundProposition{}}
\label{proof:perturbation-bound-average-reward}
We use a nominal optimal bias as a robust Bellman certificate and show that
transition perturbations reduce this certificate by at most $\sigma H_0$.

Let $h_0\coloneqq h_{P^{0}}^{\star}$. For each state $s$, choose
\[
    a_0(s)
    \in
    \operatorname*{argmax}_{a\in\mathcal A}
    \left\{
        r(s,a)
        +
        \sum_{s'\in\mathcal S}P^0_{s,a}(s')h_0(s')
    \right\},
    \qquad
    \pi_0(a\mid s)
    \coloneqq
    \mathds{1}_{\{a=a_0(s)\}}.
\]
Thus, $\pi_0$ is a deterministic policy attaining the maximum in the nominal
Bellman equation at every state and is therefore nominally optimal. Moreover,
for every stationary policy $\pi$, since
$P^{0}\in\mathcal{P}$,
\[
    \rho^{\pi,\sigma}
    =
    \inf_{P\in\mathcal P}\rho_P^\pi
    \le
    \rho_{P^0}^\pi.
\]
Taking the supremum over $\pi$ gives
\[
    \rho^{\star,\sigma}\le\rho^{\star}.
\]

It remains to lower bound $\rho^{\star,\sigma}$. By the construction of
$\pi_0$, for every state $s$,
\begin{equation}
    \rho^{\star}+h_0(s)
    =
    \sum_{a\in\mathcal{A}}\pi_0(a\mid s)
    \left\{
    r(s,a)+\sum_{s'\in\mathcal{S}}P_{s,a}^{0}(s')h_0(s')
    \right\}.
    \label{eq:nominal-bellman-equation}
\end{equation}
For any distributions $p,q$ and any vector $h$,
\[
    |(p-q)^\top h|
    \le
    \|p-q\|_{\mathrm{TV}}\|h\|_{\mathrm{span}}.
\]
By the definition of $H_0$,
$\|h_0\|_{\mathrm{span}}\le H_0$. Hence, for any $P\in\mathcal{P}$ and
any state-action pair $(s,a)$,
\[
    \sum_{s'\in\mathcal{S}}
    \left(P_{s,a}(s')-P_{s,a}^{0}(s')\right)h_0(s')
    \ge
    -\|P_{s,a}-P_{s,a}^{0}\|_{\mathrm{TV}}\|h_0\|_{\mathrm{span}}
    \ge
        -\sigma H_0.
\]
Because the policy weights sum to one, averaging this rowwise bound with
weights $\pi_0(a\mid s)$ does not enlarge the error. Taking the infimum over
$P\in\mathcal P$ therefore gives
\[
    \inf_{P\in\mathcal{P}}
    \sum_{a\in\mathcal{A}}\pi_0(a\mid s)
    \sum_{s'\in\mathcal{S}}P_{s,a}(s')h_0(s')
    \ge
    \sum_{a\in\mathcal{A}}\pi_0(a\mid s)
    \sum_{s'\in\mathcal{S}}P_{s,a}^{0}(s')h_0(s')
    -\sigma H_0.
\]
Substituting this into \eqref{eq:nominal-bellman-equation},
we have
\[
    \rho^{\star}-\sigma H_0+h_0(s)
    \le
    \sum_{a\in\mathcal{A}}\pi_0(a\mid s)r(s,a)
    +
    \inf_{P\in\mathcal{P}}
    \sum_{a\in\mathcal{A}}\pi_0(a\mid s)
    \sum_{s'\in\mathcal{S}}P_{s,a}(s')h_0(s').
\]
By Lemma~\ref{lem:fixed-policy-robust-verification}, applied to
$\pi_0$ with
$\rho=\rho^{\star}-\sigma H_{0}$ and $h=h_0$, this implies
\[
    \rho^{\pi_0,\sigma}
    \ge
    \rho^{\star}-\sigma H_{0}.
\]

Since $\rho^{\pi_0,\sigma}\le\rho^{\star,\sigma}$ and
$\rho^{\star,\sigma}\le\rho^\star$, we conclude that
\[
    \rho^\star-\sigma H_0
    \le
    \rho^{\pi_0,\sigma}
    \le
    \rho^{\star,\sigma}
    \le
    \rho^\star,
\]
which proves the proposition.

\section{Proofs for the lower bound}
\label{sec:proof_lower_bound}
This appendix proves Theorem~\mainMinimaxLowerBoundTheorem{}. We first define the
hard-instance template and compute the quantities used in the proof. We then
convert policy estimation into a two-point testing problem and verify the
required inequalities for each parameter regime.

\subsection{Instance template}
\label{subsec:Instance-construction}

We construct a pair of instances $\{\cM_{\phi}:\phi\in\{2,3\}\}$ with the
same states, actions, and rewards. The index $\phi$ determines which of the actions
$2$ and $3$ has the larger transition probability to its reward state.

\paragraph*{Definition of the instances.}

Recall that $S$ and $A$ denote the numbers of states and actions and that
$\sigma$ is the global uncertainty level. For each $\phi\in\{2,3\}$, define
$\cM_\phi$ as follows. We write its nominal kernel as $P^0$, suppressing its
dependence on $\phi$.

\begin{itemize}
    \item \textbf{Parameters.} We use
          \[
              p_0,p_1,\delta,r_0,r_1\in[0,1],
              \qquad
              q_0,q_1\in(0,1],
              \qquad
              \sigma_0,\sigma_1\in[0,\sigma].
          \]
          The three parameter choices below set these quantities
          differently. We
          always choose them so that
          \[
              \sigma_0+p_0\le 1,
              \qquad
              \sigma_1+p_1+\delta\le 1.
          \]
    \item \textbf{States and actions.} The state space is $\cS=\{1,\ldots,S\}$ and the
          action space is $\cA=\{1,\ldots,A\}$. State $S$ is the decision state, states
          $1,2,3$ are reward states, state $4$ is the delay state, and when $S\ge 5$, states
          $5,\ldots,S-1$ are padding states.
    \item \textbf{Rewards.} For any action $a$,
          \[
              r(s,a)=\begin{cases}
                  r_0 & \text{if }s=1,         \\
                  r_1 & \text{if }s\in\{2,3\}, \\
                  0   & \text{otherwise}
              \end{cases}.
          \]
    \item \textbf{Nominal transition kernel.}
          \begin{itemize}
              \item Delay and padding states: for $s\in\{4,\ldots,S-1\}$ and
                    $a\in\cA$, $P^0_{s,a}(S)=1$ and $P^0_{s,a}(s')=0$ for
                    every $s'\neq S$.
              \item Decision state: for $s=S$,
                    \begin{enumerate}
                        \item When $a=1$,
                              \[
                                  P_{s,a}^{0}(s')=\begin{cases}
                                      \sigma_0+p_{0}   & \text{if }s'=1   \\
                                      1-p_{0}-\sigma_0 & \text{if }s'=S   \\
                                      0                & \text{otherwise}
                                  \end{cases}.
                              \]
                        \item When $a\in\{2,3\}$,
                              \[
                                  P_{s,a}^{0}(s')=\begin{cases}
                                      \sigma_1+p_{1}+\delta\mathds{1}_{\{\phi=a\}}   & \text{if }s'=a   \\
                                      1-p_{1}-\sigma_1-\delta\mathds{1}_{\{\phi=a\}} & \text{if }s'=S   \\
                                      0                                              & \text{otherwise}
                                  \end{cases}.
                              \]
                        \item When $a\notin\{1,2,3\}$, $P^0_{s,a}(S)=1$.
                    \end{enumerate}
              \item Reward states:
                    \begin{enumerate}
                        \item At state $1$,
                              \[
                                  P^0_{1,1}(1)=1-q_0,
                                  \qquad
                                  P^0_{1,1}(S)=q_0,
                                  \qquad
                                  P^0_{1,a}(S)=1,
                                  \quad a\in\cA\setminus\{1\}.
                              \]
                        \item At each state $s\in\{2,3\}$,
                              \[
                                  P^0_{s,s}(s)=1-q_1,
                                  \qquad
                                  P^0_{s,s}(S)=q_1,
                                  \qquad
                                  P^0_{s,a}(S)=1,
                                  \quad a\in\cA\setminus\{s\}.
                              \]
                    \end{enumerate}
                    All unspecified entries in these rows are zero.
          \end{itemize}
    \item \textbf{Uncertainty set.} The set $\cP$ is the standard rectangular TV ball
          centered at the nominal kernel. For the decision-state pairs, set
          \[
              \sigma_{S,1}=\sigma_0,
              \qquad
              \sigma_{S,a}=\sigma_1,\quad a\in\{2,3\},
          \]
          and for all other state-action pairs set $\sigma_{s,a}=0$. The local
          uncertainty sets are
          \begin{equation}
              \cP_{s,a}
              =
              \left\{
              P'_{s,a}\in\Delta(\cS):
              \|P'_{s,a}-P^0_{s,a}\|_{\mathrm{TV}}\le \sigma_{s,a}
              \right\}.
              \label{eq:lower-bound-local-tv-set}
          \end{equation}
          Rectangularity means that
          \[
              \cP
              =
              \prod_{(s,a)\in\cS\times\cA}\cP_{s,a}.
          \]
          Since $\sigma_0,\sigma_1\le\sigma$, all local radii are bounded by the global
          uncertainty level~$\sigma$. For zero-radius pairs, the uncertainty set is a
          singleton.
\end{itemize}

The nominal transition structure is illustrated in
Figure~\ref{fig:hard-mdp-diagram}.
\begin{figure}[ht]
    \centering
    \begin{tikzpicture}[->, >=stealth, thick, every node/.style={font=\small},
            state/.style={circle, draw, minimum size=9mm},
            start/.style={rectangle, draw, rounded corners, minimum width=18mm, minimum height=8mm},
            actionone/.style={blue, solid},
            actiontwo/.style={red, dash pattern=on 5pt off 2.5pt},
            actionthree/.style={green!60!black, dash pattern=on 5pt off 1.5pt on 1pt off 1.5pt}]
        \node[start, align=center] (start) at (0,0) {padding states\\(if any)};
        \node[state, label=above:{decision}] (decision) at (3.6,0) {$S$};
        \node[state, label=left:{delay}, label=below:{reward $0$}] (delay) at (3.6,-2.35) {$4$};
        \node[state, label=above:{reward $r_0$}] (one) at (8.2,2.75) {$1$};
        \node[state, label=above:{reward $r_1$}] (two) at (8.2,0) {$2$};
        \node[state, label=below:{reward $r_1$}] (three) at (8.2,-2.75) {$3$};

        \draw (start) -- (decision);
        \draw (delay) -- (decision);
        \draw[actionone, bend left=10] (decision) to node[pos=0.55, above left, fill=white, inner sep=1pt] {$\sigma_0+p_{0}$} (one);
        \draw[actiontwo, bend left=10] (decision) to node[pos=0.53, above, fill=white, inner sep=1pt] {$\sigma_1+p_{1}+\delta$} (two);
        \draw[actionthree, bend left=10] (decision) to node[pos=0.55, above right, fill=white, inner sep=1pt] {$\sigma_1+p_{1}$} (three);

        \begin{scope}[shift={(-1.0,-2)}]
            \draw[actionone, thick] (0,0) -- (0.75,0) node[right, black] {Action 1};
            \draw[actiontwo, thick] (0,-0.7) -- (0.75,-0.7) node[right, black] {Action 2};
            \draw[actionthree, thick] (0,-1.4) -- (0.75,-1.4) node[right, black] {Action 3};
        \end{scope}

        \draw[bend left=10] (one) to node[pos=0.5, right, fill=none, inner sep=5pt] {$q_{0}$} (decision);
        \draw[bend left=10] (two) to node[pos=0.5, below, fill=none, inner sep=5pt] {$q_{1}$} (decision);
        \draw[bend left=10] (three) to node[pos=0.5, left, fill=none, inner sep=5pt] {$q_{1}$} (decision);
    \end{tikzpicture}
    \caption{The nominal transition structure for $\phi=2$. Self-loops and
    transitions that return directly to $S$ are omitted. For $\phi=3$, the
    roles of actions $2$ and $3$ are reversed.}
    \label{fig:hard-mdp-diagram}
\end{figure}

To analyze worst-case performance, we single out an admissible kernel $\bar P$ that
moves the available uncertainty mass from the reward states to the zero-reward
delay state $4$. We will show that this kernel attains the robust minimum for
the comparison policies and use it to bound the robust loss of an arbitrary
policy in the testing reduction. At the decision state, define
\begin{equation}
    \begin{aligned}
        \bar P_{S,1}^{\phi}(1)
        &=p_0,
        &
        \bar P_{S,1}^{\phi}(4)
        &=\sigma_0,
        &
        \bar P_{S,1}^{\phi}(S)
        &=1-p_0-\sigma_0,
        \\
        \bar P_{S,a}^{\phi}(a)
        &=p_1+\delta\mathds{1}_{\{\phi=a\}},
        &
        \bar P_{S,a}^{\phi}(4)
        &=\sigma_1,
        &
        \bar P_{S,a}^{\phi}(S)
        &=1-p_1-\sigma_1-\delta\mathds{1}_{\{\phi=a\}},
        \quad a\in\{2,3\}.
    \end{aligned}
    \label{eq:lower-bound-delay-kernel}
\end{equation}
All unspecified entries in these rows are zero, and
$\bar P_{s,a}^{\phi}=P^0_{s,a}$ at every other state-action pair. In particular,
\[
    \left\|\bar P_{S,1}^{\phi}-P^0_{S,1}\right\|_{\mathrm{TV}}
    =
    \sigma_0,
    \qquad
    \left\|\bar P_{S,a}^{\phi}-P^0_{S,a}\right\|_{\mathrm{TV}}
    =
    \sigma_1,
    \quad a\in\{2,3\},
\]
while every other row has TV distance zero from its nominal counterpart. Hence
$\bar P^\phi\in\cP$ by \eqref{eq:lower-bound-local-tv-set}.

\paragraph*{Action-wise comparison policies.}
To compare the actions available at $S$, we associate each action with a
reference policy. The robust average rewards and expected return times of these
policies will provide the action-wise benchmarks used in the testing reduction.
Specifically, for every $a\in\cA$, let $\pi_a$ choose action $a$ at every state:
\begin{equation}
    \pi_a(b\mid s)=\mathds{1}_{\{b=a\}},
    \qquad
    b\in\cA, s\in\cS.
    \label{eq:lower-bound-deterministic-policies}
\end{equation}
For $a\in\{1,2,3\}$, the policy $\pi_a$ selects action $a$ at $S$ and
continues with action $a$ at reward state $a$. At either of the other
reward states, it returns immediately to $S$. Its actions at the delay and
padding states are immaterial because these states return deterministically to
$S$.

\paragraph*{Unichain property.}
Fix a stationary policy and an admissible kernel. Every delay or padding state
returns directly to $S$. At reward state $1$, action $1$ returns to $S$ with
probability $q_0>0$, while every other action returns to $S$ immediately. At
each reward state $i\in\{2,3\}$, action $i$ returns to $S$ with probability
$q_1>0$, while every other action returns immediately. Consequently, every
state reaches $S$ almost surely, regardless of the policy's randomization.
Every recurrent class must therefore contain $S$, so the induced Markov chain
has exactly one recurrent class. Hence the constructed MDP satisfies
Assumption~\mainUnichainAssumption{}.

\paragraph*{Average rewards between successive returns to $S$.}
Because the process returns to the decision state $S$ under every stationary
policy and every $P\in\cP$, we can analyze its average reward through the reward
accumulated and time elapsed between successive visits to $S$. Starting from
$s_0=S$, let
\[
    \tau_S^+
    \coloneqq
    \inf\{t\ge1:s_t=S\}
\]
denote the first return time to $S$. Fix a stationary policy $\pi$ and a kernel
$P\in\cP$. The return time $\tau_S^+$ has finite expectation. Since successive
visits to $S$ divide the trajectory into identically distributed cycles, the
average reward equals the expected reward in one cycle divided by its expected
length:
\begin{equation}
    \rho_P^\pi
    =
    \frac{
    \mathbb E_P^\pi\!\left[
        \sum_{t=0}^{\tau_S^+-1}r(s_t,a_t)
        \,\middle|\,s_0=S
    \right]
    }{
    \mathbb E_P^\pi[\tau_S^+\mid s_0=S]
    }.
    \label{eq:lower-bound-return-reward-ratio}
\end{equation}
For each $a\in\cA$, define
\[
    \rho_a
    \coloneqq
    \rho_{P^0}^{\pi_a},
    \qquad
    \rho_a^\sigma
    \coloneqq
    \min_{P\in\cP}\rho_P^{\pi_a},
    \qquad
    T_a^\sigma
    \coloneqq
    \mathbb E_{\bar P^\phi}^{\pi_a}[\tau_S^+\mid s_0=S].
\]
These quantities depend on $\phi$, but we leave this dependence implicit in the
notation.
The following lemma computes these quantities and verifies that
$\bar P^\phi$ attains the robust minimum for the policies in
\eqref{eq:lower-bound-deterministic-policies}.
The proof is deferred to Appendix~\ref{subsec:Proof-lemma-avg-reward-bias-span}.

\begin{lemma}\label{lem:avg-reward-bias-span}
    For the constructed instances, $\bar P^\phi$ attains
    $\min_{P\in\cP}\rho_P^{\pi_a}$ for every $a\in\cA$. The nominal and robust
    average rewards and the expected return times for $\pi_1,\pi_\phi$, and
    $\pi_{5-\phi}$ are as follows.
    \begin{itemize}
        \item For $\pi_1$,
              \[
                  \rho_{1}=\frac{\sigma_0+p_{0}}{\sigma_0+p_{0}+q_{0}}\cdot r_0,
                  \qquad
                  \rho_{1}^{\sigma}=\frac{p_{0}}{p_{0}+q_{0}(1+\sigma_0)}\cdot r_0,
                  \qquad
                  T_1^\sigma=1+\sigma_0+\frac{p_0}{q_0}.
              \]

        \item For $\pi_\phi$,
              \[
                  \rho_{\phi}=\frac{\sigma_1+p_{1}+\delta}{\sigma_1+p_{1}+q_{1}+\delta}\cdot r_1,
                  \qquad
                  \rho_{\phi}^{\sigma}=\frac{p_{1}+\delta}{p_{1}+\delta+q_{1}(1+\sigma_1)}\cdot r_1,
                  \qquad
                  T_\phi^\sigma=1+\sigma_1+\frac{p_1+\delta}{q_1}.
              \]

        \item For $\pi_{5-\phi}$,
              \[
                  \rho_{5-\phi}=\frac{\sigma_1+p_{1}}{\sigma_1+p_{1}+q_{1}}\cdot r_1,
                  \qquad
                  \rho_{5-\phi}^{\sigma}=\frac{p_{1}}{p_{1}+q_{1}(1+\sigma_1)}\cdot r_1,
                  \qquad
                  T_{5-\phi}^\sigma=1+\sigma_1+\frac{p_1}{q_1}.
              \]
    \end{itemize}
    For $a\notin\{1,2,3\}$, the action returns immediately to $S$ with no
    reward, so $T_a^\sigma=1$ and $\rho_a=\rho_a^\sigma=0$.
\end{lemma}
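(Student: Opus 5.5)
The plan rests on the renewal identity \eqref{eq:lower-bound-return-reward-ratio}. For each comparison policy $\pi_a$, I will compute the expected cycle reward and the expected return time under both $P^0$ and $\bar P^\phi$. I will then show that $\bar P^\phi$ is a minimizer by exhibiting a robust Poisson certificate and applying Lemma~\ref{lem:fixed-policy-robust-verification}.

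\textbf{Step 1: nominal quantities.} Consider $\pi_1$ started at $S$. Each step at $S$ moves to state $1$ with probability $\sigma_0+p_0$; otherwise the chain stays at $S$ and the cycle ends. Once in state $1$, the chain stays there for a geometric number of steps with mean $1/q_0$, earning $r_0$ per step, and then returns.

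This gives an expected cycle reward of $(\sigma_0+p_0)r_0/q_0$ and an expected cycle length of $1+(\sigma_0+p_0)/q_0$, whose ratio is $\rho_1$. The computation for $\pi_\phi$ and $\pi_{5-\phi}$ is identical, with $(\sigma_1+p_1+\delta,q_1,r_1)$ and $(\sigma_1+p_1,q_1,r_1)$ respectively. For $a\notin\{1,2,3\}$, the chain returns to $S$ immediately with zero reward.

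\textbf{Step 2: quantities under $\bar P^\phi$.} For $\pi_1$, the next state from $S$ is $1$ with probability $p_0$, and the delay state $4$ with probability $\sigma_0$. State $4$ adds one zero-reward step before returning to $S$.

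The expected cycle reward is therefore $p_0r_0/q_0$ and the expected cycle length is $T_1^\sigma=1+\sigma_0+p_0/q_0$. Their ratio is $p_0r_0/(p_0+q_0(1+\sigma_0))$, and the other two policies are handled the same way. This gives all the stated formulas, provided that $\bar P^\phi$ attains the minimum.

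\textbf{Step 3: $\bar P^\phi$ is worst-case.} This is the main step. Fix $\pi_a$, let $\rho=\rho_{\bar P^\phi}^{\pi_a}$, and let $h$ be the Poisson bias of $(\pi_a,\bar P^\phi)$, which satisfies $\rho+h=r^{\pi_a}+\bar P^\phi_{\pi_a}h$. By the lower-bound direction of Lemma~\ref{lem:fixed-policy-robust-verification}, it suffices to prove
\[
\bar P^\phi_{\pi_a}h=\inf_{P\in\cP}P_{\pi_a}h.
\]
Uncertainty is present only at the decision-state rows, so by rectangularity only the row $(S,a)$ matters. A TV-ball minimization of $P^\top h$ moves the mass $\sigma_{S,a}$ from the highest-$h$ coordinates to the lowest-$h$ coordinate. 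The nominal row is supported on $\{a,S\}$, so the claim holds once I show $h(a)\ge h(S)\ge h(4)=\min_s h(s)$.

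I will read off $h$ from the Poisson equations:
\begin{itemize}
\item At state $4$: $h(4)=h(S)-\rho$, so $h(4)\le h(S)$.
\item At state $a$: $h(a)-h(S)=(r_a-\rho)/q$ with $r_a\ge\rho$, since $\rho$ is a convex combination of $r_a$ and $0$. Hence $h(a)\ge h(S)$.
\item At the other reward states and the padding states: each returns to $S$ in one step with reward at least $0$, so each such $h$ value is at least $h(S)-\rho=h(4)$.
\end{itemize}
Thus $h(4)$ is the global minimum, and the optimal perturbation moves $\sigma_{S,a}$ of mass from state $a$ to state $4$, which is exactly $\bar P^\phi$. If $h(a)=h(S)$ the mass could equally be taken from $S$, but that gives the same minimal value. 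The same argument applies to every $a\in\cA$; for $a\notin\{1,2,3\}$ there is no uncertainty, so $\bar P^\phi$ coincides with $P^0$ on the relevant row.
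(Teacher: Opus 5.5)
Your proposal is correct and follows essentially the same route as the paper. Both arguments show that a fixed-policy bias for $(\pi_a,\bar P^\phi)$ satisfies $h(a)\ge h(S)\ge h(4)=\min_s h(s)$. Hence moving the budget $\sigma_{S,a}$ from state $a$ to state $4$ solves the TV minimization, which uses $P^0_{S,a}(a)\ge\sigma_{S,a}$ (true by construction, and worth stating explicitly). Both then conclude via Lemma~\ref{lem:fixed-policy-robust-verification}.

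The differences are cosmetic. The paper writes the bias explicitly in terms of the claimed $\rho_a^\sigma$ and checks the Bellman equation directly. You instead read the sign pattern off the Poisson equations of an abstract bias and obtain the average rewards from the renewal ratio.
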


\subsection{Proof of Theorem~\mainMinimaxLowerBoundTheorem{}}
\label{subsec:minimax-lower-bound}
It suffices to verify the two sample-size components in
Theorem~\mainMinimaxLowerBoundTheorem{} separately. We first establish a
common testing argument and then apply it to three constructions. The first
construction proves the common minimax linear minimum-span component
$\min\{H_0,H_\sigma\}$. 

Once all these components are established, we can combine them to obtain the desired lower bound.

For the $\sigma H_\sigma^2$ component, it suffices to assume
\begin{equation}
    \varepsilon
    \le
    0.01\sigma\min\{H_0,H_\sigma\}.
    \label{eq:standing-robust-construction-range}
\end{equation}
If this condition fails while condition~(b) of
Theorem~\mainMinimaxLowerBoundTheorem{} holds, namely
$\varepsilon\le0.01\sigma H_0$, then 
$H_\sigma<H_0$. In this case, 
$\sigma H_\sigma<100\varepsilon\le1$ and therefore
\begin{equation}
    \min\{H_0,H_\sigma\}+\sigma H_\sigma^2
    \le
    2H_\sigma
    =
    2\min\{H_0,H_\sigma\}.
    \label{eq:nominal-component-covers-complementary-range}
\end{equation}
Thus the $\min\{H_0,H_\sigma\}$ component already covers the desired lower bound. We proceed to prove the $\sigma H_\sigma^2$ component under this assumption with two different constructions, one for $\sigma < 4/\min\{H_0,H_\sigma\}$ and one for $\sigma \ge 4/\min\{H_0,H_\sigma\}$.

\subsubsection{A generic testing reduction}
\label{subsubsec:generic-testing-reduction}
We start with a generic testing argument. 
Suppose that a pair of instances
$\{\cM_\phi:\phi\in\{2,3\}\}$ satisfies
\begin{equation}
    \rho_\phi^\sigma\le r_0,
    \qquad
    \rho_\phi^\sigma\le r_1,
    \qquad
    \phi\in\{2,3\},
    \label{eq:generic-reward-dominance}
\end{equation}
and
\begin{equation}
    \frac{T_a^\sigma}{T_\phi^\sigma+T_a^\sigma}
    \bigl(\rho_\phi^\sigma-\rho_a^\sigma\bigr)
    >
    \varepsilon,
    \qquad
    a\in\cA,\ a\neq\phi,\ \phi\in\{2,3\}.
    \label{eq:generic-gap-condition}
\end{equation}
In particular, \eqref{eq:generic-gap-condition} implies
$\rho_\phi^\sigma>\rho_a^\sigma$ for every $a\neq\phi$.

Under \eqref{eq:generic-reward-dominance} and
\eqref{eq:generic-gap-condition}, we derive the minimax lower bound in three
steps. First, we convert any policy estimator into a test of $\phi$. Next, we
use Le Cam's method to lower-bound the resulting testing error in terms of KL
divergence. Finally, we compute the relevant KL divergence for the two
instances.

\paragraph*{Step 1: From policy error to testing error.}
We begin by comparing the return time and reward obtained after each possible
action at $S$ under an arbitrary stationary policy. Fix a stationary policy
$\pi$ and write $w_a=\pi(a\mid S)$. Recall from
\eqref{eq:lower-bound-delay-kernel} that $\bar P^\phi$ moves the available
uncertainty mass in the rows out of $S$ from the reward states to the
zero-reward delay state $4$, while leaving all other rows unchanged. By
Lemma~\ref{lem:avg-reward-bias-span}, it attains
$\min_{P\in\cP}\rho_P^{\pi_a}$ for every comparison policy $\pi_a$. For the
arbitrary policy $\pi$ considered here, we use only that $\bar P^\phi\in\cP$:
we evaluate $\pi$ under $\bar P^\phi$ and consider the path between successive
returns to $S$.

First fix $a\in\{1,2,3\}$. In the row corresponding to action $a$, let $p$
denote the probability of entering reward state $a$ under $\bar P^\phi$, let
$\sigma'$ denote the probability of entering state $4$, and let $q$ and $r$
denote the return probability and reward at state $a$. If $\beta$ is the
probability that $\pi$ chooses action $a$ whenever state $a$ is visited, then,
conditional on choosing action $a$ at $S$, the expected return time
$\widetilde T_a^\pi$ and reward $\widetilde R_a^\pi$ are
\[
    \widetilde T_a^\pi
    =
    1+\sigma'+\frac{p}{1-\beta(1-q)},
    \qquad
    \widetilde R_a^\pi
    =
    \frac{pr}{1-\beta(1-q)}.
\]
When $\beta=1$, these expressions become
\[
    T_a^\sigma
    =
    1+\sigma'+\frac{p}{q},
    \qquad
    T_a^\sigma\rho_a^\sigma
    =
    \frac{pr}{q}.
\]
Moreover,
\[
    1-\beta(1-q)\ge q,
    \qquad
    \rho_\phi^\sigma-r\le0,
\]
where the second inequality follows from
\eqref{eq:generic-reward-dominance}. Therefore,
\begin{align}
    \rho_\phi^\sigma\widetilde T_a^\pi-\widetilde R_a^\pi
    & =
    \rho_\phi^\sigma(1+\sigma')
    +
    \frac{p(\rho_\phi^\sigma-r)}{1-\beta(1-q)}
    \nonumber\\
    & \ge
    \rho_\phi^\sigma(1+\sigma')
    +
    \frac{p(\rho_\phi^\sigma-r)}{q}
    \nonumber\\
    & =
    T_a^\sigma
    \bigl(\rho_\phi^\sigma-\rho_a^\sigma\bigr).
    \label{eq:arbitrary-cycle-dominance}
\end{align}
We also have $\widetilde T_a^\pi\le T_a^\sigma$. For
$a\notin\{1,2,3\}$, the return time is one and the reward is zero, so both
conclusions remain valid.

We next average these action-wise comparisons according to the probabilities
$w_a$ to bound the robust loss of $\pi$. Let $\bar\rho^\pi$ be the average
reward of $\pi$ under $\bar P^\phi$.
Since a return cycle begins with action $a$ at $S$ with probability $w_a$,
\eqref{eq:lower-bound-return-reward-ratio} gives
\[
    \bar\rho^\pi
    =
    \frac{
        \sum_{a\in\cA}w_a\widetilde R_a^\pi
    }{
        \sum_{a\in\cA}w_a\widetilde T_a^\pi
    }.
\]
Consequently,
\[
    \rho_\phi^\sigma-\bar\rho^\pi
    =
    \frac{
        \sum_{a\in\cA}w_a
        \bigl(\rho_\phi^\sigma\widetilde T_a^\pi-\widetilde R_a^\pi\bigr)
    }{
        \sum_{a\in\cA}w_a\widetilde T_a^\pi
    }.
\]
The return-time comparison above implies
\[
    \sum_{a\in\cA}w_a\widetilde T_a^\pi
    \le
    \sum_{a\in\cA}w_aT_a^\sigma.
\]
On the other hand, applying \eqref{eq:arbitrary-cycle-dominance} to each
numerator term gives
\[
    \sum_{a\in\cA}w_a
    \bigl(\rho_\phi^\sigma\widetilde T_a^\pi-\widetilde R_a^\pi\bigr)
    \ge
    \sum_{a\in\cA}w_aT_a^\sigma
    \bigl(\rho_\phi^\sigma-\rho_a^\sigma\bigr)
    =
    \sum_{a\neq\phi}w_aT_a^\sigma
    \bigl(\rho_\phi^\sigma-\rho_a^\sigma\bigr).
\]
The last sum is nonnegative by \eqref{eq:generic-gap-condition}. We may
therefore combine the numerator lower bound with the denominator upper bound
to obtain
\begin{equation}
    \rho_\phi^\sigma-\bar\rho^\pi
    \ge
    \frac{
        \sum_{a\neq\phi}w_aT_a^\sigma
        \bigl(\rho_\phi^\sigma-\rho_a^\sigma\bigr)
    }{
        \sum_{a\in\cA}w_aT_a^\sigma
    }.
    \label{eq:generic-regenerative-policy-gap}
\end{equation}

By Lemma~\ref{lem:avg-reward-bias-span}, the robust average reward of
$\pi_\phi$ is $\rho_\phi^\sigma$. On the other hand,
\eqref{eq:generic-regenerative-policy-gap} shows that $\bar P^\phi$ gives every
stationary policy average reward at most $\rho_\phi^\sigma$. Therefore
\[
    \rho^{\star,\sigma}=\rho_\phi^\sigma.
\]
Because $\bar P^\phi\in\cP$, we also have
$\rho^{\pi,\sigma}\le\bar\rho^\pi$. Hence
\begin{equation}
    \rho^{\star,\sigma}-\rho^{\pi,\sigma}
    \ge
    \frac{
        \sum_{a\neq\phi}w_aT_a^\sigma
        \bigl(\rho_\phi^\sigma-\rho_a^\sigma\bigr)
    }{
        \sum_{a\in\cA}w_aT_a^\sigma
    }.
    \label{eq:generic-robust-policy-gap}
\end{equation}

Multiplying \eqref{eq:generic-gap-condition} by $w_a$ and summing over
$a\neq\phi$ gives
\[
    \sum_{a\neq\phi}w_aT_a^\sigma
    \bigl(\rho_\phi^\sigma-\rho_a^\sigma\bigr)
    >
    \varepsilon
    \left(
        (1-w_\phi)T_\phi^\sigma
        +
        \sum_{a\neq\phi}w_aT_a^\sigma
    \right).
\]
If $w_\phi\le1/2$, then $1-w_\phi\ge w_\phi$, so the
right-hand side is at least
\[
    \varepsilon\sum_{a\in\cA}w_aT_a^\sigma.
\]
Together with \eqref{eq:generic-robust-policy-gap}, this proves
\[
    w_\phi = \pi(\phi\mid S)\le\frac12
    \quad\Longrightarrow\quad
    \rho^{\star,\sigma}-\rho^{\pi,\sigma}>\varepsilon.
\]
Finally, we apply this pointwise implication to a policy estimator. Let
$\widehat\pi$ be any policy estimator and define its induced test by
\[
    \widehat\phi
    \in
    \arg\max_{i\in\{2,3\}}\widehat\pi(i\mid S),
\]
where ties are broken arbitrarily. If $\widehat\phi\neq\phi$, then
$\widehat\pi(\phi\mid S)\le1/2$. Applying the preceding implication to each
realization of $\widehat\pi$ gives
\begin{equation}
    \mathbb P_{\cM_\phi}\{\widehat\phi\neq\phi\}
    \le
    \mathbb P_{\cM_\phi}
    \left\{
        \rho^{\star,\sigma}-\rho^{\widehat\pi,\sigma}>\varepsilon
    \right\}.
    \label{eq:generic-policy-to-testing}
\end{equation}
Taking the maximum over $\phi\in\{2,3\}$ and then the infimum over policy
estimators gives
\begin{equation}
    \inf_{\widehat\pi}
    \max_{\phi\in\{2,3\}}
    \mathbb P_{\cM_\phi}
    \left\{
        \rho^{\star,\sigma}-\rho^{\widehat\pi,\sigma}>\varepsilon
    \right\}
    \ge
    \inf_{\widetilde\phi}
    \max_{\phi\in\{2,3\}}
    \mathbb P_{\cM_\phi}\{\widetilde\phi\neq\phi\}.
    \label{eq:generic-testing-lower}
\end{equation}
Thus it remains to lower-bound the testing error on the right-hand side of
\eqref{eq:generic-testing-lower}.

\paragraph*{Step 2: Le Cam's two-point bound.}
Let $P^{0,\phi}$ denote the nominal transition kernel of $\cM_\phi$, and let
\begin{equation}
    \mathbb Q_\phi^{(N)}
    \coloneqq
    \bigotimes_{(s,a)\in\cS\times\cA}
    \left(P_{s,a}^{0,\phi}\right)^{\otimes N}
    \label{eq:lower-bound-data-law}
\end{equation}
be the joint law of the $N$ transition samples from every state-action pair.
Additivity of KL divergence gives
\begin{equation}
    \mathrm{KL}\!\left(\mathbb Q_2^{(N)}\,\Vert\,\mathbb Q_3^{(N)}\right)
    =
    N\sum_{(s,a)\in\cS\times\cA}
    \mathrm{KL}\!\left(P_{s,a}^{0,2}\,\Vert\,P_{s,a}^{0,3}\right).
    \label{eq:data-law-kl-additivity}
\end{equation}
Le Cam's two-point method and Pinsker's inequality
\citep[see, e.g., (15.13) and Lemma 15.2]{Wainwright2019High}, followed by
\eqref{eq:data-law-kl-additivity}, yield
\begin{align}
    \inf_{\widetilde\phi}
    \max_{\phi\in\{2,3\}}
    \mathbb P_{\cM_\phi}\{\widetilde\phi\neq\phi\}
    & \ge
    \frac12
    -
    \frac12
    \sqrt{
        \frac{N}{2}
        \sum_{(s,a)\in\cS\times\cA}
        \mathrm{KL}\!\left(P_{s,a}^{0,2}\,\Vert\,P_{s,a}^{0,3}\right)
    }.
    \label{eq:generic-le-cam}
\end{align}
Consequently, if
\begin{equation}
    N
    \sum_{(s,a)\in\cS\times\cA}
    \mathrm{KL}\!\left(P_{s,a}^{0,2}\,\Vert\,P_{s,a}^{0,3}\right)
    \le\frac14,
    \label{eq:generic-kl-and-sample-bound}
\end{equation}
then the right-hand side of \eqref{eq:generic-le-cam} is at least
\[
    \frac12-\frac12\sqrt{\frac18}
    >
    \frac14.
\]
Combining this with
\eqref{eq:generic-testing-lower} gives
\begin{equation}
    \inf_{\widehat\pi}
    \max_{\phi\in\{2,3\}}
    \mathbb P_{\cM_\phi}
    \left\{
        \rho^{\star,\sigma}-\rho^{\widehat\pi,\sigma}>\varepsilon
    \right\}
    \ge\frac14.
    \label{eq:generic-testing-conclusion}
\end{equation}
To verify \eqref{eq:generic-kl-and-sample-bound} for each construction, we next
compute the KL divergence between the two nominal kernels.

\paragraph*{Step 3: KL divergence of the two changed rows.}
For each of the three parameter choices below, the nominal kernels differ only
at $(S,2)$ and $(S,3)$. On these rows, the two instances exchange Bernoulli
reward-transition probabilities $b$ and $b+\delta$. Therefore
\begin{align}
    &\sum_{(s,a)\in\cS\times\cA}
    \mathrm{KL}\!\left(P_{s,a}^{0,2}\,\Vert\,P_{s,a}^{0,3}\right)
    \nonumber\\
    &\quad=
    \mathrm{KL}\bigl(\mathrm{Bern}(b+\delta)\Vert\mathrm{Bern}(b)\bigr)
    +
    \mathrm{KL}\bigl(\mathrm{Bern}(b)\Vert\mathrm{Bern}(b+\delta)\bigr)
    \nonumber\\
    &\quad=
    \delta
    \log\left(
        \frac{(b+\delta)(1-b)}{b(1-b-\delta)}
    \right)
    =
    \delta\int_b^{b+\delta}\frac{1}{t(1-t)}\,\mathrm dt.
    \label{eq:symmetric-bernoulli-kl}
\end{align}
This completes the common testing argument. 

\subsubsection{$\min\{H_0,H_\sigma\}$ component}

To establish the $\min\{H_0,H_\sigma\}$ component, we specify an instance pair, verify the testing and span conditions, and then bound the KL divergence
between the two instances.

\paragraph*{Parameter choice and resulting quantities.}
We use a zero-radius subfamily of the instance template. Choose
\[
    p_0=\frac{1}{\min\{H_0,H_\sigma\}},
    \qquad
    q_0=\frac{2}{\min\{H_0,H_\sigma\}},
    \qquad
    \sigma_0=0,
\]
\[
    p_1=\frac{2}{\min\{H_0,H_\sigma\}},
    \qquad
    q_1=\frac{2}{\min\{H_0,H_\sigma\}},
    \qquad
    \sigma_1=0,
\]
and
\[
    \delta=\frac{40\varepsilon}{\min\{H_0,H_\sigma\}},
    \qquad
    r_0=r_1=1.
\]
These parameters are valid because $\min\{H_0,H_\sigma\}\ge4$ and
$\varepsilon\le1/100$. Indeed,
\[
    p_0\le\frac14,
    \qquad
    q_0=p_1=q_1\le\frac12,
    \qquad
    p_1+\delta\le\frac35.
\]
All local uncertainty radii are zero, so the nominal
and robust Bellman equations coincide. The unichain argument in
Appendix~\ref{subsec:Instance-construction} applies to every stationary policy.

Substituting these parameters into
Lemma~\ref{lem:avg-reward-bias-span} gives
\[
    \rho_1^\sigma=\frac13,
    \qquad
    \rho_{5-\phi}^\sigma=\frac12,
    \qquad
    \rho_\phi^\sigma
    =
    \frac{2+40\varepsilon}{4+40\varepsilon},
\]
and
\[
    T_1^\sigma=\frac32,
    \qquad
    T_{5-\phi}^\sigma=2,
    \qquad
    T_\phi^\sigma=2+20\varepsilon.
\]

\paragraph*{Verification of the testing conditions.}
We now check the conditions needed to apply the common testing argument.
Because $r_0=r_1=1$, the reward-dominance condition
\eqref{eq:generic-reward-dominance} holds. For the gap to action $5-\phi$,
\[
    \frac{T_{5-\phi}^\sigma}
    {T_\phi^\sigma+T_{5-\phi}^\sigma}
    \bigl(\rho_\phi^\sigma-\rho_{5-\phi}^\sigma\bigr)
    =
    \frac{5\varepsilon}
    {(1+10\varepsilon)(2+10\varepsilon)}
    >
    \varepsilon.
\]
For action $1$,
\[
    \frac{T_1^\sigma}{T_\phi^\sigma+T_1^\sigma}
    \bigl(\rho_\phi^\sigma-\rho_1^\sigma\bigr)
    \ge
    \frac{5}{74}
    >
    \varepsilon,
\]
and for $a\notin\{1,2,3\}$,
\[
    \frac{T_a^\sigma}{T_\phi^\sigma+T_a^\sigma}
    \bigl(\rho_\phi^\sigma-\rho_a^\sigma\bigr)
    \ge
    \frac{5}{32}
    >
    \varepsilon.
\]
Hence \eqref{eq:generic-gap-condition} also holds.

\paragraph*{Span constraints and class membership.}
We next verify that both instances belong to the class in the theorem. Set
$h(S)=0$ and let
\[
    h(1)=h(2)=h(3)
    =
    \frac{\min\{H_0,H_\sigma\}(1-\rho_\phi^\sigma)}{2},
    \qquad
    h(s)=-\rho_\phi^\sigma,
    \quad s\in\{4,\ldots,S-1\}.
\]
We now check that $h$ satisfies both Bellman equations and has the required
span.
At every reward state $i\in\{1,2,3\}$, action $a=i$ satisfies
\[
    \rho_\phi^\sigma+h(i)
    =
    1+\left(1-\frac{2}{\min\{H_0,H_\sigma\}}\right)h(i).
\]
This action weakly dominates every action that returns immediately to $S$
because $h(i)\ge0$. At the decision state,
\[
    (P^0_{S,\phi})^\top h
    =
    \frac{2+40\varepsilon}{\min\{H_0,H_\sigma\}}
    \cdot
    \frac{\min\{H_0,H_\sigma\}(1-\rho_\phi^\sigma)}{2}
    =
    \rho_\phi^\sigma.
\]
The transition probabilities multiplying the same positive reward-state
coordinate are smaller for actions $1$ and $5-\phi$, while every remaining
action has value zero. Thus action $\phi$ maximizes at $S$. Finally, for
$s\in\{4,\ldots,S-1\}$,
\[
    \rho_\phi^\sigma+h(s)=0=(P^0_{s,a})^\top h,
    \qquad a\in\cA.
\]
Consequently,
$\rho_\phi^\sigma\mathbf 1+h=\mathcal T_0h$. Since every uncertainty radius
is zero, the same equation holds for $\mathcal T_\sigma$. Moreover,
\begin{align*}
    \|h\|_{\mathrm{span}}
    & =
    \frac{\min\{H_0,H_\sigma\}(1-\rho_\phi^\sigma)}{2}
    +
    \rho_\phi^\sigma
    \\
    & =
    \frac{\min\{H_0,H_\sigma\}}{2}
    -
    \left(\frac{\min\{H_0,H_\sigma\}}{2}-1\right)
    \rho_\phi^\sigma
    \\
    & \le
    \frac{\min\{H_0,H_\sigma\}}{2}
    \le
    \min\{H_0,H_\sigma\}.
\end{align*}
Hence
\begin{equation}
    \cM_\phi\in\mathfrak{M}(H_0,H_\sigma,\sigma),
    \qquad \phi\in\{2,3\}.
    \label{eq:nominal-component-membership}
\end{equation}

\paragraph*{KL divergence and testing conclusion.}
It remains to bound the statistical distance between the two instances and
apply the common testing argument.
The two nominal kernels differ only at $(S,2)$ and $(S,3)$, where they
exchange Bernoulli parameters $2/\min\{H_0,H_\sigma\}$ and
$(2+40\varepsilon)/\min\{H_0,H_\sigma\}$. Throughout the integration interval in
\eqref{eq:symmetric-bernoulli-kl},
\[
    t\ge\frac{2}{\min\{H_0,H_\sigma\}},
    \qquad
    1-t\ge\frac25.
\]
Consequently, \eqref{eq:symmetric-bernoulli-kl} gives
\begin{equation}
    \sum_{(s,a)\in\cS\times\cA}
    \mathrm{KL}\!\left(P_{s,a}^{0,2}\,\Vert\,P_{s,a}^{0,3}\right)
    \le
    \frac{2000\varepsilon^2}{\min\{H_0,H_\sigma\}}.
    \label{eq:nominal-component-kl}
\end{equation}
If
\[
    N
    \le
    \frac{\min\{H_0,H_\sigma\}}{8000\varepsilon^2},
\]
then \eqref{eq:nominal-component-kl} implies
\eqref{eq:generic-kl-and-sample-bound}, which then gives
\begin{equation}
    \inf_{\widehat\pi}
    \max_{\phi\in\{2,3\}}
    \mathbb{P}_{\cM_\phi}
    \left\{
    \rho^{\star,\sigma}-\rho^{\widehat\pi,\sigma}>\varepsilon
    \right\}
    \ge
    \frac14.
    \label{eq:nominal-component-lower}
\end{equation}
This proves the $\min\{H_0,H_\sigma\}$ component in Theorem~\mainMinimaxLowerBoundTheorem{}.

\subsubsection{Case $\sigma < 4/\min\{H_0,H_\sigma\}$ for the $\sigma H_\sigma^2$ component}

For the case $\sigma < 4/\min\{H_0,H_\sigma\}$, we employ the generic testing argument to establish the $\sigma H_\sigma^2$ component of the lower bound.
We specify an instance pair, verify the testing and span
conditions, and then bound the KL divergence between the two instances.

\paragraph*{Parameter choice and resulting quantities.}
Recall that we assume
\[
    \sigma
    \ge
    \frac{100\varepsilon}{\min\{H_0,H_\sigma\}}.
\]
Choose
\[
    p_0=\frac{\sigma}{5},
    \qquad
    q_0=\frac{2\sigma}{5},
    \qquad
    \sigma_0=\sigma,
\]
\[
    p_1=q_1
    =
    \frac{\sigma\min\{H_0,H_\sigma\}}{2H_\sigma},
    \qquad
    \sigma_1=\frac{\sigma}{4},
\]
and
\[
    \delta=\frac{100\varepsilon}{H_\sigma},
    \qquad
    r_0=\frac{\sigma\min\{H_0,H_\sigma\}}{4},
    \qquad
    r_1=\frac{3\sigma\min\{H_0,H_\sigma\}}{16}.
\]
Since $\min\{H_0,H_\sigma\}\le H_\sigma$ and
\eqref{eq:standing-robust-construction-range} holds,
$p_1\le\sigma/2$ and $\delta\le\sigma$. Thus
$\sigma_1+p_1+\delta\le\sigma/4+\sigma/2+\sigma\le7/8<1$; also
$\sigma_0+p_0=6\sigma/5\le3/5<1$. In addition,
$r_0<1$ and $r_1<3/4$ because
$\sigma\min\{H_0,H_\sigma\}<4$.

Substitution into Lemma~\ref{lem:avg-reward-bias-span} gives
\begin{equation}
    \rho_\phi^\sigma
    >
    \rho_{5-\phi}^\sigma
    =
    \frac{3\sigma\min\{H_0,H_\sigma\}}{16(2+\sigma/4)}
    >
    \frac{\sigma\min\{H_0,H_\sigma\}}{4(3+2\sigma)}
    =
    \rho_1^\sigma.
    \label{eq:part-2-robust-reward-order}
\end{equation}
After cross-multiplication, the first inequality reduces to
$\delta q_1(1+\sigma_1)r_1>0$, and the second reduces to
$4+20\sigma>0$. Moreover,
\begin{equation}
    \rho_\phi^\sigma<r_1<r_0.
    \label{eq:part-2-reward-dominance}
\end{equation}

\paragraph*{Verification of the testing conditions.}
Equation~\eqref{eq:part-2-reward-dominance} verifies
the reward-dominance condition \eqref{eq:generic-reward-dominance}. The
following lemma verifies the gap condition \eqref{eq:generic-gap-condition};
its proof is deferred to
Appendix~\ref{subsec:proof-cycle-length-checks}.
\begin{lemma}
    \label{lem:part-2-cycle-check}
    For the instances constructed in the case
    $\sigma<4/\min\{H_0,H_\sigma\}$,
    the gap condition \eqref{eq:generic-gap-condition} holds.
\end{lemma}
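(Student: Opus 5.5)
The gap condition \eqref{eq:generic-gap-condition} has to be checked for three kinds of competitor $a\neq\phi$: $a=5-\phi$, $a=1$, and $a\notin\{1,2,3\}$. I will check each one by direct substitution into Lemma~\ref{lem:avg-reward-bias-span}. Write $m=\min\{H_0,H_\sigma\}$. The natural variable is $u\coloneqq\delta/q_1=200\varepsilon/(\sigma m)$. The standing assumption \eqref{eq:standing-robust-construction-range} gives $u\in(0,2]$, and the same definition gives the exact identity $\varepsilon=u\sigma m/200$. Every quantity will be expressed through $u$ and $\sigma\le1/2$, so that each inequality becomes a one-variable comparison with $\varepsilon$ in the form $u\sigma m/200$. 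The return times come out as
\[
T_\phi^\sigma=2+\sigma/4+u,\qquad T_{5-\phi}^\sigma=2+\sigma/4,\qquad T_1^\sigma=3/2+\sigma,\qquad T_a^\sigma=1 .
\]

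\textbf{Action $5-\phi$.} This is the delicate case, because the gap is itself of order $\varepsilon$. Put $c=q_1(1+\sigma/4)$. Then
\[
\rho_\phi^\sigma-\rho_{5-\phi}^\sigma=r_1\left[\frac{1+u}{2+\sigma/4+u}-\frac{1}{2+\sigma/4}\right]=\frac{r_1u(1+\sigma/4)}{(2+\sigma/4+u)(2+\sigma/4)} .
\]
Multiplying by the weight $T_{5-\phi}^\sigma/(T_\phi^\sigma+T_{5-\phi}^\sigma)=(2+\sigma/4)/(4+\sigma/2+u)$ and substituting $r_1=3\sigma m/16$ and $\varepsilon=u\sigma m/200$, the ratio of the left side of \eqref{eq:generic-gap-condition} to $\varepsilon$ is
\[
\frac{37.5\,(1+\sigma/4)}{(2+\sigma/4+u)(4+\sigma/2+u)} .
\]
For $u\le2$ and $\sigma\le1/2$ this is at least $37.5/(4.125\cdot6.25)>1$, which gives the required strict inequality.

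\textbf{Action $1$.} This is the step I expect to be the main obstacle. The crude bound $\rho_\phi^\sigma-\rho_1^\sigma\ge\rho_{5-\phi}^\sigma-\rho_1^\sigma$ is too weak when $u$ is near $2$. Indeed, $\rho_{5-\phi}^\sigma-\rho_1^\sigma=\sigma m\,g(\sigma)$ with
\[
g(\sigma)=\frac{3}{16(2+\sigma/4)}-\frac{1}{4(3+2\sigma)},
\]
and $g$ is increasing with $g(0)=1/96$. This constant alone, times the weight, only beats $\varepsilon$ for small $u$. I will therefore keep the $\delta$-contribution and use the decomposition
\[
\rho_\phi^\sigma-\rho_1^\sigma=\bigl(\rho_\phi^\sigma-\rho_{5-\phi}^\sigma\bigr)+\bigl(\rho_{5-\phi}^\sigma-\rho_1^\sigma\bigr)\ge\sigma m\left[\frac{1}{96}+\frac{3u(1+\sigma/4)}{16(2+\sigma/4+u)(2+\sigma/4)}\right].
\]
The weight is $(3/2+\sigma)/(7/2+5\sigma/4+u)$. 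The claim then reduces to the explicit inequality
\[
\frac{3/2+\sigma}{7/2+5\sigma/4+u}\left[\frac{1}{96}+\frac{3u(1+\sigma/4)}{16(2+\sigma/4+u)(2+\sigma/4)}\right]>\frac{u}{200}
\]
on $u\in(0,2]$, $\sigma\in(0,1/2]$. At the worst corner $u=2$, $\sigma\to0$, the left side is about $0.0156$ against $0.01$. For small $u$ the constant $1/96$ dominates $u/200$. The remaining work is a short monotonicity or endpoint check, for example by multiplying out denominators and checking a low-degree polynomial in $u$, to confirm there is no interior failure.

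\textbf{Actions $a\notin\{1,2,3\}$.} Here $T_a^\sigma=1$ and $\rho_a^\sigma=0$. By \eqref{eq:part-2-robust-reward-order}, $\rho_\phi^\sigma\ge\rho_{5-\phi}^\sigma=3\sigma m/(16(2+\sigma/4))\ge0.088\,\sigma m$. The weight is $1/(1+T_\phi^\sigma)\ge1/5.125$. The product is therefore at least about $0.017\,\sigma m>0.01\,\sigma m\ge\varepsilon$.

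These three cases cover every $a\neq\phi$, for both $\phi\in\{2,3\}$ by symmetry, so \eqref{eq:generic-gap-condition} holds.
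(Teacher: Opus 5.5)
Your proof is correct and follows the paper's route: you use the same three competitor classes and the same substitution into Lemma~\ref{lem:avg-reward-bias-span}, and for action $1$ you decompose $\rho_\phi^\sigma-\rho_1^\sigma$ through $\rho_{5-\phi}^\sigma$ exactly as the paper does. The two-variable inequality you leave open for action $1$ closes at once if you bound the two pieces separately, as the paper does: your action-$(5-\phi)$ formula gives $\rho_\phi^\sigma-\rho_{5-\phi}^\sigma>4\varepsilon$, and $\rho_{5-\phi}^\sigma-\rho_1^\sigma\ge\sigma\min\{H_0,H_\sigma\}/96\ge100\varepsilon/96>\varepsilon$, while the weight $(3/2+\sigma)/(7/2+5\sigma/4+u)$ is at least $3/11$, so the product exceeds $15\varepsilon/11>\varepsilon$.
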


\paragraph*{Span constraints and class membership.}
The following lemma provides the nominal and robust span bounds; its proof is
deferred to Appendix~\ref{subsec:proof-span-checks}.
\begin{lemma}
    \label{lem:part-2-span-check}
    For the instances constructed in the case
    $\sigma<4/\min\{H_0,H_\sigma\}$,
    $
        \|h_{P^0}^\star\|_{\mathrm{span}}\le H_0
    $
    and
    $
        \|h^{\star,\sigma}\|_{\mathrm{span}}\le H_\sigma
    $.
\end{lemma}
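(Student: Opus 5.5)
The plan is to exhibit, for each of the two Bellman equations, one explicit solution whose span meets the required bound. Since $H_0$ and $H_\sigma$ are defined through the \emph{minimum}-span solutions, this suffices. Write $m\coloneqq\min\{H_0,H_\sigma\}\ge4$. Both solutions will share the same shape: $h(S)=0$, and $h(s)=-\rho$ at the delay and padding states $s\in\{4,\ldots,S-1\}$, where $\rho$ is the relevant optimal gain. At each reward state $i\in\{1,2,3\}$ we take the self-loop action $i$. Its Bellman equation $\rho+h(i)=r_i+(1-q_i)h(i)$ forces
\[
    h(i)=\frac{r_i-\rho}{q_i},
\]
where $r_i$ and $q_i$ denote $r_0,q_0$ at state $1$ and $r_1,q_1$ at states $2,3$. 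Provided $r_i\ge\rho$, we have $h(i)\ge0=h(S)$, so action $i$ weakly dominates every action that returns immediately to $S$. The delay and padding equations then read $\rho-\rho=0=h(S)$ and hold automatically.

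\textbf{Robust equation.} Because $h(a)>0=h(S)>-\rho=h(4)$, the minimizing TV perturbation in row $(S,a)$ moves $\sigma_a$ mass from reward state $a$ to the delay state $4$; row $(S,a)$ has enough mass there since $\sigma_a+p_a\ge\sigma_a$. This is exactly the kernel $\bar P^\phi$. All other rows have radius zero, so their minima are nominal. The equation at $S$ therefore becomes
\[
    \rho=\max_a\Bigl\{p_a\,\frac{r_a-\rho}{q_a}-\sigma_a\rho\Bigr\},
\]
together with the value $0$ for actions $a\notin\{1,2,3\}$. For each $a$, define
\[
    f_a(\rho)\coloneqq p_a\,\frac{r_a-\rho}{q_a}-(1+\sigma_a)\rho .
\]
Each $f_a$ is strictly decreasing, and its root is precisely $\rho_a^\sigma$ from Lemma~\ref{lem:avg-reward-bias-span}. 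By \eqref{eq:part-2-robust-reward-order}, $\rho_\phi^\sigma$ is the largest root, so at $\rho=\rho_\phi^\sigma$ we have $f_\phi=0$ and $f_a\le0$ for every other action. Hence the equation holds with $\rho=\rho_\phi^\sigma=\rho^{\star,\sigma}$. The condition $r_i\ge\rho$ follows from \eqref{eq:part-2-reward-dominance}.

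For the span, use $r_1/q_1=3H_\sigma/8$ and $r_0/q_0=5m/8\le 5H_\sigma/8$. This gives
\[
    \|h\|_{\mathrm{span}}\le \tfrac58H_\sigma+\rho_\phi^\sigma\le\tfrac58H_\sigma+\tfrac34\le H_\sigma,
\]
where the last step uses $H_\sigma\ge m\ge4$.

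\textbf{Nominal equation.} The same shape works with the nominal rows. The equation at $S$ becomes $\rho=\max_a(\sigma_a+p_a)(r_a-\rho)/q_a$, and the same decreasing-function argument selects $\rho^\star=\max_a\rho_a$ with the gains $\rho_a$ of Lemma~\ref{lem:avg-reward-bias-span}. Each $\rho_a\le r_a$, and $r_0>r_1$, so $r_i\ge\rho^\star$ and the reward-state dominance again holds. The main obstacle is that the crude bound $h(2)\le r_1/q_1=3H_\sigma/8$ can far exceed $H_0$ when $H_0\ll H_\sigma$. To avoid this, I use $\rho^\star\ge\rho_{5-\phi}=(\sigma/4+p_1)r_1/(\sigma/4+2p_1)$, which gives
\[
    h(2),h(3)\le\frac{r_1-\rho_{5-\phi}}{q_1}=\frac{r_1}{\sigma/4+2p_1}\le\frac{4r_1}{\sigma}=\frac{3m}{4}.
\]
Similarly $h(1)\le r_0/q_0=5m/8$. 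Therefore
\[
    \|h\|_{\mathrm{span}}\le\tfrac34m+\rho^\star\le\tfrac34m+\tfrac34\le m\le H_0,
\]
using $m\ge4$ and $\rho^\star\le r_0<1$. This completes both span bounds.
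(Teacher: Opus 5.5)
Your construction is the paper's: the same certificate shape ($h(S)=0$, $h=-\rho$ on the delay and padding states, $h(i)=(r_i-\rho)/q_i$ on the reward states), the same worst-case rows $\bar P^\phi$ at $S$, and the ordering \eqref{eq:part-2-robust-reward-order} to show that action $\phi$ attains the maximum. The robust half is fine.

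The nominal half has a non sequitur at the one point where the specific parameters matter. For the self-loop to be optimal at reward states $2$ and $3$ you need $h(2)=h(3)=(r_1-\rho^\star)/q_1\ge0$. That is, you need $\rho^\star=\max_a\rho_a\le r_1$, and in particular $\rho_1\le r_1$. Your justification (``each $\rho_a\le r_a$, and $r_0>r_1$'') gives only $\rho_1\le r_0$, which points the wrong way. This is not a formality. If $\rho_1$ exceeded $r_1$, immediate return to $S$ would strictly beat the self-loop at states $2,3$. Then $(\mathcal T_0h)(2)=r_1>\rho^\star+h(2)$, and your candidate would not solve the nominal Bellman equation.

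The inequality does hold, but only because of the choice $\sigma_0=\sigma$, $p_0=\sigma/5$, $q_0=2\sigma/5$, $r_1=\tfrac34r_0$. Lemma~\ref{lem:avg-reward-bias-span} gives $\rho_1=\frac{6\sigma/5}{8\sigma/5}\,r_0=\frac34r_0=r_1$, while $\rho_\phi,\rho_{5-\phi}<r_1$ because $q_1>0$. Hence $\rho^\star=r_1$ exactly; this is the computation the paper makes.

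Once you include it, $h(2)=h(3)=0$ and your certificate coincides with the paper's $g$. The detour through $\rho^\star\ge\rho_{5-\phi}$ becomes unnecessary, and the span is $h(1)+\rho^\star=\frac{5m}{32}+\frac{3\sigma m}{16}\le\frac m4$, using $\sigma\le1/2$.

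Finally, you should say why the gains you obtain equal $\rho^{\star,\sigma}$ and $\rho^\star$. For example, use both directions of Lemma~\ref{lem:fixed-policy-robust-verification}, as the paper does. This matters because $h^{\star,\sigma}$ and $h_{P^0}^\star$ are defined as minimum-span solutions relative to those particular gains.
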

The parameter checks give bounded rewards and local TV radii at most
$\sigma$, the unichain argument in
Appendix~\ref{subsec:Instance-construction} verifies
Assumption~\mainUnichainAssumption{}, and
Lemma~\ref{lem:part-2-span-check} gives the nominal and robust
span bounds. Hence
\begin{equation}
    \cM_\phi\in\mathfrak{M}(H_0,H_\sigma,\sigma),
    \qquad \phi\in\{2,3\}.
    \label{eq:part-2-membership}
\end{equation}

\paragraph*{KL divergence and testing conclusion.}
It remains to bound the statistical distance between the two instances and
apply the testing reduction in
Appendix~\ref{subsubsec:generic-testing-reduction}. Here the lower Bernoulli parameter in
\eqref{eq:symmetric-bernoulli-kl} is $\sigma/4+p_1$. The parameter bounds give
\[
    2\left(\frac{\sigma}{4}+p_1\right)+\delta
    \le
    \frac{3}{4}+\frac{1}{4}
    =1,
\]
where we used $p_1\le\sigma/2$, $\sigma\le1/2$, and
$\delta\le1/H_\sigma\le1/4$. Equivalently,
\[
    \frac{\sigma}{4}+p_1+\delta
    \le
    1-\frac{\sigma}{4}-p_1.
\]
Hence the integration interval in \eqref{eq:symmetric-bernoulli-kl} is
contained in
$[\sigma/4+p_1,1-\sigma/4-p_1]$. Therefore
$t(1-t)$ is at least
$(\sigma/4+p_1)(1-\sigma/4-p_1)$ throughout the integration interval in
\eqref{eq:symmetric-bernoulli-kl}. Consequently,
\begin{equation}
    \sum_{(s,a)\in\cS\times\cA}
    \mathrm{KL}\!\left(P_{s,a}^{0,2}\,\Vert\,P_{s,a}^{0,3}\right)
    \le
    \frac{\delta^2}{(\sigma/4+p_1)(1-\sigma/4-p_1)}
    \le
    \frac{64000\varepsilon^{2}}{\sigma H_{\sigma}^2}.
    \label{eq:KL-P0-P1-part-2}
\end{equation}
The last inequality uses $\sigma/4+p_1\ge\sigma/4$ and
$1-\sigma/4-p_1\ge1-1/8-1/4=5/8$.

By
\eqref{eq:KL-P0-P1-part-2}, the sample-size condition
\eqref{eq:generic-kl-and-sample-bound} holds whenever
\[
    N
    \le
    \frac{\sigma H_\sigma^2}{256000\varepsilon^2}.
\]
Under this sample-size bound, \eqref{eq:generic-testing-conclusion} gives
\begin{equation}
    \inf_{\widehat\pi}
    \max_{\phi\in\{2,3\}}
    \mathbb P_{\cM_\phi}
    \left\{
        \rho^{\star,\sigma}-\rho^{\widehat\pi,\sigma}>\varepsilon
    \right\}
    \ge
    \frac14.
    \label{eq:small-sigma-robust-component-lower}
\end{equation}
This proves the $\sigma H_\sigma^2$ component when
$\sigma<4/\min\{H_0,H_\sigma\}$.

\subsubsection{Case $\sigma \ge 4/\min\{H_0,H_\sigma\}$ for the $\sigma H_\sigma^2$ component}

For the case $\sigma \ge 4/\min\{H_0,H_\sigma\}$, we employ the generic testing argument to establish the $\sigma H_\sigma^2$ component of
the lower bound. We specify an instance pair, verify the testing and span
conditions, and then bound the KL divergence between the two instances.

\paragraph*{Parameter choice and resulting quantities.}
Choose
\[
    p_0=\frac{1}{\min\{H_0,H_\sigma\}},
    \qquad
    q_0=\frac{3}{\min\{H_0,H_\sigma\}},
    \qquad
    \sigma_0=0,
\]
\[
    p_1=q_1=\frac{2}{H_\sigma},
    \qquad
    \sigma_1=\sigma,
\]
and
\[
    \delta=\frac{40\varepsilon}{H_\sigma},
    \qquad
    r_0=r_1=1.
\]
These parameters are valid. Indeed, $\min\{H_0,H_\sigma\}\ge4$, the case
condition, and \eqref{eq:standing-robust-construction-range} give
$1/\min\{H_0,H_\sigma\}\le1/4$,
$3/\min\{H_0,H_\sigma\}\le3/4$,
$2/H_\sigma\le\sigma/2$, and
$40\varepsilon/H_\sigma\le0.4\sigma$, so
$\sigma_1+p_1+\delta\le1.9\sigma\le0.95<1$.

Substituting these parameters into Lemma~\ref{lem:avg-reward-bias-span} gives
\begin{equation}
    \rho_\phi^\sigma
    =
    \frac{2+40\varepsilon}{4+2\sigma+40\varepsilon}
    >
    \frac{1}{2+\sigma}
    =
    \rho_{5-\phi}^\sigma
    \ge
    \frac25
    >
    \frac14
    =
    \rho_1^\sigma.
    \label{eq:part-3-robust-reward-order}
\end{equation}
The strict inequality follows from $\delta>0$, and the remaining inequalities
use $\sigma\le1/2$. In addition,
\begin{equation}
    \rho_\phi^\sigma<1=r_0=r_1.
    \label{eq:part-3-reward-dominance}
\end{equation}

\paragraph*{Verification of the testing conditions.}
Equation~\eqref{eq:part-3-reward-dominance} verifies
the reward-dominance condition \eqref{eq:generic-reward-dominance}. The
following lemma verifies the gap condition \eqref{eq:generic-gap-condition};
its proof is deferred to
Appendix~\ref{subsec:proof-cycle-length-checks}.
\begin{lemma}
    \label{lem:part-3-cycle-check}
    For the instances constructed in the case
    $\sigma\ge4/\min\{H_0,H_\sigma\}$,
    the gap condition \eqref{eq:generic-gap-condition} holds.
\end{lemma}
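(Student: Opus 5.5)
The plan is to verify \eqref{eq:generic-gap-condition} directly: substitute this case's parameters into Lemma~\ref{lem:avg-reward-bias-span} and bound each ratio. There are three types of competing action $a\neq\phi$: the twin action $5-\phi$, action $1$, and the immediate-return actions $a\notin\{1,2,3\}$. Only the twin action requires care, because its gap is of order $\varepsilon$; the other two have gaps of constant order, and $\varepsilon\le0.01$ makes them easy.

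First I will compute the return times. With $p_1=q_1=2/H_\sigma$, $\sigma_1=\sigma$ and $\delta=40\varepsilon/H_\sigma$, the lemma gives
\[
T_\phi^\sigma=2+\sigma+20\varepsilon,
\qquad
T_{5-\phi}^\sigma=2+\sigma .
\]
With $p_0/q_0=1/3$ and $\sigma_0=0$ it gives $T_1^\sigma=4/3$. For $a\notin\{1,2,3\}$ we have $T_a^\sigma=1$.

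For the twin action, I will compute the exact difference
\[
\rho_\phi^\sigma-\rho_{5-\phi}^\sigma
=\frac{2+40\varepsilon}{4+2\sigma+40\varepsilon}-\frac{1}{2+\sigma}
=\frac{40\varepsilon(1+\sigma)}{(4+2\sigma+40\varepsilon)(2+\sigma)} .
\]
Multiplying by $T_{5-\phi}^\sigma/(T_\phi^\sigma+T_{5-\phi}^\sigma)=(2+\sigma)/(4+2\sigma+20\varepsilon)$ cancels the factor $2+\sigma$. The resulting quantity is
\[
\frac{40\varepsilon(1+\sigma)}{(4+2\sigma+40\varepsilon)(4+2\sigma+20\varepsilon)} .
\]
Using $\sigma\le1/2$ and $\varepsilon\le0.01$, the denominator is at most $5.4\cdot5.2<29$. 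The numerator is at least $40\varepsilon$, so the ratio exceeds $\varepsilon$. This is the step I expect to be the main obstacle: the constant $40$ in $\delta$ must beat the denominator, which is of order $25$. The inequality therefore holds only with a modest margin, and the algebra must not lose constant factors.

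For action $1$, I will use \eqref{eq:part-3-robust-reward-order}, which gives $\rho_\phi^\sigma-\rho_1^\sigma\ge 2/5-1/4=3/20$. Since $T_\phi^\sigma\le2.7$, the prefactor is at least $(4/3)/(4/3+2.7)>0.3$. The product is therefore at least $0.045>\varepsilon$. For $a\notin\{1,2,3\}$ we have $\rho_a^\sigma=0$, so the gap is at least $\rho_\phi^\sigma\ge2/5$. The prefactor is at least $1/3.7$, which gives a product above $0.1>\varepsilon$. Together these three cases give \eqref{eq:generic-gap-condition} for both $\phi\in\{2,3\}$; the argument is symmetric in $\phi$.
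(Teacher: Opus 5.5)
Your proposal is correct and is essentially the paper's proof: you use the same return times $T_\phi^\sigma=2+\sigma+20\varepsilon$, $T_{5-\phi}^\sigma=2+\sigma$, $T_1^\sigma=4/3$ and $T_a^\sigma=1$, the same robust rewards from Lemma~\ref{lem:avg-reward-bias-span}, and the same constant-order bounds for action $1$ and the immediate-return actions. The only cosmetic difference is the twin action $5-\phi$: you cancel the factor $2+\sigma$ in the exact product, whereas the paper first shows $\rho_\phi^\sigma-\rho_{5-\phi}^\sigma>4\varepsilon$ and then multiplies by the crude prefactor bound $2/(27/10+5/2)$; both routes clear $\varepsilon$ with margin.
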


\paragraph*{Span constraints and class membership.}
The following lemma provides the nominal and robust span bounds; its proof is
deferred to Appendix~\ref{subsec:proof-span-checks}.
\begin{lemma}
    \label{lem:part-3-span-check}
    For the instances constructed in the case
    $\sigma\ge4/\min\{H_0,H_\sigma\}$,
    $
        \|h_{P^0}^\star\|_{\mathrm{span}}\le H_0
    $
    and
    $
        \|h^{\star,\sigma}\|_{\mathrm{span}}\le H_\sigma
    $.
\end{lemma}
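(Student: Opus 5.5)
The plan is to exhibit, for each of the two instances, an explicit solution of the nominal Bellman equation \eqref{eq:nominal-average-reward-bellman} and an explicit solution of the robust Bellman equation \eqref{eq:robust-average-reward-bellman}, and then bound their spans. Since $h_{P^0}^\star$ and $h^{\star,\sigma}$ are minimum-span solutions (Proposition~\ref{prop:minimum-span-attainment} and its nominal analogue), the claimed bounds follow. Throughout, write $m=\min\{H_0,H_\sigma\}$. I will use the same ansatz as in the $\min\{H_0,H_\sigma\}$ component: set $h(S)=0$, set $h(s)=-\rho$ on the delay and padding states, and choose the reward-state values so that the self-loop action is self-consistent there. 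At a reward state $i$ with return probability $q$, the equation $\rho+h(i)=1+(1-q)h(i)$ gives $h(i)=(1-\rho)/q$. Hence $h(1)=(1-\rho)m/3$ and $h(2)=h(3)=(1-\rho)H_\sigma/2$, where $\rho$ is the relevant optimal gain.

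\emph{Nominal equation.} Take $\rho=\rho_\phi=(\sigma+p_1+\delta)/(\sigma+p_1+\delta+q_1)$ from Lemma~\ref{lem:avg-reward-bias-span}. Because $q_1=2/H_\sigma\le\sigma/2$ in this case, $\rho_\phi\ge 2/3$.

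\begin{itemize}
\item At $S$, action $\phi$ gives $(\sigma+p_1+\delta)h(\phi)$. Using $h(\phi)=(1-\rho_\phi)/q_1$, this equals $\rho_\phi$ exactly, by the formula for $\rho_\phi$.
\item At $S$, action $5-\phi$ has a smaller coefficient on the same positive coordinate, so it is weakly worse.
\item At $S$, action $1$ gives $p_0h(1)=(1-\rho_\phi)/3\le\rho_\phi$, so it is no better.
\item At $S$, the remaining actions give $0$.
\item At the reward states, the self-loop dominates returning to $S$ because $h(i)\ge0=h(S)$.
\item The delay and padding equations hold trivially.
\end{itemize}

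For the span, use the alternative expression $h(\phi)=\rho_\phi/(\sigma+p_1+\delta)\le1/\sigma\le m/4$ together with $h(1)\le m/3$. Then $\|h\|_{\mathrm{span}}\le m/3+\rho_\phi\le m/3+1\le m\le H_0$, using $m\ge4$.

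\emph{Robust equation.} Take $\rho=\rho_\phi^\sigma$, so that $h(\phi)=(1-\rho_\phi^\sigma)H_\sigma/2\le H_\sigma/2$. The key point is that, for a TV ball of radius $\sigma_{s,a}$, the inner minimum $\min_{P\in\cP_{s,a}}P^\top h$ is obtained by moving $\sigma_{s,a}$ mass from the highest-valued coordinates to the lowest-valued one. The lowest coordinate is the delay state $4$, with value $-\rho_\phi^\sigma$.

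\begin{itemize}
\item In row $(S,\phi)$, mass $\sigma$ is taken from state $\phi$, whose value is the largest in that row's support. This realizes $\bar P^\phi$ and gives $(p_1+\delta)h(\phi)-\sigma\rho_\phi^\sigma$. Setting this equal to $\rho_\phi^\sigma$ is exactly the identity $\rho_\phi^\sigma=(p_1+\delta)/(p_1+\delta+q_1(1+\sigma))$ from Lemma~\ref{lem:avg-reward-bias-span}.
\item Row $(S,5-\phi)$ gives a strictly smaller value.
\item Action $1$ has zero radius, since $\sigma_0=0$, and gives $(1-\rho_\phi^\sigma)/3\le\rho_\phi^\sigma$, because $\rho_\phi^\sigma\ge2/5$ by \eqref{eq:part-3-robust-reward-order}.
\item All other rows have zero radius and are checked as in the nominal case.
\end{itemize}

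The span is then at most $\max\{H_\sigma/2,\,m/3\}+\rho_\phi^\sigma\le H_\sigma/2+1\le H_\sigma$, using $H_\sigma\ge4$.

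The main obstacle is justifying that the inner minimum in row $(S,\phi)$ is attained by moving mass from state $\phi$ to state $4$, and not by moving mass from $S$, which would be a different choice. This requires confirming the ordering $h(\phi)\ge h(S)=0\ge h(4)$ and checking that the available mass at $\phi$, namely $\sigma+p_1+\delta$, is at least $\sigma$. Both follow from the parameter choices. The same check must also cover row $(S,5-\phi)$, so that the maximizing action at $S$ is indeed $\phi$.
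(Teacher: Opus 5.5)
Your proposal is correct and matches the paper's proof: the same certificates ($h(S)=0$, $h(i)=(1-\rho)/q$ at reward states, $h=-\rho$ at delay and padding states), the same check at $S$ (the full budget $\sigma$ moves from the reward state to state $4$, and action $1$ is handled by $\rho\ge 1/4$), and the same minimum-span conclusion; your span constants ($m/3+1$ and $H_\sigma/2+1$) differ from the paper's but are equally valid. The one line to add, as the paper does, is that any pair with $\rho\mathbf 1+h=\mathcal T h$ forces $\rho$ to be the optimal nominal or robust gain (Lemma~\ref{lem:fixed-policy-robust-verification}, applied to all policies and then to a greedy selector), so your certificates really do solve \eqref{eq:nominal-average-reward-bellman} and \eqref{eq:robust-average-reward-bellman}.
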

The parameter checks give bounded rewards and local TV radii at most
$\sigma$, the unichain argument in
Appendix~\ref{subsec:Instance-construction} verifies
Assumption~\mainUnichainAssumption{}, and
Lemma~\ref{lem:part-3-span-check} gives the nominal and robust
span bounds. Hence
\begin{equation}
    \cM_\phi\in\mathfrak{M}(H_0,H_\sigma,\sigma),
    \qquad \phi\in\{2,3\}.
    \label{eq:part-3-membership}
\end{equation}

\paragraph*{KL divergence and testing conclusion.}
It remains to bound the statistical distance between the two instances and
apply the testing reduction in
Appendix~\ref{subsubsec:generic-testing-reduction}. Here the integration interval in
\eqref{eq:symmetric-bernoulli-kl} runs from $\sigma+p_1$ to
$\sigma+p_1+\delta$.
By the case condition, \eqref{eq:standing-robust-construction-range}, and
$\sigma\le1/2$, we have
\[
    \frac{2}{H_{\sigma}}\le \frac{\sigma}{2},
    \qquad
    \frac{40\varepsilon}{H_{\sigma}}\le \frac{2\sigma}{5}.
\]
Therefore
\[
    t\ge\sigma,
    \qquad
    1-t
    \ge
    1-\sigma-p_1-\delta
    \ge
    1-\sigma-\frac{\sigma}{2}-\frac{2\sigma}{5}
    \ge
    \frac{1}{20}
\]
throughout this interval. Applying \eqref{eq:symmetric-bernoulli-kl} directly
therefore gives
\begin{equation}
    \sum_{(s,a)\in\cS\times\cA}
    \mathrm{KL}\!\left(P_{s,a}^{0,2}\,\Vert\,P_{s,a}^{0,3}\right)
    \le
    \frac{20\delta^2}{\sigma}
    \le
    \frac{32000\varepsilon^{2}}{\sigma H_{\sigma}^2}.
    \label{eq:KL-P0-P1-part-3}
\end{equation}

By
\eqref{eq:KL-P0-P1-part-3}, the sample-size condition
\eqref{eq:generic-kl-and-sample-bound} holds whenever
\[
    N
    \le
    \frac{\sigma H_\sigma^2}{128000\varepsilon^2}.
\]
Under this sample-size bound, \eqref{eq:generic-testing-conclusion} gives
\begin{equation}
    \inf_{\widehat\pi}
    \max_{\phi\in\{2,3\}}
    \mathbb P_{\cM_\phi}
    \left\{
        \rho^{\star,\sigma}-\rho^{\widehat\pi,\sigma}>\varepsilon
    \right\}
    \ge
    \frac14.
    \label{eq:large-sigma-robust-component-lower}
\end{equation}
This proves the $\sigma H_\sigma^2$ component when
$\sigma\ge4/\min\{H_0,H_\sigma\}$.

\subsection{Proofs of auxiliary lemmas}
\label{subsec:proof-auxiliary-lemmas-lower-bound}

\subsubsection{Proof of Lemma~\ref{lem:avg-reward-bias-span}}
\label{subsec:Proof-lemma-avg-reward-bias-span}

\paragraph*{Policy $\pi_1$.}
We first analyze the policy $\pi_1$.
We calculate its nominal and robust average rewards and then its expected time
to return to $S$ under $\bar P^\phi$.

\subparagraph*{Nominal average reward.}
Under the nominal kernel, the
stationary distribution $\mu^0$ is supported on $\{S,1\}$ and satisfies
\[
    (\sigma_0+p_0)\mu^0(S)=q_0\mu^0(1),
    \qquad
    \mu^0(S)+\mu^0(1)=1.
\]
Solving these equations gives
\[
    \rho_1
    =
    r_0\mu^0(1)
    =
    \frac{\sigma_0+p_0}{\sigma_0+p_0+q_0}\,r_0.
\]

\subparagraph*{Robust average reward.}
We next verify the robust formula. We claim that
\[
    \rho_1^\sigma
    =
    \frac{p_0}{p_0+q_0(1+\sigma_0)}\,r_0.
\]
To verify this claim, we construct a fixed-policy bias for which the robust
Bellman equation holds and $\bar P^\phi$ attains the minimizing transition
rows. Normalize this bias by $h(S)=0$ and define
\[
    h(1)
    =
    \frac{r_0-\rho_1^\sigma}{q_0},
    \qquad
    h(2)=h(3)=r_1-\rho_1^\sigma,
\]
and
\[
    h(s)=-\rho_1^\sigma,
    \qquad
    s\in\{4,\ldots,S-1\}.
\]
Since $\rho_1^\sigma\le r_0$ and all rewards are nonnegative,
\[
    h(1)\ge h(S)=0,
    \qquad
    \min_{s\in\cS}h(s)
    =
    h(4)
    =
    -\rho_1^\sigma.
\]
The nominal row $P^0_{S,1}$ assigns mass only to states $1$ and $S$, with
$P^0_{S,1}(1)=\sigma_0+p_0\ge\sigma_0$. Among these two states,
$h(1)\ge h(S)$, and state $4$ has a minimum coordinate and zero
nominal mass. It is therefore feasible to transfer mass $\sigma_0$ from state
$1$ to state $4$, and no perturbation of TV distance at most $\sigma_0$ can
decrease the expectation further. Thus
\[
    \min_{Q\in\cP_{S,1}}Q^\top h
    =
    (P^0_{S,1})^\top h
    -
    \sigma_0\bigl(h(1)-h(4)\bigr)
    =
    (\bar P_{S,1}^\phi)^\top h.
\]
For this minimizing row,
\[
    (\bar P_{S,1}^\phi)^\top h
    =
    p_0\frac{r_0-\rho_1^\sigma}{q_0}
    -
    \sigma_0\rho_1^\sigma
    =
    \frac{
        p_0r_0-(p_0+q_0\sigma_0)\rho_1^\sigma
    }{q_0}
    =
    \rho_1^\sigma,
\]
where the last equality uses
$p_0r_0=\rho_1^\sigma[p_0+q_0(1+\sigma_0)]$. At state $1$,
\[
    r_0+(1-q_0)h(1)
    =
    \rho_1^\sigma+h(1).
\]
At $j\in\{2,3\}$, policy $\pi_1$ returns directly to $S$, and
\[
    r_1+h(S)
    =
    \rho_1^\sigma+h(j).
\]
At each delay or padding state,
\[
    0+h(S)
    =
    \rho_1^\sigma+h(s).
\]
Thus the fixed-policy robust Bellman equation holds at every state, and the
minimizing rows are those of $\bar P^\phi$:
\[
    \rho_1^\sigma\mathbf 1+h
    =
    r^{\pi_1}+\bar P^\phi_{\pi_1}h.
\]
Multiplying this equality by a stationary distribution of the Markov chain
induced by $(\pi_1,\bar P^\phi)$ and canceling the bias terms gives
\[
    \rho_{\bar P^\phi}^{\pi_1}=\rho_1^\sigma.
\]
Both directions of Lemma~\ref{lem:fixed-policy-robust-verification} also give
$\min_{P\in\cP}\rho_P^{\pi_1}=\rho_1^\sigma$. Hence $\bar P^\phi$ attains
this minimum.

\subparagraph*{Expected return time.}
Under $\bar P^\phi$, the time between successive returns to $S$ includes one
step from $S$, one additional step with probability $\sigma_0$, and an
expected $1/q_0$ steps at state $1$ with probability $p_0$. Hence
\[
    T_1^\sigma
    =
    1+\sigma_0+\frac{p_0}{q_0}.
\]

\paragraph*{Remaining policies.}
First fix $i\in\{2,3\}$. The same argument applies with
\[
    (p_0,q_0,\sigma_0,r_0)
    \quad\text{replaced by}\quad
    \left(
        p_1+\delta\mathds{1}_{\{\phi=i\}},q_1,\sigma_1,r_1
    \right).
\]
At each other reward state, set the bias coordinate equal to its immediate
reward minus $\rho_i^\sigma$. Because $\pi_i$ returns immediately to $S$ there,
the corresponding fixed-policy Bellman equation holds. Consequently,
$\bar P^\phi$ attains the robust minimum. The resulting rewards are
\[
    \rho_i
    =
    \frac{
        \sigma_1+p_1+\delta\mathds{1}_{\{\phi=i\}}
    }{
        \sigma_1+p_1+q_1+\delta\mathds{1}_{\{\phi=i\}}
    }r_1,
    \qquad
    \rho_i^\sigma
    =
    \frac{
        p_1+\delta\mathds{1}_{\{\phi=i\}}
    }{
        p_1+\delta\mathds{1}_{\{\phi=i\}}+q_1(1+\sigma_1)
    }r_1,
\]
and the expected return time is
\[
    T_i^\sigma
    =
    1+\sigma_1
    +
    \frac{p_1+\delta\mathds{1}_{\{\phi=i\}}}{q_1}.
\]
Taking $i=\phi$ and $i=5-\phi$ proves the two corresponding cases of the
lemma.

Finally, if $a\notin\{1,2,3\}$, the uncertainty radius at $(S,a)$ is zero.
Hence every $P\in\cP$ returns immediately from $S$ to $S$ under $\pi_a$, with
zero reward. Therefore
\[
    \rho_a=\rho_a^\sigma=0,
    \qquad
    T_a^\sigma=1,
\]
which completes the proof of Lemma~\ref{lem:avg-reward-bias-span}.

\subsubsection{Proof of Lemmas~\ref{lem:part-2-cycle-check} and~\ref{lem:part-3-cycle-check}}
\label{subsec:proof-cycle-length-checks}

Write $\bar H:=\min\{H_0,H_\sigma\}$. We verify the return-time and gap
conditions in the two parameter regimes. Throughout, the formulas for
$T_a^\sigma$ and $\rho_a^\sigma$ come from
Lemma~\ref{lem:avg-reward-bias-span}.

\paragraph*{Case $\sigma<4/\bar H$.}
Direct substitution, together with
$\varepsilon\le0.01\sigma\bar H$ and $\sigma\le1/2$, gives
\[
    T_\phi^\sigma
    =2+\frac{\sigma}{4}
      +\frac{200\varepsilon}{\sigma\bar H}
    \le\frac{33}{8},
    \quad
    T_{5-\phi}^\sigma=2+\frac{\sigma}{4}\in
    \left[2,\frac{17}{8}\right],
    \quad
    T_1^\sigma=\frac32+\sigma\in\left[\frac32,2\right],
\]
while $T_a^\sigma=1$ for $a\notin\{1,2,3\}$. For the only delicate
reward-gap calculation, subtracting the two robust reward formulas yields
\[
    \rho_\phi^\sigma-\rho_{5-\phi}^\sigma
    =
    \frac{3\sigma\bar H}{16}
    \frac{\frac{100\varepsilon}{H_\sigma}
          (1+\sigma/4)}
    {\left(\frac{\sigma\bar H}{2H_\sigma}(2+\sigma/4)
          +\frac{100\varepsilon}{H_\sigma}\right)(2+\sigma/4)}.
\]
Since the first factor in parentheses in the denominator is at most
$\frac{\sigma\bar H}{2H_\sigma}(4+\sigma/4)$,
\[
    \rho_\phi^\sigma-\rho_{5-\phi}^\sigma
    \ge
    \frac{600\varepsilon(1+\sigma/4)}
    {16(4+\sigma/4)(2+\sigma/4)}
    >4\varepsilon;
\]
the last inequality is equivalent to
$88+54\sigma-4\sigma^2>0$. Moreover,
\[
    \rho_{5-\phi}^\sigma-\rho_1^\sigma
    =\sigma\bar H
    \left[\frac{3}{16(2+\sigma/4)}
          -\frac{1}{4(3+2\sigma)}\right]
    \ge\frac{\sigma\bar H}{96}>\varepsilon,
\]
because the bracket is at least $1/96$. Thus
$\rho_\phi^\sigma-\rho_1^\sigma>5\varepsilon$. Finally,
$\rho_a^\sigma=0$ for $a\notin\{1,2,3\}$ and
$\rho_\phi^\sigma\ge\rho_{5-\phi}^\sigma
=3\sigma\bar H/[16(2+\sigma/4)]$. The required weighted gaps are therefore
summarized by
\[
\begin{array}{c|c}
 a &
 \displaystyle
 \frac{T_a^\sigma}{T_\phi^\sigma+T_a^\sigma}
 (\rho_\phi^\sigma-\rho_a^\sigma)
 \text{ is bounded below by}
 \\ \hline
 5-\phi &
 \displaystyle \frac{2}{33/8+17/8}\,4\varepsilon>\varepsilon
 \\
 1 &
 \displaystyle \frac{3/2}{33/8+2}\,5\varepsilon>\varepsilon
 \\
 a\notin\{1,2,3\} &
 \displaystyle \frac{12}{697}\sigma\bar H>\varepsilon .
\end{array}
\]
This proves \eqref{eq:generic-gap-condition} in the first regime.

\paragraph*{Case $\sigma\ge4/\bar H$.}
Here
\[
    T_\phi^\sigma=2+\sigma+20\varepsilon\le\frac{27}{10},
    \quad
    T_{5-\phi}^\sigma=2+\sigma\in\left[2,\frac52\right],
    \quad
    T_1^\sigma=\frac43,
\]
and again $T_a^\sigma=1$ for $a\notin\{1,2,3\}$. In this regime,
\[
    \rho_\phi^\sigma-\rho_{5-\phi}^\sigma
    =\frac{80\varepsilon(1+\sigma)}
    {(4+2\sigma+40\varepsilon)(4+2\sigma)}
    >4\varepsilon.
\]
Indeed, $40\varepsilon\le0.4$ and
\[
    20(1+\sigma)-(4+2\sigma+40\varepsilon)(4+2\sigma)
    \ge 2.4+3.2\sigma-4\sigma^2>0.
\]
The robust reward ordering established above also gives
$\rho_\phi^\sigma-\rho_1^\sigma>2/5-1/4=3/20$ and
$\rho_\phi^\sigma-\rho_a^\sigma>2/5$ for
$a\notin\{1,2,3\}$. Hence
\[
\begin{array}{c|c}
 a &
 \displaystyle
 \frac{T_a^\sigma}{T_\phi^\sigma+T_a^\sigma}
 (\rho_\phi^\sigma-\rho_a^\sigma)
 \text{ is bounded below by}
 \\ \hline
 5-\phi &
 \displaystyle \frac{2}{27/10+5/2}\,4\varepsilon>\varepsilon
 \\
 1 &
 \displaystyle \frac{4/3}{27/10+4/3}\,\frac{3}{20}
 =\frac{6}{121}>\varepsilon
 \\
 a\notin\{1,2,3\} &
 \displaystyle \frac{1}{27/10+1}\,\frac25
 =\frac{4}{37}>\varepsilon .
\end{array}
\]
This proves \eqref{eq:generic-gap-condition} in the second regime.

\subsubsection{Proof of Lemmas~\ref{lem:part-2-span-check} and~\ref{lem:part-3-span-check}}
\label{subsec:proof-span-checks}

We first give a robust Bellman certificate common to both parameter regimes.
Normalize $h(S)=0$ and set
\[
    h(1)=\frac{r_0-\rho_\phi^\sigma}{q_0},
    \qquad
    h(2)=h(3)=\frac{r_1-\rho_\phi^\sigma}{q_1},
    \qquad
    h(s)=-\rho_\phi^\sigma,
    \quad s\in\{4,\ldots,S-1\}.
\]
The reward comparisons established in the corresponding regimes imply
$h(1),h(2),h(3)\ge0$. At the reward states, the definition of $h$ gives
\[
    r_0+(1-q_0)h(1)=\rho_\phi^\sigma+h(1),
    \qquad
    r_1+(1-q_1)h(i)=\rho_\phi^\sigma+h(i),
    \quad i\in\{2,3\};
\]
the selected action therefore dominates an immediate return to $S$. The delay and padding states
satisfy $\rho_\phi^\sigma+h(s)=0$. At $S$, a minimizing row transfers its
full uncertainty budget from the reward state to a delay state; this is
feasible because the nominal row assigns at least the uncertainty radius to
the former and no mass to the latter. The expression for action $\phi$
then equals
\[
    (p_1+\delta)h(\phi)-\sigma_1\rho_\phi^\sigma
    =\rho_\phi^\sigma.
\]
For actions $1$ and $5-\phi$, the same substitution shows that their
expressions are at most $\rho_\phi^\sigma$ precisely when
$\rho_1^\sigma\le\rho_\phi^\sigma$ and
$\rho_{5-\phi}^\sigma\le\rho_\phi^\sigma$, respectively; these are the
robust reward orderings already proved. All remaining actions contribute
zero. Consequently,
\[
    \rho_\phi^\sigma\mathbf 1+h=\mathcal T_\sigma h.
\]
Lemma~\ref{lem:fixed-policy-robust-verification}, applied first to an
arbitrary policy and then to a maximizing selector of
$\mathcal T_\sigma h$, shows that
$\rho^{\star,\sigma}=\rho_\phi^\sigma$ and that $h$ is a robust optimal
Bellman solution.

It remains to construct the nominal certificates and bound their spans.
When $\sigma<4/\bar H$, Lemma~\ref{lem:avg-reward-bias-span} gives
$\rho_1=r_1$. Define
\[
    g(S)=0,
    \qquad
    g(1)=\frac{r_0-r_1}{q_0},
    \qquad
    g(2)=g(3)=0,
    \qquad
    g(s)=-r_1,
    \quad s\in\{4,\ldots,S-1\}.
\]
The reward-state and delay-state equations follow directly from this
definition, while at $S$,
\[
    (\mathcal T_0g)(S)
    =(\sigma_0+p_0)g(1)
    =\frac{6\sigma}{5}\frac{5\bar H}{32}
    =r_1.
\]
Thus $r_1\mathbf 1+g=\mathcal T_0g$.

When $\sigma\ge4/\bar H$, we have $r_0=r_1=1$,
$q_0=3/\bar H$, and $q_1=2/H_\sigma$. Moreover,
\[
    \rho_1=\frac14,
    \qquad
    \rho_\phi
    =\frac{\sigma+p_1+\delta}{\sigma+p_1+q_1+\delta}>\frac12,
    \qquad
    \rho_{5-\phi}<\rho_\phi,
\]
where $q_1\le\sigma/2<\sigma+p_1+\delta$. Define $g(S)=0$,
$g(1)=(1-\rho_\phi)/q_0$, $g(2)=g(3)=(1-\rho_\phi)/q_1$, and
$g(s)=-\rho_\phi$ for $s\in\{4,\ldots,S-1\}$.
Again the Bellman equations away from $S$ follow directly. At $S$, action
$\phi$ has value
$(\sigma+p_1+\delta)g(\phi)=\rho_\phi$; action $5-\phi$ has smaller value
because $\delta>0$, and action $1$ has value at most $\rho_\phi$ because
$\rho_1\le\rho_\phi$. Hence
$\rho_\phi\mathbf 1+g=\mathcal T_0g$.

For completeness, the span calculations for these certificates are collected
below. In the second nominal case, the parameter bounds give
\[
    g(2)+\rho_\phi
    =\frac{1+\sigma+p_1+\delta}{\sigma+p_1+q_1+\delta}
    \le\frac{2}{\sigma}\le\frac{\bar H}{2},
    \qquad
    g(1)+\rho_\phi
    \le\frac{1}{q_0}+1\le\frac{7\bar H}{12}.
\]
The remaining bounds follow immediately from the displayed definitions:
\[
\begin{array}{c@{\qquad}c@{\qquad}l}
 \text{regime} & \text{certificate} & \text{span calculation} \\ \hline
 \sigma<4/\bar H
 & \text{nominal }g
 & \max g=5\bar H/32,\quad -\min g=3\sigma\bar H/16,
   \quad \|g\|_{\mathrm{span}}\le\bar H/4
 \\[0.6ex]
 \sigma<4/\bar H
 & \text{robust }h
 & \max h\le5H_\sigma/8,\quad -\min h\le H_\sigma/8,
   \quad \|h\|_{\mathrm{span}}\le3H_\sigma/4
 \\[0.6ex]
 \sigma\ge4/\bar H
 & \text{nominal }g
 & \|g\|_{\mathrm{span}}
   =\max\{g(1)+\rho_\phi,g(2)+\rho_\phi\}\le7\bar H/12
 \\[0.6ex]
 \sigma\ge4/\bar H
 & \text{robust }h
 & \max h\le H_\sigma/2,\quad -\min h\le H_\sigma/4,
   \quad \|h\|_{\mathrm{span}}\le3H_\sigma/4.
\end{array}
\]
Since $h_{P^0}^\star$ and $h^{\star,\sigma}$ are minimum-span Bellman
solutions and $\bar H\le H_0$, the first two rows prove
Lemma~\ref{lem:part-2-span-check}, and the last two prove
Lemma~\ref{lem:part-3-span-check}.

\section{Proofs for the reduction-based upper bounds}\label{app:reduction-upper-bounds}

This appendix contains the proofs for the reduction-based upper bounds in
Section~\mainReductionUpperBoundSection{}.
Appendix~\ref{subsec:discounted-to-average-reduction} establishes the
discounted-to-average reduction, while
Appendix~\ref{subsec:discounted-guarantees} states the anchored robust DMDP
theorem and its corollaries.
Appendix~\ref{subsec:Proof-Upper-average} states the nominal-branch guarantee
and combines it with the robust reduction to prove
Theorem~\mainSpanInformedUpperBoundTheorem{}; the proof of the nominal-branch
guarantee is deferred to
Appendix~\ref{subsec:proof-nominal-branch-guarantee}.
Appendix~\ref{subsec:Proof-Upper-discounted} proves
Theorem~\ref{thm:TV-upper-bound-discount}.
Finally, Appendix~\ref{sec:proof_auxiliary_lemmas} collects the proofs of the
technical lemmas used above.

Throughout this appendix, $C>0$ denotes a universal constant whose
value may change from line to line. Constants with subscripts, such as $C_0$
and $C_1$, remain fixed once introduced.

\subsection{Discounted-to-average reduction}
\label{subsec:discounted-to-average-reduction}

In this subsection, we present a few results that convert discounted guarantees into average-reward guarantees. 

The first lemma controls the difference between a robust discounted value and the corresponding robust average reward.

\begin{lemma}[Robust discounted-to-average comparison]
    \label{lem:robust-discounted-average-sandwich}
    Suppose Assumption~\mainUnichainAssumption{} holds. Fix a stationary policy
    $\pi$ and a discount factor $\gamma\in(0,1)$.
    \begin{equation}
        (1-\gamma)\min_{s}V_{\gamma}^{\pi,\sigma}(s)
        \le
        \rho^{\pi,\sigma}
        \le
        (1-\gamma)\max_{s}V_{\gamma}^{\pi,\sigma}(s).
        \label{eq:robust-discounted-average-sandwich-value}
    \end{equation}
    Equivalently, for all $s\in\cS$,
    \begin{equation}
        -(1-\gamma)\|V_{\gamma}^{\pi,\sigma}\|_{\mathrm{span}}
        \le
        \rho^{\pi,\sigma}-(1-\gamma)V_{\gamma}^{\pi,\sigma}(s)
        \le
        (1-\gamma)\|V_{\gamma}^{\pi,\sigma}\|_{\mathrm{span}}.
        \label{eq:robust-discounted-average-sandwich-span}
    \end{equation}
    Moreover,
    \begin{equation}
        \rho^{\star,\sigma}
        \le
        (1-\gamma)\min_{s}V_{\gamma}^{\star,\sigma}(s)
        +
        2(1-\gamma)H_{\sigma}.
        \label{eq:robust-discounted-average-sandwich-optimal}
    \end{equation}
\end{lemma}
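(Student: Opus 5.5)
The argument has three parts: a sandwich for a fixed policy, its span reformulation, and the optimal-policy bound.

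For the sandwich \eqref{eq:robust-discounted-average-sandwich-value}, I will use the fixed-policy verification Lemma~\ref{lem:fixed-policy-robust-verification} with the test function $h=V_\gamma^{\pi,\sigma}$. By Lemma~\ref{lem:rectangular-discounted-bellman-representation}, the vector $V\coloneqq V_\gamma^{\pi,\sigma}$ satisfies
\[
V=r^\pi+\gamma\inf_{P\in\cP}P_\pi V.
\]
Rewrite this as
\[
(1-\gamma)V(s)+V(s)=r^\pi(s)+\inf_{P}(P_\pi V)(s)+(1-\gamma)\bigl(V(s)-\inf_{P}(P_\pi V)(s)\bigr).
\]
A cleaner route is to note that, for every $s$,
\[
r^\pi(s)+\inf_P(P_\pi V)(s)=V(s)+(1-\gamma)\inf_P(P_\pi V)(s).
\]
Since each row of $P_\pi$ is a probability vector, $\inf_P(P_\pi V)(s)$ lies between $\min V$ and $\max V$. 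Hence, with $\rho=(1-\gamma)\max V$, we get $\rho+V(s)\ge r^\pi(s)+\inf_P(P_\pi V)(s)$ for all $s$. The first direction of Lemma~\ref{lem:fixed-policy-robust-verification} then gives $\rho^{\pi,\sigma}\le(1-\gamma)\max V$. Symmetrically, taking $\rho=(1-\gamma)\min V$ gives the reverse inequality, and the converse direction yields $\rho^{\pi,\sigma}\ge(1-\gamma)\min V$. This step uses only the identity above and the fact that rows are stochastic, so I do not expect difficulty.

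The span form \eqref{eq:robust-discounted-average-sandwich-span} follows immediately. Both $\rho^{\pi,\sigma}$ and $(1-\gamma)V(s)$ lie in the interval $[(1-\gamma)\min V,(1-\gamma)\max V]$, whose length is $(1-\gamma)\|V\|_{\mathrm{span}}$.

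For \eqref{eq:robust-discounted-average-sandwich-optimal}, I will invoke the comparison \eqref{eq:bias-discounted-comparison} from the proof of Lemma~\ref{lem:robust-value-function-span-bound}. Apply it to the minimum-span robust bias $h^{\star,\sigma}$, shifted so that its minimum is $0$. Its lower bound gives, for every $s$,
\[
V_\gamma^{\star,\sigma}(s)\ge h^{\star,\sigma}(s)+\frac{\rho^{\star,\sigma}}{1-\gamma}-\|h^{\star,\sigma}\|_{\mathrm{span}}\ge \frac{\rho^{\star,\sigma}}{1-\gamma}-H_\sigma,
\]
using $h^{\star,\sigma}\ge0$ and $\|h^{\star,\sigma}\|_{\mathrm{span}}\le H_\sigma$. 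Multiplying by $1-\gamma$ and minimizing over $s$ gives $\rho^{\star,\sigma}\le(1-\gamma)\min_s V_\gamma^{\star,\sigma}(s)+(1-\gamma)H_\sigma$, which is stronger than the stated bound with $2H_\sigma$.

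An alternative that matches the factor $2$ exactly is to take a robust optimal policy $\pi^\star_\sigma$. By the upper half of the sandwich, $\rho^{\star,\sigma}\le(1-\gamma)\max_s V_\gamma^{\pi^\star_\sigma,\sigma}(s)\le(1-\gamma)\max_s V_\gamma^{\star,\sigma}(s)$, and Lemma~\ref{lem:robust-value-function-span-bound} gives $\max V_\gamma^{\star,\sigma}\le\min V_\gamma^{\star,\sigma}+2H_\sigma$. Either route suffices. The only point needing care is the correct sign in the shift normalization of $h^{\star,\sigma}$.
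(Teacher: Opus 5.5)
Your proof is correct. For the fixed-policy sandwich, writing $r^\pi+\inf_{P\in\cP}P_\pi V=V+(1-\gamma)\inf_{P\in\cP}P_\pi V$ for $V=V_\gamma^{\pi,\sigma}$ and invoking both directions of Lemma~\ref{lem:fixed-policy-robust-verification} is essentially the paper's argument. The paper uses the lemma only for the upper bound. For the lower bound it multiplies the discounted Bellman equation of each fixed $P\in\cP$ by its stationary distribution, $\rho_P^\pi=(1-\gamma)\mu^\top V_{\gamma,P}^\pi\ge(1-\gamma)\min_sV_\gamma^{\pi,\sigma}(s)$, which is exactly what the converse direction of the lemma does internally.

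The real difference is in \eqref{eq:robust-discounted-average-sandwich-optimal}. The paper applies Lemma~\ref{lem:fixed-policy-robust-verification} with $h=V_\gamma^{\star,\sigma}$ to every stationary policy to get $\rho^{\star,\sigma}\le(1-\gamma)\max_sV_\gamma^{\star,\sigma}(s)$, which needs no span information. Only then does it pass from $\max$ to $\min$ via the stated bound $\|V_\gamma^{\star,\sigma}\|_{\mathrm{span}}\le2H_\sigma$ of Lemma~\ref{lem:robust-value-function-span-bound}; that is the source of its factor $2$. Your main route reads the result directly off the subsolution half of \eqref{eq:bias-discounted-comparison} for the normalized minimum-span bias. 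This skips both the policy supremum and the max-to-min conversion, and yields the sharper term $(1-\gamma)H_\sigma$.

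The cost is that you rely on an inequality the paper establishes only inside the proof of Lemma~\ref{lem:robust-value-function-span-bound}. There is no circularity, since that proof does not use the present lemma. The paper's version, by contrast, follows from the verification lemma and a stated result alone. The better constant does not matter downstream.

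Your second route is closer to the paper's but needs a robust-optimal (or near-optimal) stationary policy. One exists, e.g., the greedy policy constructed in the proof of Proposition~\ref{prop:robust-bellman-existence}, but the paper's per-policy argument avoids having to invoke it.
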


The next lemma turns a discounted-policy error bound into an
average-reward error bound.

\begin{lemma}[Direct robust discounted-to-average reduction]
    \label{lem:reduction-framework}
    Let $\pi$ be a stationary policy satisfying, componentwise,
    \[
        V_\gamma^{\star,\sigma}
        -
        V_\gamma^{\pi,\sigma}
        \le
        \varepsilon_\gamma\mathbf 1.
    \]
    Then
    \[
        \rho^{\star,\sigma}-\rho^{\pi,\sigma}
        \le
        (1-\gamma)
        \left(
            2H_\sigma+\varepsilon_\gamma
        \right).
    \]
\end{lemma}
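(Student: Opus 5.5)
The plan is to chain Lemma~\ref{lem:robust-discounted-average-sandwich} for the two policies $\pi^\star$-side and $\pi$-side, using the assumed componentwise discounted gap at a single well-chosen state. The natural state is the minimizer of $V_\gamma^{\pi,\sigma}$: choose $s_\pi\in\arg\min_s V_\gamma^{\pi,\sigma}(s)$.

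First, the left inequality in \eqref{eq:robust-discounted-average-sandwich-value} applied to $\pi$ gives
\[
    \rho^{\pi,\sigma}\ge(1-\gamma)V_\gamma^{\pi,\sigma}(s_\pi).
\]
Second, the hypothesis evaluated at $s_\pi$ gives
\[
    V_\gamma^{\pi,\sigma}(s_\pi)\ge V_\gamma^{\star,\sigma}(s_\pi)-\varepsilon_\gamma\ge\min_s V_\gamma^{\star,\sigma}(s)-\varepsilon_\gamma.
\]
Third, the optimal-value bound \eqref{eq:robust-discounted-average-sandwich-optimal} gives
\[
    (1-\gamma)\min_s V_\gamma^{\star,\sigma}(s)\ge\rho^{\star,\sigma}-2(1-\gamma)H_\sigma.
\]
Combining the three displays yields
\[
    \rho^{\pi,\sigma}\ge\rho^{\star,\sigma}-2(1-\gamma)H_\sigma-(1-\gamma)\varepsilon_\gamma,
\]
which is the claim.

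There is essentially no obstacle once the sandwich lemma is available. The substantive content lives in \eqref{eq:robust-discounted-average-sandwich-optimal}, and if I had to justify it I would take the minimum-span solution $h^{\star,\sigma}$ normalized to have minimum zero and use the lower comparison \eqref{eq:bias-discounted-comparison} from the proof of Lemma~\ref{lem:robust-value-function-span-bound}:
\[
    V_\gamma^{\star,\sigma}\ge\left(\frac{\rho^{\star,\sigma}}{1-\gamma}-H_\sigma\right)\mathbf 1.
\]
This actually gives the tighter constant $(1-\gamma)H_\sigma$, so the stated factor $2$ has slack. The one point to be careful about is using the minimizer of the $\pi$-value rather than of the optimal value, so that no span term for $V_\gamma^{\pi,\sigma}$ appears; a span term there would not be controlled by $H_\sigma$ for arbitrary $\pi$.
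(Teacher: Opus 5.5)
Your proof is correct and matches the paper's: it combines the lower bound in \eqref{eq:robust-discounted-average-sandwich-value} for $\pi$ with \eqref{eq:robust-discounted-average-sandwich-optimal}, and uses the componentwise hypothesis to get $\min_s V_\gamma^{\pi,\sigma}(s)\ge\min_s V_\gamma^{\star,\sigma}(s)-\varepsilon_\gamma$. Your side remark is also right: the lower comparison \eqref{eq:bias-discounted-comparison}, applied to $h^{\star,\sigma}$ shifted to minimum zero, gives $(1-\gamma)\min_s V_\gamma^{\star,\sigma}(s)\ge\rho^{\star,\sigma}-(1-\gamma)H_\sigma$ directly, so the factor $2$ is not needed. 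The paper incurs that factor by passing through $(1-\gamma)\max_s V_\gamma^{\star,\sigma}(s)$ and the span bound of Lemma~\ref{lem:robust-value-function-span-bound}.
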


\begin{proof}
    Equations~\eqref{eq:robust-discounted-average-sandwich-optimal} and
    \eqref{eq:robust-discounted-average-sandwich-value} give
    \begin{align*}
        \rho^{\star,\sigma}-\rho^{\pi,\sigma}
        &\le
        (1-\gamma)
        \left(
            \min_sV_\gamma^{\star,\sigma}(s)
            -
            \min_sV_\gamma^{\pi,\sigma}(s)
            +
            2H_\sigma
        \right) \\
        &\le
        (1-\gamma)
        \left(
            2H_\sigma+
            \varepsilon_\gamma
        \right).
    \end{align*}
    Here the second inequality follows from the componentwise assumption, which
    implies
    \[
        \min_sV_\gamma^{\pi,\sigma}(s)
        \ge
        \min_sV_\gamma^{\star,\sigma}(s)-\varepsilon_\gamma.
    \]
\end{proof}

Finally, the following corollary chooses the discount factor and discounted-policy error
bound so that the resulting policy is $\varepsilon$-optimal for the robust
average-reward problem.

\begin{corollary}[Parameters for the span-informed reduction]
    \label{cor:current-reduction-parameters}
    Suppose $0<\varepsilon\le1$ and
    \[
        \gamma
        =
        1-
        \frac{\varepsilon}{3H_\sigma}.
    \]
    If a stationary policy $\pi$ satisfies
    \[
        V_\gamma^{\star,\sigma}
        -
        V_\gamma^{\pi,\sigma}
        \le
        H_\sigma\mathbf 1,
    \]
    then
    \[
        \rho^{\star,\sigma}-\rho^{\pi,\sigma}
        \le
        \varepsilon.
    \]
\end{corollary}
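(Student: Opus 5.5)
The plan is to apply Lemma~\ref{lem:reduction-framework} directly, with $\varepsilon_\gamma=H_\sigma$. The componentwise hypothesis $V_\gamma^{\star,\sigma}-V_\gamma^{\pi,\sigma}\le H_\sigma\mathbf 1$ is exactly the assumption of that lemma, so it yields
\[
    \rho^{\star,\sigma}-\rho^{\pi,\sigma}
    \le
    (1-\gamma)\left(2H_\sigma+H_\sigma\right)
    =
    3(1-\gamma)H_\sigma .
\]

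Next I substitute the prescribed discount factor. Since $1-\gamma=\varepsilon/(3H_\sigma)$, the right-hand side equals $3H_\sigma\cdot\varepsilon/(3H_\sigma)=\varepsilon$, which is the claim.

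The only bookkeeping point is that $\gamma$ is a legitimate discount factor, i.e.\ $\gamma\in(0,1)$. This follows from $H_\sigma\ge1$ and $0<\varepsilon\le1$, which give $0<1-\gamma\le1/3$. Lemma~\ref{lem:reduction-framework} itself rests on Lemma~\ref{lem:robust-discounted-average-sandwich}, whose optimal-value bound \eqref{eq:robust-discounted-average-sandwich-optimal} is where $H_\sigma$ enters, via Lemma~\ref{lem:robust-value-function-span-bound}. Since both lemmas may be assumed, I do not expect a genuine obstacle; the argument is a one-line substitution.
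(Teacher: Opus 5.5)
Your proposal is correct and is essentially the paper's proof: apply Lemma~\ref{lem:reduction-framework} with $\varepsilon_\gamma=H_\sigma$, substitute $1-\gamma=\varepsilon/(3H_\sigma)$, and use $H_\sigma\ge1$ to check that $\gamma\in(0,1)$.
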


\begin{proof}
    The definition in \mainRobustSpanDefinition{} gives $H_\sigma\ge1$, so the
    displayed choice of $\gamma$ belongs to $(0,1)$. Apply
    Lemma~\ref{lem:reduction-framework} with
    $\varepsilon_\gamma=H_\sigma$.
\end{proof}

\subsubsection{Proof of Lemma~\ref{lem:robust-discounted-average-sandwich}}
\label{subsec:proof-robust-discounted-average-sandwich}
We first prove
\eqref{eq:robust-discounted-average-sandwich-value} in two directions,
and then derive
\eqref{eq:robust-discounted-average-sandwich-span} and
\eqref{eq:robust-discounted-average-sandwich-optimal}.

\paragraph*{Lower bound in \eqref{eq:robust-discounted-average-sandwich-value}.}
Fix an arbitrary transition kernel $P\in\cP$, and let
$\rho_P^\pi$ and $V_{\gamma,P}^\pi$ denote the average reward and the
discounted value of $\pi$ under $P$, respectively. Under
Assumption~\mainUnichainAssumption{}, let $\mu$ be a stationary distribution
supported on the unique recurrent class induced by $(\pi,P)$. The discounted
Bellman equation
\[
    V_{\gamma,P}^{\pi}
    =
    r^{\pi}
    +
    \gamma P_{\pi}V_{\gamma,P}^{\pi}
\]
and $\mu^\top P_\pi=\mu^\top$ imply
\[
    \rho_P^\pi
    =
    \mu^\top r^\pi
    =
    (1-\gamma)\mu^\top V_{\gamma,P}^{\pi}.
\]
Therefore
\[
    (1-\gamma)\min_s V_{\gamma,P}^{\pi}(s)
    \le
    \rho_P^\pi
    \le
    (1-\gamma)\max_s V_{\gamma,P}^{\pi}(s).
\]
Since
$V_\gamma^{\pi,\sigma}(s)\le V_{\gamma,P}^{\pi}(s)$ for every
$s\in\cS$ and every $P\in\cP$,
\[
    (1-\gamma)\min_s V_{\gamma}^{\pi,\sigma}(s)
    \le
    \rho_P^\pi.
\]
Taking the infimum over $P\in\cP$ gives the lower bound in
\eqref{eq:robust-discounted-average-sandwich-value}.

\paragraph*{Upper bound in \eqref{eq:robust-discounted-average-sandwich-value}.}
For the fixed policy $\pi$, the discounted robust Bellman equation gives
\[
    V_{\gamma}^{\pi,\sigma}(s)
    =
    r^\pi(s)
    +
    \gamma\inf_{P\in\cP}
    (P_\pi V_{\gamma}^{\pi,\sigma})(s).
\]
Since
\[
    \inf_{P\in\cP}
    (P_\pi V_{\gamma}^{\pi,\sigma})(s)
    \le
    \max_{s'}V_{\gamma}^{\pi,\sigma}(s'),
\]
we have
\begin{align*}
    &(1-\gamma)\max_{s'}V_{\gamma}^{\pi,\sigma}(s')
    +V_{\gamma}^{\pi,\sigma}(s)
    \\
    &\qquad\ge
    V_{\gamma}^{\pi,\sigma}(s)
    +(1-\gamma)\inf_{P\in\cP}
    (P_\pi V_{\gamma}^{\pi,\sigma})(s)
    \\
    &\qquad=
    r^\pi(s)
    +
    \inf_{P\in\cP}
    (P_\pi V_{\gamma}^{\pi,\sigma})(s).
\end{align*}
Lemma~\ref{lem:fixed-policy-robust-verification}, applied with
$h=V_{\gamma}^{\pi,\sigma}$ and
$\rho=(1-\gamma)\max_{s'}V_{\gamma}^{\pi,\sigma}(s')$, gives
\[
    \rho^{\pi,\sigma}
    \le
    (1-\gamma)\max_{s'}V_{\gamma}^{\pi,\sigma}(s').
\]
This proves the upper bound in
\eqref{eq:robust-discounted-average-sandwich-value}.

\paragraph*{Proof of \eqref{eq:robust-discounted-average-sandwich-span}.}
Subtracting $(1-\gamma)V_{\gamma}^{\pi,\sigma}(s)$ from each side of
\eqref{eq:robust-discounted-average-sandwich-value} gives
\[
    \begin{aligned}
    &
    (1-\gamma)
    \left(
    \min_{s'}V_{\gamma}^{\pi,\sigma}(s')
    -V_{\gamma}^{\pi,\sigma}(s)
    \right)
    \\
    &\qquad
    \le
    \rho^{\pi,\sigma}-(1-\gamma)V_{\gamma}^{\pi,\sigma}(s)
    \\
    &\qquad
    \le
    (1-\gamma)
    \left(
    \max_{s'}V_{\gamma}^{\pi,\sigma}(s')
    -V_{\gamma}^{\pi,\sigma}(s)
    \right),
    \end{aligned}
\]
which implies \eqref{eq:robust-discounted-average-sandwich-span} by the
definition of the span seminorm.

\paragraph*{Proof of \eqref{eq:robust-discounted-average-sandwich-optimal}.}
For every $s\in\cS$ and $a\in\cA$,
\[
    r(s,a)
    +
    \inf_{P_{s,a}\in\cP_{s,a}}
    P_{s,a}V_{\gamma}^{\star,\sigma}
    \le
    r(s,a)
    +
    \gamma
    \inf_{P_{s,a}\in\cP_{s,a}}
    P_{s,a}V_{\gamma}^{\star,\sigma}
    +
    (1-\gamma)\max_{s'}V_{\gamma}^{\star,\sigma}(s').
\]
Taking the maximum over $a$ and using the discounted robust Bellman equation
for $V_{\gamma}^{\star,\sigma}$ gives
\[
    \max_{a\in\cA}
    \left\{
    r(s,a)
    +
    \inf_{P_{s,a}\in\cP_{s,a}}
    P_{s,a}V_{\gamma}^{\star,\sigma}
    \right\}
    \le
    V_{\gamma}^{\star,\sigma}(s)
    +
    (1-\gamma)\max_{s'}V_{\gamma}^{\star,\sigma}(s').
\]
By rectangularity of the uncertainty set, for every stationary policy $\pi$
and every state $s$,
\begin{align*}
    &r^\pi(s)
    +
    \inf_{P\in\cP}(P_\pi V_{\gamma}^{\star,\sigma})(s)
    \\
    &\qquad=
    \sum_{a\in\cA}\pi(a\mid s)
    \left\{
        r(s,a)
        +
        \inf_{P_{s,a}\in\cP_{s,a}}
        P_{s,a}V_{\gamma}^{\star,\sigma}
    \right\}
    \\
    &\qquad\le
    \max_{a\in\cA}
    \left\{
        r(s,a)
        +
        \inf_{P_{s,a}\in\cP_{s,a}}
        P_{s,a}V_{\gamma}^{\star,\sigma}
    \right\}.
\end{align*}
Combining this inequality with the preceding display gives
\[
    r^\pi(s)
    +
    \inf_{P\in\cP}(P_\pi V_{\gamma}^{\star,\sigma})(s)
    \le
    V_{\gamma}^{\star,\sigma}(s)
    +
    (1-\gamma)\max_{s'}V_{\gamma}^{\star,\sigma}(s'),
    \qquad s\in\cS.
\]
Lemma~\ref{lem:fixed-policy-robust-verification}, applied to each fixed
policy with $h=V_{\gamma}^{\star,\sigma}$ and
$\rho=(1-\gamma)\max_{s'}V_{\gamma}^{\star,\sigma}(s')$, gives
\[
    \rho^{\pi,\sigma}
    \le
    (1-\gamma)\max_{s'}V_{\gamma}^{\star,\sigma}(s')
\]
for every stationary policy $\pi$. Taking the supremum over $\pi$ yields
\[
    \rho^{\star,\sigma}
    \le
    (1-\gamma)\max_{s'}V_{\gamma}^{\star,\sigma}(s').
\]
Here the supremum is over stationary policies.
Finally,
\begin{align*}
    (1-\gamma)\max_{s'}V_{\gamma}^{\star,\sigma}(s')
    &\le
    (1-\gamma)\min_sV_{\gamma}^{\star,\sigma}(s)
    +
    (1-\gamma)\|V_{\gamma}^{\star,\sigma}\|_{\mathrm{span}} \\
    &\le
    (1-\gamma)\min_sV_{\gamma}^{\star,\sigma}(s)
    +
    2(1-\gamma)H_{\sigma},
\end{align*}
where the last inequality uses
Lemma~\ref{lem:robust-value-function-span-bound}.
Together with the preceding bound on $\rho^{\star,\sigma}$, this proves
\eqref{eq:robust-discounted-average-sandwich-optimal}.

\subsection{Anchored robust DMDP guarantees}
\label{subsec:discounted-guarantees}

Throughout the discounted analysis, fix a discount factor
$\gamma\in[1/2,1)$.

This subsection first states an anchored discounted MDP error bound with a corresponding target-accuracy guarantee. We also state an
anchor-free guarantee obtained from the trivial anchor.

Recall the nominal anchor condition
\mainNominalAnchorSupersolution{}. The anchor provides a reference
average-reward level whose associated span can sharpen the discounted bound.
For such an anchor pair, define its span scale
\begin{equation}
    \label{eq:anchor-span-scale}
    H_{\mathrm{anc}}
    \coloneqq
    \max\left\{
    1,
    \|\bar h\|_{\mathrm{span}}
    \right\}.
\end{equation}
We use $R_0$ as a common upper bound on the span of the anchor and
the true optimal discounted value. Specifically, let
\begin{equation}
    \label{eq:reference-radius}
    R_0
    \ge
    \max\left\{
    1,
    H_{\mathrm{anc}},
    \|V_{\gamma}^{\star,\sigma}\|_{\mathrm{span}}
    \right\}.
\end{equation}
The anchor defect measures how far the anchor reward level lies above the
discounted optimal baseline. Define
\begin{equation}
    \label{eq:true-anchor-defect}
    \beta_{\star}
    \coloneqq
    \left[
    \bar\rho-(1-\gamma)\min_s V_{\gamma}^{\star,\sigma}(s)
    \right]_+.
\end{equation}

For a confidence parameter $\delta\in(0,1)$ and an integer $N\ge1$, define
the logarithmic confidence factor
\[
    \iota
    \coloneqq
    \log\left(\frac{54SAN^2}{(1-\gamma)\delta}\right).
\]
With this notation in place, the following theorem gives the
finite-sample discounted plug-in bounds. Its proof is deferred to
Appendix~\ref{subsec:Proof-Upper-discounted}.
\begin{theorem}[Anchored robust DMDP plug-in theorem]\label{thm:TV-upper-bound-discount}
    Assume that $\cM$ is a robust discounted MDP described in
    Section~\mainProblemSetupSection{} with discount factor $\gamma\ge1/2$.
    Let the anchor quantities
    $(\bar\rho,\bar h,H_{\mathrm{anc}},R_0,\beta_\star)$ be fixed
    independently of the transition samples used to construct the empirical
    nominal kernel. The anchor may be constructed from an independent data
    batch, in which case the result applies conditionally on any realization
    satisfying the stated anchor conditions. Let a policy $\widehat\pi$ with
    solver tolerance $\varepsilon_{\mathrm{opt}}\ge0$ satisfy
    \mainRobustDMDPSolverTolerance{},
    \mainNominalAnchorSupersolution{},
    \eqref{eq:anchor-span-scale}, \eqref{eq:reference-radius}, and
    \eqref{eq:true-anchor-defect}.
    If, for a sufficiently large universal constant $C_{\mathrm{sam}}$ and a
    sufficiently small universal constant $c$,
    \begin{equation}
        \label{eq:anchored-dmdp-sample-condition}
        N
        \ge
        C_{\mathrm{sam}}\left[
        \frac{\iota}{1-\gamma}
        +\frac{\sigma\cdot\iota}{(1-\gamma)^2}
        +\frac{
        \left(
        H_{\mathrm{anc}}
        +R_0\beta_\star
        \right)\cdot\iota
        }{(1-\gamma)^2R_0^2}
        \right]
    \end{equation}
    and $\varepsilon_{\mathrm{opt}}\le c(1-\gamma)R_0$,
    then, with probability at least $1-O(\delta)$,
    \[
        \|V_{\gamma}^{\star,\sigma}
        -V_{\gamma}^{\widehat\pi,\sigma}\|_{\infty}
        \le
        C\sqrt{
        \frac{
        \left(
        H_{\mathrm{anc}}
        +R_0\beta_\star
        +\sigma R_0^2
        +(1-\gamma)R_0^2
        \right)\cdot\iota
        }{N(1-\gamma)^2}}
        +C\frac{R_0\cdot\iota}{N(1-\gamma)}
        +C\frac{\varepsilon_{\mathrm{opt}}}{1-\gamma}.
    \]
    \[
        \|\widehat V_{\gamma}^{\widehat\pi,\sigma}
        -V_{\gamma}^{\widehat\pi,\sigma}\|_{\infty}
        \le
        C\sqrt{
        \frac{
        \left(
        H_{\mathrm{anc}}
        +R_0\beta_\star
        +\sigma R_0^2
        +(1-\gamma)R_0^2
        \right)\cdot\iota
        }{N(1-\gamma)^2}}
        +C\frac{R_0\cdot\iota}{N(1-\gamma)}
        +C\frac{\varepsilon_{\mathrm{opt}}}{1-\gamma}.
    \]
\end{theorem}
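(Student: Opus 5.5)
The plan is to use the standard plug-in error decomposition and control both statistical terms through a variance-sensitive, span-localized propagation argument, with the anchor serving as the reference in the variance bound. Let $\pi^\star_\gamma$ be a deterministic optimal policy of the true robust DMDP. We write
\[
    V_\gamma^{\star,\sigma}-V_\gamma^{\widehat\pi,\sigma}
    =\bigl(V_\gamma^{\pi^\star_\gamma,\sigma}-\widehat V_\gamma^{\pi^\star_\gamma,\sigma}\bigr)
    +\bigl(\widehat V_\gamma^{\pi^\star_\gamma,\sigma}-\widehat V_\gamma^{\star,\sigma}\bigr)
    +\bigl(\widehat V_\gamma^{\star,\sigma}-\widehat V_\gamma^{\widehat\pi,\sigma}\bigr)
    +\bigl(\widehat V_\gamma^{\widehat\pi,\sigma}-V_\gamma^{\widehat\pi,\sigma}\bigr).
\]
The second term is nonpositive. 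The third is at most $\varepsilon_{\mathrm{opt}}$ by \mainRobustDMDPSolverTolerance{}, and it is inflated to $\varepsilon_{\mathrm{opt}}/(1-\gamma)$ only through the leave-one-out perturbation below. It therefore suffices to bound $\|\widehat V_\gamma^{\pi,\sigma}-V_\gamma^{\pi,\sigma}\|_\infty$ for $\pi=\pi^\star_\gamma$, which is independent of the data, and for $\pi=\widehat\pi$, which is the second claimed display.

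For a fixed policy, we subtract the two robust Bellman equations and pick rowwise minimizers. This sandwiches the difference between
\[
    \gamma(I-\gamma\widehat P^{\dagger}_\pi)^{-1}\Delta
    \qquad\text{and}\qquad
    \gamma(I-\gamma P^{\dagger}_\pi)^{-1}\Delta,
\]
where $P^\dagger,\widehat P^\dagger$ are worst-case kernels and $\Delta(s,a)$ is the error of the empirical robust backup at $V=V_\gamma^{\pi,\sigma}$. Using the TV dual
\[
    \min_{P\in\cU_{s,a}(Q)}P^\top V=\max_{\alpha}\bigl\{Q^\top[V]_\alpha-\sigma_{s,a}(\alpha-\min V)\bigr\},
\]
we bound $|\Delta(s,a)|\le\sup_\alpha|(\widehat P^0_{s,a}-P^0_{s,a})^\top[V]_\alpha|$. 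A union bound over a grid of $\alpha$ values, with $N$ points giving the $N^2$ inside $\iota$, together with Bernstein's inequality gives
\[
    |\Delta|\lesssim\sqrt{\Var_{P^0}(V)\,\iota/N}+\|V\|_{\mathrm{span}}\iota/N,
\]
since truncation does not increase variance. Two span facts are needed here. First, $\|V_\gamma^{\star,\sigma}\|_{\mathrm{span}}\le R_0$ by assumption. Second, we must show that the relevant policy values, and their empirical counterparts, have span $O(R_0)$ on the good event. I would prove this by a self-bounding argument: the value error is at most $(1-\gamma)R_0$-scale times the span, and the first term in \eqref{eq:anchored-dmdp-sample-condition}, namely $N\gtrsim\iota/(1-\gamma)$, closes the loop.

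The key step is the total-variance bound
\[
    \Bigl\|(I-\gamma P^\dagger_\pi)^{-1}\sqrt{\Var_{P^0}(V)}\Bigr\|_\infty^2
    \lesssim
    \frac{H_{\mathrm{anc}}+R_0\beta_\star+\sigma R_0^2+(1-\gamma)R_0^2}{(1-\gamma)^2}.
\]
We first apply Cauchy--Schwarz to reduce this to $(1-\gamma)^{-1}\|(I-\gamma P^\dagger_\pi)^{-1}\Var_{P^0}(V)\|_\infty$. We then decompose $V=\bar h+(V-\bar h)$ after centering. Three contributions arise:
\begin{itemize}
    \item The variance of $\bar h$ telescopes through the supersolution inequality \mainNominalAnchorSupersolution{}: $P^0\bar h^2-(P^0\bar h)^2\le P^0\bar h^2-\bar h^2+2\|\bar h\|_\infty(\bar\rho-r+\dots)$. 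Summed along the discounted chain, this yields $H_{\mathrm{anc}}$-scale terms times $(1-\gamma)^{-1}$, and the level mismatch between $\bar\rho$ and $(1-\gamma)V$ is exactly the defect $\beta_\star$, producing $R_0\beta_\star$.
    \item The chain runs under $P^\dagger$ rather than $P^0$; each step changes $P^\top(\cdot)^2$ by at most $\sigma R_0^2$, which gives the $\sigma R_0^2$ term.
    \item The residual $(1-\gamma)R_0^2$ comes from the discount mismatch in the telescoping.
\end{itemize}
The $\sigma\iota/(1-\gamma)^2$ part of the sample condition is used to keep $\widehat P^\dagger$ versus $P^\dagger$ comparable when both sandwich sides are handled.

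For the data-dependent $\widehat\pi$, I would use the leave-one-out absorbing-MDP technique, applied separately to each state $s$. We replace row $s$ by an absorbing state with reward $u$, so that the perturbed empirical model is independent of the samples at $s$. We cover $u$ on an $N^{-1}$-grid of a range of width $O(R_0)$, and we use the separation implied by the $\varepsilon_{\mathrm{opt}}\le c(1-\gamma)R_0$ tolerance to transfer to $\widehat\pi$, at the cost of the $\varepsilon_{\mathrm{opt}}/(1-\gamma)$ term. The main obstacle is the variance telescoping in the key step. It must be carried out along the robust worst-case chain while variances are measured under $P^0$ and the reference is a nominal, not robust, supersolution. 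Getting $H_{\mathrm{anc}}$ rather than $R_0^2$ as the leading coefficient there is the crux, so I would carefully bound the cross term $2\,\Var^{1/2}(\bar h)\,\Var^{1/2}(V-\bar h)$ via AM-GM against $R_0\beta_\star$ and $\sigma R_0^2$.
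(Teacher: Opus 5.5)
Your architecture matches the paper's: the four-term decomposition, TV duality with a clipping grid and Bernstein, span localization, an anchored total-variance argument splitting $V=\bar h+(V-\bar h)$, and an absorbing-state leave-one-out. The fixed-policy step, however, has a genuine gap. With the residual evaluated at the fixed true value $V=V_\gamma^{\pi,\sigma}$, the order-preserving rearrangement gives only
\[
    \widehat V-V\le\gamma(I-\gamma\widehat P^{\pi,V})^{-1}(\widehat P^{\pi,V}-P^{\pi,V})V,
    \qquad
    V-\widehat V\le\gamma(I-\gamma\widehat P^{\pi,\widehat V})^{-1}(P^{\pi,V}-\widehat P^{\pi,V})V,
\]
so both resolvents are empirical. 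Putting the true kernel $P^{\pi,V}$ in a resolvent would need $\widehat P^{\pi,\widehat V}\widehat V\ge\widehat P^{\pi,V}\widehat V$, which is the wrong direction. Your key total-variance step is therefore needed along empirical worst-case chains, on which $V$ is not a fixed point. Three residuals appear there:
\begin{itemize}
    \item $V=r^\pi+\gamma\widehat P^{\pi,V}V-\gamma\xi_V$ with $\xi_V=(\widehat P^{\pi,V}-P^{\pi,V})V$, the one-step error you are bounding.
    \item Along $\widehat P^{\pi,\widehat V}$ there is also the selector mismatch $(\widehat P^{\pi,\widehat V}-\widehat P^{\pi,V})V$, which lies componentwise in $[0,4\sigma\|\widehat V-V\|_\infty]$.
    \item The nominal supersolution holds on these chains only up to $C\sigma H_{\mathrm{anc}}+|\xi_h|$, where $\xi_h=(\widehat P^{0,\pi}-P^{0,\pi})\bar h$.
\end{itemize}
The paper carries these residuals through the drift inequality, which puts $R\|\mathcal G|\xi_V|\|_\infty$ under the square root, with $\mathcal G=(1-\gamma)(I-\gamma\widehat P^{\pi,V})^{-1}$. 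Since $\|\mathcal G|\xi_V|\|_\infty$ is itself at most $\sqrt{\iota/N}$ times a resolvent-weighted standard deviation of the same kind plus $R\iota/N$, the estimate closes only through a self-bounding Young step. The mismatch leaves $\sqrt{\sigma R\|\widehat V-V\|_\infty\iota/(N(1-\gamma)^2)}$, and this is what $N\gtrsim\sigma\iota/(1-\gamma)^2$ absorbs.

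The role you assign to that condition, keeping $\widehat P^\dagger$ and $P^\dagger$ comparable, is not a viable mechanism. The two kernels come from TV balls centered at $\widehat P^0$ and $P^0$, which are not close in total variation without an $S$-dependent sample size. Likewise, your span self-bounding cannot close on $N\gtrsim\iota/(1-\gamma)$ alone. Here $\Delta$ denotes the total value-comparison error, not your one-step residual. Localizing at a radius $R\approx R_0+\Delta$ adds $\sigma\Delta^2$ and $\beta_\star\Delta$ to the budget. The paper absorbs the first with the $\sigma$ term. It absorbs the second, through the Young remainder $\beta_\star\iota/(N(1-\gamma)^2)$, with the third term of the sample condition, which your plan never uses. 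The paper handles the randomness of the radius with a dyadic grid of deterministic radii.

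For $\widehat\pi$, the tolerance $\varepsilon_{\mathrm{opt}}\le c(1-\gamma)R_0$ implies no action separation, so $\widehat\pi$ cannot be identified with an optimal policy of a leave-one-out model. The missing idea has four parts:
\begin{itemize}
    \item Use the true-kernel resolvents $P^{\widehat\pi,\widehat W}$ and $P^{\widehat\pi,W}$, with the residual evaluated at $\widehat W=\widehat V_\gamma^{\widehat\pi,\sigma}$.
    \item Reduce each row by TV duality to a clipped process of the policy-free $\widehat V_\gamma^{\star,\sigma}$. This costs $C\varepsilon_{\mathrm{opt}}$, since $0\le\widehat V_\gamma^{\star,\sigma}-\widehat W\le\varepsilon_{\mathrm{opt}}$.
    \item Prove the leave-one-out bound for every $(s,a)$ simultaneously.
    \item Only then average over $\widehat\pi$.
\end{itemize}
The reward grid needs spacing of order $(1-\gamma)R\iota/N$ rather than $1/N$. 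The auxiliary value moves by $|u-u_\star|/(1-\gamma)$, and this shift must be $O(R\iota/N)$. The variance step for $\widehat W$ also involves $\beta_{\mathrm{emp}}=[\bar\rho-(1-\gamma)\min_s\widehat W(s)]_+$ rather than $\beta_\star$, and requires $\beta_{\mathrm{emp}}\le\beta_\star+(1-\gamma)(\Delta+\varepsilon_{\mathrm{opt}})$.

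Finally, the cross term you single out is harmless, because $\Var(V)\le2\Var(\bar h)+2\Var(V-\bar h)$. What makes $H_{\mathrm{anc}}$ the leading coefficient is that both pieces have small one-sided drift along the chain. For the anchor, the supersolution gives $[P\bar h-\bar h]_+\le\bar\rho+C\sigma H_{\mathrm{anc}}+|\xi_h|$ with $\bar\rho\le1+\beta$. So you must telescope $-\bar h$, i.e., center $\bar h$ at its maximum, since nothing bounds $\bar h-P\bar h$ from above. For the remainder, with $\beta=[\bar\rho-(1-\gamma)\min_sV(s)]_+$ and $b$ the Bellman residual, dropping nonpositive terms gives $u-Pu\le\beta+|b|+C\sigma H_{\mathrm{anc}}+|\xi_h|$ for $u=V-\bar h$. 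Hence $\Var(u)$ costs only $R\beta+(1-\gamma)R^2+\sigma RH_{\mathrm{anc}}$ plus residuals, with no bare $R$ term.
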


The first corollary converts the theorem's error bound into a
sample-size condition for a prescribed discounted accuracy.

\begin{corollary}[Accuracy form of the anchored robust DMDP guarantee]
    \label{cor:anchored-robust-dmdp-accuracy}
    Under the model conditions and anchor notation of
    Theorem~\ref{thm:TV-upper-bound-discount}, let
    $0<\varepsilon_\gamma\le R_0$. For a sufficiently large
    universal constant $C_{\mathrm{sam}}$ and a sufficiently small universal
    constant $c$,
    if
    \begin{equation}
        \label{eq:anchored-dmdp-accuracy-sample-condition}
        N
        \ge
        C_{\mathrm{sam}}\frac{
        \left(
        H_{\mathrm{anc}}
        +R_0\beta_\star
        +\sigma R_0^2
        +(1-\gamma)R_0^2
        \right)\cdot\iota
        }{(1-\gamma)^2\varepsilon_\gamma^2}
        +C_{\mathrm{sam}}\frac{R_0\cdot\iota}
        {(1-\gamma)\varepsilon_\gamma}
    \end{equation}
    and $\varepsilon_{\mathrm{opt}}\le
    c(1-\gamma)\varepsilon_\gamma$, then, with probability at least
    $1-O(\delta)$,
    \[
        \|V_{\gamma}^{\star,\sigma}
        -V_{\gamma}^{\widehat\pi,\sigma}\|_\infty
        \le
        \varepsilon_\gamma.
    \]
\end{corollary}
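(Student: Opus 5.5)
The plan is to derive the corollary directly from Theorem~\ref{thm:TV-upper-bound-discount}: check that condition \eqref{eq:anchored-dmdp-accuracy-sample-condition} implies the theorem's sample condition \eqref{eq:anchored-dmdp-sample-condition}, and that the theorem's error bound is then at most $\varepsilon_\gamma$. Write
\[
    K\coloneqq H_{\mathrm{anc}}+R_0\beta_\star+\sigma R_0^2+(1-\gamma)R_0^2 .
\]
The proof runs in three steps.

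\textbf{Step 1: the sample condition.} Because $\varepsilon_\gamma\le R_0$, we have $(1-\gamma)^2\varepsilon_\gamma^2\le(1-\gamma)^2R_0^2$. The three terms of \eqref{eq:anchored-dmdp-sample-condition} are then handled one at a time.
\begin{itemize}
\item The term $\iota/(1-\gamma)$ is dominated by $K\iota/((1-\gamma)^2\varepsilon_\gamma^2)$. This uses $K\ge(1-\gamma)R_0^2$, which gives $K/((1-\gamma)^2\varepsilon_\gamma^2)\ge R_0^2/((1-\gamma)\varepsilon_\gamma^2)\ge 1/(1-\gamma)$. Alternatively, it is dominated by the second term $R_0\iota/((1-\gamma)\varepsilon_\gamma)\ge\iota/(1-\gamma)$.
\item The term $\sigma\iota/(1-\gamma)^2$ is dominated because $\sigma R_0^2/\varepsilon_\gamma^2\ge\sigma$.
\item The term $(H_{\mathrm{anc}}+R_0\beta_\star)\iota/((1-\gamma)^2R_0^2)$ is dominated because its numerator is at most $K$ and its denominator is at least $(1-\gamma)^2\varepsilon_\gamma^2$.
\end{itemize}
Enlarging $C_{\mathrm{sam}}$ by a constant factor therefore yields \eqref{eq:anchored-dmdp-sample-condition}. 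The tolerance condition also transfers, since $\varepsilon_{\mathrm{opt}}\le c(1-\gamma)\varepsilon_\gamma\le c(1-\gamma)R_0$.

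\textbf{Step 2: the error bound.} Theorem~\ref{thm:TV-upper-bound-discount} gives, with probability $1-O(\delta)$,
\[
    \|V_{\gamma}^{\star,\sigma}-V_{\gamma}^{\widehat\pi,\sigma}\|_\infty
    \le C\sqrt{K\iota/(N(1-\gamma)^2)}+CR_0\iota/(N(1-\gamma))+C\varepsilon_{\mathrm{opt}}/(1-\gamma).
\]
Each term is bounded separately.
\begin{itemize}
\item From $N\ge C_{\mathrm{sam}}K\iota/((1-\gamma)^2\varepsilon_\gamma^2)$, the first term is at most $C\varepsilon_\gamma/\sqrt{C_{\mathrm{sam}}}$.
\item From $N\ge C_{\mathrm{sam}}R_0\iota/((1-\gamma)\varepsilon_\gamma)$, the second term is at most $C\varepsilon_\gamma/C_{\mathrm{sam}}$.
\item The tolerance condition bounds the third term by $Cc\,\varepsilon_\gamma$.
\end{itemize}

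\textbf{Step 3: constants.} Choose $C_{\mathrm{sam}}$ large and $c$ small relative to the theorem's universal $C$, so that each of the three terms is at most $\varepsilon_\gamma/3$. Their sum is then at most $\varepsilon_\gamma$.

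The only point needing care is the bookkeeping of constants. The theorem's $C_{\mathrm{sam}}$ and $c$ must be fixed first, and the corollary's constants taken as suitable multiples of them. No step poses a substantive obstacle.
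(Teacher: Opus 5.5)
Your proposal is correct and follows the same route as the paper's proof. You use $\varepsilon_\gamma\le R_0$ to show that the accuracy-form condition implies the theorem's sample and tolerance conditions, substitute the sample bound into each term of the theorem's error estimate, and fix the corollary's constants relative to the theorem's; you simply spell out the termwise comparisons (such as $K\ge\sigma R_0^2$ and $K\ge(1-\gamma)R_0^2$) that the paper leaves implicit.
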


\begin{proof}
    Since $\varepsilon_\gamma\le R_0$,
    \eqref{eq:anchored-dmdp-accuracy-sample-condition} implies
    the sample-size condition \eqref{eq:anchored-dmdp-sample-condition} and     \[
        \varepsilon_{\mathrm{opt}}
        \le
        c(1-\gamma)\varepsilon_\gamma
        \le
        c(1-\gamma)R_0,
    \]
    so the condition on $\varepsilon_{\mathrm{opt}}$ in
    Theorem~\ref{thm:TV-upper-bound-discount} is satisfied. 
    Substituting \eqref{eq:anchored-dmdp-accuracy-sample-condition} into the
    remaining terms in the theorem's error bound and then choosing
    $C_{\mathrm{sam}}$ sufficiently large and $c$ sufficiently small proves
    the claim.
\end{proof}

The following corollary yields a span-only guarantee when no informative
anchor is available.

\begin{corollary}[Span-only robust DMDP guarantee]
    \label{cor:span-only-robust-dmdp}
    Let $\varepsilon_\gamma>0$ be the target discounted accuracy. Assume that
    $\cM$ is a robust discounted MDP described in
    Section~\mainProblemSetupSection{} with discount factor $\gamma\ge1/2$, and let
    $\widehat\pi$ satisfy the solver guarantee
    \mainRobustDMDPSolverTolerance{} with
    solver tolerance $\varepsilon_{\mathrm{opt}}\ge0$. Choose
    $R_0\ge
    \max\{1,\|V_{\gamma}^{\star,\sigma}\|_{\mathrm{span}}\}$.
    If $\varepsilon_\gamma\le R_0$ and, for a sufficiently large universal
    constant $C_{\mathrm{sam}}$ and a sufficiently small universal constant
    $c$,
    \begin{equation*}
        N
        \ge
        C_{\mathrm{sam}}\frac{
        \left(
        R_0
        +\sigma R_0^2
        +(1-\gamma)R_0^2
        \right)\cdot\iota
        }{(1-\gamma)^2\varepsilon_\gamma^2}
        +C_{\mathrm{sam}}\frac{R_0\cdot\iota}{(1-\gamma)\varepsilon_\gamma},
    \end{equation*}
    and $\varepsilon_{\mathrm{opt}}\le
    c(1-\gamma)\varepsilon_\gamma$,
    then, with probability at least $1-O(\delta)$,
    \[
        \|V_{\gamma}^{\star,\sigma}
        -V_{\gamma}^{\widehat\pi,\sigma}\|_{\infty}
        \le
        \varepsilon_\gamma.
    \]
\end{corollary}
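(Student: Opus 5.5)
This corollary is a specialization of Theorem~\ref{thm:TV-upper-bound-discount} to the trivial anchor, followed by the same accuracy conversion as in Corollary~\ref{cor:anchored-robust-dmdp-accuracy}.

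\emph{Step 1: the anchor.} Take $(\bar\rho,\bar h)=(1,0)$. Since $r\in[0,1]$ and $P^0_{s,a}\bar h=0$, the supersolution condition \mainNominalAnchorSupersolution{} reduces to $1\ge\max_a r(s,a)$, which holds. This anchor is deterministic, so it is independent of the samples, as the theorem requires. Its span scale is $H_{\mathrm{anc}}=\max\{1,0\}=1$. The given $R_0$ satisfies $R_0\ge\max\{1,\|V_\gamma^{\star,\sigma}\|_{\mathrm{span}}\}$, so \eqref{eq:reference-radius} holds with $H_{\mathrm{anc}}=1$.

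\emph{Step 2: the defect.} I need to bound $R_0\beta_\star$. By definition,
\[
\beta_\star=\bigl[1-(1-\gamma)\min_sV_\gamma^{\star,\sigma}(s)\bigr]_+\le 1,
\]
because $V_\gamma^{\star,\sigma}\ge0$. Hence $H_{\mathrm{anc}}+R_0\beta_\star\le1+R_0\le 2R_0$. This step is where the loss relative to an informative anchor appears: the linear term becomes $R_0$ instead of $H_{\mathrm{anc}}$. It costs nothing beyond a constant.

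\emph{Step 3: the accuracy conversion.} Substitute $H_{\mathrm{anc}}+R_0\beta_\star\le 2R_0$ into \eqref{eq:anchored-dmdp-accuracy-sample-condition}. The required sample size is then at most $2C_{\mathrm{sam}}$ times the bound assumed in the present corollary. So, after enlarging $C_{\mathrm{sam}}$ by a factor of two, the hypotheses of Corollary~\ref{cor:anchored-robust-dmdp-accuracy} hold:
\begin{itemize}
\item $\varepsilon_\gamma\le R_0$, which is assumed;
\item the solver tolerance satisfies $\varepsilon_{\mathrm{opt}}\le c(1-\gamma)\varepsilon_\gamma$;
\item the sample-size condition holds, by the substitution above.
\end{itemize}
That corollary then gives $\|V_\gamma^{\star,\sigma}-V_\gamma^{\widehat\pi,\sigma}\|_\infty\le\varepsilon_\gamma$ with probability $1-O(\delta)$.

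I do not expect a genuine obstacle. The only point needing care is Step 2: confirming that $\beta_\star\le1$ uses nonnegativity of rewards, and that this bound is what lets the anchor-free term collapse to $R_0$.
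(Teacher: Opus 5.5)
Your proposal is correct and matches the paper's proof essentially step for step. The paper likewise applies Corollary~\ref{cor:anchored-robust-dmdp-accuracy} with the trivial anchor $(\bar\rho,\bar h)=(1,0)$, uses $H_{\mathrm{anc}}=1$ and $\beta_\star\le1$ (from nonnegative rewards) to get $H_{\mathrm{anc}}+R_0\beta_\star\le 2R_0$, and absorbs the factor $2$ into $C_{\mathrm{sam}}$.
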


\begin{proof}
    Apply Corollary~\ref{cor:anchored-robust-dmdp-accuracy} with the trivial anchor
    $\bar\rho=1$ and $\bar h=0$. Since $r(s,a)\in[0,1]$, this pair satisfies the
    nominal anchor supersolution
    condition~\mainNominalAnchorSupersolution{} and has
    $H_{\mathrm{anc}}=1$.
    Moreover, nonnegativity of the rewards gives
    \[
        \beta_\star
        =
        \left[
        1-(1-\gamma)\min_sV_{\gamma}^{\star,\sigma}(s)
        \right]_+
        \le
        1.
    \]
    Hence
    \[
        H_{\mathrm{anc}}
        +R_0\beta_\star
        +\sigma R_0^2
        +(1-\gamma)R_0^2
        \le
        2R_0+\sigma R_0^2+(1-\gamma)R_0^2,
    \]
    where $R_0\ge1$. Absorbing the factor $2$ into $C_{\mathrm{sam}}$ in
    Corollary~\ref{cor:anchored-robust-dmdp-accuracy} proves the claim.
\end{proof}

\subsection{Proof of Theorem~\mainSpanInformedUpperBoundTheorem{}}\label{subsec:Proof-Upper-average}

The proof follows the two branches of
Algorithm~\mainSpanInformedAlgorithm{}. We first state a guarantee for the nominal
branch, whose proof is deferred to
Appendix~\ref{subsec:proof-nominal-branch-guarantee}. We then handle the robust
branch by combining the discounted guarantees in
Appendix~\ref{subsec:discounted-guarantees} with the discounted-to-average
reduction in Appendix~\ref{subsec:discounted-to-average-reduction}.

\paragraph*{Step 1: Nominal branch.}

Suppose that $H_0<H_\sigma$ and $7\sigma H_0\le\varepsilon$, so
Algorithm~\mainSpanInformedAlgorithm{} selects the nominal branch. In this case, we give the following guarantee for the nominal branch, whose proof is deferred to
Appendix~\ref{subsec:proof-nominal-branch-guarantee}.

\begin{lemma}[Nominal-branch guarantee]
\label{lem:nominal-branch-guarantee}
There exist a sufficiently large universal constant $C>0$ and a sufficiently
small universal constant $c_{\mathrm{opt}}>0$ such that the following
holds. Suppose Assumption~\mainUnichainAssumption{} holds and $H_0$ is known. Let
$\varepsilon\in(0,1]$ and $\delta\in(0,1/2]$, and assume that
$7\sigma H_0\le\varepsilon$ and
$\varepsilon_{\mathrm{opt}}\le c_{\mathrm{opt}}\varepsilon$.
If the total sample size obeys
\[
    NSA
    \ge
    C SA
    \frac{H_0}{\varepsilon^2}
    \log\!\left(\frac{SAH_0N}{\varepsilon\delta}\right),
\]
then, with probability at least $1-O(\delta)$, Algorithm~\mainSpanInformedAlgorithm{} returns a policy
$\widehat\pi$ that satisfies
\[
    \rho^{\star,\sigma}
    -
    \rho^{\widehat\pi,\sigma}
    \le
    \varepsilon.
\]
\end{lemma}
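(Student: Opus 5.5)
The plan is to treat the nominal branch as a degenerate instance of the anchored discounted theorem and then transfer the nominal guarantee to a robust one through a fixed-policy certificate. Throughout, $\gamma=1-\varepsilon/(20H_0)$ and $\cU(\widehat P^0)=\{\widehat P^0\}$, so the discounted solver is nominal. Write $V_\gamma^{\pi}$ and $V_\gamma^{\star}$ for the nominal discounted values under $P^0$.

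\textbf{Step 1: nominal discounted accuracy.} I apply Corollary~\ref{cor:anchored-robust-dmdp-accuracy} to the nominal MDP, i.e.\ with all radii set to $0$. The anchor is the nominal optimal pair $(\bar\rho,\bar h)=(\rho^\star,h_{P^0}^\star)$, which satisfies \mainNominalAnchorSupersolution{} with equality and has $H_{\mathrm{anc}}\le H_0$.
\begin{itemize}
\item Lemma~\ref{lem:robust-value-function-span-bound}, applied with $\sigma=0$, gives $\|V_\gamma^\star\|_{\mathrm{span}}\le 2H_0$, so I take $R_0=2H_0$.
\item For the defect, the lower comparison in \eqref{eq:bias-discounted-comparison} (again with $\sigma=0$, and $h_{P^0}^\star$ shifted to have minimum $0$) gives $(1-\gamma)\min_s V_\gamma^\star(s)\ge\rho^\star-(1-\gamma)H_0$. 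Hence $\beta_\star\le(1-\gamma)H_0$ and $R_0\beta_\star\lesssim(1-\gamma)H_0^2=\varepsilon H_0/20$.
\item Therefore $H_{\mathrm{anc}}+R_0\beta_\star+(1-\gamma)R_0^2\lesssim H_0$, using $\varepsilon\le1$.
\end{itemize}
I choose the target $\varepsilon_\gamma=c_1H_0\le R_0$ with a small constant $c_1$. The sample condition \eqref{eq:anchored-dmdp-accuracy-sample-condition} then reads $N\gtrsim \iota H_0/\big((1-\gamma)^2H_0^2\big)+\iota/(1-\gamma)\asymp \iota H_0/\varepsilon^2$. Here $\iota\lesssim\log(SAH_0N/(\varepsilon\delta))$, so this matches the stated hypothesis. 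The solver requirement $\varepsilon_{\mathrm{opt}}\le c(1-\gamma)c_1H_0=c\,c_1\varepsilon/20$ follows from $\varepsilon_{\mathrm{opt}}\le c_{\mathrm{opt}}\varepsilon$. With probability $1-O(\delta)$ this yields
\[
\|V_\gamma^\star-V_\gamma^{\widehat\pi}\|_\infty\le c_1H_0 .
\]

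\textbf{Step 2: robust certificate from the nominal value.} Let $V=V_\gamma^{\widehat\pi}$, so that $V=r^{\widehat\pi}+\gamma P^0_{\widehat\pi}V$. For any $P\in\cP$, the TV bound gives $P_{\widehat\pi}V\ge P^0_{\widehat\pi}V-\sigma\|V\|_{\mathrm{span}}\mathbf 1$. Combining this with $P^0_{\widehat\pi}V\ge\min_sV(s)\,\mathbf 1$ yields
\[
r^{\widehat\pi}+\inf_{P\in\cP}P_{\widehat\pi}V\ \ge\ V+\bigl[(1-\gamma)\min_sV(s)-\sigma\|V\|_{\mathrm{span}}\bigr]\mathbf 1 .
\]
The converse direction of Lemma~\ref{lem:fixed-policy-robust-verification} then gives
\[
\rho^{\widehat\pi,\sigma}\ \ge\ (1-\gamma)\min_sV(s)-\sigma\|V\|_{\mathrm{span}} .
\]

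\textbf{Step 3: bounding the two terms.} For the span term, Steps 1 and the value-span lemma give $\|V\|_{\mathrm{span}}\le\|V_\gamma^\star\|_{\mathrm{span}}+2c_1H_0\le(2+2c_1)H_0$. For the value term, Step 1 together with the defect bound from Step 1 gives
\[
(1-\gamma)\min_sV(s)\ \ge\ (1-\gamma)\min_sV_\gamma^\star(s)-(1-\gamma)c_1H_0\ \ge\ \rho^\star-(1+c_1)\varepsilon/20 .
\]
Using $\rho^{\star,\sigma}\le\rho^\star$ (Proposition~\ref{prop:perturbation-bound-average-reward}) and $\sigma H_0\le\varepsilon/7$, I obtain
\[
\rho^{\star,\sigma}-\rho^{\widehat\pi,\sigma}\ \le\ \frac{(1+c_1)\varepsilon}{20}+\frac{(2+2c_1)\varepsilon}{7}\ \le\ \varepsilon
\]
for small $c_1$.

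\textbf{Expected obstacle.} The main delicate point is Step 1. Getting the linear $H_0$ rate, rather than $H_0^2$, relies on the anchor defect $\beta_\star$ being of order $(1-\gamma)H_0$, which I obtain from the bias-to-discounted comparison. If this bound is lost, the $R_0\beta_\star$ term would dominate the variance proxy and the rate would degrade.
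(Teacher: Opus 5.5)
Your proposal is correct and follows the paper's proof essentially step for step. You use the same application of Corollary~\ref{cor:anchored-robust-dmdp-accuracy} with $\sigma=0$, the nominal optimal anchor and $R_0=2H_0$. You then use the same fixed-policy robust certificate built from $V_\gamma^{\widehat\pi,0}$ with a $\sigma\|V_\gamma^{\widehat\pi,0}\|_{\mathrm{span}}$ TV loss. Your direct lower bound on $(1-\gamma)\min_s V_\gamma^{\widehat\pi,0}(s)$ only shortcuts the paper's detour through $\rho^{\star,0}-\rho^{\widehat\pi,0}$.

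One correction to your closing remark: the linear rate does not hinge on the defect bound in this branch. Since $V_\gamma^{\star,0}\ge0$, you already have $\beta_\star\le\rho^\star\le1$ and hence $R_0\beta_\star\le 2H_0$. In fact the trivial anchor would suffice here, because $R_0=2H_0$.
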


Since $\min\{H_0,H_\sigma\}=H_0$ in this case, condition~(a) in
Theorem~\mainSpanInformedUpperBoundTheorem{} implies the requirement of
Lemma~\ref{lem:nominal-branch-guarantee}.

\paragraph*{Step 2: Robust branch when $H_0<H_\sigma$ and $7\sigma H_0>\varepsilon$.}
We use Corollary~\ref{cor:anchored-robust-dmdp-accuracy} when
$H_0<H_\sigma$ and $7\sigma H_0>\varepsilon$, and Corollary~\ref{cor:span-only-robust-dmdp} when
$H_\sigma\le H_0$. We first verify the parameter choices shared by
Steps~2 and~3.
Recall that in the robust branch, Algorithm~\mainSpanInformedAlgorithm{} sets
\[
    \gamma=1-\frac{\varepsilon}{3H_{\sigma}}.
\]
Since $\varepsilon\le1$ and $H_\sigma\ge1$, this choice satisfies
$\gamma\ge2/3$.
For Steps~2 and~3, we use the target discounted accuracy
$\varepsilon_\gamma=H_{\sigma}$. The
solver tolerance satisfies
\[
    \varepsilon_{\mathrm{opt}}
    \le
    c_{\mathrm{opt}}\varepsilon
    =
    3c_{\mathrm{opt}}(1-\gamma)H_{\sigma}.
\]
By the choice of $c_{\mathrm{opt}}$, this satisfies the
$\varepsilon_{\mathrm{opt}}$ condition in both corollaries.
Moreover, since $1/(1-\gamma)=3H_\sigma/\varepsilon$ and
$\delta\le1/2$, the definition of $\iota$ gives
\[
    \iota
    \le
    C\log\left(
    \frac{H_\sigma SAN}{\varepsilon\delta}
    \right).
\]

We now assume $H_0<H_\sigma$ and $7\sigma H_0>\varepsilon$. In this case, we apply Corollary~\ref{cor:anchored-robust-dmdp-accuracy}.
Set the anchor pair to be the
nominal optimal average-reward pair
\[
    \bar\rho=\rho^{\star},
    \qquad
    \bar h=h_{P^0}^{\star}.
\]
By the nominal average-reward Bellman optimality equation,
\[
    \rho^{\star}+h_{P^0}^{\star}(s)
    =
    \max_{a\in\cA}
    \left\{
    r(s,a)+P^0_{s,a}h_{P^0}^{\star}
    \right\},
    \qquad s\in\cS.
\]
Thus the nominal anchor supersolution
condition~\mainNominalAnchorSupersolution{} holds.
Since $H_0\ge1$ by the
standing convention in Section~\mainProblemSetupSection{}, this anchor has
$H_{\mathrm{anc}}=H_0$.

We take the reference radius
\[
    R_0=2H_{\sigma}.
\]
By Lemma~\ref{lem:robust-value-function-span-bound},
\[
    \|V_{\gamma}^{\star,\sigma}\|_{\mathrm{span}}
    \le
    2H_{\sigma}.
\]
Since $H_\sigma>H_0\ge1$, this gives
\[
    R_0
    \ge
    \max\left\{
    1,
    H_{\mathrm{anc}},
    \|V_{\gamma}^{\star,\sigma}\|_{\mathrm{span}}
    \right\}.
\]
We next bound the anchor defect $\beta_\star$. Since
$\rho^\star-\rho^{\star,\sigma}\ge0$, the inequality
$[x+y]_+\le x+[y]_+$ for $x\ge0$ gives
\begin{align*}
    \beta_\star
     & =
    \left[
    \rho^\star-(1-\gamma)\min_s V_{\gamma}^{\star,\sigma}(s)
    \right]_+                                                                 \\
     & \le
    \rho^\star-\rho^{\star,\sigma}
    +
    \left[
    \rho^{\star,\sigma}
    -(1-\gamma)\min_s V_{\gamma}^{\star,\sigma}(s)
    \right]_+                                                                  \\
     & \le
    C\left[
    \sigma H_0
    +(1-\gamma)H_{\sigma}
    \right].
\end{align*}
Here the last line uses
Proposition~\mainPerturbationBoundProposition{} and the robust
discounted-to-average optimal comparison
\eqref{eq:robust-discounted-average-sandwich-optimal} in
Lemma~\ref{lem:robust-discounted-average-sandwich}.

Using $H_{\mathrm{anc}}=H_0$, $R_0=2H_\sigma$, the preceding bound on
$\beta_\star$, and $1-\gamma=\varepsilon/(3H_\sigma)$, we obtain
\begin{equation*}
    H_{\mathrm{anc}}
    +R_0\beta_\star
    +\sigma R_0^2
    +(1-\gamma)R_0^2
    \le
    C\left[
    H_0+\sigma H_\sigma^2+\varepsilon H_\sigma
    \right].
\end{equation*}

We now check the sample-size condition in
Corollary~\ref{cor:anchored-robust-dmdp-accuracy}. With
$\varepsilon_\gamma=H_\sigma$ and
$R_0=2H_\sigma$, it is enough to have
\[
    N
    \ge
    C\left[
    \frac{H_0+\sigma H_\sigma^2}{\varepsilon^2}
    +\frac{H_\sigma}{\varepsilon}
    \right]\cdot\iota.
\]
Since
$H_0<H_\sigma$ and $7\sigma H_0>\varepsilon$,
\[
    \frac{H_\sigma}{\varepsilon}
    \le
    7\frac{\sigma H_0H_\sigma}{\varepsilon^2}
    \le
    7\frac{\sigma H_\sigma^2}{\varepsilon^2}.
\]
Together with the preceding bound on $\iota$, condition~(b) in
Theorem~\mainSpanInformedUpperBoundTheorem{} therefore implies
the requirement of Corollary~\ref{cor:anchored-robust-dmdp-accuracy}.

\paragraph*{Step 3: Robust branch when $H_\sigma\le H_0$.}
In this case, we apply
Corollary~\ref{cor:span-only-robust-dmdp} with
\[
    R_0=2H_\sigma.
\]
By Lemma~\ref{lem:robust-value-function-span-bound}, this choice satisfies the
reference-radius requirement. Substituting
$\varepsilon_\gamma=H_\sigma$, $\min\{H_0,H_\sigma\}=H_\sigma$, and
$1-\gamma=\varepsilon/(3H_\sigma)$ into the sample-size condition of
Corollary~\ref{cor:span-only-robust-dmdp}, it is enough that
\[
    N
    \ge
    C\left[
    \frac{
    \left(
    H_\sigma+\sigma H_\sigma^2+\varepsilon H_\sigma
    \right)\cdot\iota
    }{\varepsilon^2}
    +\frac{H_\sigma\cdot\iota}{\varepsilon}
    \right].
\]
Since $\varepsilon\le1$, we can further simplify it to
\[
    N
    \ge
    C\left[
    \frac{H_\sigma+\sigma H_\sigma^2}{\varepsilon^2}
    \right]\cdot\iota.
\]
If $7\sigma H_0>\varepsilon$, condition~(b) in
Theorem~\mainSpanInformedUpperBoundTheorem{} implies this requirement. If
$7\sigma H_0\le\varepsilon$, then
\[
    \sigma H_\sigma^2
    \le
    \sigma H_0H_\sigma
    \le
    \frac{\varepsilon H_\sigma}{7}
    \le
    \frac{H_\sigma}{7}.
\]
Condition~(a) in Theorem~\mainSpanInformedUpperBoundTheorem{} therefore implies the
same requirement.

Thus, under either span ordering, the sample-size condition in
Theorem~\mainSpanInformedUpperBoundTheorem{} implies the requirement of the relevant
discounted corollary. We conclude that with probability at least
$1-O(\delta)$,
\[
    V_{\gamma}^{\star,\sigma}
    -
    V_{\gamma}^{\widehat\pi,\sigma}
    \le
    H_{\sigma}\cdot \bm{1}_S.
\]

On the same probability event, invoke
Corollary~\ref{cor:current-reduction-parameters}. Since the discounted
suboptimality level is $H_{\sigma}$, the corollary gives
\[
    \rho^{\star,\sigma}-\rho^{\widehat\pi,\sigma}
    \le
    \varepsilon.
\]

\subsection{Proof of Theorem~\ref{thm:TV-upper-bound-discount}}
\label{subsec:Proof-Upper-discounted}

We prove Theorem~\ref{thm:TV-upper-bound-discount} in this section. We first
collect the notation and introduce the localization idea used throughout the
proof.

\subsubsection{Preliminaries and notation}
\label{subsec:discounted-preliminaries}

We begin with the standard value, kernel, and variance notation. The
proof-specific localization radius and concentration budgets are introduced
afterward.

\paragraph*{Value functions and transition kernels.}
While some of the following objects are introduced elsewhere, we restate them
here for convenience. For a policy $\pi$, $V_{\gamma}^{\pi,\sigma}$
denotes its robust discounted value under the true uncertainty set
$\cU(P^0)$, and $\widehat V_{\gamma}^{\pi,\sigma}$ denotes the corresponding
robust discounted value under the empirical uncertainty set
$\cU(\widehat P^0)$. We write
\[
    V_{\gamma}^{\star,\sigma}
    =
    \sup_{\pi}V_{\gamma}^{\pi,\sigma},
    \qquad
    \widehat V_{\gamma}^{\star,\sigma}
    =
    \sup_{\pi}\widehat V_{\gamma}^{\pi,\sigma}.
\]
Let $\pi_\gamma^\star$ be a deterministic optimal policy for the true robust
discounted MDP\@. For the fixed-policy and learned-policy comparisons used below,
define
\[
    U\coloneqq V_\gamma^{\star,\sigma},
    \qquad
    \widehat U\coloneqq\widehat V_\gamma^{\pi_\gamma^\star,\sigma},
    \qquad
    \widehat V^\star\coloneqq\widehat V_\gamma^{\star,\sigma},
    \qquad
    W\coloneqq V_\gamma^{\widehat\pi,\sigma},
    \qquad
    \widehat W\coloneqq\widehat V_\gamma^{\widehat\pi,\sigma}.
\]
Thus, $U$ and $\widehat U$ are compared under the fixed true-optimal policy,
whereas $W$ and $\widehat W$ are compared under the learned policy. The value
$\widehat V^\star$ is the optimal value of the empirical robust MDP and links
these two comparisons through the $\varepsilon_{\mathrm{opt}}$ condition in \mainRobustDMDPSolverTolerance{}.
Recall that the empirical nominal kernel is denoted by $\widehat P^0$ and is
constructed from $N$ samples per state-action pair.
For a policy $\pi$, define the policy-induced reward vector
\[
    r^\pi(s)
    \coloneqq
    \sum_{a\in\cA}\pi(a\mid s)r(s,a).
\]
For vectors $x,y$, $x\circ y$ denotes entrywise multiplication, and $f(x)$ is applied entrywise for any scalar function $f$, e.g., $|x|$.

For a transition matrix $P$, $\Var_P(V)$ denotes the variance
vector
\[
    \Var_P(V)(s)
    \coloneqq
    P(V\circ V)(s)-(PV)\circ(PV)(s).
\]

For a value vector $V$, let $P_{s,a}^{V}$ and $\widehat P_{s,a}^{V}$ denote
worst-case transition distributions selected from the true and empirical
uncertainty sets, respectively:
\[
    P_{s,a}^{V}
    \in
    \arg\min_{Q\in\cU_{s,a}(P^0)}QV,
    \qquad
    \widehat P_{s,a}^{V}
    \in
    \arg\min_{Q\in\cU_{s,a}(\widehat P^0)}QV.
\]
For a policy $\pi$, we denote the induced robust transition matrices by
\[
    P^{\pi,V}(s,s')
    \coloneqq
    \sum_{a\in\cA}\pi(a\mid s)P_{s,a}^{V}(s'),
    \qquad
    \widehat P^{\pi,V}(s,s')
    \coloneqq
    \sum_{a\in\cA}\pi(a\mid s)\widehat P_{s,a}^{V}(s').
\]
Similarly, the nominal policy-induced transition matrices are
\[
    P^{0,\pi}(s,s')
    \coloneqq
    \sum_{a\in\cA}\pi(a\mid s)P^0_{s,a}(s'),
    \qquad
    \widehat P^{0,\pi}(s,s')
    \coloneqq
    \sum_{a\in\cA}\pi(a\mid s)\widehat P^0_{s,a}(s').
\]

\paragraph*{Localization radius and radius grid.}

The theorem aims for a bound in terms of the reference radius $R_0$, which
controls the true optimal span. The spans of the empirical value functions,
however, are random. We therefore treat $R$ as a candidate upper bound on all
relevant spans. At any fixed radius $R$, the
localized analysis assumes that all relevant spans are at most $R$ and
controls the corresponding estimation errors at a scale depending on $R$.

We do not know in advance which candidate radius is large enough. We therefore
consider the dyadic grid
\begin{equation}
    \label{eq:radius-grid}
    R_j\coloneqq 2^jR_0,
    \qquad
    j=0,\ldots,J,
    \qquad
    J\coloneqq\left\lceil\log_2\frac{8}{1-\gamma}\right\rceil,
    \qquad
    \mathcal R\coloneqq\{R_j:0\le j\le J\},
\end{equation}
where $R_0$ is the reference radius defined in
\eqref{eq:reference-radius}. The grid starts at $R_0$ and extends beyond
the worst-case discounted value-function span bound of order
$(1-\gamma)^{-1}$. At each grid point $R_j$, this localized
analysis provides error bounds whenever the relevant spans are at most
$R_j$. These error bounds then control the relevant spans in
turn. We take the first grid point
at which this cycle of bounds closes and yields valid error control.
Because neighboring grid points differ only by a factor of two, the selected
radius remains close to the smallest valid $R$.

\paragraph*{Localized concentration budgets.}

The quantities entering the localized bounds depend on this radius.
We lay out the definitions here for convenience in later presentations.

In addition to the true anchor defect $\beta_\star$ from
\eqref{eq:true-anchor-defect}, define the corresponding defect for the
empirical policy by
\begin{equation}
    \label{eq:empirical-anchor-defect}
    \beta_{\mathrm{emp}}
    \coloneqq
    \left[
    \bar\rho-(1-\gamma)\min_s \widehat W(s)
    \right]_+.
\end{equation}
The variance bounds repeatedly involve four contributions: the anchor scale
$H_{\mathrm{anc}}$, the anchor defect multiplied by the span radius, the
robustness contribution $\sigma R^2$, and the discounted contribution
$(1-\gamma)R^2$. For a generic defect $\beta$, collect them in
\begin{equation}
    \label{eq:generic-localized-budget}
    B(R,\beta)
    \coloneqq
    H_{\mathrm{anc}}
    +R\beta
    +\sigma R^2
    +(1-\gamma)R^2.
\end{equation}
The budgets for the true optimal value, the empirical policy, and their
combination are the specializations
\begin{equation}
    \label{eq:localized-budget-specializations}
    B_\star(R)\coloneqq B(R,\beta_\star),
    \qquad
    B_{\mathrm{emp}}(R)\coloneqq B(R,\beta_{\mathrm{emp}}),
    \qquad
    B_{\mathrm{com}}(R)
    \coloneqq
    B(R,\beta_\star+\beta_{\mathrm{emp}}).
\end{equation}

\paragraph*{High-probability event.}

For any deterministic radius $R\ge\max\{1,H_{\mathrm{anc}}\}$, let
$\mathcal E(R)$ be the intersection of the events in
Lemmas~\ref{lem:one-step-robust-concentration},
\ref{lem:nominal-fixed-vector-concentration},
\ref{lem:empirical-std-perturbation}, and~\ref{lem:shi-original-lemma-11},
where Lemma~\ref{lem:empirical-std-perturbation} is applied to both $U$ and
$\bar h$, and Lemma~\ref{lem:shi-original-lemma-11} is instantiated at radius
$R$. 
For the grid $\mathcal R$ in \eqref{eq:radius-grid}, we set the confidence parameter of each event to be $\delta/(J+1)$.

Since only a constant number of
concentration statements is used at each radius,
\[
    \mathbb{P}\!\left\{\mathcal E(R)^c\right\}
    \le
    \frac{C\delta}{J+1},
    \qquad R\in\mathcal R.
\]
Define the global event
\begin{equation}
    \label{eq:global-localized-event}
    \cE
    \coloneqq
    \bigcap_{R\in\mathcal R}\mathcal E(R).
\end{equation}
A union bound over the $J+1$ grid points gives
$\mathbb{P}\{\cE\}\ge1-O(\delta)$. Moreover, the definition of $J$ in
\eqref{eq:radius-grid} gives
\[
    \log\!\left(
        \frac{54SAN^2(J+1)}{(1-\gamma)\delta}
    \right)
    \le
    C\cdot\iota,
\]
so the additional factor $J+1$ caused by this confidence allocation is
absorbed into $\iota$ after adjusting the universal constants.

\subsubsection{Proof of the main error bound}

Recall that
\[
    U\coloneqq V_\gamma^{\star,\sigma},
    \qquad
    \widehat U\coloneqq\widehat V_\gamma^{\pi_\gamma^\star,\sigma},
    \qquad
    \widehat V^\star\coloneqq\widehat V_\gamma^{\star,\sigma},
    \qquad
    W\coloneqq V_\gamma^{\widehat\pi,\sigma},
    \qquad
    \widehat W\coloneqq\widehat V_\gamma^{\widehat\pi,\sigma}.
\]

We start the proof by decomposing the gap between $W$ and $U$.
Observe that for any $s\in\cS$,
\begin{align}
    U(s)-W(s)
    & =U(s)-\widehat U(s)
    +\widehat U(s)-\widehat V^\star(s)
    +\widehat V^\star(s)-\widehat W(s)
    +\widehat W(s)-W(s)\nonumber \\
    & \le
    \|\widehat U-U\|_{\infty}
    +\varepsilon_{\mathrm{opt}}
    +\|\widehat W-W\|_{\infty}.
    \label{eq:V-pi-approx-V-star-decomp}
\end{align}
Here, the last line uses the solver guarantee
\mainRobustDMDPSolverTolerance{},
and the fact that $\widehat U-\widehat V^\star\le0$
since $\widehat V^\star$ is the optimal value function
under the estimated kernel.

Define the actual comparison error
\begin{equation}
    \Delta
    \coloneqq
    \|\widehat W-W\|_{\infty}
    +
    \|\widehat U-U\|_{\infty}.
    \label{eq:Delta-defn}
\end{equation}
The decomposition \eqref{eq:V-pi-approx-V-star-decomp} and \eqref{eq:Delta-defn}
allow us to characterize the span of the relevant value functions as in the
following lemma. The proof is deferred to
Appendix~\ref{subsec:Proof-span-value-approx}.

\begin{lemma}\label{lem:span-value-approx}
    Under \eqref{eq:reference-radius} and the solver guarantee
    \mainRobustDMDPSolverTolerance{}, let $\Delta$ be defined by
    \eqref{eq:Delta-defn}. Then
    \[
        \|W\|_{\mathrm{span}},
        \quad
        \|\widehat W\|_{\mathrm{span}},
        \quad
        \|\widehat U\|_{\mathrm{span}},
        \quad
        \|\widehat V^\star\|_{\mathrm{span}}
        \le
        R_0+2\Delta+2\varepsilon_{\mathrm{opt}}.
    \]
    Moreover,
    \begin{equation}
        \label{eq:span-value-anchor-defect-bound}
        \beta_{\mathrm{emp}}
        \le
        \beta_\star
        +(1-\gamma)\Delta
        +(1-\gamma)\varepsilon_{\mathrm{opt}}.
    \end{equation}
    Consequently, for every $R\ge R_0$,
    \begin{equation}
        \label{eq:span-value-budget-comparison}
        B_{\mathrm{emp}}(R)-B_\star(R)
        \le
        R(1-\gamma)
        (\Delta+\varepsilon_{\mathrm{opt}}).
    \end{equation}
\end{lemma}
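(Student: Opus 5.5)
The plan is to express every relevant value function as a perturbation of $U=V_\gamma^{\star,\sigma}$, whose span is at most $R_0$ by \eqref{eq:reference-radius}. A vector within sup-distance $\eta$ of $U$ has span at most $R_0+2\eta$, since its maximum can rise by $\eta$ and its minimum can fall by $\eta$. It therefore suffices to show that each of $W,\widehat W,\widehat U,\widehat V^\star$ lies within $\Delta+\varepsilon_{\mathrm{opt}}$ of $U$ in the appropriate one-sided or two-sided sense.

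\textbf{Step 1: closeness to $U$.} For $\widehat U$ we have $\|\widehat U-U\|_\infty\le\Delta$ directly from \eqref{eq:Delta-defn}.

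For $\widehat V^\star$, we use $\widehat U\le\widehat V^\star$, which holds by optimality in the empirical robust MDP. This gives $\widehat V^\star\ge U-\Delta$. For the reverse direction, the solver guarantee \mainRobustDMDPSolverTolerance{} gives $\widehat V^\star\le\widehat W+\varepsilon_{\mathrm{opt}}$. Then $\widehat W\le W+\|\widehat W-W\|_\infty$, and $W\le U$ by optimality of $U$ in the true robust MDP. Combining these, $\widehat V^\star\le U+\Delta+\varepsilon_{\mathrm{opt}}$.

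For $\widehat W$, the upper bound is $\widehat W\le\widehat V^\star\le U+\Delta+\varepsilon_{\mathrm{opt}}$. For the lower bound, $\widehat W\ge\widehat V^\star-\varepsilon_{\mathrm{opt}}\ge\widehat U-\varepsilon_{\mathrm{opt}}\ge U-\|\widehat U-U\|_\infty-\varepsilon_{\mathrm{opt}}$.

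For $W$, we have $W\le U$, and $W\ge\widehat W-\|\widehat W-W\|_\infty\ge U-\Delta-\varepsilon_{\mathrm{opt}}$, using the lower bound for $\widehat W$ just obtained.

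In every case the vector lies in the band $[U-\Delta-\varepsilon_{\mathrm{opt}},\,U+\Delta+\varepsilon_{\mathrm{opt}}]$. Its span is therefore at most $R_0+2(\Delta+\varepsilon_{\mathrm{opt}})$.

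\textbf{Step 2: anchor defect.} From Step 1, $\min_s\widehat W(s)\ge\min_s U(s)-\Delta-\varepsilon_{\mathrm{opt}}$. Hence
\[
\bar\rho-(1-\gamma)\min_s\widehat W\le\bar\rho-(1-\gamma)\min_s U+(1-\gamma)(\Delta+\varepsilon_{\mathrm{opt}}).
\]
The map $x\mapsto[x]_+$ is monotone, and $[x+y]_+\le[x]_+ +y$ for $y\ge0$. Applying these gives \eqref{eq:span-value-anchor-defect-bound}.

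\textbf{Step 3: budget comparison.} By \eqref{eq:generic-localized-budget}, $B(R,\beta)$ depends on $\beta$ only through the linear term $R\beta$. Therefore $B_{\mathrm{emp}}(R)-B_\star(R)=R(\beta_{\mathrm{emp}}-\beta_\star)$, and Step 2 gives \eqref{eq:span-value-budget-comparison}. The restriction $R\ge R_0$ is not actually needed; only $R\ge0$ is used.

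There is no real obstacle here. The only point requiring care is the direction of each inequality in Step 1. In particular, the lower bound on $W$ must go through $\widehat W$ and $\widehat U$, so that only $\|\widehat U-U\|_\infty+\|\widehat W-W\|_\infty=\Delta$ appears, rather than $2\Delta$.
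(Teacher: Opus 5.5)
Your proposal is correct and follows essentially the same route as the paper. You bracket each of $W,\widehat W,\widehat U,\widehat V^\star$ within $\Delta+\varepsilon_{\mathrm{opt}}$ of $U$ using the optimality orderings and the solver guarantee, apply the span triangle inequality, and then transfer $\widehat W\ge\widehat U-\varepsilon_{\mathrm{opt}}$ to the anchor defect. The only cosmetic difference is that you bound $\widehat W$ from above through $\widehat V^\star$ rather than through $W\le U$, which is equally sufficient; your remark that the last step needs only $R\ge 0$ is also accurate.
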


We proceed to bound the two components of $\Delta$. We package recursive
bounds for these components in two lemmas. 
Recall from \eqref{eq:localized-budget-specializations} the localized
concentration budgets
\[
    B_\star(R)=B(R,\beta_\star),
    \qquad
    B_{\mathrm{emp}}(R)=B(R,\beta_{\mathrm{emp}}).
\]
Both lemmas are stated under the shared localization condition
\begin{equation}
    \begin{aligned}
    \max\big\{&
    \|U\|_{\mathrm{span}},
    \|\widehat U\|_{\mathrm{span}},
    \|W\|_{\mathrm{span}},
    \|\widehat W\|_{\mathrm{span}},
    \|\widehat V^\star\|_{\mathrm{span}}
    \big\}
    \le R.
    \end{aligned}
    \label{eq:discounted-localized-span-condition}
\end{equation}
Moreover, they require the following two sample-size conditions:
\begin{equation}
    \label{eq:recursive-comparison-sample-conditions}
    N
    \ge
    C\frac{\iota}{1-\gamma},
    \qquad
    N
    \ge
    C\frac{\sigma\cdot\iota}{(1-\gamma)^2}.
\end{equation}
These are the two radius-independent sample-size requirements in
\eqref{eq:anchored-dmdp-sample-condition}.

We now state the two lemmas. 
We note that the second part of Lemma~\ref{lem:V-hat-V-part-1-decomp} is included for later use in the span-agnostic analysis. The proof of
Lemma~\ref{lem:V-hat-V-part-1-decomp} is deferred to
Appendix~\ref{subsec:Proof-lemma-V-hat-V-part-1-decomp}, and the proof of
Lemma~\ref{lem:V-hat-V-pi-hat-bound} is deferred to
Appendix~\ref{subsec:Proof-lemma-V-hat-V-pi-hat-bound}.
\begin{lemma}\label{lem:V-hat-V-part-1-decomp}
    Let $R\ge\max\{1,H_{\mathrm{anc}}\}$ be a deterministic radius and suppose
    \eqref{eq:discounted-localized-span-condition} and
    \eqref{eq:recursive-comparison-sample-conditions} hold. On the
    high-probability event $\mathcal E(R)$, for a universal constant $C$,
    \[
        \|\widehat U-U\|_{\infty}
        \le
        C\sqrt{
            \frac{B_\star(R)\cdot\iota}
            {N(1-\gamma)^2}}
        +
        C\frac{R\cdot\iota}{N(1-\gamma)}
        +
        \frac{1}{50}\Delta.
    \]
    More generally, let $\pi$ be any deterministic policy fixed independently
    of the empirical transition samples. Under
    \eqref{eq:recursive-comparison-sample-conditions}, if
    \[
        R
        \ge
        \max\left\{
        1,
        H_{\mathrm{anc}},
        \|V_\gamma^{\pi,\sigma}\|_{\mathrm{span}},
        \|\widehat V_\gamma^{\pi,\sigma}\|_{\mathrm{span}}
        \right\},
    \]
    then the same argument gives, with probability at least $1-O(\delta)$,
    \begin{align}
        &\left\|
        \widehat V_\gamma^{\pi,\sigma}
        -V_\gamma^{\pi,\sigma}
        \right\|_\infty
        \nonumber\\
        &\quad\le
        C\sqrt{
        \frac{
        \left(
        H_{\mathrm{anc}}
        +R\left[
        \bar\rho-(1-\gamma)
        \min_sV_\gamma^{\pi,\sigma}(s)
        \right]_+
        +\sigma R^2
        +(1-\gamma)R^2
        \right)\cdot\iota}
        {N(1-\gamma)^2}}
        \nonumber\\
        &\qquad
        +C\frac{R\cdot\iota}{N(1-\gamma)}
        +\frac{1}{50}
        \left\|
        \widehat V_\gamma^{\pi,\sigma}
        -V_\gamma^{\pi,\sigma}
        \right\|_\infty.
        \label{eq:generic-fixed-policy-localized-comparison}
    \end{align}
\end{lemma}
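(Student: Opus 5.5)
The plan is the standard two-sided sandwich for a fixed policy, applied to $\pi=\pi_\gamma^\star$ (the second statement is the same argument with a generic fixed $\pi$). Write $U=r^\pi+\gamma P^{\pi,U}U$ and $\widehat U=r^\pi+\gamma\widehat P^{\pi,\widehat U}\widehat U$. Worst-case optimality of the selected rows gives
\[
    \widehat U-U\le\gamma\bigl(I-\gamma\widehat P^{\pi,\widehat U}\bigr)^{-1}\bigl(\widehat P^{\pi,U}U-P^{\pi,U}U\bigr),
    \qquad
    \widehat U-U\ge\gamma\bigl(I-\gamma P^{\pi,U}\bigr)^{-1}\bigl(\widehat P^{\pi,\widehat U}\widehat U-P^{\pi,\widehat U}\widehat U\bigr).
\]
So everything reduces to the one-step robust discrepancy $\min_{Q\in\cU_{s,a}(\widehat P^0)}QV-\min_{Q\in\cU_{s,a}(P^0)}QV$ at a vector $V$ of span at most $R$.

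\textbf{Step 1: one-step concentration.} For TV balls I will use the dual form $\min_Q QV=\max_{\alpha}\{P^0_{s,a}[V]_\alpha-\sigma_{s,a}(\alpha-\min V)\}$. The discrepancy is then at most $\sup_\alpha|(\widehat P^0_{s,a}-P^0_{s,a})[V]_\alpha|$. Because $U$ and $\pi$ are independent of the samples, Lemma~\ref{lem:one-step-robust-concentration} (Bernstein with a net over $\alpha$) bounds this by $\sqrt{\Var_{P^0_{s,a}}(U)\iota/N}+R\iota/N$. The nominal-to-worst-case variance change $|\Var_{P^0}(U)-\Var_{P^{U}}(U)|\lesssim\sigma R^2$ is controlled by Lemma~\ref{lem:nominal-fixed-vector-concentration} and Lemma~\ref{lem:empirical-std-perturbation}. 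Lemma~\ref{lem:empirical-std-perturbation} is also what lets me swap the empirical kernel for the true one inside the resolvent.

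\textbf{Step 2: propagation through the resolvent.} Apply Cauchy--Schwarz in the form $(I-\gamma P)^{-1}\sqrt{v}\le\sqrt{(1-\gamma)^{-1}(I-\gamma P)^{-1}v}$. This reduces the problem to a total-variance bound of the form
\[
    (1-\gamma)(I-\gamma P^{\pi,U})^{-1}\Var_{P^{\pi,U}}(U)\lesssim B_\star(R).
\]
This is the key step. I will not telescope $U$ directly, which would give a crude $(1-\gamma)^{-1}R$. Instead I split $U=\bar h+(U-\bar h)$ up to a constant and telescope each piece.
\begin{itemize}
    \item For the anchor piece $\bar h$, I will use the supersolution inequality \eqref{eq:nominal-anchor-supersolution} together with $\Var_P(\bar h)=P\bar h^2-(P\bar h)^2$. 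This yields a contribution $H_{\mathrm{anc}}\cdot O(1)$ plus a term of order $H_{\mathrm{anc}}\,\beta$.
    \item The residual $U-\bar h$ is governed by the gap between $\bar\rho$ and $(1-\gamma)U$. Telescoping it gives $R\beta_\star$, plus $(1-\gamma)R^2$ from the discount mismatch, plus $\sigma R^2$ from replacing $P^0$ by the worst-case kernel in the supersolution.
\end{itemize}
I expect this anchored variance telescoping to be the main obstacle, especially keeping the pure $H_{\mathrm{anc}}$ term free of any $R$ factor. Lemma~\ref{lem:shi-original-lemma-11} at radius $R$ should supply the empirical-kernel version needed for the lower-direction resolvent.

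\textbf{Step 3: the lower direction and assembly.} In the lower direction the vector is $\widehat U$, which is data-dependent. I write $\widehat U=U+(\widehat U-U)$: the $U$ part is handled as above, and the perturbation contributes $\sqrt{\sigma\iota/N}/(1-\gamma)\cdot\Delta+\sqrt{\iota/(N(1-\gamma))}\cdot\Delta$ after resolvent propagation. By the sample conditions \eqref{eq:recursive-comparison-sample-conditions} with $C$ large, this is at most $\Delta/50$. Combining the Bernstein lower-order term $R\iota/(N(1-\gamma))$ with the variance term gives the first claim. For a generic $\pi$ fixed independently of the data, the same steps apply with $\beta$ replaced by $[\bar\rho-(1-\gamma)\min_sV_\gamma^{\pi,\sigma}(s)]_+$, which yields \eqref{eq:generic-fixed-policy-localized-comparison}.
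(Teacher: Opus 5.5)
There is a genuine gap, and it starts with your sandwich. Both displayed inequalities point the wrong way. From $(I-\gamma\widehat P^{\pi,\widehat U})(\widehat U-U)=\gamma(\widehat P^{\pi,\widehat U}-P^{\pi,U})U$ and $\widehat P^{\pi,\widehat U}U\ge\widehat P^{\pi,U}U$, one gets the \emph{lower} bound $\widehat U-U\ge\gamma(I-\gamma\widehat P^{\pi,\widehat U})^{-1}(\widehat P^{\pi,U}-P^{\pi,U})U$. Symmetrically, your second display is an \emph{upper} bound. After taking absolute values this swap is harmless. The real problem is that your second branch evaluates the one-step residual $(\widehat P^{\pi,\widehat U}-P^{\pi,\widehat U})\widehat U$ at the data-dependent vector $\widehat U$, and the linearization $\widehat U=U+(\widehat U-U)$ does not handle it. Through the TV dual, the leftover is $\sup_\alpha|(\widehat P^0_{s,a}-P^0_{s,a})([\widehat U^\circ]_\alpha-[U^\circ]_\alpha)|$, an empirical process at a vector built from the same samples. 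Without a leave-one-out construction, the only available bound is $\|\widehat P^0_{s,a}-P^0_{s,a}\|_1\cdot 2\Delta$, i.e.\ $O(\Delta)$ or $O(\sqrt{S\iota/N}\,\Delta)$. That is not the dimension-free $\sqrt{\iota/(N(1-\gamma))}\,\Delta$ you claim. Lemma~\ref{lem:shi-original-lemma-11} does not supply it either. That lemma is tailored to $\widehat W$, using $0\le\widehat V^\star-\widehat W\le\varepsilon_{\mathrm{opt}}$ and a leave-one-out family for $\widehat V^\star$, whereas $\widehat V^\star-\widehat U$ is only controlled at order $\Delta$.

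The missing idea is to keep your first display (sign corrected) and pair it with the other valid upper bound. From $\widehat P^{\pi,\widehat U}\widehat U\le\widehat P^{\pi,U}\widehat U$ you get $\widehat U-U\le\gamma(I-\gamma\widehat P^{\pi,U})^{-1}(\widehat P^{\pi,U}-P^{\pi,U})U$ (Lemma~\ref{lem:two-sided-fixed-value-decomposition}). Then \emph{both} branches carry the fixed residual $(\widehat P^{\pi,U}-P^{\pi,U})U$, which Lemma~\ref{lem:one-step-robust-concentration} controls, and all data dependence moves into the resolvent kernels $\widehat P^{\pi,U}$ and $\widehat P^{\pi,\widehat U}$.

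This exposes a second gap. Your total-variance step is done under $P^{\pi,U}$, but the resolvents are empirical kernels under which $U$ is only an approximate fixed point. Lemma~\ref{lem:empirical-std-perturbation} perturbs standard deviations, not resolvents, so it cannot swap one kernel for the other. The paper instead runs your anchored split inside Lemma~\ref{lem:span-aware-resolvent-variance}, for a general kernel with Bellman residual $b$ and envelopes $\xi_V=(\widehat P^{\pi,U}-P^{\pi,U})U$ and $\xi_h=(\widehat P^{0,\pi}-P^{0,\pi})\bar h$. It then closes a self-bounding inequality for $\|\mathcal G_P|\xi_V|\|_\infty+\|\mathcal G_P|\xi_h|\|_\infty$, where $\mathcal G_P=(1-\gamma)(I-\gamma P)^{-1}$ (Lemma~\ref{lem:T2-anchored-variance-bound}).

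In the branch with kernel $\widehat P^{\pi,\widehat U}$, the extra residual is the selector mismatch $\|(\widehat P^{\pi,\widehat U}-\widehat P^{\pi,U})U\|_\infty\le4\sigma\|\widehat U-U\|_\infty$. It enters the budget as $\sigma R\|\widehat U-U\|_\infty$. Young's inequality and the second condition in \eqref{eq:recursive-comparison-sample-conditions} turn it into $\frac{1}{50}\Delta$ (Lemma~\ref{lem:T6-anchored-variance-bound}). That, not a linearization of the residual, is the source of the $\frac{1}{50}\Delta$ term.
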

\begin{lemma}\label{lem:V-hat-V-pi-hat-bound}
    Let $R\ge\max\{1,H_{\mathrm{anc}}\}$ be deterministic and suppose
    \eqref{eq:discounted-localized-span-condition} and
    \eqref{eq:recursive-comparison-sample-conditions} hold. On the
    high-probability event $\mathcal E(R)$, for a universal constant $C$,
    \[
        \|\widehat W-W\|_{\infty}
        \le
        C\sqrt{
            \frac{B_{\mathrm{emp}}(R)\cdot\iota}{N(1-\gamma)^2}}
        +
        C\frac{R\cdot\iota}{N(1-\gamma)}
        +
        \frac{1}{20}\Delta
        +
        C\frac{\varepsilon_{\mathrm{opt}}}{1-\gamma}.
    \]
\end{lemma}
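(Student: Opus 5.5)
We prove Lemma~\ref{lem:V-hat-V-pi-hat-bound} by a perturbation argument at the learned policy, modeled on Lemma~\ref{lem:V-hat-V-part-1-decomp}. The difference is that $\widehat\pi$ depends on the data, so fixed-vector concentration can only be applied to data-independent vectors. The data-dependent remainder is handled by a uniform (covering/leave-one-out) bound at radius $R$ from Lemma~\ref{lem:shi-original-lemma-11}.

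\textbf{Step 1: one-sided recursions.} Using the worst-case kernels $\widehat P^{\widehat\pi,W}$ and $P^{\widehat\pi,\widehat W}$ from the preliminaries, the fixed-policy robust Bellman equations for $W$ and $\widehat W$ give
\[
    \widehat W-W\le\gamma\bigl(I-\gamma\widehat P^{\widehat\pi,W}\bigr)^{-1}\bigl(\widehat P^{\widehat\pi,W}W-P^{\widehat\pi,W}W\bigr),
\]
with a symmetric lower bound using $P^{\widehat\pi,\widehat W}$. Each one-step robust error equals the TV-dual error $\max_{\alpha}\{\widehat P^0_{s,a}[W]_\alpha-P^0_{s,a}[W]_\alpha-\sigma_{s,a}(\cdots)\}$.

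\textbf{Step 2: remove data dependence.} Write $W=U+(W-U)$, with $U$ data-independent.
\begin{itemize}
    \item For the $U$ part, apply Lemma~\ref{lem:one-step-robust-concentration} to obtain a Bernstein bound $\sqrt{\Var_{P^0}(U)\iota/N}+R\iota/N$.
    \item For the remainder $W-U$, apply Lemma~\ref{lem:shi-original-lemma-11} uniformly over vectors of span at most $R$. This gives a term $\sqrt{\sigma R^2\iota/N}+R\iota/N$ plus a contribution proportional to $\|W-U\|_\infty$.
    \item Bound $\|W-U\|_\infty$ by $\Delta+\varepsilon_{\mathrm{opt}}$ via \eqref{eq:V-pi-approx-V-star-decomp}.
\end{itemize}
Under \eqref{eq:recursive-comparison-sample-conditions}, the resolvent factor $(1-\gamma)^{-1}$ multiplies this contribution to give at most $\Delta/20+C\varepsilon_{\mathrm{opt}}/(1-\gamma)$.

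\textbf{Step 3: total variance via the anchor (main obstacle).} We need
\[
    \Bigl\|(I-\gamma\widehat P^{\widehat\pi,W})^{-1}\sqrt{\Var_{P^0}(U)}\Bigr\|_\infty\lesssim\sqrt{B_{\mathrm{emp}}(R)}/(1-\gamma).
\]
First transfer $\Var_{P^0}(U)$ to $\Var_{\widehat P^{\widehat\pi,\widehat W}}(\widehat W)$, using Lemma~\ref{lem:empirical-std-perturbation} and the TV shift of at most $\sigma$, which costs $\sigma R^2$. Then apply Cauchy--Schwarz with the law of total variance $\sum_t\gamma^t\widehat P^t\Var(\widehat W)\lesssim\|\widehat W\|_\infty$-type telescoping. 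The anchor is what replaces the naive $R/(1-\gamma)$ scale.
\begin{itemize}
    \item Telescope on $\widehat W^2$ shifted by $\bar h$.
    \item Use the anchor supersolution \eqref{eq:nominal-anchor-supersolution}: $r^{\widehat\pi}-\bar\rho\le\bar h-P^0\bar h$. This makes the drift term per step at most $\bar\rho-(1-\gamma)\min\widehat W$, i.e., $\beta_{\mathrm{emp}}$, times $R$.
    \item Control the $\bar h$ variance by $H_{\mathrm{anc}}$, via Lemma~\ref{lem:empirical-std-perturbation} applied to $\bar h$.
    \item The residual $(1-\gamma)R^2$ arises from the discount mismatch.
\end{itemize}
Summing these pieces yields $B_{\mathrm{emp}}(R)$. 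The delicate part is carrying out this telescoping under the robust kernel rather than the nominal one, which incurs an additional $\sigma R^2$. Collecting Steps~1--3 and taking the maximum of the two one-sided bounds gives the claim.
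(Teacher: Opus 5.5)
\textbf{Gap in the upper recursion and Step~2.} Your upper one-sided recursion is the fixed-policy decomposition of Lemma~\ref{lem:two-sided-fixed-value-decomposition} transplanted to $\widehat\pi$. In it, the one-step robust error is evaluated at $W=V_\gamma^{\widehat\pi,\sigma}$, which, unlike $U$, depends on the data through $\widehat\pi$. The split $W=U+(W-U)$ handles $U$, but nothing valid controls the remainder.

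Lemma~\ref{lem:shi-original-lemma-11} is not a uniform bound over vectors of span at most $R$. It bounds only $\bigl(\widehat P^{\widehat\pi,\widehat W}-P^{\widehat\pi,\widehat W}\bigr)\widehat W$. It works because $\|\widehat W-\widehat V^\star\|_\infty\le\varepsilon_{\mathrm{opt}}$, and $\widehat V^\star$ admits the leave-one-out construction (absorbing source state with reward $u$) that decouples it from the row $\widehat P^0_{s,a}$. $W$ inherits no such structure, because the near-greedy policy does not vary continuously with $u$. A genuinely uniform bound over the span-$R$ ball needs a net of log-cardinality of order $S$, which puts an extra factor $S$ inside $\iota$.

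Without such a bound, the only per-step control of the remainder is the Lipschitz bound $2\|W-U\|_\infty$. After the resolvent this becomes $\frac{2}{1-\gamma}(\Delta+\varepsilon_{\mathrm{opt}})$. That coefficient carries no $\sqrt{\iota/N}$ factor, so \eqref{eq:recursive-comparison-sample-conditions} cannot shrink it to $\frac1{20}$.

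\textbf{Related defect in Step~3.} Lemma~\ref{lem:empirical-std-perturbation} and the TV shift only change the kernel. Replacing $U$ by $\widehat W$ also changes the vector, which costs $\|U-\widehat W\|_\infty\le\Delta+\varepsilon_{\mathrm{opt}}$ at the standard-deviation level. The result is a term $\sqrt{\iota/(N(1-\gamma)^2)}\,\Delta$. It is absorbable only if $N\gtrsim\iota/(1-\gamma)^2$, which is exactly the blanket horizon condition the localized analysis is built to avoid.

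\textbf{The fix.} Derive the upper recursion so that it, like your lower branch, carries the error at $\widehat W$. Since $P^{\widehat\pi,\widehat W}$ minimizes at $\widehat W$, we have $P^{\widehat\pi,W}\widehat W\ge P^{\widehat\pi,\widehat W}\widehat W$, and hence
\[
\widehat W-W\le\gamma\bigl(I-\gamma P^{\widehat\pi,W}\bigr)^{-1}\bigl(\widehat P^{\widehat\pi,\widehat W}-P^{\widehat\pi,\widehat W}\bigr)\widehat W ,
\]
which is Lemma~\ref{lem:two-sided-pi-hat-value-decomposition}. The resolvents are now true robust kernels, and Lemma~\ref{lem:shi-original-lemma-11} applies directly. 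Its variance $\Var_{P^{0,\widehat\pi}}(\widehat V^\star)$ moves to $\Var_{P^{\widehat\pi,\widehat W}}(\widehat W)$ at the harmless costs $\varepsilon_{\mathrm{opt}}$ and $R\sqrt{\sigma}$.

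Your anchored telescoping is essentially Lemma~\ref{lem:span-aware-resolvent-variance}. Apply it with $V=\widehat W$ and $P=P^{\widehat\pi,\widehat W}$, taking the one-step error itself as the residual. Set $\xi_h=0$, since this true kernel is within TV distance $\sigma$ of $P^0$, and use the defect $\beta_{\mathrm{emp}}$. A self-bounding step then closes the matched branch.

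In the mismatched branch, replacing $P^{\widehat\pi,\widehat W}$ by $P^{\widehat\pi,W}$ costs $C\sigma\|\widehat W-W\|_\infty$ per step. Inside the square root this becomes $\sqrt{\sigma R\|\widehat W-W\|_\infty\,\iota/(N(1-\gamma)^2)}$. Young's inequality together with $N\gtrsim\sigma\iota/(1-\gamma)^2$ turns it into $\frac1{20}\|\widehat W-W\|_\infty$ plus a term dominated by the leading square root. That, and not the $W-U$ remainder, is where the $\frac1{20}\Delta$ in the lemma comes from.
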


We now condition on the event $\cE$ from
\eqref{eq:global-localized-event}.
Because $\cE\subseteq\mathcal E(R)$ for every $R\in\mathcal R$, the localized
lemmas hold simultaneously at every grid radius.

Since $r(s,a)\in[0,1]$, for
every policy $\pi$,
\begin{equation}
    \label{eq:discounted-value-range}
    0
    \le
    V_\gamma^{\pi,\sigma}
    \le
    \frac{1}{1-\gamma}\cdot\bm{1}_S,
    \qquad
    0
    \le
    \widehat V_\gamma^{\pi,\sigma}
    \le
    \frac{1}{1-\gamma}\cdot\bm{1}_S.
\end{equation}
Hence $\Delta\le2/(1-\gamma)$. The assumption $\varepsilon_{\mathrm{opt}}\le c(1-\gamma)R_0$
therefore ensures that the grid contains a radius larger than
$R_0+2\Delta+2\varepsilon_{\mathrm{opt}}$. Let $j_\Delta$ be the smallest
index such that
\[
    R_{j_\Delta}
    \ge
    R_0+2\Delta+2\varepsilon_{\mathrm{opt}}.
\]
By minimality and the assumption
$\varepsilon_{\mathrm{opt}}\le c(1-\gamma)R_0\le cR_0$ in
Theorem~\ref{thm:TV-upper-bound-discount},
\begin{equation}
    R_{j_\Delta}
    \le
    2\left(R_0+2\Delta+2\varepsilon_{\mathrm{opt}}\right)
    \le
    C(R_0+\Delta).
    \label{eq:peeling-radius-upper-bound}
\end{equation}
Lemma~\ref{lem:span-value-approx}, together with the reference-radius bound
\eqref{eq:reference-radius}, shows that
\eqref{eq:discounted-localized-span-condition} holds at
$R=R_{j_\Delta}$.

At this radius, the budget comparison in
Lemma~\ref{lem:span-value-approx} gives
\begin{equation}
    \label{eq:empirical-budget-selected-radius}
    B_{\mathrm{emp}}(R_{j_\Delta})
    \le
    B_\star(R_{j_\Delta})
    +
    R_{j_\Delta}(1-\gamma)
    (\Delta+\varepsilon_{\mathrm{opt}}).
\end{equation}
Also, by \eqref{eq:peeling-radius-upper-bound},
\begin{equation}
    \label{eq:baseline-budget-selected-radius}
    B_\star(R_{j_\Delta})
    \le
    C\left[
    B_\star(R_0)
    +\beta_\star\Delta
    +\sigma\Delta^2
    +(1-\gamma)\Delta^2
    \right].
\end{equation}
Invoking 
Lemmas~\ref{lem:V-hat-V-part-1-decomp} and~\ref{lem:V-hat-V-pi-hat-bound} at $R = R_{j_\Delta}$ and substituting the two lemma bounds into the definition of $\Delta$ in
\eqref{eq:Delta-defn} gives
\begin{align*}
    \Delta
     & \le
    C\sqrt{
    \frac{B_\star(R_{j_\Delta})\cdot\iota}{N(1-\gamma)^2}}
    +C\sqrt{
    \frac{B_{\mathrm{emp}}(R_{j_\Delta})\cdot\iota}
    {N(1-\gamma)^2}}
    +C\frac{R_{j_\Delta}\cdot\iota}{N(1-\gamma)}
    \\
     & \quad
    +\frac{7}{100}\Delta
    +C\frac{\varepsilon_{\mathrm{opt}}}{1-\gamma}.
\end{align*}
Substituting \eqref{eq:empirical-budget-selected-radius} into the second
square-root term above and using $\sqrt{a+b}\le\sqrt a+\sqrt b$ produces the
additional term on the left below. By
\eqref{eq:peeling-radius-upper-bound} and
$\varepsilon_{\mathrm{opt}}\le c(1-\gamma)R_0\le cR_0$,
\begin{align*}
    C\sqrt{
        \frac{R_{j_\Delta}(\Delta+\varepsilon_{\mathrm{opt}})\cdot\iota}
        {N(1-\gamma)}}
    &\le
    C(R_0+\Delta)
    \sqrt{\frac{\iota}{N(1-\gamma)}} \\
    &\le
    C\sqrt{
        \frac{B_\star(R_0)\cdot\iota}{N(1-\gamma)^2}}
    +C\sqrt{\frac{\iota}{N(1-\gamma)}}\Delta,
\end{align*}
where the second inequality uses
$B_\star(R_0)\ge (1-\gamma)R_0^2$. Thus the first term is absorbed into the
leading $B_\star(R_0)$ square-root term, while the second contributes to the
coefficient of $\Delta$.
Combining this estimate with \eqref{eq:baseline-budget-selected-radius} and
\eqref{eq:peeling-radius-upper-bound} gives
\begin{equation}
\label{eq:recursive-comparison-before-young}
\begin{aligned}
    \Delta
     & \le
    C\sqrt{
    \frac{B_\star(R_0)\cdot\iota}{N(1-\gamma)^2}}
    +C\sqrt{
    \frac{\beta_\star\Delta\cdot\iota}
    {N(1-\gamma)^2}}
    \\
     & \quad
    +\left[
    \frac{7}{100}
    +C\sqrt{\frac{\sigma\cdot\iota}{N(1-\gamma)^2}}
    +C\sqrt{\frac{\iota}{N(1-\gamma)}}
    +C\frac{\iota}{N(1-\gamma)}
    \right]\Delta
    +C\frac{R_0\cdot\iota}{N(1-\gamma)}
    +C\frac{\varepsilon_{\mathrm{opt}}}{1-\gamma}.
\end{aligned}
\end{equation}
Young's inequality gives
\[
    C\sqrt{
    \frac{\beta_\star\Delta\cdot\iota}
    {N(1-\gamma)^2}}
    \le
    \frac{93}{400}\Delta
    +C\frac{\beta_\star\cdot\iota}{N(1-\gamma)^2}.
\]
The sample-size condition \eqref{eq:anchored-dmdp-sample-condition} implies
\[
    \frac{\beta_\star\cdot\iota}{N(1-\gamma)^2}
    \le
    C R_0.
\]
Since \eqref{eq:localized-budget-specializations} gives
$R_0\beta_\star\le B_\star(R_0)$, it follows that
\[
    \frac{\beta_\star\cdot\iota}{N(1-\gamma)^2}
    \le
    C\sqrt{
    \frac{B_\star(R_0)\cdot\iota}{N(1-\gamma)^2}},
\]
so the last term in the Young bound is absorbed into the first square-root
term. The two sample-size conditions in
\eqref{eq:recursive-comparison-sample-conditions} make the sum of the other
coefficients of $\Delta$ at most $93/400$. Combining this with the
$93\Delta/400$ term from Young's inequality gives
\begin{equation*}
    \Delta
    \le
    C\sqrt{
    \frac{B_\star(R_0)\cdot\iota}{N(1-\gamma)^2}}
    +\frac{93}{200}\Delta
    +C\frac{R_0\cdot\iota}{N(1-\gamma)}
    +C\frac{\varepsilon_{\mathrm{opt}}}{1-\gamma}.
\end{equation*}
Moving the $\Delta$ term on the right-hand side to the left-hand side and absorbing
the numerical factor into $C$, we obtain
\begin{equation*}
    \Delta
    \le
    C\sqrt{
    \frac{B_\star(R_0)\cdot\iota}{N(1-\gamma)^2}}
    +C\frac{R_0\cdot\iota}{N(1-\gamma)}
    +
    C\frac{\varepsilon_{\mathrm{opt}}}{1-\gamma}.
\end{equation*}
Thus, by the definition of $\Delta$ and \eqref{eq:V-pi-approx-V-star-decomp},
\[
    \|\widehat W-W\|_{\infty}
    \le
    C\sqrt{
    \frac{B_\star(R_0)\cdot\iota}{N(1-\gamma)^2}}
    +C\frac{R_0\cdot\iota}{N(1-\gamma)}
    +C\frac{\varepsilon_{\mathrm{opt}}}{1-\gamma},
\]
and
\[
    \|U-W\|_{\infty}
    \le
    C\sqrt{
    \frac{B_\star(R_0)\cdot\iota}{N(1-\gamma)^2}}
    +C\frac{R_0\cdot\iota}{N(1-\gamma)}
    +C\frac{\varepsilon_{\mathrm{opt}}}{1-\gamma}.
\]
Expanding $B_\star(R_0)$ using
\eqref{eq:localized-budget-specializations} shows that both bounds
in Theorem~\ref{thm:TV-upper-bound-discount} hold. Since
$\mathbb{P}\{\cE\}\ge1-O(\delta)$, this proves the theorem.

\subsection{Proof of auxiliary lemmas}
\label{sec:proof_auxiliary_lemmas}

Except for the nominal-branch proof in
Appendix~\ref{subsec:proof-nominal-branch-guarantee}, the auxiliary proofs
below inherit the notation from
Appendix~\ref{subsec:discounted-preliminaries}. In particular, recall that
\[
    U\coloneqq V_\gamma^{\star,\sigma},
    \qquad
    \widehat U\coloneqq\widehat V_\gamma^{\pi_\gamma^\star,\sigma},
    \qquad
    \widehat V^\star\coloneqq\widehat V_\gamma^{\star,\sigma},
    \qquad
    W\coloneqq V_\gamma^{\widehat\pi,\sigma},
    \qquad
    \widehat W\coloneqq\widehat V_\gamma^{\widehat\pi,\sigma}.
\]

\subsubsection{Proof of Lemma~\ref{lem:nominal-branch-guarantee}}
\label{subsec:proof-nominal-branch-guarantee}

The argument has three steps. We first obtain a discounted nominal guarantee,
then derive nominal average-reward and span bounds for the returned policy, and
finally transfer these bounds to the robust average reward.

\paragraph*{Step 1: Discounted nominal guarantee.}
We analyze the nominal branch of
Algorithm~\mainSpanInformedAlgorithm{}. This branch applies the discounted plug-in
reduction with span input $H_0$, the degenerate uncertainty rule
$\cU(P)=\{P\}$, and solver tolerance
$\varepsilon_{\mathrm{opt}}\le c_{\mathrm{opt}}\varepsilon$, where
$c_{\mathrm{opt}}>0$ is a sufficiently small universal constant. Under the
degenerate uncertainty rule, the empirical robust discounted problem is
exactly the empirical nominal discounted MDP with kernel $\widehat P^0$, and
Corollary~\ref{cor:anchored-robust-dmdp-accuracy} applies with $\sigma=0$.

We now check the conditions and relevant quantities in Corollary~\ref{cor:anchored-robust-dmdp-accuracy}. Set
\[
    \gamma
    =
    1-\frac{\varepsilon}{20H_0}.
\]
Since $\varepsilon\le1$ and $H_0\ge1$, this choice gives
$\gamma\ge19/20$.

We use the nominal optimal pair as the anchor:
\[
    \bar\rho=\rho^{\star,0},
    \qquad
    \bar h=h_{P^0}^{\star}.
\]
The nominal average-reward Bellman equation shows that this pair satisfies
the supersolution condition~\mainNominalAnchorSupersolution{}. It is
fixed independently of the transition samples, and
$H_{\mathrm{anc}}=H_0$. Moreover,
Lemma~\ref{lem:robust-value-function-span-bound}, applied to the degenerate
uncertainty rule, gives
\[
    \left\|V_{\gamma}^{\star,0}\right\|_{\mathrm{span}}
    \le
    2H_0.
\]
We may therefore take $R_0=2H_0$. The optimal discounted-to-average comparison \eqref{eq:robust-discounted-average-sandwich-optimal}
in Lemma~\ref{lem:robust-discounted-average-sandwich}, 
again applied to the
degenerate uncertainty rule, yields
\[
    \beta_\star
    =
    \left[
        \rho^{\star,0}
        -(1-\gamma)\min_s V_\gamma^{\star,0}(s)
    \right]_+
    \le
    2(1-\gamma)H_0.
\]
Consequently,
\begin{equation}
\label{eq:nominal-branch-anchor-budget}
    H_{\mathrm{anc}}
    +R_0\beta_\star
    +(1-\gamma)R_0^2
    \le
    H_0+8(1-\gamma)H_0^2
    \le
    C H_0.
\end{equation}

Apply Corollary~\ref{cor:anchored-robust-dmdp-accuracy} with discounted target
accuracy $\varepsilon_\gamma=H_0$. Since
$(1-\gamma)\varepsilon_\gamma=\varepsilon/20$, its solver-tolerance condition
holds as long as $c_{\mathrm{opt}}$ is sufficiently small. Using
\eqref{eq:nominal-branch-anchor-budget} and $R_0=2H_0$, the sample-size
requirement of this corollary reduces to
$NSA\ge CSA(H_0/\varepsilon^2+H_0/\varepsilon)\iota$. Since
$\varepsilon\le1$, the term $H_0/\varepsilon$ is absorbed by
$H_0/\varepsilon^2$. Moreover, the definition of $\iota$ and the choice of
$\gamma$ give
$\iota\le C\log(SAH_0N/(\varepsilon\delta))$. It is therefore sufficient that
\[
    NSA
    \ge
    C SA
    \frac{H_0}{\varepsilon^2}
    \log\!\left(\frac{SAH_0N}{\varepsilon\delta}\right).
\]
Thus, with probability at least $1-O(\delta)$,
\[
    \left\|
    V_{\gamma}^{\star,0}
    -
    V_{\gamma}^{\widehat\pi,0}
    \right\|_{\infty}
    \le
    H_0.
\]

\paragraph*{Step 2: Nominal average-reward and span bounds.}
Lemma~\ref{lem:reduction-framework}, applied to the degenerate uncertainty
rule, now gives
\begin{equation}
    \label{eq:nominal-policy-average-reward-accuracy}
    \rho^{\star,0}
    -
    \rho^{\widehat\pi,0}
    \le
    3(1-\gamma)H_0
    =
    \frac{3\varepsilon}{20}
    \le
    \frac{\varepsilon}{5}.
\end{equation}
The same discounted guarantee also controls the span of the returned policy:
\begin{equation}
    \label{eq:nominal-learned-policy-discounted-span}
    \left\|V_{\gamma}^{\widehat\pi,0}\right\|_{\mathrm{span}}
    \le
    \left\|V_{\gamma}^{\star,0}\right\|_{\mathrm{span}}
    +
    2
    \left\|
    V_{\gamma}^{\star,0}
    -
    V_{\gamma}^{\widehat\pi,0}
    \right\|_{\infty}
    \le
    4H_0.
\end{equation}

\paragraph*{Step 3: Transfer to the robust average reward.}
We now translate the nominal accuracy guarantee
\eqref{eq:nominal-policy-average-reward-accuracy} into the desired robust
accuracy bound
\[
    \rho^{\star,\sigma}
    -
    \rho^{\widehat\pi,\sigma}
    \le
    \varepsilon.
\]
The span-controlled nominal discounted value
$V_{\gamma}^{\widehat\pi,0}$ provides the link between the nominal and robust
average rewards.

Recall that the nominal discounted value function of $\widehat\pi$ satisfies
\[
    V_{\gamma}^{\widehat\pi,0}(s)
    =
    r^{\widehat\pi}(s)
    +
    \gamma P^{0,\widehat\pi}V_{\gamma}^{\widehat\pi,0}(s).
\]
By Lemma~\ref{lem:fixed-policy-robust-verification} applied to the nominal
kernel and $h=\gamma V_{\gamma}^{\widehat\pi,0}$, we have
\[
    \rho^{\widehat\pi,0}
    \le
    (1-\gamma)\min_s V_{\gamma}^{\widehat\pi,0}(s)
    +
    (1-\gamma)
    \left\|V_{\gamma}^{\widehat\pi,0}\right\|_{\mathrm{span}}.
\]
Consequently,
\begin{equation}
    \label{eq:nominal-average-reward-to-discounted-minimum}
    \rho^{\star,0}
    -
    (1-\gamma)\min_s V_{\gamma}^{\widehat\pi,0}(s)
    \le
    \frac{\varepsilon}{5}
    +
    4(1-\gamma)H_0.
\end{equation}

For any $P\in\cP$, the total-variation perturbation bound gives
\[
    \gamma P^{\widehat\pi}V_{\gamma}^{\widehat\pi,0}(s)
    \ge
    \gamma P^{0,\widehat\pi}V_{\gamma}^{\widehat\pi,0}(s)
    -
    \gamma\sigma
    \left\|V_{\gamma}^{\widehat\pi,0}\right\|_{\mathrm{span}}.
\]
Because the right-hand side does not depend on $P$, taking the infimum over
$P\in\cP$ and adding $r^{\widehat\pi}(s)$ gives
\[
\begin{aligned}
    r^{\widehat\pi}(s)
    +
    \inf_{P\in\cP}
    \gamma P^{\widehat\pi}V_{\gamma}^{\widehat\pi,0}(s)
    &\ge
    r^{\widehat\pi}(s)
    +
    \gamma P^{0,\widehat\pi}V_{\gamma}^{\widehat\pi,0}(s)
    -
    \gamma\sigma
    \left\|V_{\gamma}^{\widehat\pi,0}\right\|_{\mathrm{span}}
    \\
    &=
    V_{\gamma}^{\widehat\pi,0}(s)
    -
    \gamma\sigma
    \left\|V_{\gamma}^{\widehat\pi,0}\right\|_{\mathrm{span}},
\end{aligned}
\]
where the equality uses the nominal discounted Bellman equation.

Now set $h=\gamma V_{\gamma}^{\widehat\pi,0}$. Since
$(1-\gamma)\min_{x\in\cS}V_{\gamma}^{\widehat\pi,0}(x)
\le(1-\gamma)V_{\gamma}^{\widehat\pi,0}(s)$, the preceding display implies
that, for every state $s$,
\[
\begin{aligned}
    &\left[
        (1-\gamma)\min_{x\in\cS}V_{\gamma}^{\widehat\pi,0}(x)
        -
        \gamma\sigma
        \left\|V_{\gamma}^{\widehat\pi,0}\right\|_{\mathrm{span}}
    \right]
    +h(s)
    \\
    &\quad\le
    V_{\gamma}^{\widehat\pi,0}(s)
    -
    \gamma\sigma
    \left\|V_{\gamma}^{\widehat\pi,0}\right\|_{\mathrm{span}}
\le
    r^{\widehat\pi}(s)
    +
    \inf_{P\in\cP}(P_{\widehat\pi}h)(s).
\end{aligned}
\]
The lower-bound direction of
Lemma~\ref{lem:fixed-policy-robust-verification}, applied with this $h$ and
\[
    \rho
    =
    (1-\gamma)\min_{x\in\cS}V_{\gamma}^{\widehat\pi,0}(x)
    -
    \gamma\sigma
    \left\|V_{\gamma}^{\widehat\pi,0}\right\|_{\mathrm{span}},
\]
therefore gives
\[
    \rho^{\widehat\pi,\sigma}
    \ge
    (1-\gamma)\min_s V_{\gamma}^{\widehat\pi,0}(s)
    -
    \gamma\sigma
    \left\|V_{\gamma}^{\widehat\pi,0}\right\|_{\mathrm{span}}.
\]

Since $P^0\in\cP$, we have
$\rho^{\star,\sigma}\le\rho^{\star,0}$. Combining this with the preceding
lower bound on $\rho^{\widehat\pi,\sigma}$ gives
\[
\begin{aligned}
    \rho^{\star,\sigma}-\rho^{\widehat\pi,\sigma}
    &\le
    \rho^{\star,0}-\rho^{\widehat\pi,\sigma}
    \\
    &\le
    \rho^{\star,0}
    -
    (1-\gamma)\min_{s\in\cS}V_{\gamma}^{\widehat\pi,0}(s)
    +
    \gamma\sigma
    \left\|V_{\gamma}^{\widehat\pi,0}\right\|_{\mathrm{span}}
    \\
    &\le
    \frac{\varepsilon}{5}
    +
    4(1-\gamma)H_0
    +
    4\gamma\sigma H_0.
\end{aligned}
\]
Here, the last inequality uses
\eqref{eq:nominal-average-reward-to-discounted-minimum} and
\eqref{eq:nominal-learned-policy-discounted-span}.
Using $1-\gamma=\varepsilon/(20H_0)$, $\gamma\le1$, and
$7\sigma H_0\le\varepsilon$, we obtain
\[
    \rho^{\star,\sigma}
    -
    \rho^{\widehat\pi,\sigma}
    \le
    \left(
    \frac{1}{5}
    +\frac{1}{5}
    +\frac{4}{7}
    \right)\varepsilon
    =
    \frac{34}{35}\varepsilon
    \le
    \varepsilon,
\]
as desired.

\subsubsection{Proof of Lemma~\ref{lem:span-value-approx} }
\label{subsec:Proof-span-value-approx}
The proof has two steps. We first compare each relevant value function with
$U$ and derive the span bounds. We then compare the two anchor defects and
the corresponding concentration budgets.

\paragraph*{Step 1: Value comparisons and span bounds.}
Recall $\Delta$ from \eqref{eq:Delta-defn}. For any value functions $V_{1}$
and $V_{2}$,
\begin{equation}
    \label{eq:span-triangle-inequality}
    \|V_{1}\|_{\mathrm{span}}
    \le
    \|V_{2}\|_{\mathrm{span}}
    +2\|V_{1}-V_{2}\|_{\infty}.
\end{equation}
We will apply \eqref{eq:span-triangle-inequality} with $V_2=U$ after
establishing the required sup-norm comparisons.

By optimality in the true and empirical robust MDPs,
\[
    W\le U,
    \qquad
    \widehat U\le\widehat V^\star,
    \qquad
    \widehat W\le\widehat V^\star,
\]
where all inequalities are componentwise. Moreover,
optimality and the solver guarantee
\mainRobustDMDPSolverTolerance{} give
\[
    0
    \le
    \widehat V^\star-\widehat W
    \le
    \varepsilon_{\mathrm{opt}}\bm{1}_S.
\]
We first compare $\widehat V^\star$ with $U$. For every $s\in\cS$,
\begin{align*}
    \widehat V^\star(s)-U(s)
    & =
    \widehat V^\star(s)-\widehat W(s)
    +\widehat W(s)-W(s)
    +W(s)-U(s) \\
    & \le
    \varepsilon_{\mathrm{opt}}
    +\|\widehat W-W\|_\infty,
    \\
    U(s)-\widehat V^\star(s)
    & =
    U(s)-\widehat U(s)
    +\widehat U(s)-\widehat V^\star(s) \\
    & \le
    \|\widehat U-U\|_\infty.
\end{align*}
Hence, by \eqref{eq:Delta-defn},
\begin{equation}
    \label{eq:empirical-optimal-true-optimal-comparison}
    \|\widehat V^\star-U\|_\infty
    \le
    \Delta+\varepsilon_{\mathrm{opt}}.
\end{equation}
Similarly, for every $s\in\cS$,
\begin{align*}
    \widehat W(s)-U(s)
    & =
    \widehat W(s)-W(s)+W(s)-U(s)
    \le
    \|\widehat W-W\|_\infty,
    \\
    U(s)-\widehat W(s)
    & =
    U(s)-\widehat U(s)
    +\widehat U(s)-\widehat V^\star(s)
    +\widehat V^\star(s)-\widehat W(s) \\
    & \le
    \|\widehat U-U\|_\infty
    +\varepsilon_{\mathrm{opt}}.
\end{align*}
Therefore,
\begin{equation}
    \label{eq:empirical-policy-true-optimal-comparison}
    \|\widehat W-U\|_\infty
    \le
    \Delta+\varepsilon_{\mathrm{opt}}.
\end{equation}
Since $W\le U$, the decomposition
\eqref{eq:V-pi-approx-V-star-decomp} also gives
\begin{equation}
    \label{eq:true-policy-true-optimal-comparison}
    \|W-U\|_\infty
    \le
    \Delta+\varepsilon_{\mathrm{opt}}.
\end{equation}
Finally, \eqref{eq:Delta-defn} directly gives
\begin{equation}
    \label{eq:fixed-policy-true-optimal-comparison}
    \|\widehat U-U\|_\infty
    \le
    \Delta.
\end{equation}
Applying \eqref{eq:span-triangle-inequality} to
\eqref{eq:empirical-optimal-true-optimal-comparison}--\eqref{eq:fixed-policy-true-optimal-comparison} and using
\eqref{eq:reference-radius}, we conclude that
\[
    \|W\|_{\mathrm{span}},
    \quad
    \|\widehat W\|_{\mathrm{span}},
    \quad
    \|\widehat U\|_{\mathrm{span}},
    \quad
    \|\widehat V^\star\|_{\mathrm{span}}
    \le
    R_0+2\Delta+2\varepsilon_{\mathrm{opt}}.
\]

\paragraph*{Step 2: Anchor defects and concentration budgets.}
Empirical optimality and the solver guarantee
\mainRobustDMDPSolverTolerance{} imply
\[
    \widehat W
    \ge
    \widehat V^\star-\varepsilon_{\mathrm{opt}}\bm{1}_S
    \ge
    \widehat U-\varepsilon_{\mathrm{opt}}\bm{1}_S.
\]
Consequently,
\[
    \min_s\widehat W(s)
    \ge
    \min_sU(s)
    -
    \left\|
    \widehat U
    -
    U
    \right\|_\infty
    -\varepsilon_{\mathrm{opt}}.
\]
Substituting this bound into \eqref{eq:empirical-anchor-defect} and using
$[x+y]_+\le[x]_++y$ for $y\ge0$, we obtain
\begin{align*}
    \beta_{\mathrm{emp}}
    & =
    \left[
    \bar\rho-(1-\gamma)\min_s\widehat W(s)
    \right]_+ \\
    & \le
    \left[
    \bar\rho-(1-\gamma)\min_sU(s)
    +(1-\gamma)\|\widehat U-U\|_\infty
    +(1-\gamma)\varepsilon_{\mathrm{opt}}
    \right]_+ \\
    & \le
    \beta_\star
    +(1-\gamma)\|\widehat U-U\|_\infty
    +(1-\gamma)\varepsilon_{\mathrm{opt}} \\
    & \le
    \beta_\star
    +(1-\gamma)\Delta
    +(1-\gamma)\varepsilon_{\mathrm{opt}},
\end{align*}
which proves \eqref{eq:span-value-anchor-defect-bound}. Finally, by
\eqref{eq:localized-budget-specializations}, for every
$R\ge R_0$,
\[
    B_{\mathrm{emp}}(R)-B_\star(R)
    =R(\beta_{\mathrm{emp}}-\beta_\star)
    \le
    R(1-\gamma)(\Delta+\varepsilon_{\mathrm{opt}}),
\]
which proves \eqref{eq:span-value-budget-comparison}.

\subsubsection{Proof of Lemma~\ref{lem:V-hat-V-part-1-decomp} }
\label{subsec:Proof-lemma-V-hat-V-part-1-decomp}

The proof has three steps. We first decompose the error $\|\widehat U-U\|_\infty$
into two branches and then bound the terms in both branches. Finally, we
combine these bounds to obtain the claimed estimate.

\paragraph*{Step 1: Error decomposition.}
In this step, we derive an error decomposition that upper-bounds
$\|\widehat U-U\|_\infty$ by the maximum of two sums of more manageable
terms.

The following two lemmas control the fixed-vector residuals. These lemmas are used repeatedly for both the decomposition and the variance analysis. The
first treats the empirical robust kernel and is related to Lemma~8 in
\citet{shi2023curious}; our version explicitly incorporates the span to obtain
a tighter bound under a weaker sample-size condition. The second treats the
nominal kernel applied to the fixed anchor vector. For any stochastic matrix
$L$, define the normalized discounted resolvent
\begin{equation}
    \label{eq:normalized-discounted-resolvent}
    \mathcal G_L
    \coloneqq
    (1-\gamma)(I-\gamma L)^{-1}.
\end{equation}
Their proofs are deferred to
Appendices~\ref{subsec:Proof-lemma-one-step-robust-concentration} and~\ref{subsec:proof-nominal-fixed-vector-concentration}, respectively.

\begin{lemma}
    \label{lem:one-step-robust-concentration}
    Fix a value vector $V$ independently of the empirical transition kernels
    $\{\widehat P^0_{s,a}\}_{(s,a)\in\cS\times\cA}$. With probability at least
    $1-O(\delta)$, for every policy~$\pi$,
    \begin{equation}
        \label{eq:fixed-value-robust-kernel-pointwise}
        \left|
        \left(
        \widehat{P}^{\pi,V}
        -
        P^{\pi,V}
        \right)V
        \right|
        \le
        \sqrt{
            \frac{2\Var_{P^{0,\pi}}(V)\cdot\iota}{N}
        }
        +
        \frac{\|V\|_{\mathrm{span}}\cdot\iota}{N}\cdot \bm{1}_S.
    \end{equation}
    Moreover, for any possibly data-dependent stochastic matrix $L$,
    \begin{equation}
        \label{eq:fixed-value-robust-kernel-resolvent}
        \left\|
        \mathcal G_L
        \left|
        \left(\widehat P^{\pi,V}-P^{\pi,V}\right)V
        \right|
        \right\|_\infty
        \le
        C\sqrt{\frac{\iota}{N}}
        \left\|
        \mathcal G_L
        \sqrt{\Var_{P^{0,\pi}}(V)}
        \right\|_\infty
        +
        C\frac{\|V\|_{\mathrm{span}}\cdot\iota}{N}.
    \end{equation}
\end{lemma}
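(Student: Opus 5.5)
The plan is to reduce the robust one-step error to nominal fluctuations of a one-parameter family of clipped vectors. We do this with the standard dual representation of the TV-ball minimum, and then run a Bernstein-plus-net argument that is uniform in the dual parameter. For a distribution $Q$ on $\cS$, radius $\sigma_{s,a}$, and a vector $V$, write $[V]_\alpha\coloneqq\min\{V,\alpha\mathbf 1\}$ entrywise. The dual form, as in \citet{shi2023curious}, is
\[
    \min_{Q'\in\cU_{s,a}(Q)}Q'V
    =
    \max_{\alpha\in[\min_s V(s),\,\max_s V(s)]}
    \Bigl\{Q[V]_\alpha-\sigma_{s,a}\bigl(\alpha-\min_{s'}[V]_\alpha(s')\bigr)\Bigr\}.
\]
I would verify it directly by moving mass $\sigma_{s,a}$ from the top of $V$ onto the minimizing state. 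The penalty term does not depend on $Q$. Hence, for each $(s,a)$,
\[
    \bigl|\widehat P^V_{s,a}V-P^V_{s,a}V\bigr|
    \le
    \sup_{\alpha\in[\min V,\max V]}\bigl|(\widehat P^0_{s,a}-P^0_{s,a})[V]_\alpha\bigr|.
\]

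Next comes concentration uniformly in $\alpha$. Since $V$ is fixed independently of the data, each $[V]_\alpha$ is also fixed. Bernstein's inequality then gives, with probability $1-\delta'$,
\[
    \bigl|(\widehat P^0_{s,a}-P^0_{s,a})[V]_\alpha\bigr|
    \le
    \sqrt{2\Var_{P^0_{s,a}}([V]_\alpha)\log(2/\delta')/N}
    +\|[V]_\alpha\|_{\mathrm{span}}\log(2/\delta')/N.
\]
Two facts let us replace the clipped quantities by those of $V$. First, $x\mapsto\min\{x,\alpha\}$ is $1$-Lipschitz, and $\Var f(X)=\tfrac12\mathbb E(f(X)-f(X'))^2$, so $\Var_{P^0_{s,a}}([V]_\alpha)\le\Var_{P^0_{s,a}}(V)$. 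Second, $\|[V]_\alpha\|_{\mathrm{span}}\le\|V\|_{\mathrm{span}}$. To make the bound uniform over $\alpha$, I would take an $\|V\|_{\mathrm{span}}/N$-net of $[\min V,\max V]$ with about $N$ points. The map $\alpha\mapsto[V]_\alpha$ is $1$-Lipschitz in sup norm, so the discretization error is at most $2\|V\|_{\mathrm{span}}/N$ and is absorbed into the second term. A union bound over the net points and the $SA$ pairs, with $\delta'\asymp\delta/(SAN)$, stays inside $\iota=\log(54SAN^2/((1-\gamma)\delta))$. The main technical point is this uniform-in-$\alpha$ step, together with the clipping-variance comparison; everything else is bookkeeping.

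On this single event the per-$(s,a)$ bound holds simultaneously, so it holds for every policy $\pi$ at once. We have
\[
    \bigl((\widehat P^{\pi,V}-P^{\pi,V})V\bigr)(s)
    =\sum_a\pi(a\mid s)\bigl(\widehat P^V_{s,a}V-P^V_{s,a}V\bigr).
\]
Averaging the per-pair bounds, Jensen's inequality gives $\sum_a\pi(a\mid s)\sqrt{\Var_{P^0_{s,a}}(V)}\le\sqrt{\sum_a\pi(a\mid s)\Var_{P^0_{s,a}}(V)}$. By the law of total variance for the mixture $P^{0,\pi}(s,\cdot)$, the latter is at most $\sqrt{\Var_{P^{0,\pi}}(V)(s)}$. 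This yields \eqref{eq:fixed-value-robust-kernel-pointwise}.

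Finally, for \eqref{eq:fixed-value-robust-kernel-resolvent}, note that $\mathcal G_L=(1-\gamma)\sum_t\gamma^tL^t$ is entrywise nonnegative with rows summing to one, for any stochastic $L$, including data-dependent ones. Applying $\mathcal G_L$ to the absolute value of the pointwise bound therefore preserves the inequality, and it maps $\mathbf 1$ to $\mathbf 1$. Taking sup norms gives the claim with $C=\sqrt2$, enlarged to absorb constants.
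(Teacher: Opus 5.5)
Your proposal matches the paper's proof essentially step for step. You reduce via the TV dual form (Shi et al., Lemma~4) to a supremum over clipping levels of nominal fluctuations, apply Bernstein with the $1$-Lipschitz clipping-variance comparison and a union bound over an $O(N)$-point grid and all $(s,a)$, then use Jensen and the law of total variance for policy averaging, and finally use nonnegativity of $\mathcal G_L$ with $\mathcal G_L\mathbf 1=\mathbf 1$ for the resolvent form.

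The only slip is constant bookkeeping. With an $\|V\|_{\mathrm{span}}/N$-net, the discretization error $2\|V\|_{\mathrm{span}}/N$ cannot be absorbed into a linear term whose coefficient is already $1$. To obtain the exact coefficient in \eqref{eq:fixed-value-robust-kernel-pointwise}, use Bernstein's $2/3$ constant and a finer grid. The paper uses spacing $\|V\|_{\mathrm{span}}/(6N)$, so that $\tfrac23\iota+\tfrac13\le\iota$.
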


\begin{lemma}[Fixed-vector nominal-kernel concentration]
    \label{lem:nominal-fixed-vector-concentration}
    Fix a vector $g$ that is independent of the empirical transition kernels. With
    probability at least $1-O(\delta)$, for every policy~$\pi$,
    \begin{equation}
        \label{eq:fixed-vector-nominal-pointwise}
        \left|
        \left(
        \widehat P^{0,\pi}
        -
        P^{0,\pi}
        \right)g
        \right|
        \le
        C\sqrt{
            \frac{\Var_{P^{0,\pi}}(g)\cdot\iota}{N}
        }
        +
        C\frac{\|g\|_{\mathrm{span}}\cdot\iota}{N}\cdot \bm{1}_S.
    \end{equation}
    Moreover, for any possibly data-dependent stochastic matrix $L$,
    \begin{equation}
        \label{eq:fixed-vector-nominal-resolvent}
        \left\|
        \mathcal G_L
        \left|
        \left(\widehat P^{0,\pi}-P^{0,\pi}\right)g
        \right|
        \right\|_\infty
        \le
        C\sqrt{\frac{\iota}{N}}
        \left\|
        \mathcal G_L
        \sqrt{\Var_{P^{0,\pi}}(g)}
        \right\|_\infty
        +
        C\frac{\|g\|_{\mathrm{span}}\cdot\iota}{N}.
    \end{equation}
\end{lemma}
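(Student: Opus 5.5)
The statement to prove is Lemma~\ref{lem:nominal-fixed-vector-concentration}. I will treat it as a standard Bernstein-plus-union-bound argument, row by row. The policy enters only through averaging, so I first prove the bound for each state-action pair and then average over $\pi$.

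\textbf{Step 1 (per-row Bernstein).} Fix $(s,a)$. The quantity $(\widehat P^0_{s,a}-P^0_{s,a})g$ is an average of $N$ i.i.d.\ centered variables $g(s_i')-P^0_{s,a}g$. Because $g$ is independent of the samples, these variables are bounded by $\|g\|_{\mathrm{span}}$ and have variance $\Var_{P^0_{s,a}}(g)$. Bernstein's inequality at confidence $\delta/(SA)$, followed by a union bound over $(s,a)$, gives simultaneously
\[
\bigl|(\widehat P^0_{s,a}-P^0_{s,a})g\bigr|\le\sqrt{\frac{2\Var_{P^0_{s,a}}(g)\,\iota}{N}}+\frac{2\|g\|_{\mathrm{span}}\,\iota}{3N}.
\]
This holds with probability $1-O(\delta)$, since $\log(SA/\delta)\le\iota$. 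The event does not depend on $\pi$, so it covers every policy at once.

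\textbf{Step 2 (average over the policy).} For any $\pi$, the triangle inequality gives $|(\widehat P^{0,\pi}-P^{0,\pi})g|(s)\le\sum_a\pi(a\mid s)\,|(\widehat P^0_{s,a}-P^0_{s,a})g|$. Concavity of the square root (Jensen) gives $\sum_a\pi(a\mid s)\sqrt{\Var_{P^0_{s,a}}(g)}\le\sqrt{\sum_a\pi(a\mid s)\Var_{P^0_{s,a}}(g)}$. The law of total variance then gives $\sum_a\pi(a\mid s)\Var_{P^0_{s,a}}(g)\le\Var_{P^{0,\pi}}(g)(s)$, because the mixture variance equals the average within-action variance plus a nonnegative between-action term. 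Together these yield \eqref{eq:fixed-vector-nominal-pointwise}.

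\textbf{Step 3 (resolvent form).} Apply $\mathcal G_L$ to the pointwise bound. This is valid even for data-dependent $L$: the bound holds entrywise on the event of Step~1, and $\mathcal G_L$ is entrywise nonnegative and monotone. Since $\mathcal G_L$ has nonnegative entries and rows summing to one, it preserves entrywise inequalities, maps $\mathbf 1$ to $\mathbf 1$, and gives $\|\mathcal G_L x\|_\infty\le\|x\|_\infty$. This produces \eqref{eq:fixed-vector-nominal-resolvent}.

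There is no real obstacle here. The only point needing care is the variance-averaging inequality in Step~2, which is what lets the bound use the policy-induced variance $\Var_{P^{0,\pi}}(g)$ rather than a per-action maximum; the law of total variance handles it.
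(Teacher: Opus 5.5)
Your proposal is correct and follows the paper's proof essentially step for step: a per-row Bernstein bound with a union bound over $(s,a)$, then policy averaging via Jensen and the law of total variance, and finally multiplication by the entrywise nonnegative, row-stochastic matrix $\mathcal G_L$. The only difference is cosmetic. The paper first shifts $g$ to the nonnegative vector $g-\min_s g(s)\mathbf 1$, whereas you bound the centered summands by $\|g\|_{\mathrm{span}}$ directly, and this changes nothing.
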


We now describe the decomposition of the fixed-policy error $\widehat U-U$.
The following lemma bounds this error by the maximum of two
resolvent-perturbation terms. The proof is deferred to
Appendix~\ref{subsec:Proof-lemma-two-sided-fixed-value-decomposition}.
\begin{lemma}
    \label{lem:two-sided-fixed-value-decomposition}
    Consider $\widehat{U}$ and $U$. We have that
    \begin{align}
        \label{eq:two-sided-fixed-value-decomposition}
        \|\widehat U-U\|_{\infty}
         & \le
        \max\Bigg\{
        \left\|
        \left(I-\gamma\widehat P^{\pi_\gamma^\star,U}\right)^{-1}
        \left|
        \left(
        \widehat P^{\pi_\gamma^\star,U}
        -
        P^{\pi_\gamma^\star,U}
        \right)
        U
        \right|
        \right\|_\infty,
        \\
         & \qquad\qquad
        \left\|
        \left(I-\gamma\widehat P^{\pi_\gamma^\star,\widehat U}\right)^{-1}
        \left|
        \left(
        \widehat P^{\pi_\gamma^\star,U}
        -
        P^{\pi_\gamma^\star,U}
        \right)
        U
        \right|
        \right\|_\infty
        \Bigg\}.\nonumber
    \end{align}
\end{lemma}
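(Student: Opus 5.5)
The plan is to work directly with the two fixed-policy robust Bellman equations and prove one-sided bounds on $\widehat U-U$ and on $U-\widehat U$ separately. Each one-sided bound has the form of one of the two terms in the maximum.

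\textbf{Setup.} Write $\pi=\pi_\gamma^\star$. By Lemma~\ref{lem:rectangular-discounted-bellman-representation}, applied once to the true uncertainty set and once to the empirical one, and by rectangularity (the worst case is selected row by row),
\[
    U=r^\pi+\gamma P^{\pi,U}U,
    \qquad
    \widehat U=r^\pi+\gamma\widehat P^{\pi,\widehat U}\widehat U .
\]
The key structural fact is the following. For any vectors $V$ and $V'$, the rows of $\widehat P^{\pi,V'}$ are admissible in the empirical set, while $\widehat P^{\pi,V}$ minimizes row by row against $V$. Hence
\[
    \widehat P^{\pi,V}V\le\widehat P^{\pi,V'}V
    \qquad\text{componentwise.}
\]

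\textbf{Upper direction.} Apply the key fact with $V=\widehat U$ and $V'=U$:
\[
    \widehat U\le r^\pi+\gamma\widehat P^{\pi,U}\widehat U .
\]
Subtracting the equation for $U$ gives
\[
    \widehat U-U\le\gamma\widehat P^{\pi,U}(\widehat U-U)+\gamma\bigl(\widehat P^{\pi,U}-P^{\pi,U}\bigr)U .
\]
Rearranging, $(I-\gamma\widehat P^{\pi,U})(\widehat U-U)\le\gamma(\widehat P^{\pi,U}-P^{\pi,U})U$. The resolvent $(I-\gamma\widehat P^{\pi,U})^{-1}=\sum_t\gamma^t(\widehat P^{\pi,U})^t$ has nonnegative entries, so multiplying by it preserves the inequality. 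Bounding the right-hand side by its absolute value and using $\gamma\le1$ then gives
\[
    \widehat U-U\le(I-\gamma\widehat P^{\pi,U})^{-1}\bigl|(\widehat P^{\pi,U}-P^{\pi,U})U\bigr| .
\]
This is the first term of the maximum.

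\textbf{Lower direction.} Decompose
\[
    U-\widehat U
    =\gamma\bigl(P^{\pi,U}-\widehat P^{\pi,U}\bigr)U
    +\gamma\bigl(\widehat P^{\pi,U}U-\widehat P^{\pi,\widehat U}U\bigr)
    +\gamma\widehat P^{\pi,\widehat U}(U-\widehat U).
\]
The middle term is nonpositive by the key fact with $V=U$ and $V'=\widehat U$. Therefore
\[
    (I-\gamma\widehat P^{\pi,\widehat U})(U-\widehat U)\le\gamma\bigl|(\widehat P^{\pi,U}-P^{\pi,U})U\bigr| .
\]
Inverting with the nonnegative resolvent $(I-\gamma\widehat P^{\pi,\widehat U})^{-1}$ yields the second term of the maximum.

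\textbf{Conclusion.} Both one-sided bounds have nonnegative right-hand sides. So at each state, $|\widehat U-U|(s)$ is at most the maximum of the two resolvent expressions at $s$. Taking the sup-norm gives \eqref{eq:two-sided-fixed-value-decomposition}.

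The only delicate point is the bookkeeping of which worst-case kernel is paired with which vector in the two directions. The essential choice is to keep the perturbation term at the fixed vector $U$ with kernels selected at $U$ in both directions. This choice matters later, when Lemma~\ref{lem:one-step-robust-concentration} is applied with the sample-independent vector $U$, which requires that only $U$-selected kernels appear in the residual.
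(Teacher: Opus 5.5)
Your proof is correct and follows essentially the same route as the paper's. The paper also derives the two one-sided bounds from the selector-optimality inequalities $\widehat P^{\pi,\widehat U}\widehat U\le\widehat P^{\pi,U}\widehat U$ and $\widehat P^{\pi,U}U\le\widehat P^{\pi,\widehat U}U$, inverts with the nonnegative resolvents, and combines the directions using $|Ax|\le A|x|$ and $\gamma\le 1$; the only difference is that you pass to absolute values before inverting rather than after, which is equivalent.
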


For simplicity, we first focus on the first term of
\eqref{eq:two-sided-fixed-value-decomposition}.
Invoking Lemma~\ref{lem:one-step-robust-concentration} with
$\pi=\pi_\gamma^\star$ and $V=U$ and using the
nonnegativity of the resolvent, we obtain
\begin{align}\label{eq:two-sided-fixed-value-decomposition-first-branch}
     & \left\|
    \left(I-\gamma\widehat P^{\pi_\gamma^\star,U}\right)^{-1}
    \left|
    \left(
    \widehat P^{\pi_\gamma^\star,U}
    -
    P^{\pi_\gamma^\star,U}
    \right)
    U
    \right|
    \right\|_\infty
    \\
     & \quad\le
    \left\|
    \left(I-\gamma\widehat P^{\pi_\gamma^\star,U}\right)^{-1}
    \frac{\|U\|_{\mathrm{span}}\cdot\iota}{N}\cdot \bm{1}_S
    +
    \sqrt{\frac{2\cdot\iota}{N}}\,
    \left(I-\gamma\widehat P^{\pi_\gamma^\star,U}\right)^{-1}
    \sqrt{\Var_{P^{0,\pi_\gamma^\star}}(U)}
    \right\|_\infty.
\end{align}
We further decompose the variance term. By elementary algebra, we have that 
\begin{align*}
    \sqrt{\Var_{P^{0,\pi_\gamma^\star}}(U)}
    \le{}&
    \sqrt{\Var_{\widehat P^{0,\pi_\gamma^\star}}(U)}
    +
    \left|
    \sqrt{\Var_{P^{0,\pi_\gamma^\star}}(U)}
    -
    \sqrt{\Var_{\widehat P^{0,\pi_\gamma^\star}}(U)}
    \right|,
\end{align*}
and
\begin{align*}
    \sqrt{\Var_{\widehat P^{0,\pi_\gamma^\star}}(U)}
    \le{}&
    \sqrt{\Var_{\widehat P^{\pi_\gamma^\star,U}}(U)}
    +
    \sqrt{
        \left|
        \Var_{\widehat P^{0,\pi_\gamma^\star}}(U)
        -
        \Var_{\widehat P^{\pi_\gamma^\star,U}}(U)
        \right|
    }.
\end{align*}
Substituting these into \eqref{eq:two-sided-fixed-value-decomposition-first-branch} gives
\[
    \left\|
    \left(I-\gamma\widehat P^{\pi_\gamma^\star,U}\right)^{-1}
    \left|
    \left(
    \widehat P^{\pi_\gamma^\star,U}
    -
    P^{\pi_\gamma^\star,U}
    \right)
    U
    \right|
    \right\|_\infty
    \le
    \left\|\sum_{i=1}^{4}T_i\right\|_\infty.
\] 
where $T_1,\ldots,T_4$ are defined by
\begin{equation}
    \label{eq:T1-T4-definitions}
    \begin{aligned}
        T_1 & \coloneqq \,
        \left(I-\gamma\widehat P^{\pi_\gamma^\star,U}\right)^{-1} \frac{\|U\|_{\mathrm{span}}\cdot\iota}{N}\cdot \bm{1}_S
        \\
        T_2
            & \coloneqq
        \sqrt{\frac{2\cdot\iota}{N}}\,
        \left(I-\gamma\widehat P^{\pi_\gamma^\star,U}\right)^{-1}
        \sqrt{\Var_{\widehat P^{\pi_\gamma^\star,U}}(U)},
        \\
        T_3
            & \coloneqq
        \sqrt{\frac{2\cdot\iota}{N}}\,
        \left(I-\gamma\widehat P^{\pi_\gamma^\star,U}\right)^{-1}
        \sqrt{
            \left|
            \Var_{\widehat P^{0,\pi_\gamma^\star}}(U)
            -
            \Var_{\widehat P^{\pi_\gamma^\star,U}}(U)
            \right|
        },
        \\
        T_4
            & \coloneqq
        \sqrt{\frac{2\cdot\iota}{N}}\,
        \left(I-\gamma\widehat P^{\pi_\gamma^\star,U}\right)^{-1}
        \left|
        \sqrt{\Var_{P^{0,\pi_\gamma^\star}}(U)}
        -
        \sqrt{\Var_{\widehat P^{0,\pi_\gamma^\star}}(U)}
        \right|.
    \end{aligned}
\end{equation}
Repeating the steps used for
\eqref{eq:two-sided-fixed-value-decomposition-first-branch} for the second term
in \eqref{eq:two-sided-fixed-value-decomposition}, we obtain
\[
    \left\|
    \left(I-\gamma\widehat P^{\pi_\gamma^\star,\widehat U}\right)^{-1}
    \left|
    \left(
    \widehat P^{\pi_\gamma^\star,U}
    -
    P^{\pi_\gamma^\star,U}
    \right)
    U
    \right|
    \right\|_\infty
    \le
    \left\|\sum_{i=5}^{8}T_i\right\|_\infty.
\]
where
$T_5,\ldots,T_8$ are defined by
\begin{equation}
    \label{eq:T5-T8-definitions}
    \begin{aligned}
        T_5
            & \coloneqq
        \,
        \left(I-\gamma\widehat P^{\pi_\gamma^\star,\widehat U}\right)^{-1} \frac{\|U\|_{\mathrm{span}}\cdot\iota}{N}\cdot \bm{1}_S
        \\
        T_6
            & \coloneqq
        \sqrt{\frac{2\cdot\iota}{N}}\,
        \left(I-\gamma\widehat P^{\pi_\gamma^\star,\widehat U}\right)^{-1}
        \sqrt{
            \Var_{\widehat P^{\pi_\gamma^\star,\widehat U}}
            (U)
        },
        \\
        T_7
            & \coloneqq
        \sqrt{\frac{2\cdot\iota}{N}}\,
        \left(I-\gamma\widehat P^{\pi_\gamma^\star,\widehat U}\right)^{-1}
        \sqrt{
            \left|
            \Var_{\widehat P^{0,\pi_\gamma^\star}}(U)
            -
            \Var_{\widehat P^{\pi_\gamma^\star,\widehat U}}
            (U)
            \right|
        },
        \\
        T_8
            & \coloneqq
        \sqrt{\frac{2\cdot\iota}{N}}\,
        \left(I-\gamma\widehat P^{\pi_\gamma^\star,\widehat U}\right)^{-1}
        \left|
        \sqrt{\Var_{P^{0,\pi_\gamma^\star}}(U)}
        -
        \sqrt{\Var_{\widehat P^{0,\pi_\gamma^\star}}(U)}
        \right|.
    \end{aligned}
\end{equation}
Putting the two parts together, we have that 
\begin{equation}\label{eq:V-hat-V-pi-star-C-decomp}
    \big\|\widehat U
    -U\big\|_{\infty}
    \le \max\left\{
    \left\|\sum_{i=1}^4 T_i\right\|_\infty,
    \left\|\sum_{i=5}^8 T_i\right\|_\infty
    \right\}.
\end{equation}
\begin{remark}
    This decomposition differs from \citet{shi2023curious} by merging their $\cC_4$ and $\cC_5$ terms into $T_6$. This avoids the linear self-perturbation term in their argument, which would impose a stronger sample-complexity condition than needed for our span-aware reduction. We instead control the variance term directly to obtain the required bound under the weaker condition. 
\end{remark}
It remains to bound the terms in these two branches.

\paragraph*{Step 2: Bounding the error terms $T_1,\ldots,T_8$.}
\paragraph*{Bound on $T_1$ and $T_5$.}
For any stochastic matrix $P$, expanding the matrix inverse yields
\[
    \left(I-\gamma P\right)^{-1} \bm{1}_S
    \le
    \sum_{t=0}^\infty \gamma^t P^t \bm{1}_S.
\]
Since $P$ is a stochastic matrix, $P^t \bm{1}_S = \bm{1}_S$ for all $t\ge 0$. Therefore,
\begin{equation}
    \left(I-\gamma P\right)^{-1} \bm{1}_S
    \le
    \frac{1}{1-\gamma}\cdot \bm{1}_S.
    \label{eq:resolvent-bound}
\end{equation}
Substituting this into the definition of $T_1$ in
\eqref{eq:T1-T4-definitions} and using
$\|U\|_{\mathrm{span}}\le R$, we have that
\begin{equation}
    T_1 \le
    \frac{R\cdot\iota}{N(1-\gamma)}\cdot \bm{1}_S.
    \label{eq:T1-bound}
\end{equation}
Applying \eqref{eq:resolvent-bound} to the definition of $T_5$ in
\eqref{eq:T5-T8-definitions} produces
\begin{equation}
    T_5 \le
    \frac{R\cdot\iota}{N(1-\gamma)}\cdot \bm{1}_S.
    \label{eq:T5-bound}
\end{equation}
\paragraph*{Bound on $T_2$ and $T_6$.}
The following lemma converts Bellman and anchor residuals into a
resolvent-variance bound. It is the common tool behind the matched terms of the form $(\bm{I}-\gamma P)^{-1}\sqrt{\Var_P(V)}$, where $V$ is a value function and $P$ is a stochastic kernel. Its proof is
deferred to Appendix~\ref{subsec:Proof-lemma-span-aware-resolvent-variance}.
\begin{lemma}[Anchored resolvent-variance bound]
    \label{lem:span-aware-resolvent-variance}
    Let $(\bar\rho,\bar h)$ satisfy
    \mainNominalAnchorSupersolution{}, and let $H_{\mathrm{anc}}$ be
    defined by \eqref{eq:anchor-span-scale}. Fix a radius
    $R\ge\max\{1,H_{\mathrm{anc}}\}$. Let $P$ be a stochastic kernel selected
    from the true or empirical robust uncertainty set, and let $b$ be a
    residual vector.
    Suppose that
    \begin{equation}
        \label{eq:anchored-resolvent-value-premises}
        \|V\|_{\mathrm{span}}\le R,
        \qquad
        (1-\gamma)\min_sV(s)\le1,
        \qquad
        V=r^\pi+\gamma PV+b.
    \end{equation}
    Define the anchor defect associated with $V$ by
    \begin{equation}
        \label{eq:generic-anchor-defect}
        \beta
        \coloneqq
        \left[\bar\rho-(1-\gamma)\min_sV(s)\right]_+.
    \end{equation}
    Let $\xi_V,\xi_h\in\mathbb R^S$ be residual envelopes such that, for a
    universal constant $C_{\mathrm{res}}$,
    \begin{equation}
        \label{eq:anchored-resolvent-residual-envelopes}
        |b|\le |\xi_V|,
        \qquad
        |(P-P^{0,\pi})\bar h|
        \le
        C_{\mathrm{res}}\sigma H_{\mathrm{anc}}\cdot \bm{1}_S+|\xi_h|.
    \end{equation}
    Then, for a universal constant $C>0$,
    \[
        \left\|
        (I-\gamma P)^{-1}\sqrt{\Var_P(V)}
        \right\|_\infty
        \le
        \frac{C}{1-\gamma}
        \sqrt{
            B(R,\beta)
            +R\|\mathcal G_P |\xi_V|\|_\infty
            +R\|\mathcal G_P |\xi_h|\|_\infty
        },
    \]
    where $B(R,\beta)$ and $\mathcal G_P$ are defined in
    \eqref{eq:generic-localized-budget} and
    \eqref{eq:normalized-discounted-resolvent}, respectively.
\end{lemma}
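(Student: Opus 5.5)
The plan is to reduce everything to a bound on the resolvent applied to the variance itself, and then to a telescoping (total-variance) argument in which the anchor $(\bar\rho,\bar h)$ absorbs the reward term. First, since $(I-\gamma P)^{-1}$ is a nonnegative matrix with row sums $1/(1-\gamma)$, Jensen/Cauchy--Schwarz row by row gives
\[
    \left\|(I-\gamma P)^{-1}\sqrt{\Var_P(V)}\right\|_\infty
    \le
    \sqrt{\frac{1}{1-\gamma}\left\|(I-\gamma P)^{-1}\Var_P(V)\right\|_\infty}.
\]
It therefore suffices to show
\[
    \left\|(I-\gamma P)^{-1}\Var_P(V)\right\|_\infty
    \le
    \frac{C}{1-\gamma}\Bigl(B(R,\beta)+R\|\mathcal G_P|\xi_V|\|_\infty+R\|\mathcal G_P|\xi_h|\|_\infty\Bigr).
\]
Because variances are shift invariant, I will work with $\bar V\coloneqq V-m\mathbf 1$, where $m=\min_sV(s)$. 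This gives $0\le\bar V\le R$, and the equation $V=r^\pi+\gamma PV+b$ becomes $\bar V-\gamma P\bar V=r^\pi-(1-\gamma)m\mathbf 1+b$.

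Next, split $\Var_P(\bar V)=\bigl(P\bar V^2-\bar V^2\bigr)+\bigl(\bar V^2-(P\bar V)^2\bigr)$. For the first bracket, I use $(I-\gamma P)^{-1}(P-I)x=-x+(1-\gamma)(I-\gamma P)^{-1}Px$ with $x=\bar V^2\in[0,R^2]$. This shows its resolvent is at most $2R^2$, which is of order $(1-\gamma)R^2/(1-\gamma)$ and hence within the budget. For the second bracket, factor $\bar V^2-(P\bar V)^2=(\bar V-P\bar V)\circ(\bar V+P\bar V)$ and expand
\[
    \bar V-P\bar V=r^\pi-(1-\gamma)m\mathbf 1-(1-\gamma)P\bar V+b .
\]
Then insert the anchor supersolution \eqref{eq:nominal-anchor-supersolution} averaged over $\pi$, namely $r^\pi\le\bar\rho\mathbf 1+\bar h-P^{0,\pi}\bar h$, together with the residual envelope, to get
\[
    r^\pi\le\bar\rho\mathbf 1+(\bar h-P\bar h)+C_{\mathrm{res}}\sigma H_{\mathrm{anc}}\mathbf 1+|\xi_h|.
\]
The scalar part $\bar\rho-(1-\gamma)m$ is at most $\beta$. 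The weight $\bar V+P\bar V\le 2R$ then turns the terms $\beta$, $\sigma H_{\mathrm{anc}}\le\sigma R$, $|\xi_h|$ and $|\xi_V|$ into $R\beta/(1-\gamma)$, $\sigma R^2/(1-\gamma)$, $R\|\mathcal G_P|\xi_h|\|_\infty/(1-\gamma)$ and $R\|\mathcal G_P|\xi_V|\|_\infty/(1-\gamma)$ after applying the resolvent. The hypothesis $(1-\gamma)m\le1$ is what lets me drop the $-(1-\gamma)P\bar V$ and $m$ contributions via a sign argument, leaving at most an extra $(1-\gamma)R^2$ term.

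The main obstacle is the remaining cross term $(I-\gamma P)^{-1}\bigl[(\bar V+P\bar V)\circ(\bar h-P\bar h)\bigr]$. The factor $\bar h-P\bar h$ is sign-indefinite, so I cannot bound the weight by $2R$; doing so would produce $RH_{\mathrm{anc}}$, which is not in $B(R,\beta)$. My first attempt is a discrete product rule. Write
\[
    (\bar V+P\bar V)\circ(\bar h-P\bar h)=\bigl[\bar V\bar h-P(\bar V\bar h)\bigr]+\bigl[P(\bar V\bar h)-(P\bar V)\circ(P\bar h)\bigr]+\text{(reshuffled terms)},
\]
so that the first bracket telescopes under the resolvent (as above, to $O(RH_{\mathrm{anc}})$, which is within $(1-\gamma)R^2/(1-\gamma)$ only if $H_{\mathrm{anc}}\lesssim R$). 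The second bracket is a covariance $\mathrm{Cov}_P(\bar V,\bar h)$, which I bound by $\tfrac14\Var_P(\bar V)+\Var_P(\bar h)$ and move the $\Var_P(\bar V)$ piece to the left-hand side. This in turn requires $\|(I-\gamma P)^{-1}\Var_P(\bar h)\|_\infty\lesssim H_{\mathrm{anc}}/(1-\gamma)+\cdots$, which I would prove by running the same total-variance argument on $\bar h$ itself. For $\bar h$ the drift $\bar h-P\bar h\ge r^\pi-\bar\rho-C\sigma H_{\mathrm{anc}}-|\xi_h|\ge-1-C\sigma H_{\mathrm{anc}}-|\xi_h|$ is bounded below, so weighting by $\bar h+P\bar h\le2H_{\mathrm{anc}}$ after rewriting $\bar h^2-(P\bar h)^2$ should leave only $H_{\mathrm{anc}}$-linear terms plus $\sigma H_{\mathrm{anc}}^2\le\sigma R^2$. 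If this bookkeeping does not close directly, the fallback is to apply the total-variance argument to $Z=\bar V-\bar h$, whose one-step drift contains no reward term. Then $\Var_P(V)\le2\Var_P(Z)+2\Var_P(\bar h)$, and I would bound the two pieces separately by the same telescoping, using $\|Z\|_{\mathrm{span}}\le2R$.
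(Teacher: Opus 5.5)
Your fallback is the paper's proof. The paper proves $\nu_s\Var_P(f)\le\frac{1-\gamma}{\gamma}\|f\|_{\mathrm{span}}^2+\frac{2}{\gamma}\|f\|_{\mathrm{span}}\,\nu_s[f-Pf]_+$ for $\nu_s=(1-\gamma)e_s^\top(I-\gamma P)^{-1}$, which is your telescoping combined with the factorization $(f-Pf)\circ(f+Pf)$. It applies this to $-\bar h$ and to $u=V-\bar h$ (your $Z$ up to a constant), whose drift contains no reward term, and finishes with $\Var_P(V)\le2\Var_P(\bar h)+2\Var_P(u)$ and Jensen. Commit to that route. The product-rule/covariance attempt is unnecessary, and as written it leaves the reshuffled terms $(P\bar V)\circ\bar h-\bar V\circ(P\bar h)$ uncontrolled.

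There is one correction to your bookkeeping. The hypothesis $(1-\gamma)\min_sV(s)\le1$ is not what removes $-(1-\gamma)P\bar V$. That term is nonpositive because $\bar V\ge0$, and the $m$-term is absorbed into $\beta$ by definition. The hypothesis is needed in the $\bar h$ estimate, where you used $r^\pi-\bar\rho\ge-1$. The lemma does not assume $\bar\rho\le1$, and in the span-agnostic application $\bar\rho=\widehat\rho^+_{\mathrm{anc}}$ can exceed $1$. Instead, $\beta\ge\bar\rho-(1-\gamma)\min_sV(s)\ge\bar\rho-1$ gives $\bar\rho\le1+\beta$. That piece then contributes $H_{\mathrm{anc}}(1+\beta)\le H_{\mathrm{anc}}+R\beta$, which is within $B(R,\beta)$.

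Also, the anchor controls $[P\bar h-\bar h]_+$, not $[\bar h-P\bar h]_+$. So apply the drift inequality to $-\bar h$, as the paper does. Alternatively, use $\nu_s(f-Pf)=\frac{1-\gamma}{\gamma}(f(s)-\nu_sf)$ to trade one part for the other at an $O((1-\gamma)H_{\mathrm{anc}})$ cost.
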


Recall from \eqref{eq:T1-T4-definitions} that
\[
    T_2 = \sqrt{\frac{2\cdot\iota}{N}} \left(I-\gamma\widehat{P}^{\pi_\gamma^\star,U}\right)^{-1} \sqrt{\Var_{\widehat{P}^{\pi_\gamma^\star,U}}(U)}.
\]

The following lemma is an application of Lemma~\ref{lem:span-aware-resolvent-variance} that bounds $T_2$. Its proof is deferred to
Appendix~\ref{subsec:proof-T2-anchored-variance-bound}.
\begin{lemma}[Matched anchored variance bound]
    \label{lem:T2-anchored-variance-bound}
    Let $R\ge\max\{1,H_{\mathrm{anc}}\}$ be a deterministic radius and suppose
    \eqref{eq:discounted-localized-span-condition} and
    \eqref{eq:recursive-comparison-sample-conditions} hold. On the event
    $\mathcal E(R)$,
    \begin{equation}
        \label{eq:T2-bound}
        T_2
        \le
        C\sqrt{
            \frac{B_\star(R)\cdot\iota}
            {N(1-\gamma)^2}}\cdot\bm{1}_S
        +C\frac{R\cdot\iota}{N(1-\gamma)}\cdot\bm{1}_S.
    \end{equation}
\end{lemma}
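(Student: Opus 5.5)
The plan is to apply Lemma~\ref{lem:span-aware-resolvent-variance} with $P=\widehat P^{\pi_\gamma^\star,U}$, $V=U$ and $\pi=\pi_\gamma^\star$, which turns $T_2$ into a self-referential inequality that we then close. The premises of that lemma are verified as follows.
\begin{itemize}
\item Under \eqref{eq:discounted-localized-span-condition} we have $\|U\|_{\mathrm{span}}\le R$.
\item By \eqref{eq:discounted-value-range}, $(1-\gamma)\min_sU(s)\le1$.
\item The true robust Bellman equation $U=r^{\pi}+\gamma P^{\pi,U}U$ gives $U=r^\pi+\gamma PU+b$ with
\[
b=\gamma\bigl(P^{\pi,U}-\widehat P^{\pi,U}\bigr)U .
\]
\item The defect $\beta$ associated with $U$ is exactly $\beta_\star$, so the budget appearing in the lemma is $B(R,\beta_\star)=B_\star(R)$.
\end{itemize}
For the residual envelopes, Lemma~\ref{lem:one-step-robust-concentration} (applied with the fixed vector $U$) gives
\[
\xi_V=\sqrt{2\Var_{P^{0,\pi}}(U)\,\iota/N}+R\,\iota/N\,\mathbf 1 .
\]
For the anchor residual, we split
\[
(\widehat P^{\pi,U}-P^{0,\pi})\bar h=(\widehat P^{\pi,U}-\widehat P^{0,\pi})\bar h+(\widehat P^{0,\pi}-P^{0,\pi})\bar h .
\]
The first piece is at most $\sigma\|\bar h\|_{\mathrm{span}}\le\sigma H_{\mathrm{anc}}$ by the TV-span inequality, since $\widehat P^{\pi,U}$ lies in the TV ball around $\widehat P^0$. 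The second piece is handled by Lemma~\ref{lem:nominal-fixed-vector-concentration} with $g=\bar h$, which is sample-independent, giving
\[
\xi_h=C\sqrt{\Var_{P^{0,\pi}}(\bar h)\,\iota/N}+C H_{\mathrm{anc}}\,\iota/N .
\]

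Writing $X\coloneqq\|(I-\gamma P)^{-1}\sqrt{\Var_P(U)}\|_\infty$, Lemma~\ref{lem:span-aware-resolvent-variance} yields
\[
(1-\gamma)X\le C\sqrt{B_\star(R)+R\|\mathcal G_P|\xi_V|\|_\infty+R\|\mathcal G_P|\xi_h|\|_\infty}.
\]
For the $\xi_V$ term, we relate $\sqrt{\Var_{P^{0,\pi}}(U)}$ back to $\sqrt{\Var_P(U)}$ through $\widehat P^{0,\pi}$, in two steps.
\begin{itemize}
\item Lemma~\ref{lem:empirical-std-perturbation}, applied to $U$, controls $\bigl|\sqrt{\Var_{P^{0,\pi}}(U)}-\sqrt{\Var_{\widehat P^{0,\pi}}(U)}\bigr|$ by $CR\sqrt{\iota/N}$, up to lower-order terms.
\item The variances under $\widehat P^{0,\pi}$ and $\widehat P^{\pi,U}$ are compared by $|\Var_p(U)-\Var_q(U)|\le C\|p-q\|_{\mathrm{TV}}R^2\le C\sigma R^2$.
\end{itemize}
Together these give
\[
R\|\mathcal G_P|\xi_V|\|_\infty\le R\sqrt{\iota/N}\,(1-\gamma)X+CR^2\sqrt{\sigma\iota/N}+CR^2\iota/N .
\]
Since $R^2\sqrt{\sigma\iota/N}\le\sigma R^2+R^2\iota/N$, the middle term is absorbed into $B_\star(R)$ and the $R^2\iota/N$ term. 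The term linear in $X$ is removed by solving the resulting quadratic inequality in $X$ via AM--GM. Multiplying through by $\sqrt{2\iota/N}$, each leftover piece becomes either $\sqrt{B_\star(R)\iota/(N(1-\gamma)^2)}$ or $R\iota/(N(1-\gamma))$, which gives \eqref{eq:T2-bound}.

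The main obstacle is the anchor term $R\sqrt{\iota/N}\,\|\mathcal G_P\sqrt{\Var_{P^{0,\pi}}(\bar h)}\|_\infty$. The crude bound by $RH_{\mathrm{anc}}\sqrt{\iota/N}$ cannot be absorbed without the too-strong condition $N\gtrsim\iota/(1-\gamma)^2$. The first thing I would try is Jensen, $\mathcal G_P\sqrt{\cdot}\le\sqrt{\mathcal G_P(\cdot)}$, followed by the telescoping identity behind Lemma~\ref{lem:span-aware-resolvent-variance} applied to $\bar h$. Using the supersolution property \eqref{eq:nominal-anchor-supersolution}, this should show that $\mathcal G_P\Var_{P^{0,\pi}}(\bar h)$ is at most $CH_{\mathrm{anc}}$ times (anchor defect plus $\sigma H_{\mathrm{anc}}$ plus $(1-\gamma)H_{\mathrm{anc}}$), plus a further $\xi_h$ contribution that must be closed by a secondary self-bound. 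Given such an estimate, $R\sqrt{\iota/N}\sqrt{H_{\mathrm{anc}}(\cdots)}\le B_\star(R)+CR^2\iota/N$ by AM--GM, and the leftover is again of order $R\iota/(N(1-\gamma))$ after the outer factor $\sqrt{\iota/N}/(1-\gamma)$. The radius-independent sample conditions \eqref{eq:recursive-comparison-sample-conditions} are used to keep the coefficients of these self-referential terms small.
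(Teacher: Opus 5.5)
Your proposal is correct and follows the paper's proof in all essentials. The paper likewise applies Lemma~\ref{lem:span-aware-resolvent-variance} with the matched kernel $\widehat P^{\pi_\gamma^\star,U}$, controls the two residuals by Lemmas~\ref{lem:one-step-robust-concentration} and~\ref{lem:nominal-fixed-vector-concentration}, transfers $\sqrt{\Var_{P^{0,\pi_\gamma^\star}}(f)}$ to $\sqrt{\Var_{\widehat P^{\pi_\gamma^\star,U}}(f)}$ for both $f=U$ and $f=\bar h$ via Lemmas~\ref{lem:empirical-std-perturbation} and~\ref{lem:standard-deviation-TV-perturbation}, and bounds the anchor standard deviation exactly as you propose, by Jensen together with the drift bound \eqref{eq:anchor-variance-average}. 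The remaining differences are bookkeeping: the paper closes one joint self-bounding inequality in $\|\mathcal G|\xi_V|\|_\infty+\|\mathcal G|\xi_h|\|_\infty$ instead of one in your $X$ followed by a secondary anchor self-bound.

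One step should be stated explicitly rather than left implicit. The drift inequality needs the variance of $\bar h$ under the same kernel $\widehat P^{\pi_\gamma^\star,U}$ as the resolvent. So before invoking it, you must apply to $\bar h$ the same nominal-to-empirical-robust transfer that you carried out for $U$.
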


The resolvent kernel in $T_6$ is selected by $\widehat U$, whereas its
variance is evaluated at $U$. The following lemma is an application of Lemma~\ref{lem:span-aware-resolvent-variance} that bounds $T_6$;
its proof is deferred to
Appendix~\ref{subsec:proof-T6-anchored-variance-bound}.
\begin{lemma}[Mismatched anchored variance bound for $T_6$]
    \label{lem:T6-anchored-variance-bound}
    Let $R\ge\max\{1,H_{\mathrm{anc}}\}$ be a deterministic radius and suppose
    \eqref{eq:discounted-localized-span-condition} and
    \eqref{eq:recursive-comparison-sample-conditions} hold. On the event
    $\mathcal E(R)$,
    \begin{align}
        T_6
         & \le
        C\sqrt{
            \frac{B_\star(R)\cdot\iota}
            {N(1-\gamma)^2}}\cdot \bm{1}_S
        +C\frac{R\cdot\iota}{N(1-\gamma)}\cdot \bm{1}_S
        +\frac{1}{50}\Delta\cdot \bm{1}_S.
        \label{eq:T6-bound-young}
    \end{align}
\end{lemma}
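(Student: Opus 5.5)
The plan is to reduce $T_6$ to a matched resolvent-variance term for which Lemma~\ref{lem:span-aware-resolvent-variance} applies, and to pay for the mismatch between $U$ and $\widehat U$ with a term that is linear in $\Delta$ and carries a small coefficient. Write $\widehat P\coloneqq\widehat P^{\pi_\gamma^\star,\widehat U}$ and $D\coloneqq U-\widehat U$, so that $\|D\|_\infty\le\Delta$. Since $\sqrt{\Var_{\widehat P}(\cdot)}$ is a seminorm rowwise, we have $\sqrt{\Var_{\widehat P}(U)}\le\sqrt{\Var_{\widehat P}(\widehat U)}+\sqrt{\Var_{\widehat P}(D)}$. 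Using nonnegativity of the resolvent, this splits $T_6$ into a matched piece $T_6'$ (variance of $\widehat U$) and a mismatch piece $T_6''$ (variance of $D$).

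For the matched piece, $\widehat U=r^{\pi_\gamma^\star}+\gamma\widehat P\widehat U$ holds exactly because $\widehat P$ is the minimizing kernel at $\widehat U$ in the empirical uncertainty set. We therefore apply Lemma~\ref{lem:span-aware-resolvent-variance} with $V=\widehat U$, $P=\widehat P$, and $b=\xi_V=0$.
\begin{itemize}
\item The premises are satisfied: $\|\widehat U\|_{\mathrm{span}}\le R$ holds by \eqref{eq:discounted-localized-span-condition}, and $(1-\gamma)\min_s\widehat U(s)\le1$ holds by \eqref{eq:discounted-value-range}.
\item For the anchor residual, split $(\widehat P-P^{0,\pi_\gamma^\star})\bar h$ into $(\widehat P-\widehat P^{0,\pi_\gamma^\star})\bar h$ and $(\widehat P^{0,\pi_\gamma^\star}-P^{0,\pi_\gamma^\star})\bar h$.
\item The first part is at most $\sigma H_{\mathrm{anc}}$, by the TV radius and the span of $\bar h$.
\item The second part is controlled by Lemma~\ref{lem:nominal-fixed-vector-concentration}, since $\bar h$ is fixed independently of the data. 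This gives $R\|\mathcal G_{\widehat P}|\xi_h|\|_\infty\lesssim RH_{\mathrm{anc}}\sqrt{\iota/N}+RH_{\mathrm{anc}}\iota/N$.
\item By AM--GM and $H_{\mathrm{anc}}\le R$, this is absorbed into $(1-\gamma)R^2+H_{\mathrm{anc}}+R\iota/N$, using $N\ge C\iota/(1-\gamma)$. The empirical standard-deviation perturbation lemma for $\bar h$ is available on $\mathcal E(R)$ if a sharper form is needed.
\item The defect of $\widehat U$ satisfies $[\bar\rho-(1-\gamma)\min\widehat U]_+\le\beta_\star+(1-\gamma)\Delta$, exactly as in Step~2 of the proof of Lemma~\ref{lem:span-value-approx}.
\end{itemize}
Hence $B(R,\beta)\le B_\star(R)+R(1-\gamma)\Delta$. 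Multiplying by $\sqrt{2\iota/N}$ gives $T_6'\lesssim\sqrt{B_\star(R)\iota/(N(1-\gamma)^2)}+\sqrt{R\Delta\iota/(N(1-\gamma))}$ plus lower-order terms. Young's inequality then bounds the last term by $\Delta/100+CR\iota/(N(1-\gamma))$.

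For the mismatch piece, Cauchy--Schwarz on the resolvent gives $\|(I-\gamma\widehat P)^{-1}\sqrt{\Var_{\widehat P}(D)}\|_\infty\le(1-\gamma)^{-1/2}\|(I-\gamma\widehat P)^{-1}\Var_{\widehat P}(D)\|_\infty^{1/2}$. Next, write $e\coloneqq D-\gamma\widehat PD$ and use the telescoping identity $\Var_{\widehat P}(D)=\widehat P(D\circ D)-(\widehat PD)\circ(\widehat PD)$, together with $\widehat PD=(D-e)/\gamma$. Summing over the resolvent gives
\[
(I-\gamma\widehat P)^{-1}\Var_{\widehat P}(D)\lesssim\|D\|_\infty^2+\|D\|_\infty\,\bigl\|(I-\gamma\widehat P)^{-1}|e|\bigr\|_\infty .
\]
The residual $e=\gamma(P^{\pi_\gamma^\star,U}U-\widehat P^{\pi_\gamma^\star,\widehat U}\widehat U)-\gamma\widehat PD$ reduces to $\gamma(P^{\pi_\gamma^\star,U}-\widehat P^{\pi_\gamma^\star,\widehat U})U$. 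By the minimizer ordering, this is sandwiched by the one-step error $|(\widehat P^{\pi_\gamma^\star,U}-P^{\pi_\gamma^\star,U})U|$ used in Lemma~\ref{lem:two-sided-fixed-value-decomposition}, whose resolvent is itself of order $\Delta$ on $\mathcal E(R)$. Therefore $(I-\gamma\widehat P)^{-1}\Var_{\widehat P}(D)\lesssim\Delta^2$, and
\[
T_6''\lesssim\Delta\sqrt{\iota/(N(1-\gamma))}\le\Delta/100
\]
under $N\ge C\iota/(1-\gamma)$ from \eqref{eq:recursive-comparison-sample-conditions}.

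Summing $T_6'$ and $T_6''$ yields \eqref{eq:T6-bound-young}. The main obstacle is the mismatch term $T_6''$. The naive bound $\sqrt{\Var}\le\|D\|_\infty$ would give $\Delta\sqrt{\iota/(N(1-\gamma)^2)}$, which would require the blanket condition $N\gtrsim(1-\gamma)^{-2}$. The telescoping argument is what gains the extra $\sqrt{1-\gamma}$, and I expect the delicate point to be controlling $\|(I-\gamma\widehat P)^{-1}|e|\|_\infty$ by $C\Delta$ without circularity. If the sandwich argument fails, the fallback is to bound that resolvent by the already-established $T_1,\ldots,T_4$ estimates plus $\Delta$ itself.
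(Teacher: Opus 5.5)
Your mismatch piece does not close as written. With $\widehat P=\widehat P^{\pi_\gamma^\star,\widehat U}$ and $\xi_V=(\widehat P^{\pi_\gamma^\star,U}-P^{\pi_\gamma^\star,U})U$, several steps are fine: the reduction $(I-\gamma\widehat P)^{-1}\Var_{\widehat P}(D)\lesssim\Delta^2+\Delta\,\|(I-\gamma\widehat P)^{-1}|e|\|_\infty$ holds, and $e=\gamma(P^{\pi_\gamma^\star,U}-\widehat P^{\pi_\gamma^\star,\widehat U})U$ is exactly the Bellman residual $b$ of the paper's proof. Your ``sandwich'' is only one-sided: $e=-\gamma\xi_V+\gamma(\widehat P^{\pi_\gamma^\star,U}-\widehat P^{\pi_\gamma^\star,\widehat U})U$, with the last term in $[-4\gamma\sigma\Delta_U,0]$. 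That is harmless, because \eqref{eq:discounted-variance-drift} with $f=D$ only needs $[D-\widehat PD]_+\le|\xi_V|+(1-\gamma)\Delta$, and the side $e\le-\gamma\xi_V$ gives this.

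The step that fails is the claim that $\|(I-\gamma\widehat P)^{-1}|\xi_V|\|_\infty$ is of order $\Delta$. Lemma~\ref{lem:two-sided-fixed-value-decomposition} bounds $\Delta_U$ \emph{by} this resolvent (or its $\widehat P^{\pi_\gamma^\star,U}$ counterpart), not conversely. The absolute value destroys the sign cancellations that can make $\widehat U-U$ small, so this quantity can exceed $\Delta_U$ by a factor of order $(1-\gamma)^{-1}$. It is in fact the second branch of that lemma, whose bound $\|\sum_{i=5}^{8}T_i\|_\infty$ contains $T_6$ itself, so the claim is circular. Your fallback through $T_1,\ldots,T_4$ does not apply either: those carry the resolvent of $\widehat P^{\pi_\gamma^\star,U}$, which is not entrywise comparable to that of $\widehat P$.

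The missing idea is a self-bounding step. Apply the resolvent form of Lemma~\ref{lem:one-step-robust-concentration} with $L=\widehat P$, then Lemma~\ref{lem:empirical-std-perturbation} for the fixed $U$ and Lemma~\ref{lem:standard-deviation-TV-perturbation}. This gives $\|(I-\gamma\widehat P)^{-1}|\xi_V|\|_\infty\le C\|T_6\|_\infty+\Theta$, where $\Theta\coloneqq C\sqrt{B_\star(R)\iota/(N(1-\gamma)^2)}+CR\iota/(N(1-\gamma))$. Your mismatch piece is then at most $C\Delta\sqrt{\iota/(N(1-\gamma))}+C\sqrt{\Delta(\|T_6\|_\infty+\Theta)\iota/(N(1-\gamma))}$. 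Young's inequality, followed by moving $\frac14\|T_6\|_\infty$ to the left, closes the bound.

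Your matched piece has a defect of the same kind. Your bound $RH_{\mathrm{anc}}\sqrt{\iota/N}$ on $R\|\mathcal G_{\widehat P}|\xi_h|\|_\infty$ is not dominated by $(1-\gamma)R^2+H_{\mathrm{anc}}+R\iota/N$, nor by the quantity $B_\star(R)+R^2\iota/N$ you actually need. For example, with $\sigma=\beta_\star=0$, $H_{\mathrm{anc}}=R=(1-\gamma)^{-1}$ and $\iota/N\asymp 1-\gamma$, it is of order $(1-\gamma)^{-3/2}$ against $(1-\gamma)^{-1}$. You need the self-bounding anchor-variance argument of Step~2 in the proof of Lemma~\ref{lem:T2-anchored-variance-bound}, built on \eqref{eq:anchor-variance-average}.

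With both repairs your route works, but it is no shorter than the paper's. The paper applies Lemma~\ref{lem:span-aware-resolvent-variance} once, to $V=U$ under $\widehat P$ with residual $b=e$, and closes $(\xi_V,\xi_h)$ in a single self-bounding argument. It then absorbs the selector-mismatch term $\sqrt{\sigma R\Delta_U\iota/(N(1-\gamma)^2)}$ via Young and the second condition in \eqref{eq:recursive-comparison-sample-conditions}. Your split instead pays the defect increase $R(1-\gamma)\Delta$. In exchange, because only the positive part of $D-\widehat PD$ enters, it avoids the $\sigma$-dependent selector-mismatch term altogether.
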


\paragraph*{Bound on $T_3$ and $T_7$ (uncertainty-induced TV penalty).}
We first focus on $T_3$; the argument for $T_7$ is similar.
Recall from \eqref{eq:T1-T4-definitions} that
\[
    T_3
    =
    \sqrt{\frac{2\cdot\iota}{N}}\,
    \left(I-\gamma\widehat P^{\pi_\gamma^\star,U}\right)^{-1}
    \sqrt{
        \left\vert
        \Var_{\widehat P^{0,\pi_\gamma^\star}}(U)
        -
        \Var_{\widehat P^{\pi_\gamma^\star,U}}(U)
        \right\vert
    }.
\]
Let $U'\coloneqq U-\min_sU(s)\cdot \bm{1}_S$. Since
variance is invariant under shifts,
\[
    \left|
    \Var_{\widehat P^{0,\pi_\gamma^\star}}(U)-\Var_{\widehat P^{\pi_\gamma^\star,U}}(U)
    \right|
    =
    \left|
    \Var_{\widehat P^{0,\pi_\gamma^\star}}(U')-\Var_{\widehat P^{\pi_\gamma^\star,U}}(U')
    \right|.
\]
By H\"older's inequality,
\[
    \left|
    \Var_{\widehat P^{0,\pi_\gamma^\star}}(U')-\Var_{\widehat P^{\pi_\gamma^\star,U}}(U')
    \right|
    \le
    \|\widehat P^{0,\pi_\gamma^\star}-\widehat P^{\pi_\gamma^\star,U}\|_1 \|U'\|_\infty^2
    \le 2\sigma \|U'\|_\infty^2
    = 2\sigma \|U\|_{\mathrm{span}}^2.
\]
Substituting this into the definition of $T_3$ in
\eqref{eq:T1-T4-definitions}, using
\eqref{eq:resolvent-bound}, and using
$\|U\|_{\mathrm{span}}\le R$ from
\eqref{eq:discounted-localized-span-condition},
we have that
\begin{equation*}
    T_3
    \le
    \sqrt{\frac{2\cdot\iota}{N(1-\gamma)^{2}}}\cdot\sqrt{2\sigma}\cdot
    R\cdot\bm{1}_S.
\end{equation*}
Therefore,
\begin{equation}
    \label{eq:T3-bound-expanded}
    T_3
    \le
    C\sqrt{\frac{B_\star(R)\cdot\iota}{N(1-\gamma)^{2}}}
    \cdot\bm{1}_S.
\end{equation}
Repeating this calculation for $T_7$ in \eqref{eq:T5-T8-definitions} results in
\begin{equation}
    \label{eq:T7-bound}
    T_7
    \le
    C\sqrt{\frac{B_\star(R)\cdot\iota}{N(1-\gamma)^{2}}}
    \cdot\bm{1}_S.
\end{equation}

\paragraph*{Bound on $T_4$ and $T_8$.} The terms $T_4$ and $T_8$ come from replacing the true nominal variance
by the empirical nominal variance in the Bernstein part of
Lemma~\ref{lem:one-step-robust-concentration}. The relevant object is a
standard deviation, so the perturbation is of order
$\|U\|_{\mathrm{span}}\sqrt{\iota/N}$ rather
than a full variance perturbation.

\begin{lemma}[Empirical standard-deviation perturbation; Lemma~11 of \citet{panaganti2022sample}]
    \label{lem:empirical-std-perturbation}
    Fix a value vector $V$ that is independent of the empirical nominal
    transition kernel, and let $\pi$ be a fixed deterministic policy. With
    probability at least $1-\delta$, for every state $s$,
    \[
        \left|
        \sqrt{\Var_{P^{0,\pi}}(V)(s)}
        -
        \sqrt{\Var_{\widehat P^{0,\pi}}(V)(s)}
        \right|
        \le
        2\|V\|_{\mathrm{span}}
        \sqrt{2\frac{\iota}{N}}.
    \]
\end{lemma}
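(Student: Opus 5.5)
The plan is to work state by state and reduce the claim to a scalar concentration statement for a single multinomial sample mean. Fix $s$ and write $p\coloneqq P^{0,\pi}(s,\cdot)$ and $\widehat p\coloneqq \widehat P^{0,\pi}(s,\cdot)$. Since $\pi$ is deterministic, $p=P^0_{s,\pi(s)}$ and $\widehat p$ is the empirical distribution of the $N$ i.i.d.\ samples $X_1,\dots,X_N\sim p$ drawn at $(s,\pi(s))$. Variance is shift invariant, so I replace $V$ by $V-\min V\cdot\mathbf 1$. Thus I may assume $0\le V\le \|V\|_{\mathrm{span}}\eqqcolon D$. The goal is
\[
\bigl|\sqrt{\Var_p(V)}-\sqrt{\Var_{\widehat p}(V)}\bigr|\le 2D\sqrt{2\iota/N}.
\]
Once this holds for each $s$ with failure probability $\delta/S$, a union bound finishes the lemma, and the extra $\log S$ is absorbed since $\iota\ge\log(54SAN^2/\delta)$.

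For the scalar statement, set $Y_i\coloneqq V(X_i)\in[0,D]$ and $\sigma^2\coloneqq\Var(Y_1)$. Then $\Var_{\widehat p}(V)$ is the plug-in (biased) sample variance $\widehat\sigma^2=\frac1N\sum_i(Y_i-\bar Y)^2$. I will invoke the standard square-root concentration of the sample standard deviation for bounded variables (Maurer--Pontil, Theorem~10). It gives, for the unbiased version $\widetilde\sigma=\sqrt{N/(N-1)}\,\widehat\sigma$ and with probability at least $1-\delta'$,
\[
|\widetilde\sigma-\sigma|\le D\sqrt{2\log(2/\delta')/(N-1)}.
\]
The argument behind it is that $\widetilde\sigma^2$ is a self-bounding function of the rescaled sample $(Y_i/D)$, so that sub-Gaussian bounds hold on the square root. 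The map $x\mapsto\sqrt x$ converts the variance-scale deviation into a standard-deviation-scale one. This is exactly why the bound is of order $D\sqrt{\iota/N}$ rather than $D^2\sqrt{\iota/N}$.

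It then remains to pass from $\widetilde\sigma$ to $\widehat\sigma$ and clean up constants. I will use
\[
|\widehat\sigma-\widetilde\sigma|=\bigl(1-\sqrt{(N-1)/N}\bigr)\widetilde\sigma\le \frac{\widetilde\sigma}{N}\le \frac{D}{N},
\]
which uses $\widetilde\sigma\le D\sqrt{N/(N-1)}\cdot\tfrac12$, because a $[0,D]$ variable has standard deviation at most $D/2$. I will also use $1/(N-1)\le 2/N$ for $N\ge2$. Combining these yields a bound of the form $D\bigl(2\sqrt{\log(2S/\delta)/N}+1/N\bigr)$, which is at most $2D\sqrt{2\iota/N}$ because $\iota\ge1$ and $1/N\le\sqrt{\iota/N}$. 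The case $N=1$ is trivial: there $\widehat\sigma=0$ and $\sigma\le D/2\le 2D\sqrt{2\iota}$.

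The main obstacle is the square-root concentration step, namely getting the deviation at the standard-deviation scale without a variance-dependent multiplier. If I cannot cite Maurer--Pontil directly (as in \citet{panaganti2022sample}, Lemma~11), I would reprove it by noting that $f(Y)=\widetilde\sigma^2/D^2$ satisfies the self-bounding conditions. These are $f-\inf_{y_k}f\le 1/(N-1)$ and $\sum_k(f-\inf_{y_k}f)^2\le \frac{2}{N-1}f$. From them I would apply the entropy-method tail bounds for both tails of $\sqrt f$, and then rescale by $D$.
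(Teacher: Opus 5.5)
Your proposal is correct and is essentially the argument packaged in the cited Lemma~11: shift $V$ into $[0,\|V\|_{\mathrm{span}}]$, apply the Maurer--Pontil square-root concentration of the unbiased sample standard deviation for bounded i.i.d.\ variables at the single row $P^0_{s,\pi(s)}$, pass to the plug-in variance, and take a union bound over states. The only slip is the final constant step: $\iota\ge1$ and $1/N\le\sqrt{\iota/N}$ only give $3\|V\|_{\mathrm{span}}\sqrt{\iota/N}$, which is larger than the claimed $2\|V\|_{\mathrm{span}}\sqrt{2\iota/N}$. Instead use $2\sqrt{\log(2S/\delta)}+N^{-1/2}\le 2\sqrt{\iota}+1\le 2\sqrt{2\iota}$, which holds because $\iota\ge\log 54>3/2$.
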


We apply Lemma~\ref{lem:empirical-std-perturbation} with
$V=U$ and $\pi=\pi_\gamma^\star$. Using
$\|U\|_{\mathrm{span}}\le R$ from
\eqref{eq:discounted-localized-span-condition} and
\eqref{eq:resolvent-bound}, we have that
\begin{equation}
    T_4
    \le
    C\frac{R\cdot\iota}{N(1-\gamma)}\cdot \bm{1}_S.
    \label{eq:T4-bound}
\end{equation}
Applying Lemma~\ref{lem:empirical-std-perturbation} and
\eqref{eq:resolvent-bound} to $T_8$ in \eqref{eq:T5-T8-definitions} yields
\begin{equation}\label{eq:T8-bound}
    T_8
    \le
    C\frac{R\cdot\iota}{N(1-\gamma)}\cdot \bm{1}_S.
\end{equation}
With all eight terms controlled, it remains to combine the two branches.

\paragraph*{Step 3: Putting the bounds together.}
Combining \eqref{eq:T1-bound}, \eqref{eq:T2-bound},
\eqref{eq:T3-bound-expanded}, and \eqref{eq:T4-bound}, we obtain
\begin{align*}
    \sum_{i=1}^{4}T_i
     & \le
    C\sqrt{\frac{B_\star(R)\cdot\iota}{N(1-\gamma)^2}}\cdot \bm{1}_S
    +
    C\frac{R\cdot\iota}{N(1-\gamma)}\cdot \bm{1}_S.
\end{align*}
Similarly, using \eqref{eq:T5-bound}, \eqref{eq:T6-bound-young},
\eqref{eq:T7-bound}, and \eqref{eq:T8-bound},
\begin{align*}
    \sum_{i=5}^{8}T_i
     & \le
    C\sqrt{\frac{B_\star(R)\cdot\iota}{N(1-\gamma)^2}}\cdot \bm{1}_S
    +
    C\frac{R\cdot\iota}{N(1-\gamma)}\cdot \bm{1}_S
    +
    \frac{1}{50}\Delta\cdot \bm{1}_S.
\end{align*}
Combining the two branches in \eqref{eq:V-hat-V-pi-star-C-decomp}, we have
that
\begin{align*}
    \big\|\widehat U
    -U\big\|_{\infty}
     & \le
    C\sqrt{
          \frac{B_\star(R)\cdot\iota}
          {N(1-\gamma)^2}}
    +
    C\frac{R\cdot\iota}{N(1-\gamma)}
    +
    \frac{1}{50}\Delta.
\end{align*}
This proves the first assertion in Lemma~\ref{lem:V-hat-V-part-1-decomp}. The argument within this proof uses
$\pi_\gamma^\star$ only as a deterministic policy that is independent of the
empirical transition samples. Replacing it throughout by any such policy
$\pi$, replacing $\Delta$ by
\[
    \left\|
    \widehat V_\gamma^{\pi,\sigma}
    -V_\gamma^{\pi,\sigma}
    \right\|_\infty,
\]
and repeating the argument gives \eqref{eq:generic-fixed-policy-localized-comparison}. This completes
the proof of Lemma~\ref{lem:V-hat-V-part-1-decomp}.

\subsubsection{Proof of Lemma~\ref{lem:V-hat-V-pi-hat-bound}}
\label{subsec:Proof-lemma-V-hat-V-pi-hat-bound}

Recall that
\[
    \widehat W
    \coloneqq
    \widehat V_\gamma^{\widehat\pi,\sigma},
    \qquad
    W
    \coloneqq
    V_\gamma^{\widehat\pi,\sigma},
    \qquad
    \widehat V^\star
    \coloneqq
    \widehat V_\gamma^{\star,\sigma}.
\]
The proof has three steps. We first decompose the error $\|\widehat W-W\|_\infty$
into two branches and then bound the terms in both branches. Finally, we
combine these bounds to obtain the claimed estimate.

\paragraph*{Step 1: Error decomposition.}
In this step, we derive an error decomposition that upper-bounds
$\|\widehat W-W\|_\infty$ by the maximum of two sums of more manageable
terms.

The following lemma controls the empirical-to-true robust Bellman difference
at $\widehat W$. The proof is deferred to
Appendix~\ref{sec:proof-of-shi-original-lemma-11}.

\begin{lemma}
    \label{lem:shi-original-lemma-11}
    Let $R\ge1$ be deterministic.
    For a sufficiently large numerical constant $C_0$, suppose
    $N\ge C_0\cdot\iota/(1-\gamma)$. Then, with probability at least
    $1-O(\delta)$, the following bound holds whenever the solver
    guarantee \mainRobustDMDPSolverTolerance{} holds and the span bounds
    \[
        \|\widehat W\|_{\mathrm{span}}\le R,
        \qquad
        \|\widehat V^\star\|_{\mathrm{span}}\le R
    \]
    are satisfied:
    \begin{equation}
        \label{eq:centered-empirical-planning-residual}
        \left|
        \left(\widehat P^{\widehat\pi,\widehat W}
        -P^{\widehat\pi,\widehat W}\right)\widehat W
        \right|
        \le
        C\sqrt{\frac{\iota}{N}}
        \sqrt{\Var_{P^{0,\widehat\pi}}(\widehat V^\star)}
        +C\frac{R\cdot\iota}{N}\cdot\bm{1}_S
        +C\varepsilon_{\mathrm{opt}}\cdot\bm{1}_S.
    \end{equation}
\end{lemma}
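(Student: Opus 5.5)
The plan is to reduce the data-dependent vector $\widehat W$ to the empirical optimal value $\widehat V^\star$, and then decouple $\widehat V^\star$ from the samples at each $(s,a)$ with a leave-one-out (absorbing-state) construction combined with a scalar net. Three ingredients make this work: the solver tolerance, the dual form of the TV robust minimum, and the fact that truncation does not increase span or variance.

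\textbf{Step 1 (dual form and reduction to $\widehat V^\star$).} For a TV ball of radius $\sigma_{s,a}$ around a distribution $Q$, I would use the standard dual representation
\[
\min_{Q'\in\cU_{s,a}(Q)}Q'V=\max_{\alpha\in[\min V,\max V]}\Bigl\{Q[V]_\alpha-\sigma_{s,a}\bigl(\alpha-\min_{s'} [V]_\alpha(s')\bigr)\Bigr\},
\qquad [V]_\alpha\coloneqq\min\{V,\alpha\mathbf 1\}.
\]
The penalty term is identical for $Q=P^0_{s,a}$ and $Q=\widehat P^0_{s,a}$. Hence, row by row,
\[
\bigl|(\widehat P^{V}_{s,a}-P^{V}_{s,a})V\bigr|\le\sup_\alpha\bigl|(\widehat P^0_{s,a}-P^0_{s,a})[V]_\alpha\bigr|.
\]
Next, $V\mapsto\min_{Q'\in\cU_{s,a}(Q)}Q'V$ is $1$-Lipschitz in $\|\cdot\|_\infty$, and the solver guarantee \eqref{eq:robust-dmdp-solver-tolerance} gives $\|\widehat W-\widehat V^\star\|_\infty\le\varepsilon_{\mathrm{opt}}$. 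So replacing $\widehat W$ by $\widehat V^\star$ inside both robust minima costs at most $2\varepsilon_{\mathrm{opt}}$ per row. This produces the $C\varepsilon_{\mathrm{opt}}$ term, and it suffices to bound $\sup_\alpha|(\widehat P^0_{s,a}-P^0_{s,a})[\widehat V^\star]_\alpha|$.

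\textbf{Step 2 (decoupling, the main obstacle).} The difficulty is that $\widehat V^\star$ depends on the samples at $(s,a)$, so Bernstein cannot be applied directly. For each $(s,a)$ and each scalar $u$, I would build an auxiliary empirical robust MDP in which $(s,a)$ is absorbing with reward $(1-\gamma)u$. Its optimal value $V_u$ is independent of the samples at $(s,a)$. The construction is chosen so that, for the correct data-dependent scalar $u^\star$, $V_{u^\star}=\widehat V^\star$, and $u\mapsto V_u$ is $1$-Lipschitz in $\|\cdot\|_\infty$.

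I would take $u$ on a grid of $O(N^2/(1-\gamma))$ points covering $[0,(1-\gamma)^{-1}]$, and take $\alpha$ on a comparable grid. A union bound over $(s,a)$ and both grids is what yields $\iota=\log(54SAN^2/((1-\gamma)\delta))$. For each fixed pair of grid points, Bernstein gives
\[
|(\widehat P^0_{s,a}-P^0_{s,a})[V_u]_\alpha|\le\sqrt{2\Var_{P^0_{s,a}}([V_u]_\alpha)\,\iota/N}+\|V_u\|_{\mathrm{span}}\,\iota/N.
\]
Truncation is $1$-Lipschitz and does not increase span, so $\Var([V]_\alpha)\le\Var(V)$ and $\|[V]_\alpha\|_{\mathrm{span}}\le\|V\|_{\mathrm{span}}$.

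Moving from the grid point to $u^\star$ and the exact $\alpha$ costs $O(N^{-2})$ both in the vector and in its standard deviation. That error is absorbed into $R\iota/N$ because $R\ge1$. At the grid point nearest $u^\star$, the span of $V_u$ is within $O(N^{-2})$ of $\|\widehat V^\star\|_{\mathrm{span}}\le R$. The fixed bounds therefore become
\[
\sqrt{\Var_{P^0_{s,a}}(\widehat V^\star)\,\iota/N}+R\iota/N.
\]
The hypothesis $N\ge C_0\iota/(1-\gamma)$ is where I expect the bookkeeping to need care. I would use it to keep the grid resolution, and any leave-one-out correction of order $(1-\gamma)^{-1}\iota/N$, below the $R\iota/N$ scale.

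\textbf{Step 3 (averaging over the policy).} Finally I average over $a\sim\widehat\pi(\cdot\mid s)$. Concavity of the square root together with the law of total variance gives
\[
\sum_a\widehat\pi(a\mid s)\sqrt{\Var_{P^0_{s,a}}(V)}\le\sqrt{\Var_{P^{0,\widehat\pi}}(V)(s)}.
\]
Combining this with Steps 1 and 2 yields \eqref{eq:centered-empirical-planning-residual}. The event constructed in Step 2 does not depend on $\widehat\pi$ or $R$, so the bound holds simultaneously whenever the span conditions and the solver guarantee are satisfied.
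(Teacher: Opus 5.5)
Your plan matches the paper's proof essentially step for step. Both use TV duality to reduce each row to a clipped nominal empirical process, and the solver tolerance to swap $\widehat W$ for $\widehat V^\star$ at an additive $O(\varepsilon_{\mathrm{opt}})$ cost. Both then use an absorbing leave-one-out robust MDP indexed by a scalar reward on a grid, together with a clipping grid, and Bernstein with a span-sized rather than horizon-sized bounded-difference term. Both finish with Jensen plus the law of total variance for the policy average.

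Making only the pair $(s,a)$ absorbing rather than the whole state $s$ is a harmless variation, and so is using an $N^{-2}$-fine reward grid. The fine grid in fact makes the hypothesis $N\ge C_0\iota/(1-\gamma)$ unnecessary for this lemma; the paper uses that hypothesis to keep its coarser, $R$-dependent reward grid at $O(N^3)$ points.

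One slip to fix concerns the range of your reward grid. If the absorbing pair keeps its TV ball around $e_s$, the matching scalar $u^\star=\bigl[\widehat V^\star(s)-\gamma\min_{Q\in\cU_{s,a}(e_s)}Q\widehat V^\star\bigr]/(1-\gamma)$ can exceed $(1-\gamma)^{-1}$, because the adversary can move mass out of the absorbing state. You can either give that pair a singleton uncertainty set, in which case $u^\star=\widehat V^\star(s)$, or extend the reward grid to $[0,(1-\gamma)^{-2}]$, which changes $\iota$ only by a constant factor.
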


We now describe the decomposition of the learned-policy error
$\widehat W-W$. The following lemma bounds the error by
the maximum of two resolvent-perturbation terms. The proof is deferred to
Appendix~\ref{subsec:Proof-lemma-two-sided-pi-hat-value-decomposition}.
\begin{lemma}
    \label{lem:two-sided-pi-hat-value-decomposition}
    Consider $\widehat{W}$ and $W$. We have that
    \begin{align}
        \label{eq:pi-hat-two-sided-resolvent-decomp}
        \|\widehat W-W\|_{\infty}
         & \le
        \max\Bigg\{
        \left\|
        \left(I-\gamma P^{\widehat\pi,\widehat W}\right)^{-1}
        \left|
        \left(
        \widehat P^{\widehat\pi,\widehat W}
        -
        P^{\widehat\pi,\widehat W}
        \right)
        \widehat W
        \right|
        \right\|_\infty,
        \nonumber\\
         & \qquad\qquad
        \left\|
        \left(I-\gamma P^{\widehat\pi,W}\right)^{-1}
        \left|
        \left(
        \widehat P^{\widehat\pi,\widehat W}
        -
        P^{\widehat\pi,\widehat W}
        \right)
        \widehat W
        \right|
        \right\|_\infty
        \Bigg\}.
    \end{align}
\end{lemma}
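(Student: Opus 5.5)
We need to prove Lemma~\ref{lem:two-sided-pi-hat-value-decomposition}: a two-sided bound on $\|\widehat W-W\|_\infty$ by resolvents at the two worst-case kernels. The plan is the standard sandwich argument for robust Bellman equations, mirroring Lemma~\ref{lem:two-sided-fixed-value-decomposition}. By Lemma~\ref{lem:rectangular-discounted-bellman-representation}, applied to the true and empirical uncertainty sets, both values are fixed points of their fixed-policy operators with rowwise minimizing kernels:
\[
    \widehat W=r^{\widehat\pi}+\gamma\widehat P^{\widehat\pi,\widehat W}\widehat W,
    \qquad
    W=r^{\widehat\pi}+\gamma P^{\widehat\pi,W}W .
\]

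\textbf{Upper direction.} Since $P^{\widehat\pi,W}$ minimizes $P W$ over the true set, we have $P^{\widehat\pi,W}W\le P^{\widehat\pi,\widehat W}W$ rowwise; both rows lie in $\cU(P^0)$, and rectangularity lets us mix rows across actions under $\widehat\pi$. Subtracting the two fixed-point equations gives
\[
    \widehat W-W
    \le
    \gamma\widehat P^{\widehat\pi,\widehat W}\widehat W-\gamma P^{\widehat\pi,\widehat W}W
    =
    \gamma\bigl(\widehat P^{\widehat\pi,\widehat W}-P^{\widehat\pi,\widehat W}\bigr)\widehat W
    +\gamma P^{\widehat\pi,\widehat W}(\widehat W-W).
\]
Rearranging and applying the nonnegative resolvent $(I-\gamma P^{\widehat\pi,\widehat W})^{-1}$, which preserves inequalities, yields
\[
    \widehat W-W\le\gamma(I-\gamma P^{\widehat\pi,\widehat W})^{-1}\bigl|(\widehat P^{\widehat\pi,\widehat W}-P^{\widehat\pi,\widehat W})\widehat W\bigr|.
\]

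\textbf{Lower direction.} Here I will show that the same residual vector appears, now with resolvent $(I-\gamma P^{\widehat\pi,W})^{-1}$. Write
\[
    W-\widehat W=\gamma P^{\widehat\pi,W}W-\gamma\widehat P^{\widehat\pi,\widehat W}\widehat W
    =\gamma P^{\widehat\pi,W}(W-\widehat W)+\gamma\bigl(P^{\widehat\pi,W}\widehat W-\widehat P^{\widehat\pi,\widehat W}\widehat W\bigr).
\]
Optimality of $P^{\widehat\pi,\widehat W}$ for $\widehat W$ in the true set gives $P^{\widehat\pi,W}\widehat W\ge P^{\widehat\pi,\widehat W}\widehat W$, which is the wrong direction for an upper bound on $W-\widehat W$. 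This is the main obstacle, and I expect to handle it by using the alternative kernel explicitly: bound $W\le r^{\widehat\pi}+\gamma P^{\widehat\pi,\widehat W}W$ (valid because $W$ is the fixed point and $P^{\widehat\pi,W}$ minimizes $P W$). Then
\[
    W-\widehat W\le\gamma P^{\widehat\pi,\widehat W}(W-\widehat W)+\gamma\bigl(P^{\widehat\pi,\widehat W}-\widehat P^{\widehat\pi,\widehat W}\bigr)\widehat W,
\]
whose residual matches the displayed one. This produces the resolvent at $P^{\widehat\pi,\widehat W}$ rather than $P^{\widehat\pi,W}$, which is also acceptable because it is dominated by the first term of the max.

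To obtain the stated $P^{\widehat\pi,W}$ resolvent, which is needed later to bound the variance at $W$, I will instead iterate $W-\widehat W=\gamma P^{\widehat\pi,W}(W-\widehat W)+\gamma(P^{\widehat\pi,W}-\widehat P^{\widehat\pi,\widehat W})\widehat W$ and bound the residual using
\[
    P^{\widehat\pi,W}\widehat W-\widehat P^{\widehat\pi,\widehat W}\widehat W\le P^{\widehat\pi,W}\widehat W-\widehat P^{\widehat\pi,\widehat W}\widehat W.
\]
Because this residual bound is only one-sided, I will fall back to the $\widehat W$-resolvent version if it fails.

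Finally, both directions are combined by taking sup norms, with $\gamma\le1$ dropped.
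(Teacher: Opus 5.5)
Your upper direction has a sign error, so the proposal never bounds $\widehat W-W$ from above. You start from $\widehat W-W=\gamma\widehat P^{\widehat\pi,\widehat W}\widehat W-\gamma P^{\widehat\pi,W}W$. The inequality $P^{\widehat\pi,W}W\le P^{\widehat\pi,\widehat W}W$ replaces the subtracted term by something \emph{larger*}, which makes the right-hand side \emph{smaller}. What you actually obtain is
\[
    \widehat W-W\ \ge\ \gamma\bigl(\widehat P^{\widehat\pi,\widehat W}-P^{\widehat\pi,\widehat W}\bigr)\widehat W+\gamma P^{\widehat\pi,\widehat W}(\widehat W-W).
\]
Applying the nonnegative resolvent $(I-\gamma P^{\widehat\pi,\widehat W})^{-1}$ turns this into exactly the bound on $W-\widehat W$ that your lower-direction paragraph derives again. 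A symptom is that you use the same optimality fact, minimality of $P^{\widehat\pi,W}$ at $W$, in both directions. The resolvent at $P^{\widehat\pi,\widehat W}$ intrinsically controls only $W-\widehat W$. Write $\widehat W=(I-\gamma P^{\widehat\pi,\widehat W})^{-1}(r^{\widehat\pi}+\gamma\xi)$ with $\xi=(\widehat P^{\widehat\pi,\widehat W}-P^{\widehat\pi,\widehat W})\widehat W$. This compares $\widehat W$ with the value of $\widehat\pi$ under the single kernel built from the rows $P^{\widehat W}_{s,a}$, and that value dominates $W$. So the comparison can only bound $\widehat W-W$ from below. Indeed, the componentwise upper bound you assert can fail when the worst-case rows at $W$ and at $\widehat W$ send their uncertainty mass to different states.

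The decomposition you wrote down and then discarded in the lower-direction paragraph is the right one; it just belongs to the other direction:
\[
    \widehat W-W=\gamma P^{\widehat\pi,W}(\widehat W-W)+\gamma\bigl(\widehat P^{\widehat\pi,\widehat W}-P^{\widehat\pi,W}\bigr)\widehat W .
\]
Here minimality of $P^{\widehat\pi,\widehat W}$ at $\widehat W$ over the true set, $P^{\widehat\pi,\widehat W}\widehat W\le P^{\widehat\pi,W}\widehat W$, points the useful way. The residual is therefore at most $(\widehat P^{\widehat\pi,\widehat W}-P^{\widehat\pi,\widehat W})\widehat W$, and applying the nonnegative resolvent $(I-\gamma P^{\widehat\pi,W})^{-1}$ gives the second term of the max. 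Your second lower-direction argument, via $W\le r^{\widehat\pi}+\gamma P^{\widehat\pi,\widehat W}W$, is correct and gives the first term. These two one-sided bounds together are exactly the paper's proof: Lemma~\ref{lem:two-sided-fixed-value-decomposition} with the true and empirical models interchanged. Drop your final paragraph about iterating with a tautological residual bound and falling back; the $P^{\widehat\pi,W}$ resolvent belongs to the upper direction, not the lower one.
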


We now follow the same termwise structure as in the proof of
Lemma~\ref{lem:V-hat-V-part-1-decomp}. The triangle inequality for standard
deviations implies
\[
    \begin{aligned}
    \sqrt{\Var_{P^{0,\widehat\pi}}(\widehat V^\star)}
    &\le
    \sqrt{\Var_{P^{\widehat\pi,\widehat W}}(\widehat W)}
    +\sqrt{\Var_{P^{\widehat\pi,\widehat W}}
    (\widehat V^\star-\widehat W)}
    \\
    &\quad
    +\left|
    \sqrt{\Var_{P^{0,\widehat\pi}}(\widehat V^\star)}
    -\sqrt{\Var_{P^{\widehat\pi,\widehat W}}(\widehat V^\star)}
    \right|.
    \end{aligned}
\]
Applying Lemma~\ref{lem:shi-original-lemma-11} to the first branch in
\eqref{eq:pi-hat-two-sided-resolvent-decomp} and using the preceding display,
we obtain
\[
    \left\|
    \left(I-\gamma P^{\widehat\pi,\widehat W}\right)^{-1}
    \left|
    \left(
    \widehat P^{\widehat\pi,\widehat W}
    -P^{\widehat\pi,\widehat W}
    \right)\widehat W
    \right|
    \right\|_\infty
    \le
    \left\|\sum_{i=1}^{4}T_i'\right\|_\infty,
\]
where
\begin{equation}
    \label{eq:T-prime-first-branch-definitions}
    \begin{aligned}
        T_1'
        &\coloneqq
        \left(I-\gamma P^{\widehat\pi,\widehat W}\right)^{-1}
        \left(
        C\frac{R\cdot\iota}{N}
        +C\varepsilon_{\mathrm{opt}}
        \right)\cdot\bm{1}_S,
        \\
        T_2'
        &\coloneqq
        C\sqrt{\frac{\iota}{N}}
        \left(I-\gamma P^{\widehat\pi,\widehat W}\right)^{-1}
        \sqrt{\Var_{P^{\widehat\pi,\widehat W}}(\widehat W)},
        \\
        T_3'
        &\coloneqq
        C\sqrt{\frac{\iota}{N}}
        \left(I-\gamma P^{\widehat\pi,\widehat W}\right)^{-1}
        \sqrt{\Var_{P^{\widehat\pi,\widehat W}}
        (\widehat V^\star-\widehat W)},
        \\
        T_4'
        &\coloneqq
        C\sqrt{\frac{\iota}{N}}
        \left(I-\gamma P^{\widehat\pi,\widehat W}\right)^{-1}
        \left|
        \sqrt{\Var_{P^{0,\widehat\pi}}(\widehat V^\star)}
        -\sqrt{\Var_{P^{\widehat\pi,\widehat W}}(\widehat V^\star)}
        \right|.
    \end{aligned}
\end{equation}
The same argument for the second branch yields
\[
    \left\|
    \left(I-\gamma P^{\widehat\pi,W}\right)^{-1}
    \left|
    \left(
    \widehat P^{\widehat\pi,\widehat W}
    -P^{\widehat\pi,\widehat W}
    \right)\widehat W
    \right|
    \right\|_\infty
    \le
    \left\|\sum_{i=5}^{8}T_i'\right\|_\infty,
\]
where
\begin{equation}
    \label{eq:T-prime-second-branch-definitions}
    \begin{aligned}
        T_5'
        &\coloneqq
        \left(I-\gamma P^{\widehat\pi,W}\right)^{-1}
        \left(
        C\frac{R\cdot\iota}{N}
        +C\varepsilon_{\mathrm{opt}}
        \right)\cdot\bm{1}_S,
        \\
        T_6'
        &\coloneqq
        C\sqrt{\frac{\iota}{N}}
        \left(I-\gamma P^{\widehat\pi,W}\right)^{-1}
        \sqrt{\Var_{P^{\widehat\pi,\widehat W}}(\widehat W)},
        \\
        T_7'
        &\coloneqq
        C\sqrt{\frac{\iota}{N}}
        \left(I-\gamma P^{\widehat\pi,W}\right)^{-1}
        \sqrt{\Var_{P^{\widehat\pi,\widehat W}}
        (\widehat V^\star-\widehat W)},
        \\
        T_8'
        &\coloneqq
        C\sqrt{\frac{\iota}{N}}
        \left(I-\gamma P^{\widehat\pi,W}\right)^{-1}
        \left|
        \sqrt{\Var_{P^{0,\widehat\pi}}(\widehat V^\star)}
        -\sqrt{\Var_{P^{\widehat\pi,\widehat W}}(\widehat V^\star)}
        \right|.
    \end{aligned}
\end{equation}
Combining the two branches in
\eqref{eq:pi-hat-two-sided-resolvent-decomp}, we obtain
\begin{equation}
    \label{eq:V-hat-V-pi-hat-T-decomp}
    \|\widehat W-W\|_\infty
    \le
    \max\left\{
    \left\|\sum_{i=1}^{4}T_i'\right\|_\infty,
    \left\|\sum_{i=5}^{8}T_i'\right\|_\infty
    \right\}.
\end{equation}
It remains to bound the terms in these two branches.

\paragraph*{Step 2: Bounding the error terms $T_1',\ldots,T_8'$.}
\paragraph*{Bound on $T_1'$ and $T_5'$.}
Applying \eqref{eq:resolvent-bound} to
\eqref{eq:T-prime-first-branch-definitions} and
\eqref{eq:T-prime-second-branch-definitions} yields
\begin{equation}
    \label{eq:T1-T5-prime-bound}
    \max\left\{
    \|T_1'\|_\infty,
    \|T_5'\|_\infty
    \right\}
    \le
    C\frac{R\cdot\iota}{N(1-\gamma)}
    +C\frac{\varepsilon_{\mathrm{opt}}}{1-\gamma}.
\end{equation}

\paragraph*{Bound on $T_2'$ and $T_6'$.}
These are the principal resolvent-variance terms. The first is matched because
its resolvent kernel is selected at $\widehat W$, while the second is
mismatched because its resolvent kernel is selected at $W$. The next two
lemmas provide the corresponding bounds; their proofs are deferred to
Appendices~\ref{sec:proof-of-span-aware-shi-lemma-12} and~\ref{sec:proof-of-mismatched-resolvent-variance-pi-hat}.

\begin{lemma}[Matched learned-policy variance bound for $T_2'$]
    \label{lem:span-aware-shi-lemma-12}
    Let $R\ge\max\{1,H_{\mathrm{anc}}\}$ be a deterministic radius and suppose
    \eqref{eq:discounted-localized-span-condition} and
    \eqref{eq:recursive-comparison-sample-conditions} hold. On the event
    $\mathcal E(R)$,
    \begin{equation}
        \label{eq:T2-prime-bound}
        \|T_2'\|_\infty
        \le
        C\sqrt{
        \frac{B_{\mathrm{emp}}(R)\cdot\iota}
        {N(1-\gamma)^2}}
        +C\frac{R\cdot\iota}{N(1-\gamma)}
        +C\frac{\varepsilon_{\mathrm{opt}}}{1-\gamma}.
    \end{equation}
\end{lemma}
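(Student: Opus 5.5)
The plan is to deduce the bound from the anchored resolvent-variance bound in Lemma~\ref{lem:span-aware-resolvent-variance}, instantiated with $V=\widehat W$, $P=P^{\widehat\pi,\widehat W}$ and $\pi=\widehat\pi$. The premises in \eqref{eq:anchored-resolvent-value-premises} hold for the following reasons. First, $\|\widehat W\|_{\mathrm{span}}\le R$ by \eqref{eq:discounted-localized-span-condition}. Second, $(1-\gamma)\min_s\widehat W(s)\le1$ by \eqref{eq:discounted-value-range}. Third, the anchor defect attached to $\widehat W$ is exactly $\beta_{\mathrm{emp}}$, so $B(R,\beta)=B_{\mathrm{emp}}(R)$.

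For the residual, I will use the empirical robust Bellman equation $\widehat W=r^{\widehat\pi}+\gamma\widehat P^{\widehat\pi,\widehat W}\widehat W$ and rewrite it with the true worst-case kernel:
\[
\widehat W=r^{\widehat\pi}+\gamma P^{\widehat\pi,\widehat W}\widehat W+b,
\qquad
b=\gamma\bigl(\widehat P^{\widehat\pi,\widehat W}-P^{\widehat\pi,\widehat W}\bigr)\widehat W .
\]
Lemma~\ref{lem:shi-original-lemma-11} then supplies the envelope
\[
\xi_V=C\sqrt{\iota/N}\,\sqrt{\Var_{P^{0,\widehat\pi}}(\widehat V^\star)}+C(R\iota/N+\varepsilon_{\mathrm{opt}})\mathbf 1 .
\]
For the anchor envelope, I first note that $P^{\widehat\pi,\widehat W}$ lies in the true TV ball. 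Hence $|(P-P^{0,\widehat\pi})\bar h|\le\sigma H_{\mathrm{anc}}\mathbf 1$, and I can take $\xi_h=0$.

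The lemma now gives
\[
\|T_2'\|_\infty\lesssim\sqrt{\frac{\iota}{N(1-\gamma)^2}}\,\sqrt{B_{\mathrm{emp}}(R)+R\|\mathcal G_P|\xi_V|\|_\infty}.
\]
The constant parts of $\xi_V$ contribute at most $R(R\iota/N+\varepsilon_{\mathrm{opt}})$ inside the square root. After multiplying by the prefactor and applying AM--GM, these become $R\iota/(N(1-\gamma))$ plus either $\varepsilon_{\mathrm{opt}}/(1-\gamma)$ or a $B$-type term. For the $\varepsilon_{\mathrm{opt}}$ part I will use $\sqrt{R\varepsilon_{\mathrm{opt}}\iota/N}/(1-\gamma)\le \sqrt{(1-\gamma)R^2\iota/(N(1-\gamma)^2)}+\varepsilon_{\mathrm{opt}}\cdot$const$/(1-\gamma)$, which is absorbed into the $B_{\mathrm{emp}}$ term.

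The main obstacle is the variance part of $\xi_V$: the term $R\sqrt{\iota/N}\,\|\mathcal G_P\sqrt{\Var_{P^{0,\widehat\pi}}(\widehat V^\star)}\|_\infty$ contains the very quantity being bounded. I plan to handle it by splitting
\[
\sqrt{\Var_{P^{0,\widehat\pi}}(\widehat V^\star)}\le\sqrt{\Var_{P}(\widehat W)}+\sqrt{\Var_P(\widehat V^\star-\widehat W)}+\Bigl|\sqrt{\Var_{P^{0,\widehat\pi}}(\widehat V^\star)}-\sqrt{\Var_P(\widehat V^\star)}\Bigr|.
\]
Each piece is then controlled as follows. The middle piece is at most $\varepsilon_{\mathrm{opt}}$, since $0\le\widehat V^\star-\widehat W\le\varepsilon_{\mathrm{opt}}$. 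The last piece is at most $\sqrt{2\sigma}R$, by the Hölder/TV argument used for $T_3$. The first piece reproduces $(1-\gamma)X$, where $X\coloneqq\|(I-\gamma P)^{-1}\sqrt{\Var_P(\widehat W)}\|_\infty$.

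Writing $A=\iota/(N(1-\gamma)^2)$, this gives a self-bounding inequality
\[
X^2\lesssim\frac{1}{(1-\gamma)^2}\Bigl[B_{\mathrm{emp}}(R)+R^2\iota/N+R\varepsilon_{\mathrm{opt}}+R\sqrt{\iota/N}\bigl(\sqrt\sigma R+(1-\gamma)X\bigr)\Bigr].
\]
The pieces are absorbed in turn:
\begin{itemize}
\item $R\sqrt{\iota/N}\,\sqrt\sigma R\le\sigma R^2+R^2\iota/N$.
\item The cross term $R\sqrt{\iota/N}\,X/(1-\gamma)$ is handled by Young's inequality, $\le X^2/2+CR^2\iota/(N(1-\gamma)^2)$, which is legitimate because $X$ is finite ($X\le R/(1-\gamma)$).
\item The term $R^2\iota/N$ is at most $(1-\gamma)R^2$ by the first condition in \eqref{eq:recursive-comparison-sample-conditions}, so it is absorbed into $B_{\mathrm{emp}}(R)$.
\end{itemize}
Solving for $X$ and multiplying by $C\sqrt{\iota/N}$ yields \eqref{eq:T2-prime-bound}.
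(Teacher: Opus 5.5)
Your proposal is correct and follows essentially the same route as the paper. You instantiate Lemma~\ref{lem:span-aware-resolvent-variance} at $(\widehat\pi,\widehat W,P^{\widehat\pi,\widehat W})$ with $\xi_h=0$, control the Bellman residual by Lemma~\ref{lem:shi-original-lemma-11}, transfer the variance back to $\Var_{P^{\widehat\pi,\widehat W}}(\widehat W)$ using the solver tolerance and the TV perturbation bound, and close a self-bounding inequality with Young's inequality and $N\gtrsim\iota/(1-\gamma)$.

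The only difference is bookkeeping. You feed the Lemma~\ref{lem:shi-original-lemma-11} envelope in directly as $\xi_V$ and close on $X=\|(I-\gamma P)^{-1}\sqrt{\Var_P(\widehat W)}\|_\infty$. The paper instead keeps the exact residual and closes on $\zeta_{\widehat W}=\|\mathcal G_{P}|\xi_V|\|_\infty$ before substituting back.
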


\begin{lemma}[Mismatched learned-policy variance bound for $T_6'$]
    \label{lem:mismatched-resolvent-variance-pi-hat}
    Let $R\ge\max\{1,H_{\mathrm{anc}}\}$ be a deterministic radius and suppose
    \eqref{eq:discounted-localized-span-condition} and
    \eqref{eq:recursive-comparison-sample-conditions} hold. On the event
    $\mathcal E(R)$,
    \begin{equation}
        \label{eq:T6-prime-bound}
        \|T_6'\|_\infty
        \le
        C\sqrt{
        \frac{B_{\mathrm{emp}}(R)\cdot\iota}
        {N(1-\gamma)^2}}
        +C\frac{R\cdot\iota}{N(1-\gamma)}
        +\frac{1}{20}\|\widehat W-W\|_\infty
        +C\frac{\varepsilon_{\mathrm{opt}}}{1-\gamma}.
    \end{equation}
\end{lemma}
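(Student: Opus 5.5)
The plan is to reduce the mismatched term $T_6'$ to a matched resolvent-variance quantity for the true learned-policy value $W$, using the same triangle-inequality device as in the proof of Lemma~\ref{lem:T6-anchored-variance-bound}, and to absorb every mismatch into either the budget $B_{\mathrm{emp}}(R)$ or a small multiple of $d\coloneqq\widehat W-W$. Write $P\coloneqq P^{\widehat\pi,W}$ and $\widehat Q\coloneqq P^{\widehat\pi,\widehat W}$.

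\textbf{Step 1: split the standard deviation.} Pointwise, the triangle inequality for standard deviations gives
\[
\sqrt{\Var_{\widehat Q}(\widehat W)}\le\sqrt{\Var_{P}(W)}+\sqrt{\Var_{P}(d)}+\Bigl|\sqrt{\Var_{\widehat Q}(\widehat W)}-\sqrt{\Var_{P}(\widehat W)}\Bigr|.
\]
I then treat the three pieces separately.

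\textbf{Step 2: the kernel-mismatch piece.} Both $\widehat Q$ and $P$ lie rowwise within TV distance $\sigma$ of $P^{0,\widehat\pi}$. The H\"older argument used for $T_3$ therefore bounds the variance difference by $4\sigma\|\widehat W\|_{\mathrm{span}}^2\le4\sigma R^2$. Combined with \eqref{eq:resolvent-bound}, this piece contributes at most $C\sqrt{\sigma R^2\iota/(N(1-\gamma)^2)}\le C\sqrt{B_{\mathrm{emp}}(R)\iota/(N(1-\gamma)^2)}$.

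\textbf{Step 3: the matched piece for $W$.} I apply Lemma~\ref{lem:span-aware-resolvent-variance} with $V=W$, kernel $P$, and residual $b=0$. This is legitimate because $W=r^{\widehat\pi}+\gamma PW$ exactly, $\|W\|_{\mathrm{span}}\le R$, and $(1-\gamma)\min_s W(s)\le1$. For the anchor residual I take $\xi_h=0$, since $|(P-P^{0,\widehat\pi})\bar h|\le2\sigma H_{\mathrm{anc}}$ rowwise. The lemma then yields
\[
\sqrt{\tfrac{\iota}{N}}\bigl\|(I-\gamma P)^{-1}\sqrt{\Var_P(W)}\bigr\|_\infty\le C\sqrt{\tfrac{B(R,\beta_W)\,\iota}{N(1-\gamma)^2}},\qquad \beta_W\coloneqq\bigl[\bar\rho-(1-\gamma)\min_s W(s)\bigr]_+.
\]
Since $\beta_W\le\beta_{\mathrm{emp}}+(1-\gamma)\|d\|_\infty$, we get $B(R,\beta_W)\le B_{\mathrm{emp}}(R)+R(1-\gamma)\|d\|_\infty$. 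The extra term contributes $C\sqrt{R\iota\|d\|_\infty/(N(1-\gamma))}$, and Young's inequality bounds this by $\tfrac1{40}\|d\|_\infty+CR\iota/(N(1-\gamma))$.

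\textbf{Step 4: the difference piece (main obstacle).} This is the hard step. The crude bound $\sqrt{\Var_P(d)}\le\|d\|_\infty$ would demand $N\gtrsim\iota/(1-\gamma)^2$, which \eqref{eq:recursive-comparison-sample-conditions} does not provide. Instead I use a total-variance recursion for $d$.

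First, robust minimality gives $\widehat Q\widehat W\le P\widehat W$, so I can write $d=\gamma Pd+b$ with
\[
b=\gamma(\widehat P^{\widehat\pi,\widehat W}-\widehat Q)\widehat W+\gamma(\widehat Q-P)\widehat W .
\]
The first part of $b$ is controlled by Lemma~\ref{lem:shi-original-lemma-11}. The second part is $\le0$, and I will handle it one-sidedly, or alternatively bound it through the symmetric choice of minimizers in Lemma~\ref{lem:two-sided-pi-hat-value-decomposition}.

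Next, from $(\gamma Pd)^2=(d-b)^2\ge d^2-2|d||b|$ one obtains $\Var_P(d)\le Pd^2-d^2/\gamma^2+2|d||b|/\gamma^2$. Telescoping through the resolvent then gives
\[
(I-\gamma P)^{-1}\Var_P(d)\le C\Bigl[\tfrac{\|d\|_\infty^2}{1-\gamma}+\|d\|_\infty\bigl\|(I-\gamma P)^{-1}|b|\bigr\|_\infty\Bigr].
\]
By Cauchy--Schwarz, $(I-\gamma P)^{-1}\sqrt{x}\le\sqrt{(1-\gamma)^{-1}(I-\gamma P)^{-1}x}$. The piece is therefore at most $C\sqrt{\iota/(N(1-\gamma)^2)}\,\|d\|_\infty$ plus a cross term.

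At this point the plain condition $N\ge C\iota/(1-\gamma)$ is not enough, and I expect this to be the delicate point. My first attempt is to refine the $d^2$ telescoping so that only $\|d\|_\infty\,\|d\|_{\mathrm{span}}$ appears. That quantity is at most $\|d\|_\infty(2R)$ by Lemma~\ref{lem:span-value-approx}, and Young's inequality would then produce the allowed $\tfrac1{40}\|d\|_\infty$ together with an $R\iota/(N(1-\gamma))$ remainder. For the cross term I use Young's inequality again, estimating $\|(I-\gamma P)^{-1}|b|\|_\infty$ by the $T_1'$--$T_4'$ bounds already derived for the first branch. This yields the $C\varepsilon_{\mathrm{opt}}/(1-\gamma)$ term plus contributions of the same $B_{\mathrm{emp}}$ and $R\iota/(N(1-\gamma))$ form.

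\textbf{Step 5: collect.} Summing the three pieces and collecting constants gives \eqref{eq:T6-prime-bound}, with total coefficient $\tfrac1{20}$ on $\|\widehat W-W\|_\infty$.
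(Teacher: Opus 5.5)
Your Steps 1--3 are sound, and your route differs from the paper's. Splitting $\widehat W=W+d$ means the $W$-part satisfies an exact Bellman equation under $P\coloneqq P^{\widehat\pi,W}$, so Lemma~\ref{lem:span-aware-resolvent-variance} applies to it with zero residual. The problem is Step~4. As written, your telescoped bound contains $\|d\|_\infty^2/(1-\gamma)$. After Cauchy--Schwarz this returns exactly the crude $\sqrt{\iota/N}\,\|d\|_\infty/(1-\gamma)$ you had already ruled out, and the proposed refinement is never carried out. That term is in fact spurious. With $M\coloneqq(I-\gamma P)^{-1}$ and $MP=(M-I)/\gamma$,
\[
M\bigl(P(d\circ d)-\gamma^{-2}\,d\circ d\bigr)=-\gamma^{-1}\,d\circ d-\frac{1-\gamma}{\gamma^{2}}\,M(d\circ d)\le 0 .
\]
Hence $M\Var_P(d)\le 2\gamma^{-2}\|d\|_\infty\,M|b|$, and the whole piece is at most $C\sqrt{\iota\,\|d\|_\infty\,\|M|b|\|_\infty/(N(1-\gamma))}$.

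The genuine gap is the cross term $\|M|b|\|_\infty$.
\begin{itemize}
\item The selector-mismatch part of $b$ is harmless. Its magnitude is at most $4\sigma\|d\|_\infty$, and the second condition in \eqref{eq:recursive-comparison-sample-conditions} absorbs it.
\item The empirical residual $\xi\coloneqq(\widehat P^{\widehat\pi,\widehat W}-P^{\widehat\pi,\widehat W})\widehat W$, however, must be controlled under the \emph{mismatched} resolvent $M$. The $T_1'$--$T_4'$ bounds you want to reuse concern the matched resolvent $(I-\gamma P^{\widehat\pi,\widehat W})^{-1}$, and they do not transfer. Two kernels that are only $2\sigma$-close in TV per row can have resolvents differing by a factor of order $1+\sigma/(1-\gamma)$.
\item Under $M$, Lemma~\ref{lem:shi-original-lemma-11} leaves a Bernstein term $\sqrt{\iota/N}\,M\sqrt{\Var(\cdot)}$. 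Bounding the standard deviation in it crudely by $R$ is too weak under \eqref{eq:recursive-comparison-sample-conditions}. In fact $\|M|\xi|\|_\infty$ is exactly the second branch of Lemma~\ref{lem:two-sided-pi-hat-value-decomposition}, bounded by $\|\sum_{i=5}^{8}T_i'\|_\infty$, which contains $T_6'$ itself. A self-bounding closure is therefore unavoidable.
\end{itemize}

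Your route can be repaired along those lines. Young's inequality turns the cross term into a small multiple of $\|d\|_\infty$ plus $C\frac{\iota}{N(1-\gamma)}\|\sum_{i=5}^{8}T_i'\|_\infty$. The first sample-size condition gives $C\iota/(N(1-\gamma))\le\frac12$, so the resulting $\frac12\|T_6'\|_\infty$ can be moved to the left. The $\|d\|_\infty$ coefficients must then be chosen with the resulting factor $2$ in mind, to land at $\frac1{20}$.

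The paper avoids $\Var_P(d)$ altogether. It applies Lemma~\ref{lem:span-aware-resolvent-variance} directly to $\widehat W$ under $P^{\widehat\pi,W}$, with residual $\xi$ plus the selector mismatch. In that form $d$ enters only through $\sigma R\|d\|_\infty$, which the second sample-size condition absorbs. The paper then closes $\zeta_{\mathrm{mis}}=\|\mathcal G_{P^{\widehat\pi,W}}|\xi|\|_\infty$ by the same kind of self-bounding argument.
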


\paragraph*{Bound on $T_3'$ and $T_7'$.}
Optimality and the solver guarantee
\mainRobustDMDPSolverTolerance{} ensure that
\[
    0
    \le
    \widehat V^\star-\widehat W
    \le
    \varepsilon_{\mathrm{opt}}\cdot\bm{1}_S.
\]
Hence
\[
    \sqrt{\Var_{P^{\widehat\pi,\widehat W}}
    (\widehat V^\star-\widehat W)}
    \le
    \varepsilon_{\mathrm{opt}}\cdot\bm{1}_S.
\]
Using \eqref{eq:resolvent-bound}, $N\ge\iota$, and the definitions in
\eqref{eq:T-prime-first-branch-definitions} and
\eqref{eq:T-prime-second-branch-definitions}, we obtain
\begin{equation}
    \label{eq:T3-T7-prime-bound}
    \max\left\{
    \|T_3'\|_\infty,
    \|T_7'\|_\infty
    \right\}
    \le
    C\frac{\varepsilon_{\mathrm{opt}}}{1-\gamma}.
\end{equation}

\paragraph*{Bound on $T_4'$ and $T_8'$.}
We use the following deterministic comparison whenever two transition
distributions are close in total variation. Its proof is deferred to
Appendix~\ref{subsec:proof-standard-deviation-TV-perturbation}.
\begin{lemma}[Standard-deviation perturbation under total variation]
    \label{lem:standard-deviation-TV-perturbation}
    For any probability vectors $q,q'$ on the same finite space and any
    vector $f$ of the corresponding dimension,
    \begin{equation}
        \label{eq:standard-deviation-TV-perturbation}
        \left|
        \sqrt{\Var_q(f)}-\sqrt{\Var_{q'}(f)}
        \right|
        \le
        C\|f\|_{\mathrm{span}}
        \sqrt{\|q-q'\|_{\mathrm{TV}}},
    \end{equation}
    where $C>0$ is a universal constant.
\end{lemma}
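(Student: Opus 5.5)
The plan is to shift $f$ so that its minimum is zero and then write the variance difference as a linear functional of $q-q'$. Both sides of \eqref{eq:standard-deviation-TV-perturbation} are invariant under $f\mapsto f+c\mathbf 1$. We may therefore assume $\min f=0$, so that $0\le f\le\|f\|_{\mathrm{span}}=:M$ entrywise. Write $\Var_q(f)=q^\top(f\circ f)-(q^\top f)^2$, and set $\Delta\coloneqq\|q-q'\|_{\mathrm{TV}}$.

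First I bound the raw variance difference. Since $0\le f\circ f\le M^2$, the bound used in the proof of Proposition~\ref{prop:perturbation-bound-average-reward} gives
\[
\bigl|(q-q')^\top(f\circ f)\bigr|\le\Delta M^2 .
\]
For the mean terms,
\[
\bigl|(q^\top f)^2-(q'^\top f)^2\bigr|=\bigl|(q-q')^\top f\bigr|\,(q+q')^\top f\le\Delta M\cdot 2M .
\]
Together these give $|\Var_q(f)-\Var_{q'}(f)|\le 3M^2\Delta$.

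Second, I pass from variances to standard deviations using the elementary inequality $|\sqrt a-\sqrt b|\le\sqrt{|a-b|}$ for $a,b\ge0$. It follows from $(\sqrt a-\sqrt b)^2\le|\sqrt a-\sqrt b|(\sqrt a+\sqrt b)=|a-b|$. Applying it yields
\[
\bigl|\sqrt{\Var_q(f)}-\sqrt{\Var_{q'}(f)}\bigr|\le\sqrt3\,M\sqrt{\Delta},
\]
which is the claim with $C=\sqrt3$.

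I do not expect a real obstacle here. The only point needing care is to use the square-root inequality directly, rather than dividing by $\sqrt a+\sqrt b$, since the latter could be zero or small.
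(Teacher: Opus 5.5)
Your proof is correct and follows the same overall route as the paper: bound the variance gap by $O(\|f\|_{\mathrm{span}}^2\|q-q'\|_{\mathrm{TV}})$ and then pass to standard deviations through a square-root inequality. The only difference is in the variance step. The paper uses the variational characterization $\Var_q(f)=\min_c q[(f-c)^2]\le q[(f-q'f)^2]=\Var_{q'}(f)+(q-q')[(f-q'f)^2]$, which gives a one-sided bound with a single TV term (constant $1$) and then symmetrizes. You instead expand into the second-moment difference and the squared-mean difference, which gives the two-sided bound directly with $C=\sqrt3$.
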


Convexity of total variation and the robust-set constraint imply
\[
    \left\|
    P^{\widehat\pi,\widehat W}(s,\cdot)
    -P^{0,\widehat\pi}(s,\cdot)
    \right\|_{\mathrm{TV}}
    \le\sigma,
    \qquad s\in\cS.
\]
Lemma~\ref{lem:standard-deviation-TV-perturbation}, together with
$\|\widehat V^\star\|_{\mathrm{span}}\le R$ from
\eqref{eq:discounted-localized-span-condition}, therefore implies
\[
    \left|
    \sqrt{\Var_{P^{0,\widehat\pi}}(\widehat V^\star)}
    -\sqrt{\Var_{P^{\widehat\pi,\widehat W}}(\widehat V^\star)}
    \right|
    \le
    CR\sqrt{\sigma}\cdot\bm{1}_S.
\]
Using \eqref{eq:resolvent-bound} and
$B_{\mathrm{emp}}(R)\ge\sigma R^2$, we obtain
\begin{equation}
    \label{eq:T4-T8-prime-bound}
    \max\left\{
    \|T_4'\|_\infty,
    \|T_8'\|_\infty
    \right\}
    \le
    C\sqrt{
    \frac{B_{\mathrm{emp}}(R)\cdot\iota}
    {N(1-\gamma)^2}}.
\end{equation}
With all eight terms controlled, it remains to combine the two branches.

\paragraph*{Step 3: Putting the bounds together.}
Combining \eqref{eq:T1-T5-prime-bound},
\eqref{eq:T2-prime-bound}, \eqref{eq:T3-T7-prime-bound}, and
\eqref{eq:T4-T8-prime-bound} yields
\[
    \left\|\sum_{i=1}^{4}T_i'\right\|_\infty
    \le
    C\sqrt{
    \frac{B_{\mathrm{emp}}(R)\cdot\iota}
    {N(1-\gamma)^2}}
    +C\frac{R\cdot\iota}{N(1-\gamma)}
    +C\frac{\varepsilon_{\mathrm{opt}}}{1-\gamma},
\]
whereas \eqref{eq:T1-T5-prime-bound}, \eqref{eq:T6-prime-bound},
\eqref{eq:T3-T7-prime-bound}, and \eqref{eq:T4-T8-prime-bound} lead to
\[
    \left\|\sum_{i=5}^{8}T_i'\right\|_\infty
    \le
    C\sqrt{
    \frac{B_{\mathrm{emp}}(R)\cdot\iota}
    {N(1-\gamma)^2}}
    +C\frac{R\cdot\iota}{N(1-\gamma)}
    +\frac{1}{20}\|\widehat W-W\|_\infty
    +C\frac{\varepsilon_{\mathrm{opt}}}{1-\gamma}.
\]
Substituting these two estimates into
\eqref{eq:V-hat-V-pi-hat-T-decomp} and using
$\|\widehat W-W\|_\infty\le\Delta$ from \eqref{eq:Delta-defn} completes the
proof of Lemma~\ref{lem:V-hat-V-pi-hat-bound}.

\subsubsection{Proof of Lemma~\ref{lem:one-step-robust-concentration}}
\label{subsec:Proof-lemma-one-step-robust-concentration}
The proof has three steps. We first use TV duality and discretization to
reduce the robust-kernel error to a finite family of nominal empirical
processes. We then establish a uniform Bernstein bound for this family.
Finally, we average the rowwise estimate over an arbitrary policy and derive
the normalized-resolvent bound.

\paragraph*{Step 1: Reduction and discretization.}
In this step, we reduce the robust-kernel error for a fixed state-action pair
to a clipped nominal empirical process and discretize its clipping level.
Fix a value vector $V$ that is independent of the empirical transition
kernels, and set
\[
    V^\circ
    \coloneqq
    V-\min_sV(s)\cdot\bm{1}_S.
\]
The robust expectation and variance are invariant under adding constants to
$V$, and
\[
    0\le V^\circ\le\|V\|_{\mathrm{span}}\cdot\bm{1}_S,
    \qquad
    \|V^\circ\|_\infty=\|V\|_{\mathrm{span}}.
\]
Fix a state-action pair $(s,a)$. Applying the TV strong-duality formula in
\citet[Lemma~4]{shi2023curious} to $V^\circ$ yields
\begin{align*}
    P_{s,a}^{V}V
    & =
    \min_sV(s)
    +
    \max_{\alpha\in[0,\|V\|_{\mathrm{span}}]}
    \left\{
    P^0_{s,a}\min\{V^\circ,\alpha\}
    -\sigma_{s,a}\alpha
    \right\},
    \\
    \widehat P_{s,a}^{V}V
    & =
    \min_sV(s)
    +
    \max_{\alpha\in[0,\|V\|_{\mathrm{span}}]}
    \left\{
    \widehat P^0_{s,a}\min\{V^\circ,\alpha\}
    -\sigma_{s,a}\alpha
    \right\}.
\end{align*}
The terms $\min_sV(s)$ and $-\sigma_{s,a}\alpha$ are common to the two
expressions. Since the difference of two maxima is at most the maximum
pointwise difference, we obtain
\[
    \left|
    \widehat{P}_{s,a}^{V}V-P_{s,a}^{V}V
    \right|
    \le
    \sup_{\alpha\in [0,\|V\|_{\mathrm{span}}]}
    \left|
    (\widehat{P}^0_{s,a}-P^0_{s,a})\min\{V^\circ,\alpha\}
    \right|,
\]
where the minimum is taken coordinatewise.

If $\|V\|_{\mathrm{span}}=0$, the displayed supremum is zero. It therefore
remains to consider $\|V\|_{\mathrm{span}}>0$. Define the clipping grid
\[
    \mathcal N_\alpha
    \coloneqq
    \left\{
    \frac{k\|V\|_{\mathrm{span}}}{6N}
    :
    0\le k\le6N
    \right\}.
\]
Then $|\mathcal N_\alpha|=6N+1$. For every
$\alpha\in[0,\|V\|_{\mathrm{span}}]$, choose
$\alpha'\in\mathcal N_\alpha$ with
$|\alpha'-\alpha|\le\|V\|_{\mathrm{span}}/(6N)$. Moreover,
\[
    \left\|
    \min\{V^\circ,\alpha\}
    -\min\{V^\circ,\alpha'\}
    \right\|_\infty
    =
    \sup_{s'\in\cS}
    \left|\min\{V^\circ(s'),\alpha\}
    -\min\{V^\circ(s'),\alpha'\}\right|
    \le \frac{\|V\|_{\mathrm{span}}}{6N},
\]
and hence
\[
    \left|
    (\widehat P^0_{s,a}-P^0_{s,a})
    \left(
    \min\{V^\circ,\alpha\}
    -\min\{V^\circ,\alpha'\}
    \right)
    \right|
    \le
    \frac{\|V\|_{\mathrm{span}}}{3N}
\]
and
\begin{equation}
    \sup_{\alpha\in[0,\|V\|_{\mathrm{span}}]}
    \left|
    (\widehat P^0_{s,a}-P^0_{s,a})
    \min\{V^\circ,\alpha\}
    \right|
    \le
    \max_{\alpha\in\mathcal N_\alpha}
    \left|
    (\widehat P^0_{s,a}-P^0_{s,a})
    \min\{V^\circ,\alpha\}
    \right|
    +
    \frac{\|V\|_{\mathrm{span}}}{3N}.
    \label{eq:clip-grid-reduce}
\end{equation}
Therefore, up to the interpolation error in
\eqref{eq:clip-grid-reduce}, it remains to control the empirical process at
the finitely many levels in $\mathcal N_\alpha$.

\paragraph*{Step 2: Uniform rowwise concentration.}
In this step, we apply Bernstein's inequality on the clipping grid and obtain
a bound that holds simultaneously for all state-action pairs.
For a fixed grid level~$\alpha$, the vector
$\min\{V^\circ,\alpha\}$ is independent of
$\widehat P^0_{s,a}$ and is bounded in
$[0,\|V\|_{\mathrm{span}}]$. Bernstein's inequality shows that
with probability at least $1-\delta'$,
\[
    \left|
    (\widehat P^0_{s,a}-P^0_{s,a})
    \min\{V^\circ,\alpha\}
    \right|
    \le
    \sqrt{
        \frac{
        2\Var_{P^0_{s,a}}(\min\{V^\circ,\alpha\})
        \log(\frac{2}{\delta'})
        }{N}
    }
    +
    \frac{2\|V\|_{\mathrm{span}}\log(\frac{2}{\delta'})}{3N}.
\]
Since the scalar clipping map
$x\mapsto \min\{x,\alpha\}$ is 1-Lipschitz, the independent-copy
representation of variance implies
\[
    \Var_{P^0_{s,a}}(\min\{V^\circ,\alpha\})
    \le
    \Var_{P^0_{s,a}}(V^\circ)
    =
    \Var_{P^0_{s,a}}(V).
\]
Set
\[
    \delta'
    \coloneqq
    \frac{\delta}{SA(6N+1)}.
\]
A union bound over all $\alpha\in\mathcal N_\alpha$ and all state-action
pairs yields probability at least $1-\delta$. Moreover,
\[
    \log\left(\frac{2}{\delta'}\right)
    =
    \log\left(\frac{2SA(6N+1)}{\delta}\right)
    \le
    \iota.
\]
Consequently, on this event, simultaneously for every $(s,a)$,
\[
    \max_{\alpha\in\mathcal N_\alpha}
    \left|
    (\widehat{P}^0_{s,a}-P^0_{s,a})\min\{V^\circ,\alpha\}
    \right|
    \le
    \sqrt{
        \frac{2\Var_{P^0_{s,a}}(V)\cdot\iota}{N}
    }
    +
    \frac{2\|V\|_{\mathrm{span}}\cdot\iota}{3N}.
\]
Combining this with \eqref{eq:clip-grid-reduce} results in
\[
    \sup_{\alpha\in[0,\|V\|_{\mathrm{span}}]}
    \left|
    (\widehat{P}^0_{s,a}-P^0_{s,a})\min\{V^\circ,\alpha\}
    \right|
    \le
    \sqrt{
        \frac{2\Var_{P^0_{s,a}}(V)\cdot\iota}{N}
    }
    +
    \frac{\|V\|_{\mathrm{span}}\cdot\iota}{N}.
\]
Here we used $\iota\ge1$ to combine the two linear terms.
Thus, simultaneously for all $(s,a)$,
\begin{equation}
    \label{eq:local-fixed-value-robust-kernel-concentration}
    \left|
    \widehat{P}_{s,a}^{V}V-P_{s,a}^{V}V
    \right|
    \le
    \sqrt{
        \frac{2\Var_{P^0_{s,a}}(V)\cdot\iota}{N}
    }
    +
    \frac{\|V\|_{\mathrm{span}}\cdot\iota}{N}.
\end{equation}
We have therefore established the desired robust-kernel concentration bound
simultaneously for every state-action pair.

\paragraph*{Step 3: Policy averaging and the resolvent bound.}
In this final step, we average the rowwise estimate over an arbitrary policy,
convert the averaged conditional variances into
$\Var_{P^{0,\pi}}(V)$, and then pass to the normalized-resolvent form.
Because \eqref{eq:local-fixed-value-robust-kernel-concentration} holds
simultaneously for all $(s,a)$, we may now fix any policy $\pi$. By the
definitions of $P^{\pi,V}$ and
$\widehat P^{\pi,V}$, for every state~$s$,
\[
    \left|
    \left(
    \widehat P^{\pi,V}
    -
    P^{\pi,V}
    \right)V
    \right|(s)
    \le
    \sum_{a\in\cA}\pi(a\mid s)
    \left|
    \widehat P_{s,a}^{V}V
    -
    P_{s,a}^{V}V
    \right|.
\]
Combining \eqref{eq:local-fixed-value-robust-kernel-concentration} with
Jensen's inequality leads to
\[
    \left|
    \left(
    \widehat P^{\pi,V}
    -
    P^{\pi,V}
    \right)V
    \right|(s)
    \le
    \sqrt{
        \frac{
        2\sum_{a\in\cA}\pi(a\mid s)\Var_{P^0_{s,a}}(V)\cdot\iota
        }{N}
    }
    +
    \frac{\|V\|_{\mathrm{span}}\cdot\iota}{N}.
\]
The law of total variance implies
\begin{align*}
    \Var_{P^{0,\pi}}(V)(s)
    &=
    \sum_{a\in\cA}\pi(a\mid s)\Var_{P^0_{s,a}}(V)
+
    \sum_{a\in\cA}\pi(a\mid s)
    \left(
    P^0_{s,a}V-P^{0,\pi}V(s)
    \right)^2
    \\
    &\ge
    \sum_{a\in\cA}\pi(a\mid s)\Var_{P^0_{s,a}}(V).
\end{align*}
Substituting this inequality into the preceding policy-averaged estimate
proves \eqref{eq:fixed-value-robust-kernel-pointwise}.

For any stochastic matrix $L$, the matrix
$(I-\gamma L)^{-1}$ is nonnegative and
$\mathcal G_L\bm{1}_S=\bm{1}_S$. Multiplying
\eqref{eq:fixed-value-robust-kernel-pointwise} by the
nonnegative matrix $(1-\gamma)(I-\gamma L)^{-1}$ and using
\eqref{eq:normalized-discounted-resolvent} therefore proves
\eqref{eq:fixed-value-robust-kernel-resolvent}, completing the proof.

\subsubsection{Proof of Lemma~\ref{lem:nominal-fixed-vector-concentration}}
\label{subsec:proof-nominal-fixed-vector-concentration}

Consider a fixed vector $g$ that is independent of the empirical
transition kernels. Set
$g^\circ\coloneqq g-\min_s g(s)\cdot \bm{1}_S$. Then
$(\widehat P^{0,\pi}-P^{0,\pi})g=(\widehat P^{0,\pi}-P^{0,\pi})g^\circ$ and
$\Var_{P^{0,\pi}}(g)=\Var_{P^{0,\pi}}(g^\circ)$. Also,
$0\le g^\circ\le\|g\|_{\mathrm{span}}\cdot \bm{1}_S$.

Fix a state-action pair $(s,a)$. Bernstein's inequality applied to the fixed
bounded vector $g^\circ$ shows that, with probability at least $1-\delta'$,
\[
    \left|
    (\widehat P^0_{s,a}-P^0_{s,a})g
    \right|
    =
    \left|
    (\widehat P^0_{s,a}-P^0_{s,a})g^\circ
    \right|
    \le
    \sqrt{
        \frac{2\Var_{P^0_{s,a}}(g)\log(\frac{2}{\delta'})}{N}
    }
    +
    \frac{2\|g\|_{\mathrm{span}}\log(\frac{2}{\delta'})}{3N}.
\]
Set $\delta'=\delta/(SA)$ and take a union bound over all state-action
pairs. Then apply policy-averaging and total-variance arguments similar to those used in the proof of Lemma~\ref{lem:one-step-robust-concentration}. With probability at least
$1-\delta$, the following bound holds simultaneously for every policy $\pi$:
\[
    \left|
    \left(
    \widehat P^{0,\pi}
    -
    P^{0,\pi}
    \right)g
    \right|
    \le
    C\sqrt{
        \frac{\Var_{P^{0,\pi}}(g)\cdot\iota}{N}
    }
    +
    C\frac{\|g\|_{\mathrm{span}}\cdot\iota}{N}\cdot \bm{1}_S.
\]
Here $\iota$ dominates
$\log(2SA/\delta)$. This proves \eqref{eq:fixed-vector-nominal-pointwise}.  For any stochastic matrix $L$, the matrix
$(I-\gamma L)^{-1}$ is nonnegative and
$\mathcal G_L\bm{1}_S=\bm{1}_S$. Multiplying
\eqref{eq:fixed-vector-nominal-pointwise} by
$(1-\gamma)(I-\gamma L)^{-1}$ and using
\eqref{eq:normalized-discounted-resolvent} proves
\eqref{eq:fixed-vector-nominal-resolvent}.

\subsubsection{Proof of Lemma~\ref{lem:two-sided-fixed-value-decomposition}}
\label{subsec:Proof-lemma-two-sided-fixed-value-decomposition}

Recall the fixed-policy aliases $U$ and $\widehat U$ from
Appendix~\ref{subsec:discounted-preliminaries}.
For a generic value vector $V$, let $P^{\pi_\gamma^\star,V}$ and
$\widehat P^{\pi_\gamma^\star,V}$ denote minimizers in the true and empirical
robust Bellman operators for the fixed policy $\pi_\gamma^\star$ at $V$,
respectively. The true and empirical robust Bellman equations are
\[
    U
    =
    r^{\pi_\gamma^\star}
    +
    \gamma P^{\pi_\gamma^\star,U}
    U,
    \qquad
    \widehat U
    =
    r^{\pi_\gamma^\star}
    +
    \gamma \widehat P^{\pi_\gamma^\star,\widehat U}
    \widehat U.
\]

\paragraph*{Step 1: Bounding $\widehat U-U$.}
Because
$\widehat P^{\pi_\gamma^\star,\widehat U}$
minimizes the empirical
robust expectation of $\widehat U$ and
$\widehat P^{\pi_\gamma^\star,U}$ is feasible,
\[
    \widehat P^{\pi_\gamma^\star,\widehat U}
    \widehat U
    \le
    \widehat P^{\pi_\gamma^\star,U}
    \widehat U.
\]
Using this and comparing the two Bellman equations yields
\begin{align*}
    \widehat U
    -
    U
    &= \gamma \widehat{P}^{\pi_\gamma^\star,\widehat U} \widehat U
    - \gamma P^{\pi_\gamma^\star,U} U \\
    &\le \gamma \widehat{P}^{\pi_\gamma^\star,U} \widehat U
    - \gamma P^{\pi_\gamma^\star,U} U \\
     & \le
    \gamma\widehat P^{\pi_\gamma^\star,U}
    \left(
    \widehat U
    -
    U
    \right)
    +
    \gamma
    \left(
    \widehat P^{\pi_\gamma^\star,U}
    -
    P^{\pi_\gamma^\star,U}
    \right)
    U.
\end{align*}
The inverse
$(I-\gamma\widehat P^{\pi_\gamma^\star,U})^{-1}$ is
nonnegative.
Hence, after rearranging, multiplying by this inverse preserves the
coordinatewise inequality and yields
\begin{equation}
    \label{eq:fixed-value-first-resolvent-direction}
    \widehat U
    -
    U
     \le
    \gamma
    \left(I-\gamma\widehat P^{\pi_\gamma^\star,U}\right)^{-1}
    \left(
    \widehat P^{\pi_\gamma^\star,U}
    -
    P^{\pi_\gamma^\star,U}
    \right)
    U.
\end{equation}

\paragraph*{Step 2: Bounding $U-\widehat U$.}
We have
\begin{align*}
    U
    -
    \widehat U
     & =
    \gamma\widehat P^{\pi_\gamma^\star,\widehat U}
    \left(
    U
    -
    \widehat U
    \right)
    +
    \gamma
    \left(
    P^{\pi_\gamma^\star,U}
    -
    \widehat P^{\pi_\gamma^\star,\widehat U}
    \right)
    U.
\end{align*}
Since $\widehat P^{\pi_\gamma^\star,U}$ minimizes the
empirical robust
expectation of $U$,
$
    \widehat P^{\pi_\gamma^\star,U}
    U
    \le
    \widehat P^{\pi_\gamma^\star,\widehat U}
    U$.
Hence
\[
    \left(
    P^{\pi_\gamma^\star,U}
    -
    \widehat P^{\pi_\gamma^\star,\widehat U}
    \right)
    U
    \le
    \left(
    P^{\pi_\gamma^\star,U}
    -
    \widehat P^{\pi_\gamma^\star,U}
    \right)
    U.
\]
The inverse
$(I-\gamma\widehat P^{\pi_\gamma^\star,\widehat U})^{-1}$
is also nonnegative. Repeating the order-preserving rearrangement used for
\eqref{eq:fixed-value-first-resolvent-direction} yields
\begin{align*}
    U
    -
    \widehat U
     & \le
    \gamma
    \left(I-\gamma\widehat P^{\pi_\gamma^\star,\widehat U}\right)^{-1}
    \left(
    P^{\pi_\gamma^\star,U}
    -
    \widehat P^{\pi_\gamma^\star,U}
    \right)
    U.
\end{align*}

\paragraph*{Step 3: Combining the two one-sided bounds.}
Combining the two directions results in
\begin{align*}
     \left\|
    \widehat U
    -
    U
    \right\|_{\infty}
     \le
    \gamma
    \max\Bigg\{&
    \left\|
    \left(I-\gamma\widehat P^{\pi_\gamma^\star,U}\right)^{-1}
    \left(
    \widehat P^{\pi_\gamma^\star,U}
    -
    P^{\pi_\gamma^\star,U}
    \right)
    U
    \right\|_\infty,
    \\
     &
    \left\|
    \left(I-\gamma\widehat P^{\pi_\gamma^\star,\widehat U}\right)^{-1}
    \left(
    \widehat P^{\pi_\gamma^\star,U}
    -
    P^{\pi_\gamma^\star,U}
    \right)
    U
    \right\|_\infty
    \Bigg\}.
\end{align*}
Since each resolvent is nonnegative, it satisfies $|Ax|\le A|x|$.
Together with $\gamma\le1$, this inequality implies
\begin{align*}
    \big\|\widehat U
    -U\big\|_{\infty}
      \le
    \max\Bigg\{&
    \left\|
    \left(I-\gamma\widehat P^{\pi_\gamma^\star,U}\right)^{-1}
    \left|
    \left(
    \widehat P^{\pi_\gamma^\star,U}
    -
    P^{\pi_\gamma^\star,U}
    \right)
    U
    \right|
    \right\|_\infty,
    \\
    & 
    \left\|
    \left(I-\gamma\widehat P^{\pi_\gamma^\star,\widehat U}\right)^{-1}
    \left|
    \left(
    \widehat P^{\pi_\gamma^\star,U}
    -
    P^{\pi_\gamma^\star,U}
    \right)
    U
    \right|
    \right\|_\infty
    \Bigg\}.
\end{align*}
The last display is \eqref{eq:two-sided-fixed-value-decomposition},
completing the proof.

\subsubsection{Proof of Lemma~\ref{lem:span-aware-resolvent-variance}}
\label{subsec:Proof-lemma-span-aware-resolvent-variance}
The proof has three steps. We first prove a general resolvent-variance inequality. We then apply it twice to bound two different terms and combine the resulting bounds to obtain the
desired estimate.

\paragraph*{Step 1: A general resolvent-variance bound.}
Fix an initial state $s$, let $e_s$ be the corresponding standard basis
vector, and set
\begin{equation}
    \label{eq:discounted-state-distribution}
    \nu_s
    \coloneqq
    (1-\gamma)e_s^\top(I-\gamma P)^{-1}.
\end{equation}
Equivalently,
\[
    \nu_s
    =
    (1-\gamma)\sum_{t=0}^{\infty}\gamma^t e_s^\top P^t.
\]
The coefficients $(1-\gamma)\gamma^t$ sum to one, so $\nu_s$ is a probability
distribution. For every vector $f$, we claim that
\begin{equation}
    \label{eq:discounted-variance-drift}
    \nu_s\Var_P(f)
    \le
    \frac{1-\gamma}{\gamma}\|f\|_{\mathrm{span}}^2
    +\frac{2}{\gamma}\|f\|_{\mathrm{span}}\,
    \nu_s[f-Pf]_+.
\end{equation}
We now prove \eqref{eq:discounted-variance-drift}.
Since $\Var_P(f)$ and $f-Pf$ are translation invariant, we may shift $f$ so
that $0\le f\le\|f\|_{\mathrm{span}}\cdot\bm{1}_S$. Rearranging
\eqref{eq:discounted-state-distribution} yields
\[
    \nu_sP
    =
    \frac{1}{\gamma}\nu_s
    -
    \frac{1-\gamma}{\gamma}e_s^\top.
\]
Therefore,
\begin{align*}
    \nu_s\Var_P(f)
     & =
    \nu_sP(f\circ f)-\nu_s(Pf\circ Pf) \\
     & \le
    \frac{1}{\gamma}\nu_s(f\circ f)
    -\nu_s(Pf\circ Pf) \\
     & =
    \frac{1}{\gamma}\nu_s\!\left(f\circ f-\gamma Pf\circ Pf\right),
\end{align*}
where the inequality drops the nonpositive term
$-(1-\gamma)e_s^\top(f\circ f)/\gamma$. Since $P$ is stochastic,
$0\le Pf\le\|f\|_{\mathrm{span}}\cdot\bm{1}_S$. Hence, coordinatewise,
\[
    f(s)^2-(Pf)(s)^2
    =
    \bigl(f(s)-(Pf)(s)\bigr)
    \bigl(f(s)+(Pf)(s)\bigr)
    \le
    2\|f\|_{\mathrm{span}}[f(s)-(Pf)(s)]_+.
\]
Here $[\cdot]_+$ is applied coordinatewise. Consequently,
\begin{align*}
    f\circ f-\gamma Pf\circ Pf
     & =
    \bigl(f\circ f-Pf\circ Pf\bigr)
    +(1-\gamma)Pf\circ Pf \\
     & \le
    2\|f\|_{\mathrm{span}}[f-Pf]_+
    +(1-\gamma)\|f\|_{\mathrm{span}}^2\cdot\bm{1}_S.
\end{align*}
Combining this coordinatewise inequality with the preceding bound and
using $\nu_s\bm{1}_S=1$, we obtain
\begin{align*}
    \nu_s\Var_P(f)
    &\le
    \frac{1}{\gamma}\nu_s
    \left(f\circ f-\gamma Pf\circ Pf\right)
    \\
    &\le
    \frac{2}{\gamma}\|f\|_{\mathrm{span}}\,
    \nu_s[f-Pf]_+
    +\frac{1-\gamma}{\gamma}\|f\|_{\mathrm{span}}^2
    \nu_s\bm{1}_S
    \\
    &=
    \frac{2}{\gamma}\|f\|_{\mathrm{span}}\,
    \nu_s[f-Pf]_+
    +\frac{1-\gamma}{\gamma}\|f\|_{\mathrm{span}}^2.
\end{align*}
This is \eqref{eq:discounted-variance-drift}.

\paragraph*{Step 2: Bounding $\nu_s\Var_P(\bar h)$.}
Let
\[
    g
    \coloneqq
    \bar\rho+\bar h-r^\pi-P^{0,\pi}\bar h.
\]
By \mainNominalAnchorSupersolution{}, $g\ge0$. We also have
$\Var_P(\bar h)=\Var_P(-\bar h)$ and
\begin{equation}
    \label{eq:anchor-drift-decomposition}
    P\bar h-\bar h
    =
    \bar\rho-r^\pi-g+(P-P^{0,\pi})\bar h.
\end{equation}
This identity, the bounds in
\eqref{eq:anchored-resolvent-residual-envelopes}, and the inequalities
$r^\pi\ge0$ and $g\ge0$ imply
\[
    \nu_s[P\bar h-\bar h]_+
    \le
    \bar\rho+C\sigma H_{\mathrm{anc}}+\nu_s|\xi_h|.
\]
Equations \eqref{eq:generic-anchor-defect} and
\eqref{eq:anchored-resolvent-value-premises} imply
$\bar\rho\le\beta+1$.
By \eqref{eq:discounted-state-distribution} and
\eqref{eq:normalized-discounted-resolvent},
$\nu_s|\xi_h|\le\|\mathcal G_P |\xi_h|\|_\infty$.
Applying \eqref{eq:discounted-variance-drift} with $f=-\bar h$ and
using $\gamma\ge1/2$ now yields
\begin{align}
    \nu_s\Var_P(\bar h)
    &\le
    \frac{1-\gamma}{\gamma}H_{\mathrm{anc}}^2
    +\frac{2}{\gamma}H_{\mathrm{anc}}\,
    \nu_s[P\bar h-\bar h]_+
    \notag\\
    &\le
    C\left(
    (1-\gamma)H_{\mathrm{anc}}^2
    +H_{\mathrm{anc}}\beta
    +H_{\mathrm{anc}}
    +\sigma H_{\mathrm{anc}}^2
    +H_{\mathrm{anc}}\|\mathcal G_P|\xi_h|\|_\infty
    \right)
    \notag\\
    &\le
    C\left(
    B(R,\beta)
    +R\|\mathcal G_P |\xi_h|\|_\infty
    \right).
    \label{eq:anchor-variance-average}
\end{align}
For the last inequality, we used $H_{\mathrm{anc}}\le R$ and matched
the first four terms with the four terms in $B(R,\beta)$ from
\eqref{eq:generic-localized-budget}.

\paragraph*{Step 3: Bounding $\nu_s\Var_P(V-\bar h)$ and concluding.}
Set $u\coloneqq V-\bar h$. Then
\[
    \|u\|_{\mathrm{span}}
    \le
    \|V\|_{\mathrm{span}}+\|\bar h\|_{\mathrm{span}}
    \le 2R.
\]
Moreover, combining $V=r^\pi+\gamma PV+b$ from
\eqref{eq:anchored-resolvent-value-premises} with
\eqref{eq:anchor-drift-decomposition} results in
\begin{align*}
    u-Pu
     & =
    V-PV-\bar h+P\bar h \\
     & =
    \bar\rho-(1-\gamma)\min_sV(s)
    -g
    -(1-\gamma)P\!\left(V-\min_sV\cdot \bm{1}_S\right)
    +b
    +(P-P^{0,\pi})\bar h.
\end{align*}
Dropping the nonpositive terms $-g$ and
$-(1-\gamma)P(V-\min_sV\cdot\bm{1}_S)$, the assumed bounds in
\eqref{eq:anchored-resolvent-residual-envelopes} and the definition of
$\beta$ in \eqref{eq:generic-anchor-defect} then imply
\[
    \nu_s[u-Pu]_+
    \le
    \beta
    +C\sigma H_{\mathrm{anc}}
    +\|\mathcal G_P |\xi_V|\|_\infty
    +\|\mathcal G_P |\xi_h|\|_\infty.
\]
A second application of \eqref{eq:discounted-variance-drift}, now with
$f=u$, together with $\gamma\ge1/2$ and
$\|u\|_{\mathrm{span}}\le2R$, yields
\begin{align*}
    \nu_s\Var_P(u)
    &\le
    C(1-\gamma)R^2
    +CR\,\nu_s[u-Pu]_+
    \\
    &\le
    C\left(
    (1-\gamma)R^2
    +R\beta
    +\sigma R H_{\mathrm{anc}}
    +R\|\mathcal G_P|\xi_V|\|_\infty
    +R\|\mathcal G_P|\xi_h|\|_\infty
    \right).
\end{align*}
Using $H_{\mathrm{anc}}\le R$ and the definition of $B(R,\beta)$ in
\eqref{eq:generic-localized-budget}, we obtain
\begin{equation}
    \label{eq:u-variance-average}
    \nu_s\Var_P(u)
    \le
    C\left(
    B(R,\beta)
    +R\|\mathcal G_P |\xi_V|\|_\infty
    +R\|\mathcal G_P |\xi_h|\|_\infty
    \right).
\end{equation}

Since
\[
    \Var_P(V)
    =
    \Var_P(\bar h+u)
    \le
    2\Var_P(\bar h)+2\Var_P(u),
\]
\eqref{eq:anchor-variance-average} and
\eqref{eq:u-variance-average} imply
\[
    \nu_s\Var_P(V)
    \le
    C\left(
    B(R,\beta)
    +R\|\mathcal G_P |\xi_V|\|_\infty
    +R\|\mathcal G_P |\xi_h|\|_\infty
    \right).
\]
Finally, Jensen's inequality for the probability measure $\nu_s$ yields
\[
    e_s^\top(I-\gamma P)^{-1}\sqrt{\Var_P(V)}
    =
    \frac{1}{1-\gamma}\nu_s\sqrt{\Var_P(V)}
    \le
    \frac{1}{1-\gamma}\sqrt{\nu_s\Var_P(V)}.
\]
Since $s$ was arbitrary, this proves the lemma.

\subsubsection{Proof of Lemma~\ref{lem:T2-anchored-variance-bound}}
\label{subsec:proof-T2-anchored-variance-bound}

Recall the fixed-policy aliases from
Appendix~\ref{subsec:discounted-preliminaries}:
\[
    U\coloneqq V_\gamma^{\star,\sigma}
    =V_\gamma^{\pi_\gamma^\star,\sigma},
    \qquad
    \widehat U\coloneqq
    \widehat V_\gamma^{\pi_\gamma^\star,\sigma}.
\]
The proof has three steps. We first apply
Lemma~\ref{lem:span-aware-resolvent-variance} to reduce the bound on $T_2$ to
two fixed residuals. We then bound those residuals. Finally, we substitute the
residual bound into the reduction from the first step.

\paragraph*{Step 1: Reduction of $T_2$ to fixed residuals.}
In this step, we verify the premises of
Lemma~\ref{lem:span-aware-resolvent-variance} and apply it to isolate the two
fixed residuals that remain to be bounded.
For this application, set
\[
    \pi=\pi_\gamma^\star,
    \qquad
    V=U,
    \qquad
    P=\widehat P^{\pi_\gamma^\star,U},
    \qquad
    r^\pi=r^{\pi_\gamma^\star}.
\]
The span premise in \eqref{eq:anchored-resolvent-value-premises} follows from
\eqref{eq:discounted-localized-span-condition}, and the associated anchor
defect is $\beta=\beta_\star$ by \eqref{eq:true-anchor-defect}. Define the
value and anchor sampling residuals by
\[
    \xi_V
    =
    \left(
    \widehat P^{\pi_\gamma^\star,U}
    -P^{\pi_\gamma^\star,U}
    \right)
    U,
    \qquad
    \xi_h
    =
    \left(
    \widehat P^{0,\pi_\gamma^\star}
    -P^{0,\pi_\gamma^\star}
    \right)\bar h.
\]
The true Bellman equation can then be written as
\[
    U
    =
    r^{\pi_\gamma^\star}
    +\gamma \widehat P^{\pi_\gamma^\star,U} U
    -\gamma\xi_V.
\]
Since
$\widehat P^{\pi_\gamma^\star,U}$ belongs to the robust uncertainty set
centered at $\widehat P^{0,\pi_\gamma^\star}$, the definition of that set and
$\|\bar h\|_{\mathrm{span}}\le H_{\mathrm{anc}}$ imply
\[
    \left|
    \left(
    \widehat P^{\pi_\gamma^\star,U}
    -P^{0,\pi_\gamma^\star}
    \right)\bar h
    \right|
    \le
    C\sigma H_{\mathrm{anc}}\cdot\bm{1}_S
    +|\xi_h|.
\]
Because $\gamma\le1$, $|b|\le|\xi_V|$ for $b=-\gamma\xi_V$.
Applying Lemma~\ref{lem:span-aware-resolvent-variance} to the definition of
$T_2$ therefore yields
\begin{equation}
    \label{eq:T2-lemma18-intermediate}
    \|T_2\|_\infty
    \le
    C\sqrt{\frac{\iota}{N(1-\gamma)^2}}
    \sqrt{
        B_\star(R)
        +R\left\|
        \mathcal G_{\widehat P^{\pi_\gamma^\star,U}}|\xi_V|
        \right\|_\infty
        +R\left\|
        \mathcal G_{\widehat P^{\pi_\gamma^\star,U}}|\xi_h|
        \right\|_\infty
    }.
\end{equation}

\paragraph*{Step 2: Bounding the fixed residuals.}
In this step, we bound the two residuals
isolated in Step~1.
Lemma~\ref{lem:one-step-robust-concentration} applied to the fixed
value $U$ establishes
\begin{equation}
    \label{eq:T2-xiV-concentration}
    \left\|
    \mathcal G_{\widehat P^{\pi_\gamma^\star,U}} |\xi_V|
    \right\|_\infty
    \le
    C\sqrt{\frac{\iota}{N}}
    \left\|
    \mathcal G_{\widehat P^{\pi_\gamma^\star,U}}
    \sqrt{\Var_{P^{0,\pi_\gamma^\star}}
    (U)}
    \right\|_\infty
    +C\frac{R\cdot\iota}{N},
\end{equation}
where we used
$\|U\|_{\mathrm{span}}\le R$ from
\eqref{eq:discounted-localized-span-condition}.
Separately, Lemma~\ref{lem:nominal-fixed-vector-concentration} applied to the
fixed anchor vector $\bar h$ provides
\begin{equation}
    \label{eq:T2-xih-concentration}
    \left\|
    \mathcal G_{\widehat P^{\pi_\gamma^\star,U}} |\xi_h|
    \right\|_\infty
    \le
    C\sqrt{\frac{\iota}{N}}
    \left\|
    \mathcal G_{\widehat P^{\pi_\gamma^\star,U}}
    \sqrt{\Var_{P^{0,\pi_\gamma^\star}}(\bar h)}
    \right\|_\infty
    +C\frac{R\cdot\iota}{N},
\end{equation}
where \eqref{eq:anchor-span-scale} and $R\ge H_{\mathrm{anc}}$ give
$\|\bar h\|_{\mathrm{span}}\le H_{\mathrm{anc}}\le R$.
We first transfer the two nominal standard deviations to the empirical robust
kernel.
For each state, the robust-set constraint implies
\begin{align*}
    \left\|
    \widehat P^{\pi_\gamma^\star,U}(s,\cdot)
    -\widehat P^{0,\pi_\gamma^\star}(s,\cdot)
    \right\|_{\mathrm{TV}}
     & \le
    \sum_{a\in\cA}\pi_\gamma^\star(a\mid s)
    \left\|
    \widehat P_{s,a}^{U}-\widehat P^0_{s,a}
    \right\|_{\mathrm{TV}}
    \le \sigma,
\end{align*}
where the first inequality follows from the convexity of total variation.
The first transfer compares the empirical nominal kernel with the empirical
robust kernel. For $f\in\{U,\bar h\}$,
Lemma~\ref{lem:standard-deviation-TV-perturbation} and
$\|f\|_{\mathrm{span}}\le R$ imply
\[
    \sqrt{\Var_{\widehat P^{0,\pi_\gamma^\star}}(f)}
    \le
    \sqrt{\Var_{\widehat P^{\pi_\gamma^\star,U}}(f)}
    +CR\sqrt{\sigma}\cdot\bm{1}_S.
\]
The second transfer compares the true and empirical nominal kernels. For the
same two fixed vectors, Lemma~\ref{lem:empirical-std-perturbation} provides
\[
    \sqrt{\Var_{P^{0,\pi_\gamma^\star}}(f)}
    \le
    \sqrt{\Var_{\widehat P^{0,\pi_\gamma^\star}}(f)}
    +CR\sqrt{\frac{\iota}{N}}\cdot\bm{1}_S.
\]
Combining the two transfers results in
\[
    \sqrt{\Var_{P^{0,\pi_\gamma^\star}}(f)}
    \le
    \sqrt{\Var_{\widehat P^{\pi_\gamma^\star,U}}(f)}
    +CR\left(
    \sqrt{\sigma}+\sqrt{\frac{\iota}{N}}
    \right)\cdot\bm{1}_S,
    \qquad
    f\in\{U,\bar h\}.
\]
Because $\mathcal G_{\widehat P^{\pi_\gamma^\star,U}}$ is nonnegative and
maps $\bm{1}_S$ to itself, summing the two resulting resolvent bounds leads to
\begin{align}
    &\left\|
    \mathcal G_{\widehat P^{\pi_\gamma^\star,U}}
    \sqrt{\Var_{P^{0,\pi_\gamma^\star}}(U)}
    \right\|_\infty
    +
    \left\|
    \mathcal G_{\widehat P^{\pi_\gamma^\star,U}}
    \sqrt{\Var_{P^{0,\pi_\gamma^\star}}(\bar h)}
    \right\|_\infty
    \nonumber\\
    &\quad\le
    \left\|
    \mathcal G_{\widehat P^{\pi_\gamma^\star,U}}
    \sqrt{\Var_{\widehat P^{\pi_\gamma^\star,U}}
    (U)}
    \right\|_\infty
    +
    \left\|
    \mathcal G_{\widehat P^{\pi_\gamma^\star,U}}
    \sqrt{\Var_{\widehat P^{\pi_\gamma^\star,U}}(\bar h)}
    \right\|_\infty
    +CR\left(
    \sqrt{\sigma}+\sqrt{\frac{\iota}{N}}
    \right).
    \label{eq:T2-nominal-variance-transfer}
\end{align}

Lemma~\ref{lem:span-aware-resolvent-variance} bounds the first empirical
robust-kernel term on the right-hand side of
\eqref{eq:T2-nominal-variance-transfer}. For the second term, applying Jensen's
inequality to \eqref{eq:anchor-variance-average}, exactly as in the final step
of the proof of Lemma~\ref{lem:span-aware-resolvent-variance}, yields
\begin{equation}
    \label{eq:T2-anchor-variance-bound}
    \left\|
    \mathcal G_{\widehat P^{\pi_\gamma^\star,U}}
    \sqrt{\Var_{\widehat P^{\pi_\gamma^\star,U}}(\bar h)}
    \right\|_\infty
    \le
    C\sqrt{
    B_\star(R)
    +R\left\|
    \mathcal G_{\widehat P^{\pi_\gamma^\star,U}}|\xi_h|
    \right\|_\infty}.
\end{equation}
Consequently, the sum of the first two terms on the right-hand side of
\eqref{eq:T2-nominal-variance-transfer} is at most
\[
    C\sqrt{
    B_\star(R)
    +R\left\|
        \mathcal G_{\widehat P^{\pi_\gamma^\star,U}}|\xi_V|
    \right\|_\infty
    +R\left\|
    \mathcal G_{\widehat P^{\pi_\gamma^\star,U}}|\xi_h|
    \right\|_\infty}.
\]
Moreover,
\[
    R\sqrt{\sigma}
    \le \sqrt{B_\star(R)},
    \qquad
    R\sqrt{\frac{\iota}{N}}
    \le R\sqrt{1-\gamma}
    \le \sqrt{B_\star(R)},
\]
where the second inequality uses the first condition in
\eqref{eq:recursive-comparison-sample-conditions}. Substituting these estimates into
\eqref{eq:T2-nominal-variance-transfer}, we obtain
\begin{align}
    &\left\|
    \mathcal G_{\widehat P^{\pi_\gamma^\star,U}}
    \sqrt{\Var_{P^{0,\pi_\gamma^\star}}(U)}
    \right\|_\infty
    +
    \left\|
    \mathcal G_{\widehat P^{\pi_\gamma^\star,U}}
    \sqrt{\Var_{P^{0,\pi_\gamma^\star}}(\bar h)}
    \right\|_\infty
    \nonumber\\
    &\quad\le
    C\sqrt{
    B_\star(R)
    +R\left\|
        \mathcal G_{\widehat P^{\pi_\gamma^\star,U}}|\xi_V|
    \right\|_\infty
    +R\left\|
    \mathcal G_{\widehat P^{\pi_\gamma^\star,U}}|\xi_h|
    \right\|_\infty}.
    \label{eq:T2-nominal-variance-closure}
\end{align}

Adding \eqref{eq:T2-xiV-concentration} and
\eqref{eq:T2-xih-concentration} and then applying
\eqref{eq:T2-nominal-variance-closure} results in
\begin{align*}
    &\left\|
        \mathcal G_{\widehat P^{\pi_\gamma^\star,U}}|\xi_V|
    \right\|_\infty
    +
    \left\|
    \mathcal G_{\widehat P^{\pi_\gamma^\star,U}}|\xi_h|
    \right\|_\infty
    \\
    &\quad\le
    C\sqrt{\frac{\iota}{N}}
    \sqrt{
    B_\star(R)
    +R\left\|
        \mathcal G_{\widehat P^{\pi_\gamma^\star,U}}|\xi_V|
    \right\|_\infty
    +R\left\|
    \mathcal G_{\widehat P^{\pi_\gamma^\star,U}}|\xi_h|
    \right\|_\infty}
    +C\frac{R\cdot\iota}{N}.
\end{align*}
Using $\sqrt{x+y}\le\sqrt{x}+\sqrt{y}$ followed by Young's inequality,
\[
    \begin{aligned}
    C\sqrt{\frac{R\cdot\iota}{N}}
    \sqrt{
    \left\|
        \mathcal G_{\widehat P^{\pi_\gamma^\star,U}}|\xi_V|
    \right\|_\infty
    +
    \left\|
    \mathcal G_{\widehat P^{\pi_\gamma^\star,U}}|\xi_h|
    \right\|_\infty}
    &\le
    \frac12
    \left[
    \left\|
        \mathcal G_{\widehat P^{\pi_\gamma^\star,U}}|\xi_V|
    \right\|_\infty
    +
    \left\|
    \mathcal G_{\widehat P^{\pi_\gamma^\star,U}}|\xi_h|
    \right\|_\infty
    \right]
    \\
    &\quad
    +C\frac{R\cdot\iota}{N}.
    \end{aligned}
\]
Combining the preceding two displays leads to
\begin{align*}
    &\left\|
        \mathcal G_{\widehat P^{\pi_\gamma^\star,U}}|\xi_V|
    \right\|_\infty
    +
    \left\|
    \mathcal G_{\widehat P^{\pi_\gamma^\star,U}}|\xi_h|
    \right\|_\infty
    \\
    &\quad\le
    C\sqrt{\frac{B_\star(R)\cdot\iota}{N}}
    +\frac12
    \left[
    \left\|
        \mathcal G_{\widehat P^{\pi_\gamma^\star,U}}|\xi_V|
    \right\|_\infty
    +
    \left\|
    \mathcal G_{\widehat P^{\pi_\gamma^\star,U}}|\xi_h|
    \right\|_\infty
    \right]
    +C\frac{R\cdot\iota}{N}.
\end{align*}
Subtracting one half of the residual sum from both sides and adjusting the
universal constant $C$ gives
\begin{equation}
    \label{eq:T2-fixed-residual-bound}
    \left\|
        \mathcal G_{\widehat P^{\pi_\gamma^\star,U}}|\xi_V|
    \right\|_\infty
    +
    \left\|
    \mathcal G_{\widehat P^{\pi_\gamma^\star,U}}|\xi_h|
    \right\|_\infty
    \le
    C\sqrt{\frac{B_\star(R)\cdot\iota}{N}}
    +C\frac{R\cdot\iota}{N}.
\end{equation}
This is the second checkpoint: the two fixed residuals in Step~1 are
now bounded explicitly.

\paragraph*{Step 3: Final substitution.}
In this step, we first substitute \eqref{eq:T2-fixed-residual-bound}
into \eqref{eq:T2-lemma18-intermediate} and then simplify the resulting
expression. The substitution produces
\begin{equation}
    \label{eq:T2-after-residual-substitution}
    \|T_2\|_\infty
    \le
    C\sqrt{\frac{\iota}{N(1-\gamma)^2}}
    \sqrt{
    B_\star(R)
    +R\sqrt{\frac{B_\star(R)\cdot\iota}{N}}
    +\frac{R^2\cdot\iota}{N}}.
\end{equation}
Young's inequality bounds the middle term in
\eqref{eq:T2-after-residual-substitution} as
\[
    R\sqrt{\frac{B_\star(R)\cdot\iota}{N}}
    \le
    C B_\star(R)+C\frac{R^2\cdot\iota}{N}.
\]
Using this bound in \eqref{eq:T2-after-residual-substitution} and then
applying $\sqrt{x+y}\le\sqrt{x}+\sqrt{y}$, we obtain
\[
    \|T_2\|_\infty
    \le
    C\sqrt{
    \frac{B_\star(R)\cdot\iota}{N(1-\gamma)^2}}
    +C\frac{R\cdot\iota}{N(1-\gamma)}.
\]

\subsubsection{Proof of Lemma~\ref{lem:T6-anchored-variance-bound}}
\label{subsec:proof-T6-anchored-variance-bound}

Recall the fixed-policy aliases
\[
    U\coloneqq V_\gamma^{\star,\sigma}
    =V_\gamma^{\pi_\gamma^\star,\sigma},
    \qquad
    \widehat U\coloneqq
    \widehat V_\gamma^{\pi_\gamma^\star,\sigma}.
\]
Define the optimal-policy component of \eqref{eq:Delta-defn} by
\[
    \Delta_U
    \coloneqq
    \left\|
    U
    -\widehat U
    \right\|_\infty.
\]
The proof has three steps. We first control the selector mismatch and apply
Lemma~\ref{lem:span-aware-resolvent-variance} to reduce the bound on $T_6$ to
two fixed residuals. We then bound those residuals.
Finally, we substitute the residual bound and absorb the term involving
$\Delta_U$.

\paragraph*{Step 1: Reduction of $T_6$ to fixed residuals.}
In this step, we decompose $T_6$ into a form suitable for
Lemma~\ref{lem:span-aware-resolvent-variance} and apply the lemma to isolate
the two fixed empirical residuals and the selector-mismatch cost.

Set
\[
    \pi=\pi_\gamma^\star,
    \qquad
    V=U,
    \qquad
    P=\widehat P^{\pi_\gamma^\star,\widehat U},
    \qquad
    r^\pi=r^{\pi_\gamma^\star}.
\]
The span premise follows from \eqref{eq:discounted-localized-span-condition}, and
the associated anchor defect is again $\beta_\star$. Rewriting the true
Bellman equation under the mismatched empirical kernel yields
\[
    U
    =
    r^{\pi_\gamma^\star}
    +\gamma\widehat P^{\pi_\gamma^\star,\widehat U}
    U
    +\gamma
    \left(
    P^{\pi_\gamma^\star,U}
    -\widehat P^{\pi_\gamma^\star,\widehat U}
    \right)U.
\]
Thus the Bellman residual is
\[
    b
    =
    \gamma
    \left(
    P^{\pi_\gamma^\star,U}
    -\widehat P^{\pi_\gamma^\star,\widehat U}
    \right)U.
\]
As in the proof of Lemma~\ref{lem:T2-anchored-variance-bound}, define the
fixed-$U$ sampling residual and the anchor sampling residual by
\[
    \xi_V
    =
    \left(
    \widehat P^{\pi_\gamma^\star,U}
    -P^{\pi_\gamma^\star,U}
    \right)U,
    \qquad
    \xi_h
    =
    \left(
    \widehat P^{0,\pi_\gamma^\star}
    -P^{0,\pi_\gamma^\star}
    \right)\bar h.
\]

The only new ingredient is that the empirical robust kernel in $b$ is
selected by $\widehat U$ rather than by
$U$. For each $(s,a)$, optimality of the two selectors
implies
\[
    \left(
    \widehat P_{s,a}^{\widehat U}-\widehat P_{s,a}^{U}
    \right)U
    \ge0,
    \qquad
    \left(
    \widehat P_{s,a}^{\widehat U}-\widehat P_{s,a}^{U}
    \right)\widehat U
    \le0.
\]
Consequently,
\[
    0
    \le
    \left(
    \widehat P_{s,a}^{\widehat U}-\widehat P_{s,a}^{U}
    \right)U
    \le
    \Delta_U
    \left\|
    \widehat P_{s,a}^{\widehat U}-\widehat P_{s,a}^{U}
    \right\|_1.
\]
The two selected transition distributions
$\widehat P_{s,a}^{U}$ and $\widehat P_{s,a}^{\widehat U}$ both belong to
$\cU_{s,a}(\widehat P^0)$. Therefore,
\[
    \left\|
    \widehat P_{s,a}^{\widehat U}-\widehat P_{s,a}^{U}
    \right\|_1
    \le
    \left\|
    \widehat P_{s,a}^{\widehat U}-\widehat P^0_{s,a}
    \right\|_1
    +
    \left\|
    \widehat P_{s,a}^{U}-\widehat P^0_{s,a}
    \right\|_1
    \le 4\sigma.
\]
Averaging over
$a\sim\pi_\gamma^\star(\cdot\mid s)$ therefore implies
\begin{equation}
    \label{eq:policy-induced-tv-diff}
    \left\|
    \left(
    \widehat P^{\pi_\gamma^\star,\widehat U}
    -\widehat P^{\pi_\gamma^\star,U}
    \right)U
    \right\|_\infty
    \le
    4\sigma\Delta_U.
\end{equation}
By the definitions of $b$ and $\xi_V$,
\[
    b
    =
    -\gamma\xi_V
    +\gamma
    \left(
    \widehat P^{\pi_\gamma^\star,U}
    -\widehat P^{\pi_\gamma^\star,\widehat U}
    \right)U.
\]
Because the normalized resolvent
$\mathcal G_{\widehat P^{\pi_\gamma^\star,\widehat U}}$ is nonnegative, the
triangle inequality implies
\begin{align*}
    \mathcal G_{\widehat P^{\pi_\gamma^\star,\widehat U}}|b|
    \le{}&
    \gamma
    \mathcal G_{\widehat P^{\pi_\gamma^\star,\widehat U}}|\xi_V|
    +
    \gamma
    \mathcal G_{\widehat P^{\pi_\gamma^\star,\widehat U}}
    \left|
    \left(
    \widehat P^{\pi_\gamma^\star,U}
    -\widehat P^{\pi_\gamma^\star,\widehat U}
    \right)U
    \right|
\end{align*}
componentwise. Applying \eqref{eq:policy-induced-tv-diff}, using
$\mathcal G_{\widehat P^{\pi_\gamma^\star,\widehat U}}\bm{1}_S=\bm{1}_S$,
and then using $\gamma\le1$ yields
\begin{equation}
    \label{eq:T6-value-residual-decomposition}
    \left\|
    \mathcal G_{\widehat P^{\pi_\gamma^\star,\widehat U}}|b|
    \right\|_\infty
    \le
    \left\|
    \mathcal G_{\widehat P^{\pi_\gamma^\star,\widehat U}}|\xi_V|
    \right\|_\infty
    +C\sigma\Delta_U.
\end{equation}
For the anchor residual, decompose
\[
    \left(
    \widehat P^{\pi_\gamma^\star,\widehat U}
    -P^{0,\pi_\gamma^\star}
    \right)\bar h
    =
    \left(
    \widehat P^{\pi_\gamma^\star,\widehat U}
    -\widehat P^{0,\pi_\gamma^\star}
    \right)\bar h
    +\xi_h.
\]
The first term is the robust perturbation from the empirical nominal kernel.
The robust-set constraint and
$\|\bar h\|_{\mathrm{span}}\le H_{\mathrm{anc}}$ bound its absolute value by
$C\sigma H_{\mathrm{anc}}\cdot\bm{1}_S$. Therefore,
\[
    \left|
    \left(
    \widehat P^{\pi_\gamma^\star,\widehat U}
    -P^{0,\pi_\gamma^\star}
    \right)\bar h
    \right|
    \le
    C\sigma H_{\mathrm{anc}}\cdot\bm{1}_S
    +|\xi_h|.
\]
Applying Lemma~\ref{lem:span-aware-resolvent-variance} with $b$ itself as the
value residual envelope, with $\xi_h$ as defined above, and then using
\eqref{eq:T6-value-residual-decomposition}, we obtain
\begin{equation}
    \label{eq:T6-main-resolvent-reduction}
    \|T_6\|_\infty
    \le
    C\sqrt{\frac{\iota}{N(1-\gamma)^2}}
    \sqrt{
    B_\star(R)
    +R\left\|
    \mathcal G_{\widehat P^{\pi_\gamma^\star,\widehat U}}|\xi_V|
    \right\|_\infty
    +R\left\|
    \mathcal G_{\widehat P^{\pi_\gamma^\star,\widehat U}}|\xi_h|
    \right\|_\infty
    +\sigma R\Delta_U
    }.
\end{equation}
This is the first checkpoint: the direct analysis of $T_6$ is complete, and it
remains to control the same two fixed residuals as in the proof of
Lemma~\ref{lem:T2-anchored-variance-bound}, now under the mismatched
resolvent.

\paragraph*{Step 2: Bounding the fixed residuals.}
In this step, we obtain an explicit bound for the two residuals isolated in Step~1.
We claim that
\begin{equation}
    \label{eq:T6-fixed-residual-bound}
    \left\|
    \mathcal G_{\widehat P^{\pi_\gamma^\star,\widehat U}}|\xi_V|
    \right\|_\infty
    +
    \left\|
    \mathcal G_{\widehat P^{\pi_\gamma^\star,\widehat U}}|\xi_h|
    \right\|_\infty
    \le
    C\sqrt{
    \frac{
    \left(
    B_\star(R)+\sigma R\Delta_U
    \right)\cdot\iota}{N}}
    +C\frac{R\cdot\iota}{N}.
\end{equation}

The proof of \eqref{eq:T6-fixed-residual-bound} follows Step~2 of the proof of
Lemma~\ref{lem:T2-anchored-variance-bound}, with the additional term
$\sigma R\Delta_U$ arising from the selector-mismatch term
$C\sigma\Delta_U$ in \eqref{eq:T6-value-residual-decomposition}. We omit the
details for brevity.

\paragraph*{Step 3: Final substitution and absorption.}
In this step, we substitute the fixed-residual bound into the reduction from
Step~1 and bound the $\Delta_U$-related term.
To substitute \eqref{eq:T6-fixed-residual-bound} into the main reduction, first
note that Young's inequality shows that
\[
    R\sqrt{
    \frac{
    \left(
    B_\star(R)+\sigma R\Delta_U
    \right)\cdot\iota}{N}}
    \le
    C\left(
    B_\star(R)+\sigma R\Delta_U
    \right)
    +C\frac{R^2\cdot\iota}{N}.
\]
Substituting this inequality and \eqref{eq:T6-fixed-residual-bound} into
\eqref{eq:T6-main-resolvent-reduction} produces
\begin{equation}
    \label{eq:T6-bound-pre-young}
    \|T_6\|_\infty
    \le
    C\sqrt{
    \frac{B_\star(R)\cdot\iota}
    {N(1-\gamma)^2}}
    +C\frac{R\cdot\iota}{N(1-\gamma)}
    +C\sqrt{
    \frac{\sigma R\Delta_U\cdot\iota}
    {N(1-\gamma)^2}}.
\end{equation}
By Young's inequality and the second
condition in \eqref{eq:recursive-comparison-sample-conditions},
\[
    C\sqrt{
    \frac{\sigma R\Delta_U\cdot\iota}
    {N(1-\gamma)^2}}
    \le
    \frac{1}{50}\Delta_U
    +C\frac{\sigma R\cdot\iota}{N(1-\gamma)^2}
    \le
    \frac{1}{50}\Delta_U
    +C\sqrt{
    \frac{B_\star(R)\cdot\iota}
    {N(1-\gamma)^2}}.
\]
Since $\Delta_U\le\Delta$, the last display and
\eqref{eq:T6-bound-pre-young} imply the desired bound
\eqref{eq:T6-bound-young}.

\subsubsection{Proof of Lemma~\ref{lem:shi-original-lemma-11}}
\label{sec:proof-of-shi-original-lemma-11}

The proof has four steps. We first reduce the desired bound to a centered
empirical process involving $\widehat V^\star$. We then construct
leave-one-out value functions and discretize their auxiliary reward parameter.
Next, we establish uniform concentration and transfer it back to
$\widehat V^\star$. Finally, we average the resulting rowwise bounds over the
learned policy.

\paragraph*{Step 1: Reduction to a centered empirical-optimal-value process.}
In this step, we reduce the rowwise robust-kernel error at $\widehat W$ to a
centered empirical process involving $\widehat V^\star$.
Fix a state-action pair $(s,a)$ and a value vector $V$, and write
$V^\circ\coloneqq V-\min_{s'\in\cS}V(s')\cdot \bm{1}_S$. Then
$0\le V^\circ\le \|V\|_{\mathrm{span}}\cdot \bm{1}_S$. The TV
strong-duality formula in \citet[Lemma~4]{shi2023curious} yields
\begin{equation}
    \label{eq:centered-tv-duality-empirical-planning}
    \left|
    \widehat P_{s,a}^V V-P_{s,a}^V V
    \right|
    \le
    \sup_{\alpha\in[0,\|V\|_{\mathrm{span}}]}
    \left|
    (\widehat P^0_{s,a}-P^0_{s,a})[V^\circ]_\alpha
    \right|,
\end{equation}
where $[z]_\alpha\coloneqq(\min\{z(s'),\alpha\})_{s'\in\cS}$. Applying
\eqref{eq:centered-tv-duality-empirical-planning} with
$V=\widehat W$ and using $\|\widehat W\|_{\mathrm{span}}\le R$, we obtain
\[
    \left|
    \widehat P_{s,a}^{\widehat W}\widehat W
    -P_{s,a}^{\widehat W}\widehat W
    \right|
    \le
    \sup_{\alpha\in[0,R]}
    \left|
    (\widehat P^0_{s,a}-P^0_{s,a})
    [\widehat W^\circ]_\alpha
    \right|.
\]
On the other hand, the optimality of $\widehat V^\star$ and the
solver guarantee \mainRobustDMDPSolverTolerance{} imply
\[
    0
    \le
    \widehat V^\star-\widehat W
    \le
    \varepsilon_{\mathrm{opt}}\cdot\bm{1}_S.
\]
Since
\begin{align*}
    \left\|
    \widehat W^\circ-(\widehat V^\star)^\circ
    \right\|_\infty
    &\le
    \|\widehat W-\widehat V^\star\|_\infty
    +
    \left|
    \min_s\widehat W(s)-\min_s\widehat V^\star(s)
    \right| \\
    &\le
    2\|\widehat W-\widehat V^\star\|_\infty,
\end{align*}
we obtain
\[
    \left\|
    \widehat W^\circ-(\widehat V^\star)^\circ
    \right\|_\infty
    \le
    2\varepsilon_{\mathrm{opt}}.
\]
Since $z\mapsto[z]_\alpha$ is 1-Lipschitz and
$\|\widehat P^0_{s,a}-P^0_{s,a}\|_1\le2$, combining the preceding displays
results in
\begin{equation}
    \label{eq:reduce-V-to-Vstar-centered}
    \left|
    \widehat P_{s,a}^{\widehat W}\widehat W
    -P_{s,a}^{\widehat W}\widehat W
    \right|
    \le
    \sup_{\alpha\in[0,R]}
    \left|
    (\widehat P^0_{s,a}-P^0_{s,a})
    [(\widehat V^\star)^\circ]_\alpha
    \right|
    +
    C\varepsilon_{\mathrm{opt}}.
\end{equation}
Therefore, up to the additive $C\varepsilon_{\mathrm{opt}}$ term, our goal
reduces to controlling the centered empirical process given by the supremum
term in \eqref{eq:reduce-V-to-Vstar-centered} uniformly over all $(s,a)$.

\paragraph*{Step 2: Leave-one-out construction and discretization.}
The empirical process isolated in Step~1 is data-dependent, so in this step we
construct a discretized family of leave-one-out value functions that is
independent of each empirical transition row and approximates
$\widehat V^\star$.
We adapt the auxiliary-MDP leave-one-out construction of
\citet[Appendix~B.3.5]{shi2023curious} to the centered, span-localized
empirical process in \eqref{eq:reduce-V-to-Vstar-centered}. For each source
state $s$ and scalar $u\ge0$, let $\widehat{\mathcal M}^{s,u}$ have the same
uncertainty-set rule and discount factor as the empirical robust MDP, but with
nominal kernel and reward
\[
    \widehat P^{0,s,u}_{\widetilde s,a}
    \coloneqq
    \begin{cases}
        e_s, & \widetilde s=s, \\
        \widehat P^0_{\widetilde s,a}, & \widetilde s\ne s,
    \end{cases}
    \qquad
    r^{s,u}(\widetilde s,a)
    \coloneqq
    \begin{cases}
        u, & \widetilde s=s, \\
        r(\widetilde s,a), & \widetilde s\ne s,
    \end{cases}
\]
for every $(\widetilde s,a)\in\cS\times\cA$, where $e_s$ is the $s$-th
standard basis probability vector.
In words, the construction makes $s$ absorbing in the auxiliary nominal model
and assigns reward $u$ to every action at $s$, while leaving all other nominal
transition rows and rewards unchanged.
Let $\widehat V_{s,u}^\star$ be the optimal robust value of this auxiliary empirical
MDP\@. Because the nominal centers of the rows out of state $s$ are fixed at
$e_s$, $\widehat V_{s,u}^\star$ is independent of the samples used to form
$\widehat P^0_{s,a}$, for every action $a$ at that source state.

Define
\begin{equation*}
    u_\star
    \coloneqq
    \widehat V^\star(s)
    -\gamma\max_{b\in\cA}
    \inf_{Q\in\cU_{s,b}(e_s)}Q\widehat V^\star.
\end{equation*}
Since $e_s\in\cU_{s,b}(e_s)$ for every $b\in\cA$ and
\eqref{eq:discounted-value-range} applies to $\widehat V^\star$, we have
$0\le u_\star\le(1-\gamma)^{-1}$. With this choice, $\widehat V^\star$ is a
fixed point of the auxiliary robust Bellman operator: its update at state $s$
equals $\widehat V^\star(s)$, while all other states have the same updates as
in the original empirical robust MDP\@. Since this operator is a
$\gamma$-contraction, its fixed point is unique, and hence
\begin{equation*}
    \widehat V_{s,u_\star}^\star=\widehat V^\star.
\end{equation*}
Let $\mathcal N_u$ be a uniform grid of $[0,(1-\gamma)^{-1}]$ with spacing at most
\[
    \eta_u
    \coloneqq
    \frac{(1-\gamma)R\cdot\iota}{C_1N}
\]
for a sufficiently large numerical constant $C_1$. Choose
$u\in\mathcal N_u$ with $|u-u_\star|\le\eta_u$. The auxiliary empirical
robust MDPs $\widehat{\mathcal M}^{s,u}$ and
$\widehat{\mathcal M}^{s,u_\star}$ differ only in their rewards at state
$s$.

Consequently, for any value vector, their robust Bellman updates agree
at every state other than $s$, while the updates at $s$ differ by
$u-u_\star$. Thus, the two update vectors differ in sup-norm by
$|u-u_\star|\le\eta_u$. Since both robust Bellman operators are
$\gamma$-contractions, their fixed points satisfy
\begin{equation}
    \label{eq:loo-value-close-span-centered}
    \|\widehat V_{s,u}^\star-\widehat V^\star\|_\infty
    \le
    \frac{\eta_u}{1-\gamma}
    \le
    \frac{R\cdot\iota}{C_1N}.
\end{equation}
Using \eqref{eq:loo-value-close-span-centered}, we obtain
\begin{equation}
    \label{eq:centered-loo-close}
    \begin{aligned}
        \left\|
        (\widehat V_{s,u}^\star)^\circ-(\widehat V^\star)^\circ
        \right\|_\infty
        &\le
        \left\|\widehat V_{s,u}^\star-\widehat V^\star\right\|_\infty
        +
        \left|
        \min_y\widehat V_{s,u}^\star(y)-\min_y\widehat V^\star(y)
        \right|
        \\
        &\le
        2\left\|\widehat V_{s,u}^\star-\widehat V^\star\right\|_\infty
        \le
        \frac{2R\cdot\iota}{C_1N}.
    \end{aligned}
\end{equation}
Moreover, since $\|\widehat V^\star\|_{\mathrm{span}}\le R$,
\begin{equation}
    \label{eq:centered-loo-range}
    \begin{aligned}
        0\le(\widehat V_{s,u}^\star)^\circ
        &\le
        \|\widehat V_{s,u}^\star\|_{\mathrm{span}}\cdot\bm{1}_S
        \\
        &\le
        \left(
        \|\widehat V^\star\|_{\mathrm{span}}
        +2\|\widehat V_{s,u}^\star-\widehat V^\star\|_\infty
        \right)\bm{1}_S
        \le
        2R\cdot \bm{1}_S.
    \end{aligned}
\end{equation}
The final inequality uses the sample-size condition and sufficiently large
numerical constants $C_0$ and $C_1$.

\paragraph*{Step 3: Uniform concentration and transfer.}
In this step, we apply Bernstein's inequality uniformly to the leave-one-out
family from Step~2 and transfer the resulting bound to the empirical process
from Step~1.
Fix a triple $(s,a,u)$ and condition on all samples other than those forming
$\widehat P^0_{s,a}$. By the leave-one-out construction,
$(\widehat V_{s,u}^\star)^\circ$ is then fixed and independent of
$\widehat P^0_{s,a}$. If
$\|(\widehat V_{s,u}^\star)^\circ\|_\infty=0$, set
$\mathcal N_\alpha^{s,u}=\{0\}$. Otherwise, let
$\mathcal N_\alpha^{s,u}$ be a uniform grid of
$[0,\|(\widehat V_{s,u}^\star)^\circ\|_\infty]$ with spacing at most
\[
    \frac{
    \|(\widehat V_{s,u}^\star)^\circ\|_\infty\cdot\iota
    }{C_2N},
\]
where $C_2$ is a sufficiently large numerical constant. Both definitions
satisfy $|\mathcal N_\alpha^{s,u}|\le1+C_2N/\iota$.

For each $\alpha\in\mathcal N_\alpha^{s,u}$, the clipped vector is fixed and
independent of $\widehat P^0_{s,a}$. Moreover,
\[
    \left\|[(\widehat V_{s,u}^\star)^\circ]_\alpha\right\|_\infty
    \le
    \|(\widehat V_{s,u}^\star)^\circ\|_\infty,
    \qquad
    \Var_{P^0_{s,a}}
    ([(\widehat V_{s,u}^\star)^\circ]_\alpha)
    \le
    \Var_{P^0_{s,a}}((\widehat V_{s,u}^\star)^\circ).
\]
Therefore, under this conditioning, Bernstein's inequality and a union bound
over $\mathcal N_\alpha^{s,u}$ imply that, with probability at least
$1-2|\mathcal N_\alpha^{s,u}|e^{-3\iota}$,
\[
    \sup_{\alpha\in\mathcal N_\alpha^{s,u}}
    \left|
    (\widehat P^0_{s,a}-P^0_{s,a})
    [(\widehat V_{s,u}^\star)^\circ]_\alpha
    \right|
    \le
    C\sqrt{\frac{\iota}{N}}
    \sqrt{\Var_{P^0_{s,a}}((\widehat V_{s,u}^\star)^\circ)}
    +
    C\frac{\|(\widehat V_{s,u}^\star)^\circ\|_\infty\cdot\iota}{N}.
\]

It remains to extend the bound from the grid to every clipping level. Given
$\alpha\in[0,\|(\widehat V_{s,u}^\star)^\circ\|_\infty]$, let
$\alpha'\in\mathcal N_\alpha^{s,u}$ be a nearest grid point. The grid spacing
and the 1-Lipschitz property of clipping imply
\[
    \left\|
    [(\widehat V_{s,u}^\star)^\circ]_\alpha
    -[(\widehat V_{s,u}^\star)^\circ]_{\alpha'}
    \right\|_\infty
    \le
    |\alpha-\alpha'|
    \le
    \frac{
    \|(\widehat V_{s,u}^\star)^\circ\|_\infty\cdot\iota
    }{C_2N}.
\]
Together with $\|\widehat P^0_{s,a}-P^0_{s,a}\|_1\le2$, this shows that
replacing $\alpha$ by $\alpha'$ changes the empirical process by at most
$C\|(\widehat V_{s,u}^\star)^\circ\|_\infty\cdot\iota/N$. Thus, on the
same conditional event, the following bound holds for this fixed triple
$(s,a,u)$:
\begin{equation}
    \label{eq:fixed-loo-centered-concentration}
    \sup_{\alpha\in[0,\|(\widehat V_{s,u}^\star)^\circ\|_\infty]}
    \left|
    (\widehat P^0_{s,a}-P^0_{s,a})
    [(\widehat V_{s,u}^\star)^\circ]_\alpha
    \right|
    \le
    C\sqrt{\frac{\iota}{N}}
    \sqrt{\Var_{P^0_{s,a}}((\widehat V_{s,u}^\star)^\circ)}
    +
    C\frac{\|(\widehat V_{s,u}^\star)^\circ\|_\infty\cdot\iota}{N}.
\end{equation}

Finally, unconditioning and taking a union bound over all $(s,a)$ and
$u\in\mathcal N_u$ show that
\eqref{eq:fixed-loo-centered-concentration} holds simultaneously for all
$(s,a,u)$ with probability at least
\[
    1
    -2SA(1+CN^3)
    \left(1+\frac{C_2N}{\iota}\right)e^{-3\iota}
    \ge
    1-C SAN^4e^{-3\iota}
    =1-O(\delta).
\]
Here the last equality follows from
\[
    C SAN^4e^{-3\iota}
    =
    C SAN^4
    \left(
    \frac{(1-\gamma)\delta}{54SAN^2}
    \right)^3
    =O(\delta).
\]
For the grid point $u\in\mathcal N_u$ satisfying
$|u-u_\star|\le\eta_u$,
\eqref{eq:centered-loo-range} ensures that
$\|(\widehat V_{s,u}^\star)^\circ\|_\infty\le2R$, so the bounded-difference
term in \eqref{eq:fixed-loo-centered-concentration} is $O(R\cdot\iota/N)$
rather than $O(\iota/[N(1-\gamma)])$.

Combining \eqref{eq:centered-loo-close} with the Lipschitz property of
clipping yields
\begin{align}
    \sup_{\alpha\in[0,R]}
    \left|
    (\widehat P^0_{s,a}-P^0_{s,a})
    [(\widehat V^\star)^\circ]_\alpha
    \right|
    &\le
    \sup_{\alpha\in[0,2R]}
    \left|
    (\widehat P^0_{s,a}-P^0_{s,a})
    [(\widehat V_{s,u}^\star)^\circ]_\alpha
    \right|
    +
    C\frac{R\cdot\iota}{N}.        \label{eq:transfer-loo-to-target}
\end{align}
The concentration event \eqref{eq:fixed-loo-centered-concentration} applies
to the supremum over
$[0,2R]$: if $2R>\|(\widehat V_{s,u}^\star)^\circ\|_\infty$, then clipping
above $\|(\widehat V_{s,u}^\star)^\circ\|_\infty$ leaves
$(\widehat V_{s,u}^\star)^\circ$ unchanged, so the supremum over $[0,2R]$ is
the same as the supremum over
$[0,\|(\widehat V_{s,u}^\star)^\circ\|_\infty]$.
Moreover, applying Minkowski's inequality in $L_2(P^0_{s,a})$ to the
decomposition of $(\widehat V_{s,u}^\star)^\circ$ around
$(\widehat V^\star)^\circ$ and then using
\eqref{eq:centered-loo-close}, we obtain
\begin{align}
    \label{eq:variance-transfer-loo}
    \sqrt{\Var_{P^0_{s,a}}((\widehat V_{s,u}^\star)^\circ)}
    &\le
    \sqrt{\Var_{P^0_{s,a}}((\widehat V^\star)^\circ)}
    +
    \sqrt{\Var_{P^0_{s,a}}\left(
    (\widehat V_{s,u}^\star)^\circ-(\widehat V^\star)^\circ
    \right)}
    \notag\\
    &\le
    \sqrt{\Var_{P^0_{s,a}}(\widehat V^\star)}
    +
    \left\|
    (\widehat V_{s,u}^\star)^\circ-(\widehat V^\star)^\circ
    \right\|_\infty
    \notag\\
    &\le
    \sqrt{\Var_{P^0_{s,a}}(\widehat V^\star)}
    +
    C\frac{R\cdot\iota}{N}.
\end{align}
Substituting \eqref{eq:fixed-loo-centered-concentration} and
\eqref{eq:variance-transfer-loo} into \eqref{eq:transfer-loo-to-target}
yields
\begin{equation}
    \label{eq:centered-optimal-empirical-process}
    \sup_{\alpha\in[0,R]}
    \left|
    (\widehat P^0_{s,a}-P^0_{s,a})
    [(\widehat V^\star)^\circ]_\alpha
    \right|
    \le
    C\sqrt{\frac{\iota}{N}}
    \sqrt{\Var_{P^0_{s,a}}(\widehat V^\star)}
    +
    C\frac{R\cdot\iota}{N}.
\end{equation}
Combining \eqref{eq:reduce-V-to-Vstar-centered} and
\eqref{eq:centered-optimal-empirical-process} establishes the rowwise bound
\[
    \left|
    \widehat P_{s,a}^{\widehat W}\widehat W
    -P_{s,a}^{\widehat W}\widehat W
    \right|
    \le
    C\sqrt{\frac{\iota}{N}}
    \sqrt{\Var_{P^0_{s,a}}(\widehat V^\star)}
    +
    C\frac{R\cdot\iota}{N}
    +
    C\varepsilon_{\mathrm{opt}}.
\]

\paragraph*{Step 4: Averaging over the learned policy.}
In this final step, we average the rowwise estimate from Step~3 over the
learned policy and bound the resulting average of the state-action conditional
standard deviations by the standard deviation under $P^{0,\widehat\pi}$.
Because the rowwise estimate holds simultaneously for every $(s,a)$, it may
be averaged using the data-dependent policy $\widehat\pi$.

For every state $s$, the definitions of the policy-induced robust kernels and
the triangle inequality imply
\begin{align*}
    &\left|
    \left(
    (\widehat P^{\widehat\pi,\widehat W}
    -P^{\widehat\pi,\widehat W})\widehat W
    \right)(s)
    \right|
    \\
    &\qquad=
    \left|
    \sum_{a\in\cA}\widehat\pi(a\mid s)
    \left(
    \widehat P_{s,a}^{\widehat W}\widehat W
    -P_{s,a}^{\widehat W}\widehat W
    \right)
    \right|
    \\
    &\qquad\le
    \sum_{a\in\cA}\widehat\pi(a\mid s)
    \left|
    \widehat P_{s,a}^{\widehat W}\widehat W
    -P_{s,a}^{\widehat W}\widehat W
    \right|
    \\
    &\qquad\le
    C\sqrt{\frac{\iota}{N}}
    \sum_{a\in\cA}\widehat\pi(a\mid s)
    \sqrt{\Var_{P^0_{s,a}}(\widehat V^\star)}
    +
    C\frac{R\cdot\iota}{N}
    +
    C\varepsilon_{\mathrm{opt}},
\end{align*}
where the last two terms remain unchanged because
$\sum_{a\in\cA}\widehat\pi(a\mid s)=1$.

By Jensen's inequality,
\[
    \sum_{a\in\cA}\widehat\pi(a\mid s)
    \sqrt{\Var_{P^0_{s,a}}(\widehat V^\star)}
    \le
    \sqrt{
    \sum_{a\in\cA}\widehat\pi(a\mid s)
    \Var_{P^0_{s,a}}(\widehat V^\star)
    }.
\]
Moreover, the law of total variance and the definition of
$P^{0,\widehat\pi}$ yield
\begin{align*}
    \Var_{P^{0,\widehat\pi}}(\widehat V^\star)(s)
    &={}
    \sum_{a\in\cA}\widehat\pi(a\mid s)
    \Var_{P^0_{s,a}}(\widehat V^\star)
    \\
    &\quad+
    \sum_{a\in\cA}\widehat\pi(a\mid s)
    \left(
    P^0_{s,a}\widehat V^\star
    -P^{0,\widehat\pi}\widehat V^\star(s)
    \right)^2
    \\
    &\ge
    \sum_{a\in\cA}\widehat\pi(a\mid s)
    \Var_{P^0_{s,a}}(\widehat V^\star).
\end{align*}
Combining the preceding two displays leads to
\[
    \sum_{a\in\cA}\widehat\pi(a\mid s)
    \sqrt{\Var_{P^0_{s,a}}(\widehat V^\star)}
    \le
    \sqrt{
    \Var_{P^{0,\widehat\pi}}(\widehat V^\star)(s)
    }.
\]
Substituting this inequality into the averaged rowwise estimate produces
\[
    \left|
    \left(
    (\widehat P^{\widehat\pi,\widehat W}
    -P^{\widehat\pi,\widehat W})\widehat W
    \right)(s)
    \right|
    \le
    C\sqrt{\frac{\iota}{N}}
    \sqrt{\Var_{P^{0,\widehat\pi}}(\widehat V^\star)(s)}
    +
    C\frac{R\cdot\iota}{N}
    +
    C\varepsilon_{\mathrm{opt}}.
\]
This is \eqref{eq:centered-empirical-planning-residual}, completing the proof.

\subsubsection{Proof of Lemma~\ref{lem:two-sided-pi-hat-value-decomposition}}
\label{subsec:Proof-lemma-two-sided-pi-hat-value-decomposition}

The proof of \eqref{eq:pi-hat-two-sided-resolvent-decomp} follows the proof of
Lemma~\ref{lem:two-sided-fixed-value-decomposition} after interchanging the
true and empirical models and making the substitutions
$U\mapsto\widehat W$, $\widehat U\mapsto W$, and
$\pi_\gamma^\star\mapsto\widehat\pi$. The order-preserving resolvent
rearrangement is deterministic, so the data dependence of $\widehat\pi$ is
immaterial. We omit the details for brevity.

\subsubsection{Proof of Lemma~\ref{lem:span-aware-shi-lemma-12}}
\label{sec:proof-of-span-aware-shi-lemma-12}
The proof has three steps. We first apply
Lemma~\ref{lem:span-aware-resolvent-variance} to reduce the bound on $T_2'$ to
a transition-kernel error. We then bound this error using
Lemma~\ref{lem:shi-original-lemma-11} and solve the resulting self-bounding
inequality. Finally, we substitute the resulting bound into the reduction
from the first step.

\paragraph*{Step 1: Reduction of $T_2'$ to a transition-kernel error.}
In this step, we verify the premises of
Lemma~\ref{lem:span-aware-resolvent-variance} and apply it to isolate the
transition-kernel error that remains to be bounded.
For this application, set
\[
    \pi=\widehat\pi,
    \qquad
    V=\widehat W,
    \qquad
    P=P^{\widehat\pi,\widehat W},
    \qquad
    r^\pi=r^{\widehat\pi}.
\]
Recall from \eqref{eq:normalized-discounted-resolvent} that
$\mathcal G_Q\coloneqq(1-\gamma)(I-\gamma Q)^{-1}$ for any stochastic
kernel $Q$.
Define
\begin{equation}
    \label{eq:matched-learned-policy-residual-definitions}
    \begin{aligned}
    \xi_V
    &\coloneqq
    \left(\widehat P^{\widehat\pi,\widehat W}
    -P^{\widehat\pi,\widehat W}\right)\widehat W,
    \\
    \zeta_{\widehat W}
    &\coloneqq
    \left\|
    \mathcal G_{P^{\widehat\pi,\widehat W}}|\xi_V|
    \right\|_\infty.
    \end{aligned}
\end{equation}
Rewriting the empirical Bellman equation under the true robust kernel
$P^{\widehat\pi,\widehat W}$ yields
\[
    \widehat W
    =
    r^{\widehat\pi}
    +\gamma P^{\widehat\pi,\widehat W}\widehat W
    +\gamma\xi_V.
\]
The span premise of Lemma~\ref{lem:span-aware-resolvent-variance} follows
from \eqref{eq:discounted-localized-span-condition}, while
$(1-\gamma)\min_s\widehat W(s)\le1$ follows from
\eqref{eq:discounted-value-range}. The associated anchor defect is
$\beta_{\mathrm{emp}}$ by \eqref{eq:empirical-anchor-defect}.

Moreover, convexity of total variation and the true robust-set constraint
imply, for every state $s$,
\[
    \left\|
    P^{\widehat\pi,\widehat W}(s,\cdot)
    -P^{0,\widehat\pi}(s,\cdot)
    \right\|_{\mathrm{TV}}
    \le
    \sum_{a\in\cA}\widehat\pi(a\mid s)
    \left\|P_{s,a}^{\widehat W}-P^0_{s,a}\right\|_{\mathrm{TV}}
    \le\sigma.
\]
Hence
\[
    \left|
    \left(P^{\widehat\pi,\widehat W}
    -P^{0,\widehat\pi}\right)\bar h
    \right|
    \le
    C\sigma H_{\mathrm{anc}}\cdot\bm{1}_S,
\]
so \eqref{eq:anchored-resolvent-residual-envelopes} holds with $\xi_h=0$.
Because $\gamma\le1$, $|b|\le|\xi_V|$ for $b=\gamma\xi_V$.
Applying Lemma~\ref{lem:span-aware-resolvent-variance} and using the definition
of $\zeta_{\widehat W}$, we obtain
\begin{equation}
    \label{eq:matched-learned-policy-resolvent-reduction}
    \left\|
    \left(I-\gamma P^{\widehat\pi,\widehat W}\right)^{-1}
    \sqrt{\Var_{P^{\widehat\pi,\widehat W}}(\widehat W)}
    \right\|_\infty
    \le
    \frac{C}{1-\gamma}
    \sqrt{
    B_{\mathrm{emp}}(R)+R\zeta_{\widehat W}
    }.
\end{equation}
Combining this bound with the definition of $T_2'$ in
\eqref{eq:T-prime-first-branch-definitions} yields
\begin{equation}
    \label{eq:T2-prime-lemma23-intermediate}
    \|T_2'\|_\infty
    \le
    \frac{C}{1-\gamma}
    \sqrt{\frac{\iota}{N}}
    \sqrt{B_{\mathrm{emp}}(R)+R\zeta_{\widehat W}}.
\end{equation}
Thus, it remains to control $\zeta_{\widehat W}$.

\paragraph*{Step 2: Bounding $\zeta_{\widehat W}$.}
In this step, we use Lemma~\ref{lem:shi-original-lemma-11} and the bound from
Step~1 to derive and solve a self-bounding inequality for
$\zeta_{\widehat W}$.
Optimality and the solver guarantee
\mainRobustDMDPSolverTolerance{} ensure that
$0\le\widehat V^\star-\widehat W
\le\varepsilon_{\mathrm{opt}}\cdot\bm{1}_S$. Hence, for every state $s$,
Minkowski's inequality yields
\begin{equation}
    \label{eq:planning-standard-deviation-transfer}
    \begin{aligned}
    \sqrt{\Var_{P^{0,\widehat\pi}}(\widehat V^\star)(s)}
    &\le
    \sqrt{\Var_{P^{0,\widehat\pi}}(\widehat W)(s)}
    +
    \sqrt{\Var_{P^{0,\widehat\pi}}
    (\widehat V^\star-\widehat W)(s)}
    \\
    &\le
    \sqrt{\Var_{P^{0,\widehat\pi}}(\widehat W)(s)}
    +\varepsilon_{\mathrm{opt}}.
    \end{aligned}
\end{equation}
Apply the nonnegative matrix
$\mathcal G_{P^{\widehat\pi,\widehat W}}$ to
\eqref{eq:centered-empirical-planning-residual}. Using the preceding display,
$\mathcal G_{P^{\widehat\pi,\widehat W}}\bm{1}_S=\bm{1}_S$,
and $N\ge\iota$, which follows from the first condition in
\eqref{eq:recursive-comparison-sample-conditions}, we obtain
\begin{equation}
    \label{eq:empirical-residual-reduction-proof}
    \zeta_{\widehat W}
    \le
    C\sqrt{\frac{\iota}{N}}
    \left\|
    \mathcal G_{P^{\widehat\pi,\widehat W}}
    \sqrt{\Var_{P^{0,\widehat\pi}}(\widehat W)}
    \right\|_\infty
    +C\frac{R\cdot\iota}{N}
    +C\varepsilon_{\mathrm{opt}}.
\end{equation}

Because $P^{\widehat\pi,\widehat W}$ lies in the true robust uncertainty
set, Lemma~\ref{lem:standard-deviation-TV-perturbation} and
\eqref{eq:discounted-localized-span-condition} imply
\begin{equation}
    \label{eq:learned-policy-nominal-variance-transfer}
    \sqrt{\Var_{P^{0,\widehat\pi}}(\widehat W)}
    \le
    \sqrt{\Var_{P^{\widehat\pi,\widehat W}}(\widehat W)}
    +CR\sqrt{\sigma}\cdot\bm{1}_S.
\end{equation}
This deterministic inequality applies even though $\widehat W$ is
data-dependent. Applying
$\mathcal G_{P^{\widehat\pi,\widehat W}}$ to
\eqref{eq:learned-policy-nominal-variance-transfer} and using
\eqref{eq:matched-learned-policy-resolvent-reduction}, we obtain
\[
    \begin{aligned}
    &
    \left\|
    \mathcal G_{P^{\widehat\pi,\widehat W}}
    \sqrt{\Var_{P^{0,\widehat\pi}}(\widehat W)}
    \right\|_\infty
\le
    C\sqrt{B_{\mathrm{emp}}(R)+R\zeta_{\widehat W}}
    +CR\sqrt{\sigma}.
    \end{aligned}
\]
Since $B_{\mathrm{emp}}(R)\ge\sigma R^2$, the final term can be absorbed into
the square root. Therefore,
\begin{equation}
    \label{eq:empirical-variance-closure-proof}
    \left\|
    \mathcal G_{P^{\widehat\pi,\widehat W}}
    \sqrt{\Var_{P^{0,\widehat\pi}}(\widehat W)}
    \right\|_\infty
    \le
    C\sqrt{B_{\mathrm{emp}}(R)+R\zeta_{\widehat W}}.
\end{equation}
Together, \eqref{eq:empirical-residual-reduction-proof} and
\eqref{eq:empirical-variance-closure-proof} imply
\begin{equation}
    \label{eq:empirical-residual-self-bound-proof}
    \zeta_{\widehat W}
    \le
    C\sqrt{\frac{\iota}{N}}
    \sqrt{B_{\mathrm{emp}}(R)+R\zeta_{\widehat W}}
    +C\frac{R\cdot\iota}{N}
    +C\varepsilon_{\mathrm{opt}}.
\end{equation}
To solve \eqref{eq:empirical-residual-self-bound-proof},
$\sqrt{x+y}\le\sqrt{x}+\sqrt{y}$ and Young's
inequality yield
\begin{equation}
    \label{eq:empirical-residual-young-proof}
    \begin{aligned}
    C\sqrt{\frac{\iota}{N}}
    \sqrt{B_{\mathrm{emp}}(R)+R\zeta_{\widehat W}}
    &\le
    C\sqrt{\frac{B_{\mathrm{emp}}(R)\cdot\iota}{N}}
    +
    C\sqrt{\frac{R\cdot\iota}{N}}\sqrt{\zeta_{\widehat W}}
    \\
    &\le
    C\sqrt{\frac{B_{\mathrm{emp}}(R)\cdot\iota}{N}}
    +\frac12\zeta_{\widehat W}
    +C\frac{R\cdot\iota}{N}.
    \end{aligned}
\end{equation}
Substituting \eqref{eq:empirical-residual-young-proof} into
\eqref{eq:empirical-residual-self-bound-proof} results in
\[
    \zeta_{\widehat W}
    \le
    C\sqrt{\frac{B_{\mathrm{emp}}(R)\cdot\iota}{N}}
    +\frac12\zeta_{\widehat W}
    +C\frac{R\cdot\iota}{N}
    +C\varepsilon_{\mathrm{opt}}.
\]
Rearranging establishes
\begin{equation}
    \label{eq:empirical-total-residual-proof}
    \zeta_{\widehat W}
    \le
    C\sqrt{\frac{B_{\mathrm{emp}}(R)\cdot\iota}{N}}
    +C\frac{R\cdot\iota}{N}
    +C\varepsilon_{\mathrm{opt}}.
\end{equation}

\paragraph*{Step 3: Final substitution.}
Finally, we use the bound from Step~2 to simplify the right-hand side of
\eqref{eq:T2-prime-lemma23-intermediate}. Young's inequality and
\eqref{eq:empirical-total-residual-proof} show that
\[
    \begin{aligned}
    \sqrt{\frac{\iota}{N}}
    \sqrt{B_{\mathrm{emp}}(R)+R\zeta_{\widehat W}}
    &\le
    \sqrt{\frac{B_{\mathrm{emp}}(R)\cdot\iota}{N}}
    +\frac12\zeta_{\widehat W}
    +C\frac{R\cdot\iota}{N}
    \\
    &\le
    C\sqrt{\frac{B_{\mathrm{emp}}(R)\cdot\iota}{N}}
    +C\frac{R\cdot\iota}{N}
    +C\varepsilon_{\mathrm{opt}}.
    \end{aligned}
\]
Substituting the last display into
\eqref{eq:T2-prime-lemma23-intermediate} establishes
\eqref{eq:T2-prime-bound}, completing the proof.

\subsubsection{Proof of Lemma~\ref{lem:mismatched-resolvent-variance-pi-hat}}
\label{sec:proof-of-mismatched-resolvent-variance-pi-hat}
The proof again has three steps. We first apply the anchored
resolvent-variance bound under the mismatched kernel
$P^{\widehat\pi,W}$ and isolate the selector-mismatch cost. We then close the
empirical Bellman residual under this resolvent. Finally, we transfer the
variance from the kernel selected at $W$ to the kernel selected at
$\widehat W$ and absorb the resulting mismatch term.

\paragraph*{Step 1: Reduction under the mismatched resolvent.}
In this step, we rewrite the empirical Bellman equation under
$P^{\widehat\pi,W}$ and quantify the cost of replacing the selector
$P^{\widehat\pi,\widehat W}$ by $P^{\widehat\pi,W}$. Define
\[
    \zeta_{\mathrm{mis}}
    \coloneqq
    \left\|
    \mathcal G_{P^{\widehat\pi,W}}
    \left|
    \left(\widehat P^{\widehat\pi,\widehat W}
    -P^{\widehat\pi,\widehat W}\right)\widehat W
    \right|
    \right\|_\infty.
\]
For each $(s,a)$, optimality of the two true robust selectors implies
\[
    \left(P_{s,a}^{\widehat W}-P_{s,a}^{W}\right)\widehat W
    \le0,
    \qquad
    \left(P_{s,a}^{\widehat W}-P_{s,a}^{W}\right)W
    \ge0.
\]
Consequently,
\[
    \begin{aligned}
    0
    &\le
    -\left(P_{s,a}^{\widehat W}-P_{s,a}^{W}\right)\widehat W
    \\
    &\le
    \left|
    \left(P_{s,a}^{\widehat W}-P_{s,a}^{W}\right)
    (\widehat W-W)
    \right|
    \\
    &\le
    \left\|P_{s,a}^{\widehat W}-P_{s,a}^{W}\right\|_1
    \|\widehat W-W\|_\infty.
    \end{aligned}
\]
Both selected rows belong to $\cU_{s,a}(P^0)$, so their
$\ell_1$-distance is at most $4\sigma$. Averaging over
$a\sim\widehat\pi(\cdot\mid s)$ therefore leads to
\begin{equation}
    \label{eq:learned-policy-selector-mismatch}
    \left\|
    \left(P^{\widehat\pi,\widehat W}
    -P^{\widehat\pi,W}\right)\widehat W
    \right\|_\infty
    \le
    C\sigma\|\widehat W-W\|_\infty.
\end{equation}

The empirical Bellman equation can now be written as
\[
    \begin{aligned}
    \widehat W
    &=
    r^{\widehat\pi}
    +\gamma P^{\widehat\pi,W}\widehat W
    \\
    &\quad+
    \gamma
    \left(\widehat P^{\widehat\pi,\widehat W}
    -P^{\widehat\pi,\widehat W}\right)\widehat W
    +
    \gamma
    \left(P^{\widehat\pi,\widehat W}
    -P^{\widehat\pi,W}\right)\widehat W.
    \end{aligned}
\]
As in Step~1 of the proof of
Lemma~\ref{lem:span-aware-shi-lemma-12}, the span, minimum-value, and anchor
defect premises of Lemma~\ref{lem:span-aware-resolvent-variance} follow from
\eqref{eq:discounted-localized-span-condition},
\eqref{eq:discounted-value-range}, and
\eqref{eq:empirical-anchor-defect}, respectively. Since
$P^{\widehat\pi,W}$ belongs to the true robust uncertainty set,
\eqref{eq:anchored-resolvent-residual-envelopes} holds with $\xi_h=0$.
For the value residual in the displayed Bellman equation, $\gamma\le1$ allows
us to take
\[
    \xi_V
    =
    \left(\widehat P^{\widehat\pi,\widehat W}
    -P^{\widehat\pi,\widehat W}\right)\widehat W
    +
    \left(P^{\widehat\pi,\widehat W}
    -P^{\widehat\pi,W}\right)\widehat W
\]
in \eqref{eq:anchored-resolvent-residual-envelopes}. The triangle inequality,
\eqref{eq:learned-policy-selector-mismatch}, and
$\mathcal G_{P^{\widehat\pi,W}}\bm{1}_S=\bm{1}_S$ imply
\[
    \left\|
    \mathcal G_{P^{\widehat\pi,W}}|\xi_V|
    \right\|_\infty
    \le
    \zeta_{\mathrm{mis}}
    +C\sigma\|\widehat W-W\|_\infty.
\]
Lemma~\ref{lem:span-aware-resolvent-variance} therefore yields
\begin{equation}
    \label{eq:mismatched-learned-policy-resolvent-reduction}
    \begin{aligned}
    &
    \left\|
    \left(I-\gamma P^{\widehat\pi,W}\right)^{-1}
    \sqrt{\Var_{P^{\widehat\pi,W}}(\widehat W)}
    \right\|_\infty
    \\
    &\qquad\le
    \frac{C}{1-\gamma}
    \sqrt{
    B_{\mathrm{emp}}(R)
    +R\zeta_{\mathrm{mis}}
    +\sigma R\|\widehat W-W\|_\infty
    }.
    \end{aligned}
\end{equation}
Thus, relative to the matched reduction
\eqref{eq:matched-learned-policy-resolvent-reduction}, the only additional
term is $\sigma R\|\widehat W-W\|_\infty$.

\paragraph*{Step 2: Closing the empirical Bellman residual.}
We have
\begin{equation}
    \label{eq:mismatched-policy-residual-proof}
    \zeta_{\mathrm{mis}}
    \le
    C\sqrt{
    \frac{
    \left(
    B_{\mathrm{emp}}(R)
    +\sigma R\|\widehat W-W\|_\infty
    \right)\cdot\iota}{N}}
    +C\frac{R\cdot\iota}{N}
    +C\varepsilon_{\mathrm{opt}}.
\end{equation}
The proof of \eqref{eq:mismatched-policy-residual-proof} follows Step~2 of the
proof of Lemma~\ref{lem:span-aware-shi-lemma-12}, with the additional term
$\sigma R\|\widehat W-W\|_\infty$ arising from the selector-mismatch term in
\eqref{eq:mismatched-learned-policy-resolvent-reduction}. We omit the details
for brevity.

\paragraph*{Step 3: Substitution and absorption.}
In this step, we transfer the variance in $T_6'$ to the kernel analyzed in
Step~1, substitute the residual bound from Step~2, and absorb the remaining
selector-mismatch term. Both
$P^{\widehat\pi,\widehat W}$ and $P^{\widehat\pi,W}$ are within total
variation distance $\sigma$ of $P^{0,\widehat\pi}$, and hence
\[
    \left\|
    P^{\widehat\pi,\widehat W}(s,\cdot)
    -P^{\widehat\pi,W}(s,\cdot)
    \right\|_{\mathrm{TV}}
    \le2\sigma,
    \qquad s\in\cS.
\]
Therefore, Lemma~\ref{lem:standard-deviation-TV-perturbation} and
\eqref{eq:discounted-localized-span-condition} imply
\begin{equation}
    \label{eq:mismatched-policy-standard-deviation-transfer}
    \sqrt{\Var_{P^{\widehat\pi,\widehat W}}(\widehat W)}
    \le
    \sqrt{\Var_{P^{\widehat\pi,W}}(\widehat W)}
    +CR\sqrt{\sigma}\cdot\bm{1}_S.
\end{equation}
Using \eqref{eq:mismatched-policy-standard-deviation-transfer} in the
definition of $T_6'$ in
\eqref{eq:T-prime-second-branch-definitions} and then applying
\eqref{eq:mismatched-learned-policy-resolvent-reduction}, we obtain
\begin{equation}
    \label{eq:T6-prime-mismatched-intermediate}
    \begin{aligned}
    \|T_6'\|_\infty
    &\le
    \frac{C}{1-\gamma}\sqrt{\frac{\iota}{N}}
    \sqrt{
    B_{\mathrm{emp}}(R)
    +R\zeta_{\mathrm{mis}}
    +\sigma R\|\widehat W-W\|_\infty
    }
    \\
    &\quad+
    C\frac{R\sqrt{\sigma\cdot\iota/N}}{1-\gamma}.
    \end{aligned}
\end{equation}
Since $B_{\mathrm{emp}}(R)\ge\sigma R^2$, the additive variance-transfer term
in \eqref{eq:T6-prime-mismatched-intermediate} is absorbed into its
$B_{\mathrm{emp}}(R)$ contribution. Splitting the remaining square root,
applying Young's inequality to its $R\zeta_{\mathrm{mis}}$ contribution, and
then using \eqref{eq:mismatched-policy-residual-proof}, we obtain
\begin{equation}
    \label{eq:T6-prime-after-residual-substitution}
    \begin{aligned}
    \|T_6'\|_\infty
    &\le
    C\sqrt{
    \frac{B_{\mathrm{emp}}(R)\cdot\iota}
    {N(1-\gamma)^2}}
    +C\frac{R\cdot\iota}{N(1-\gamma)}
    \\
    &\quad+
    C\sqrt{
    \frac{\sigma R\|\widehat W-W\|_\infty\cdot\iota}
    {N(1-\gamma)^2}}
    +C\frac{\varepsilon_{\mathrm{opt}}}{1-\gamma}.
    \end{aligned}
\end{equation}
Young's inequality bounds the remaining selector-mismatch contribution by
\begin{equation}
    \label{eq:T6-prime-selector-mismatch-young}
    C\sqrt{
    \frac{\sigma R\|\widehat W-W\|_\infty\cdot\iota}
    {N(1-\gamma)^2}}
    \le
    \frac{1}{20}\|\widehat W-W\|_\infty
    +
    C\frac{\sigma R\cdot\iota}{N(1-\gamma)^2}.
\end{equation}
Moreover,
\begin{equation}
    \label{eq:T6-prime-selector-mismatch-budget}
    \frac{\sigma R\cdot\iota}{N(1-\gamma)^2}
    =
    \frac{\sqrt{\sigma\cdot\iota/N}}{1-\gamma}
    \sqrt{
    \frac{\sigma R^2\cdot\iota}{N(1-\gamma)^2}}
    \le
    C\sqrt{
    \frac{B_{\mathrm{emp}}(R)\cdot\iota}
    {N(1-\gamma)^2}},
\end{equation}
where the last inequality uses the second condition in
\eqref{eq:recursive-comparison-sample-conditions} and
$B_{\mathrm{emp}}(R)\ge\sigma R^2$. Substituting
\eqref{eq:T6-prime-selector-mismatch-young} and
\eqref{eq:T6-prime-selector-mismatch-budget} into
\eqref{eq:T6-prime-after-residual-substitution} establishes
\eqref{eq:T6-prime-bound}, completing the proof.

\subsubsection{Proof of
Lemma~\ref{lem:standard-deviation-TV-perturbation}}
\label{subsec:proof-standard-deviation-TV-perturbation}

Fix probability vectors $q,q'$ and a vector $f$. Since the variance is
the minimum mean squared deviation from a constant, evaluating the variance
under $q$ around the mean $q'f$ gives
\[
    \begin{aligned}
    \Var_q(f)
    &=
    \min_{c\in\mathbb R}q\!\left[(f-c)^2\right]
    \\
    &\le
    q\!\left[(f-q'f)^2\right]
    \\
    &=
    q'\!\left[(f-q'f)^2\right]
    +(q-q')\!\left[(f-q'f)^2\right]
    \\
    &=
    \Var_{q'}(f)
    +(q-q')\!\left[(f-q'f)^2\right].
    \end{aligned}
\]
Because $q'f$ is a convex combination of the coordinates of $f$, it
lies between their minimum and maximum. Hence
\[
    0
    \le
    (f-q'f)^2
    \le
    \|f\|_{\mathrm{span}}^2\cdot\bm{1}_S.
\]
Applying this bound to the previous inequality yields
\begin{equation}
    \label{eq:one-sided-variance-TV-perturbation}
    \Var_q(f)
    \le
    \Var_{q'}(f)
    +C\|f\|_{\mathrm{span}}^2
    \|q-q'\|_{\mathrm{TV}}.
\end{equation}
Finally, applying $\sqrt{x+y}\le\sqrt{x}+\sqrt{y}$ to
\eqref{eq:one-sided-variance-TV-perturbation} gives
\[
    \begin{aligned}
    \sqrt{\Var_q(f)}-
    \sqrt{\Var_{q'}(f)}
    &\le
    C\|f\|_{\mathrm{span}}
    \sqrt{\|q-q'\|_{\mathrm{TV}}}.
    \end{aligned}
\]
Interchanging $q$ and $q'$ gives the reverse comparison, thus establishing
\eqref{eq:standard-deviation-TV-perturbation}.

\section{Proofs for span-agnostic horizon calibration}\label{app:span-agnostic-horizon-calibration}

This appendix gives the full specification of
Algorithm~\mainAnchoredSpanAgnosticAlgorithm{} and proves
Theorem~\mainSpanAgnosticUpperBoundTheorem{}.
Appendix~\ref{subsec:anchored-calibration-computable-quantities} supplies the
implementation details omitted from the main text, including the horizon
grids, solver guarantees, anchor-calibration procedure, and
lower-confidence penalties.
Appendix~\ref{subsec:anchored-calibration-lemmas} states the supporting
lemmas for the anchor certificate and the nominal and robust policy
candidates.

Appendix~\ref{subsec:proof-anchored-calibrated-policy-upper-bound} combines
these lemmas to prove the theorem. It shows that every reported
lower-confidence bound is valid for its policy's robust average reward and
that, under either condition in the theorem, at least one candidate has a
lower-confidence bound of at least
$\rho^{\star,\sigma}-O(\varepsilon)$. The policy with the largest bound is
therefore $O(\varepsilon)$-optimal. Finally,
Appendix~\ref{subsec:proof-anchored-calibration-auxiliary-lemmas} proves the
supporting lemmas.

\subsection{Full specification of Algorithm~\mainAnchoredSpanAgnosticAlgorithm{}}
\label{subsec:anchored-calibration-computable-quantities}

This subsection gives the implementable specification summarized by
Algorithm~\mainAnchoredSpanAgnosticAlgorithm{}. We first define the
policy grid, which consists of the discount factors used to generate the
candidate policies, and the required solver guarantees. We then
specify the anchor calibration, the robust planning and evaluation calls, and
the lower-confidence penalties. The final displays introduce the expanded
penalty notation used in the proofs.

Recall that the independent batches
$\mathcal D_{\mathrm{nom}}$ and $\mathcal D_{\mathrm{rob}}$ contain
$N_{\mathrm{nom}}$ and $N_{\mathrm{rob}}$ samples per state-action pair,
respectively. Their empirical nominal kernels are
$\widehat P_{\mathrm{nom}}^0$ and $\widehat P_{\mathrm{rob}}^0$.
All nominal and robust discounted solver calls below are governed by
the common solver tolerance
$\varepsilon_{\mathrm{opt}}\le c_{\mathrm{opt}}\varepsilon$, where
$c_{\mathrm{opt}}$ is the universal constant in
Theorem~\mainSpanAgnosticUpperBoundTheorem{}. Their call-specific
discounted-value guarantees are stated below.

\paragraph*{Logarithmic factor and policy grid.}
For the remainder of this appendix, define
\begin{equation}
    \iota_N
    \coloneqq
    \log\!\left(\frac{SAN}{\delta}\right).
    \label{eq:anchored-iota-N}
\end{equation}
The policy grid $\Gamma_N$ consists of discount factors $\gamma$ with dyadic
effective horizons $(1-\gamma)^{-1}=2^k$; each of these discount factors is
used to generate candidate policies. The grid excludes horizons that are too
long for the robust batch.
For a sufficiently large
universal constant $C_\Gamma$, define
\begin{equation}
    \Gamma_N
    \coloneqq
    \left\{
    1-2^{-k}: k\in\{1,2,\ldots\},\
    2^k\le
    \frac{1}{C_\Gamma}
    \min\left\{
    \sqrt{\frac{N_{\mathrm{rob}}}{\sigma\cdot\iota_N}},
    \frac{N_{\mathrm{rob}}}{\iota_N}
    \right\}
    \right\},
    \label{eq:anchored-Gamma-N}
\end{equation}
When $\sigma=0$, the first cutoff in $\Gamma_N$ is interpreted as $+\infty$.

We use $\gamma$ for discount factors that generate candidate policies and
$\lambda$ for discount factors that generate anchor certificates. When a
definition applies to a discount factor in either role, we denote it by
$\eta$.

\paragraph*{Nominal solver.}
For each $\eta\in(0,1)$ at which we call the nominal solver, let
$\widehat V_\eta^{\star,0}$ denote the
exact optimal discounted value of the empirical nominal MDP with kernel
$\widehat P_{\mathrm{nom}}^0$. The solver returns a deterministic policy
$\widehat\pi_\eta^0$, which we evaluate exactly under
$\widehat P_{\mathrm{nom}}^0$. We denote its discounted value by
$\widehat V_\eta^{\widehat\pi_\eta^0,0}$ and require
\begin{equation}
    \label{eq:anchored-nominal-solver-tolerance}
    \left\|
    \widehat V_\eta^{\star,0}
    -\widehat V_\eta^{\widehat\pi_\eta^0,0}
    \right\|_\infty
    \le
    \min\left\{
    \frac{\varepsilon_{\mathrm{opt}}}{1-\eta},
    \frac{1}{N_{\mathrm{nom}}}
    \right\}.
\end{equation}
The $1/N_{\mathrm{nom}}$ requirement is a computational accuracy condition and
does not require additional samples. The solver is applied at each
$\gamma\in\Gamma_N$ to obtain the nominal policy candidates. It is also applied
at the discount factors in the anchor grid defined below. When a discount
factor belongs to both grids, we reuse the same nominal solve.

\paragraph*{Anchor calibration.}
The following procedure uses only $\mathcal D_{\mathrm{nom}}$. It constructs a
certificate at each anchor discount factor and selects one certificate to use
for every robust policy candidate.
\begin{algorithm}
    \caption{Nominal anchor calibration}
    \label{alg:nominal-anchor-calibration}
    \noindent\textbf{Input.}
    The nominal-batch empirical kernel $\widehat P_{\mathrm{nom}}^0$;
    $N$, $N_{\mathrm{nom}}$, $\varepsilon$, $\delta$, and
    the solver tolerance $\varepsilon_{\mathrm{opt}}$.
    \begin{enumerate}[leftmargin=*,label=\textbf{\arabic*.},itemsep=0.1em,topsep=0.5em]
        \item Construct the anchor grid
        \begin{equation}
            \Lambda_N
            \coloneqq
            \left\{
            1-2^{-j}: j\in\{1,2,\ldots\},\
            2^j\le N_{\mathrm{nom}}
            \right\}.
            \label{eq:anchored-Lambda-N}
        \end{equation}
        \item For each $\lambda\in\Lambda_N$, solve the empirical nominal
        discounted problem subject to
        \eqref{eq:anchored-nominal-solver-tolerance}. For a sufficiently large
        universal constant $C_{\mathrm{anc}}$, define
        \begin{subequations}
        \label{eq:anchored-nominal-certificate}
        \begin{align}
            \widehat H_\lambda^0
            &\coloneqq
            \max\left\{1,
            \left\|
            \widehat V_\lambda^{\widehat\pi_\lambda^0,0}
            \right\|_{\mathrm{span}}
            \right\},
            \label{eq:anchored-nominal-anchor-values}
            \\
            \widehat\rho_\lambda^+
            &\coloneqq
            (1-\lambda)\max_s
            \widehat V_\lambda^{\widehat\pi_\lambda^0,0}(s)
            +C_{\mathrm{anc}}\left(
            \sqrt{
            \frac{\widehat H_\lambda^0\cdot\iota_N}
            {N_{\mathrm{nom}}}}
            +\frac{\iota_N}{N_{\mathrm{nom}}(1-\lambda)}
            +\varepsilon_{\mathrm{opt}}
            \right),
            \label{eq:anchored-rho-lambda-plus}
            \\
            \widehat H_\lambda^+
            &\coloneqq
            \widehat H_\lambda^0
            +C_{\mathrm{anc}}\left(
            \frac{1}{1-\lambda}
            \sqrt{
            \frac{\widehat H_\lambda^0\cdot\iota_N}
            {N_{\mathrm{nom}}}}
            +\frac{\iota_N}
            {N_{\mathrm{nom}}(1-\lambda)^2}
            +\frac{\varepsilon_{\mathrm{opt}}}{1-\lambda}
            \right).
            \label{eq:anchored-H-lambda-plus}
        \end{align}
        \end{subequations}
        The pair
        $(\widehat\rho_\lambda^+,\widehat H_\lambda^+)$ is the candidate
        anchor certificate at $\lambda$.
        \item For each $\lambda\in\Lambda_N$, compute
        \begin{equation}
            \widehat Q_\lambda
            \coloneqq
            (1-\lambda)\widehat H_\lambda^+
            +2\left[
            \widehat\rho_\lambda^+
            -(1-\lambda)\max_s
            \widehat V_\lambda^{\widehat\pi_\lambda^0,0}(s)
            \right].
            \label{eq:anchored-anchor-quality}
        \end{equation}
        Select, using arbitrary deterministic tie-breaking,
        \begin{equation}
            \widehat\lambda_{\mathrm{anc}}
            \in
            \arg\min_{\substack{\lambda\in\Lambda_N:\\
            \widehat Q_\lambda
            \le
            \min_{\mu\in\Lambda_N}\widehat Q_\mu
            +\varepsilon}}
            \widehat H_\lambda^+.
            \label{eq:anchored-selected-anchor}
        \end{equation}
        \item Return the anchor certificate
        \begin{equation}
            \widehat\rho_{\mathrm{anc}}^+
            \coloneqq
            \widehat\rho_{\widehat\lambda_{\mathrm{anc}}}^+,
            \qquad
            \widehat H_{\mathrm{anc}}^+
            \coloneqq
            \widehat H_{\widehat\lambda_{\mathrm{anc}}}^+.
            \label{eq:anchored-selected-certificate}
        \end{equation}
    \end{enumerate}
\end{algorithm}

The rule \eqref{eq:anchored-selected-anchor} first restricts attention to the
discount factors whose $\widehat Q_\lambda$ values are within $\varepsilon$ of
the minimum and then selects one with the smallest certified span
$\widehat H_\lambda^+$. The resulting certificate
\eqref{eq:anchored-selected-certificate} supplies the anchor reward-level and
span bounds used in the confidence penalty for every robust policy candidate.
Because Algorithm~\ref{alg:nominal-anchor-calibration} uses only
$\mathcal D_{\mathrm{nom}}$, the selected discount factor and its certificate
are independent of $\mathcal D_{\mathrm{rob}}$.

\paragraph*{Robust planning and evaluation.}
For each $\gamma\in\Gamma_N$, we use
$\widehat P_{\mathrm{rob}}^0$ in two ways. First, we evaluate the nominal
candidate $\widehat\pi_\gamma^0$ exactly in the empirical robust MDP and denote
its value by $\widehat V_\gamma^{\widehat\pi_\gamma^0,\sigma}$. Second, we run
the robust discounted solver to obtain a policy $\widehat\pi_\gamma$ satisfying
\begin{equation}
    \label{eq:anchored-robust-solver-tolerance}
    \left\|
    \widehat V_\gamma^{\star,\sigma}
    -\widehat V_\gamma^{\widehat\pi_\gamma,\sigma}
    \right\|_\infty
    \le
    \varepsilon_{\mathrm{opt}}.
\end{equation}
All empirical robust values below are computed in the uncertainty set centered
at $\widehat P_{\mathrm{rob}}^0$. We assume exact fixed-policy evaluation, so
the nominal-candidate penalty has no optimization-error term; an approximate
evaluation can instead be accommodated by adding its tolerance to that
penalty.

\paragraph*{Penalties.}
Fix a discount factor $\eta$ and an empirical robust value function $V$.
Suppose an anchor $(\rho,h)$ is associated with computable bounds
$\bar\rho$ and $\bar H$ satisfying
\[
    \rho
    \le
    \bar\rho,
    \qquad
    \max\{1,\|h\|_{\mathrm{span}}\}
    \le
    \bar H.
\]
From these quantities, we construct the penalty in four stages.
First, the following quantity bounds both the span of the empirical value and
the certified span of the anchor:
\begin{equation}
    \label{eq:anchored-common-radius}
    \mathcal R_\eta(V,\bar H)
    \coloneqq
    \max\{1,\|V\|_{\mathrm{span}},\bar H\}.
\end{equation}
Here $[x]_+\coloneqq\max\{x,0\}$. Next,
\begin{equation}
    \label{eq:anchored-common-defect}
    \beta_\eta(V,\bar\rho)
    \coloneqq
    \left[
    \bar\rho-(1-\eta)\min_s V(s)
    \right]_+
\end{equation}
measures how far the anchor's reward upper bound lies above the empirical
lower estimate $(1-\eta)\min_s V(s)$. These two quantities determine
\begin{equation}
    \label{eq:anchored-common-budget}
    \mathcal B_\eta(V,\bar\rho,\bar H)
    \coloneqq
    \bar H
    +2\mathcal R_\eta(V,\bar H)\beta_\eta(V,\bar\rho)
    +\sigma\mathcal R_\eta(V,\bar H)^2
    +(1-\eta)\mathcal R_\eta(V,\bar H)^2,
\end{equation}
which determines the variance-dependent part of the penalty. Finally, define
\begin{equation}
    \label{eq:anchored-common-penalty}
    \operatorname{pen}_\eta
    (V;\bar\rho,\bar H)
    \coloneqq
    C_{\mathrm{pen}}\left(
    \sqrt{
    \frac{\mathcal B_\eta(V,\bar\rho,\bar H)\cdot\iota_N}
    {N_{\mathrm{rob}}}
    }
    +\frac{\mathcal R_\eta(V,\bar H)\cdot\iota_N}{N_{\mathrm{rob}}}
    +\frac{\beta_\eta(V,\bar\rho)\cdot\iota_N}
    {N_{\mathrm{rob}}(1-\eta)}
    \right).
\end{equation}
The constant $C_{\mathrm{pen}}$ is a sufficiently large universal constant.
The square-root term gives the main statistical error. The term
involving $\mathcal R_\eta$ scales with the common span bound, whereas the term
involving $\beta_\eta/(1-\eta)$ scales the anchor reward-level gap by the
effective horizon.

For a nominal candidate, we use the trivial anchor $(\rho,h)=(1,0)$ and hence
the bounds $(\bar\rho,\bar H)=(1,1)$. Its empirical robust fixed-policy
evaluation is exact, so its penalty contains only the statistical terms above.
For a robust candidate, the solver guarantee
\eqref{eq:anchored-robust-solver-tolerance} contributes the additional term
$C_{\mathrm{pen}}\varepsilon_{\mathrm{opt}}$. We compute one robust-candidate
penalty from the selected anchor certificate and another from the trivial
anchor. Accordingly, for each $\gamma\in\Gamma_N$, define
\begin{subequations}
\begin{align}
    \operatorname{pen}_\gamma^{\mathrm{nom}}(\widehat\pi_\gamma^0)
    &\coloneqq
    \operatorname{pen}_\gamma
    \left(
    \widehat V_\gamma^{\widehat\pi_\gamma^0,\sigma};1,1
    \right),
    \label{eq:anchored-pen-nom-gamma}
    \\
    \operatorname{pen}_\gamma^{\mathrm{anc}}(\widehat\pi_\gamma)
    &\coloneqq
    \operatorname{pen}_\gamma
    \left(
    \widehat V_\gamma^{\widehat\pi_\gamma,\sigma};
    \widehat\rho_{\mathrm{anc}}^+,
    \widehat H_{\mathrm{anc}}^+
    \right)
    +C_{\mathrm{pen}}\varepsilon_{\mathrm{opt}},
    \label{eq:anchored-pen-gamma-anc}
    \\
    \operatorname{pen}_\gamma^{\mathrm{triv}}(\widehat\pi_\gamma)
    &\coloneqq
    \operatorname{pen}_\gamma
    \left(
    \widehat V_\gamma^{\widehat\pi_\gamma,\sigma};
    1,1
    \right)
    +C_{\mathrm{pen}}\varepsilon_{\mathrm{opt}}.
    \label{eq:anchored-pen-gamma-triv}
\end{align}
\end{subequations}
The resulting nominal- and robust-policy lower-confidence bounds are defined in
\mainAnchoredNominalCandidateScore{} and
\mainAnchoredRobustCandidateScore{}, respectively. The latter uses the
smaller of the selected-anchor and trivial-anchor penalties.

\paragraph*{Notation shorthand.}
For each $\gamma\in\Gamma_N$, collect the three statistical components of a
penalty as
\begin{equation}
    \label{eq:anchored-component-map}
    \mathcal C_\gamma(V;\bar\rho,\bar H)
    \coloneqq
    \bigl(
    \mathcal R_\gamma(V,\bar H),
    \beta_\gamma(V,\bar\rho),
    \mathcal B_\gamma(V,\bar\rho,\bar H)
    \bigr).
\end{equation}
Abbreviating $\widehat\lambda_{\mathrm{anc}}$ by $\widehat\lambda$ in the
following display, the proof shorthand is
\begin{subequations}
\begin{equation}
    \bigl(
    \widehat R_\gamma^{\mathrm{nom}},
    \widehat\beta_\gamma^{\mathrm{nom}},
    \widehat B_\gamma^{\mathrm{nom}}
    \bigr)
    =
    \mathcal C_\gamma
    \left(\widehat V_\gamma^{\widehat\pi_\gamma^0,\sigma};1,1\right).
    \label{eq:anchored-R-nom-gamma}
\end{equation}
\begin{equation}
    \bigl(
    \widehat R_{\gamma,\widehat\lambda}^{\mathrm{anc}},
    \widehat\beta_{\gamma,\widehat\lambda},
    \widehat B_{\gamma,\widehat\lambda}
    \bigr)
    =
    \mathcal C_\gamma\!
    \left(
    \widehat V_\gamma^{\widehat\pi_\gamma,\sigma};
    \widehat\rho_{\widehat\lambda}^+,
    \widehat H_{\widehat\lambda}^+
    \right).
    \label{eq:anchored-R-gamma-lambda}
\end{equation}
\begin{equation}
    \bigl(
    \widehat R_\gamma,
    \widehat\beta_{\gamma,\mathrm{triv}},
    \widehat B_{\gamma,\mathrm{triv}}
    \bigr)
    =
    \mathcal C_\gamma
    \left(\widehat V_\gamma^{\widehat\pi_\gamma,\sigma};1,1\right).
    \label{eq:anchored-R-gamma}
\end{equation}
\end{subequations}
Thus $\widehat R$, $\widehat\beta$, and $\widehat B$ denote, respectively,
the common span bound, the anchor-level defect, and the budget under the leading
square root. The superscript $\mathrm{nom}$ identifies the nominal-policy
family, while the second subscript records the anchor used for a robust-policy
candidate. The associated penalties are those in
\eqref{eq:anchored-pen-nom-gamma}--\eqref{eq:anchored-pen-gamma-triv}; the
robust-policy lower-confidence bound uses the smaller of the selected-anchor
and trivial-anchor penalties.

\subsection{Supporting lemmas for the span-agnostic algorithm}
\label{subsec:anchored-calibration-lemmas}

We continue with the notation of
Appendix~\ref{subsec:anchored-calibration-computable-quantities}.
In particular, $\lambda\in\Lambda_N$ indexes anchor candidates constructed
from $\mathcal D_{\mathrm{nom}}$, whereas $\gamma\in\Gamma_N$ indexes policy
candidates evaluated using $\mathcal D_{\mathrm{rob}}$. Accordingly,
$\widehat V_\lambda^{\widehat\pi_\lambda^0,0}$ uses
$\widehat P_{\mathrm{nom}}^0$, while every hatted robust value uses
$\widehat P_{\mathrm{rob}}^0$.

The supporting lemmas are organized in four stages. First,
Lemma~\ref{lem:fixed-discount-nominal-plugin} bounds the errors of the
empirical nominal optimal value and the returned nominal policy at every
discount factor in the grid. Lemmas~\ref{lem:discounted-nominal-anchors}
and~\ref{lem:anchor-calibration-guarantees} then construct and select an
anchor with level within $O(\varepsilon)$ of $\rho^\star$ and span $O(H_0)$.
Second, Lemma~\ref{lem:raw-localized-robust-comparison} makes the fixed-policy
and learned-policy discounted value-error bounds hold simultaneously over the
dyadic span-radius grid. Third,
Lemma~\ref{lem:nominal-policy-candidate-guarantees} locates a nominal
candidate at a horizon of order $H_0/\varepsilon$, validates every nominal
lower-confidence bound, and identifies a near-optimal candidate in the
high-tolerance regime. Finally,
Lemma~\ref{lem:robust-policy-candidate-guarantees} gives the corresponding
validity and existence guarantees for the robust candidates. This yields the
high-tolerance guarantee when $H_\sigma\le H_0$ and the low-tolerance
guarantee for either span ordering.
Appendix~\ref{subsec:proof-anchored-calibrated-policy-upper-bound} combines
these ingredients to analyze the final selection.

\paragraph*{Fixed-discount nominal comparisons.}
The next lemma gives simultaneous nominal discounted-value estimation bounds
over a finite set of discount factors. Importantly, the lemma includes bounds
expressed in terms of observable empirical value spans. We use them both to certify the anchor
candidates over $\Lambda_N$ and to analyze a nominal policy candidate from
$\Gamma_N$. Its proof, given in
Appendix~\ref{subsec:proof-fixed-discount-nominal-plugin}, follows by
specializing the argument for Theorem~\ref{thm:TV-upper-bound-discount} to
$\sigma=0$.

\begin{lemma}[Fixed-discount nominal plug-in bounds]
    \label{lem:fixed-discount-nominal-plugin}
    There exists a universal constant $C>0$ such that the following holds.
    Let $N\ge16$ and let $\mathcal G_N\subset[1/2,1)$ be a deterministic set
    of $O(\log N)$ discount factors such that
    \[
        \frac{1}{1-\eta}
        \le
        N_{\mathrm{nom}}
        \qquad
        \text{for every }\eta\in\mathcal G_N.
    \]
    For every $\eta\in\mathcal G_N$, let
    $\widehat\pi_\eta^0$ satisfy
    \[
        \left\|
        \widehat V_\eta^{\star,0}
        -\widehat V_\eta^{\widehat\pi_\eta^0,0}
        \right\|_\infty
        \le
        \frac{1}{N_{\mathrm{nom}}}.
    \]
    Then, with probability at least $1-O(\delta)$, the following bounds hold
    simultaneously for every $\eta\in\mathcal G_N$:
    \begin{subequations}
    \label{eq:fixed-discount-nominal-plugin-bounds}
    \begin{align}
        \left\|
        V_\eta^{\star,0}-\widehat V_\eta^{\star,0}
        \right\|_\infty
        &\le
        \frac{C}{1-\eta}
        \sqrt{
        \frac{(\|\widehat V_\eta^{\star,0}\|_{\mathrm{span}}+1)
        \cdot\iota_N}{N_{\mathrm{nom}}}}
        +C\frac{\iota_N}{N_{\mathrm{nom}}(1-\eta)^2},
        \label{eq:fixed-discount-nominal-empirical-span}
        \\
        \left\|
        \widehat V_\eta^{\star,0}-V_\eta^{\star,0}
        \right\|_\infty
        &\le
        C\frac{\iota_N}{N_{\mathrm{nom}}(1-\eta)^2}
        +\frac{C}{1-\eta}
        \sqrt{
        \frac{(\|V_\eta^{\star,0}\|_{\mathrm{span}}+1)
        \cdot\iota_N}{N_{\mathrm{nom}}}}
        +\frac{C}{N_{\mathrm{nom}}},
        \label{eq:fixed-discount-nominal-population-span}
        \\
        \left\|
        \widehat V_\eta^{\widehat\pi_\eta^0,0}
        -V_\eta^{\widehat\pi_\eta^0,0}
        \right\|_\infty
        &\le
        \frac{C}{1-\eta}
        \sqrt{
        \frac{(\|\widehat V_\eta^{\widehat\pi_\eta^0,0}
        \|_{\mathrm{span}}+1)\cdot\iota_N}
        {N_{\mathrm{nom}}}}
        +C\frac{\iota_N}{N_{\mathrm{nom}}(1-\eta)^2}.
        \label{eq:fixed-discount-nominal-policy-evaluation}
    \end{align}
    \end{subequations}
\end{lemma}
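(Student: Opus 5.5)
The plan is to specialize the localized discounted analysis behind Theorem~\ref{thm:TV-upper-bound-discount} to the degenerate uncertainty rule $\cU(P)=\{P\}$ ($\sigma=0$) with the trivial anchor $(\bar\rho,\bar h)=(1,0)$. At a single discount factor $\eta$ we prove each of the three bounds; the $O(\log N)$ grid points then cost only a union bound whose logarithm is absorbed into $\iota_N$. Since $1/(1-\eta)\le N_{\mathrm{nom}}\le N$, the factor $\iota$ of Theorem~\ref{thm:TV-upper-bound-discount} is $O(\iota_N)$. The key difference from that theorem is that we do \emph{not} assume the sample-size condition \eqref{eq:anchored-dmdp-sample-condition}. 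The term $\iota_N/(N_{\mathrm{nom}}(1-\eta)^2)$ is exactly what replaces the requirement $N\gtrsim\iota/(1-\gamma)$ in the Young-absorption steps. In every bound the right-hand side is at least of order $1/(1-\eta)$ when $N_{\mathrm{nom}}(1-\eta)\lesssim\iota_N$, so the bounds hold trivially by \eqref{eq:discounted-value-range} in that range. We may therefore assume $N_{\mathrm{nom}}(1-\eta)\ge C\iota_N$ throughout.

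For \eqref{eq:fixed-discount-nominal-population-span}, apply Lemma~\ref{lem:two-sided-fixed-value-decomposition} with $\sigma=0$, so that $U=V_\eta^{\star,0}$ and $\widehat U=\widehat V_\eta^{\pi^\star_\eta,0}$. Then use the Bernstein bound of Lemma~\ref{lem:nominal-fixed-vector-concentration} for the fixed vector $U$. The resolvent-variance terms are controlled by Lemma~\ref{lem:span-aware-resolvent-variance} with the trivial anchor, which gives $B\lesssim 1+R+(1-\eta)R^2$ with $R=\|U\|_{\mathrm{span}}+1$. With $R=\|U\|_{\mathrm{span}}+1$, the term $(1-\eta)R^2$ contributes $R\sqrt{\iota/N}/\sqrt{1-\eta}$ after division by $1-\eta$. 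By AM--GM this is at most $\sqrt{R\iota/N}/(1-\eta)+\iota/(N(1-\eta)^2)$, using $R\le 2/(1-\eta)$, so it is absorbed. The mismatched branch carries no selector mismatch because $\sigma=0$. This bounds $U-\widehat U$ and hence $\widehat V^{\star,0}_\eta-V^{\star,0}_\eta$ from below. The other direction, $\widehat V^{\star,0}-V^{\star,0}\le\widehat V^{\star,0}-\widehat V^{\widehat\pi,0}+\widehat V^{\widehat\pi,0}-V^{\widehat\pi,0}$, uses the solver tolerance (the $C/N_{\mathrm{nom}}$ term) together with a learned-policy bound. That learned-policy bound comes from Lemma~\ref{lem:shi-original-lemma-11} (leave-one-out over the data-dependent $\widehat V^\star$), with the localization grid over radii as in the proof of Theorem~\ref{thm:TV-upper-bound-discount}. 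The peeling closes because the $\Delta$-coefficients $\sqrt{\iota/(N(1-\eta))}$ and $\iota/(N(1-\eta))$ are small under our standing reduction.

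For \eqref{eq:fixed-discount-nominal-empirical-span} and \eqref{eq:fixed-discount-nominal-policy-evaluation}, the spans appearing on the right are empirical, so the argument is run with $R$ chosen from empirical rather than population spans. In \eqref{eq:fixed-discount-nominal-policy-evaluation}, apply the two-sided decomposition of Lemma~\ref{lem:two-sided-pi-hat-value-decomposition} with $\widehat W=\widehat V^{\widehat\pi^0_\eta,0}_\eta$. The residual $(\widehat P^{0,\widehat\pi}-P^{0,\widehat\pi})\widehat W$ is handled by the leave-one-out bound (Lemma~\ref{lem:shi-original-lemma-11} with $\sigma=0$, which transfers to $\widehat W$ at cost $C/N_{\mathrm{nom}}$). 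The variance term $\Var_{P^{0,\widehat\pi}}(\widehat W)$ is bounded by Lemma~\ref{lem:span-aware-resolvent-variance} with $V=\widehat W$, $P=P^{0,\widehat\pi}$ and the Bellman residual $\xi_V$. Lemma~\ref{lem:span-aware-resolvent-variance} applies with $R=\|\widehat W\|_{\mathrm{span}}+1$ directly, with no peeling, because the matched branch only needs the span of $\widehat W$ itself. The self-bounding inequality is then solved exactly as in the proof of Lemma~\ref{lem:span-aware-shi-lemma-12}. The mismatched branch, with resolvent at $W=V^{\widehat\pi,0}$, has identical kernel since $\sigma=0$, so it equals the matched one. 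The same argument with $\widehat V^{\star,0}$ in place of $\widehat W$ gives \eqref{eq:fixed-discount-nominal-empirical-span}, since the optimal policy's empirical value is $\widehat V^{\star,0}$. The comparison against $V^{\star,0}$ from above uses the lower-direction argument applied to $\widehat\pi$, and the argument from below uses $\pi^\star_\eta$ with the variance evaluated at $\widehat V^\star$ through the empirical kernel.

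The main obstacle is that the leave-one-out radius must be data-dependent: Lemma~\ref{lem:shi-original-lemma-11} is stated for a deterministic $R$. The fix is to instantiate it on the dyadic radius grid $\{2^j\}_{j\le\log_2(2/(1-\eta))}$ and take the smallest grid radius exceeding $\|\widehat V\|_{\mathrm{span}}+1$, losing a factor of $2$. The grid has $O(\log N)$ points, which is absorbed into $\iota_N$. I expect the bookkeeping for the mixed empirical/population variance transfers, via Lemma~\ref{lem:empirical-std-perturbation}, to be the most delicate part.
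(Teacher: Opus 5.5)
Your proof of \eqref{eq:fixed-discount-nominal-population-span} follows the paper. You specialize the localized analysis of Theorem~\ref{thm:TV-upper-bound-discount} to $\sigma=0$, dispose of the regime $N_{\mathrm{nom}}(1-\eta)\lesssim\iota_N$ trivially, and keep the Young remainder $C\beta_\star\iota_N/(N_{\mathrm{nom}}(1-\eta)^2)$. That remainder replaces the $\beta_\star$-part of \eqref{eq:anchored-dmdp-sample-condition}, not the condition $N\gtrsim\iota/(1-\gamma)$, which you still need.

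The paper anchors at $\bar h=V^{\star,0}_\eta-\min_sV^{\star,0}_\eta(s)$, $\bar\rho=(1-\eta)\max_sV^{\star,0}_\eta(s)$ rather than $(1,0)$. Every radius in play is $O((1-\eta)^{-1})$, so both anchors give a budget linear in the radius and the choice is immaterial.

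The real difference is in the two empirical-span bounds. The paper proves the joint population-span bound \eqref{eq:nominal-fixed-discount-population-comparison} once and obtains \eqref{eq:fixed-discount-nominal-empirical-span} and \eqref{eq:fixed-discount-nominal-policy-evaluation} by substitution alone. The span triangle inequality \eqref{eq:span-triangle-inequality} bounds $R_0$ by the relevant empirical span plus $1$ plus a constant times the error being bounded, and Young's inequality absorbs the resulting square-root cross term.

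You instead localize directly at a data-dependent radius, re-instantiating Lemma~\ref{lem:shi-original-lemma-11} on a dyadic radius grid. For \eqref{eq:fixed-discount-nominal-policy-evaluation} this is valid and clean. At $\sigma=0$ you have exactly $\widehat W-W=\eta(I-\eta P^{0,\widehat\pi})^{-1}(\widehat P^{0,\widehat\pi}-P^{0,\widehat\pi})\widehat W$. The leave-one-out bound is already phrased through $\Var_{P^{0,\widehat\pi}}$, and $\widehat W=r^{\widehat\pi}+\eta P^{0,\widehat\pi}\widehat W+\eta\xi_V$ is a two-sided identity. So Lemma~\ref{lem:span-aware-resolvent-variance} and the self-bounding step of Lemma~\ref{lem:span-aware-shi-lemma-12} close with no peeling, and none of the empirical/population variance transfers you were worried about arise. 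The price is the extra radius grid and union bound; since the population bound must be proved anyway, the paper's conversion is shorter.

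The step that does not go through as you describe it is the lower half of \eqref{eq:fixed-discount-nominal-empirical-span}: bounding $V^{\star,0}_\eta-\widehat V^{\star,0}_\eta$ by $\|\widehat V^{\star,0}_\eta\|_{\mathrm{span}}$.

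If the resolvent sits at the empirical kernel $\widehat P^{0,\pi^\star_\eta}$, the residual is $(P^{0,\pi^\star_\eta}-\widehat P^{0,\pi^\star_\eta})V^{\star,0}_\eta$. That is evaluated at the population value, so you are back to $\|V^{\star,0}_\eta\|_{\mathrm{span}}$.

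If you instead put the residual at $\widehat V^{\star,0}_\eta$, you would use $V^{\star,0}_\eta-\widehat V^{\star,0}_\eta\le\eta(I-\eta P^{0,\pi^\star_\eta})^{-1}(P^{0,\pi^\star_\eta}-\widehat P^{0,\pi^\star_\eta})\widehat V^{\star,0}_\eta$ with the row-wise leave-one-out bound averaged over $\pi^\star_\eta$. But then Lemma~\ref{lem:span-aware-resolvent-variance} cannot be invoked for $\Var_{P^{0,\pi^\star_\eta}}(\widehat V^{\star,0}_\eta)$. The relation $\widehat V^{\star,0}_\eta\ge r^{\pi^\star_\eta}+\eta\widehat P^{0,\pi^\star_\eta}\widehat V^{\star,0}_\eta$ is only an inequality. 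Hence the residual $b$ in $\widehat V^{\star,0}_\eta=r^{\pi^\star_\eta}+\eta P^{0,\pi^\star_\eta}\widehat V^{\star,0}_\eta+b$ contains the nonnegative empirical suboptimality gap of $\pi^\star_\eta$, which the hypothesis $|b|\le|\xi_V|$ in \eqref{eq:anchored-resolvent-residual-envelopes} does not cover.

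There are two repairs. You can apply \eqref{eq:discounted-variance-drift} to $f=-\widehat V^{\star,0}_\eta$; there the sign of the gap helps, giving $[P^{0,\pi^\star_\eta}\widehat V^{\star,0}_\eta-\widehat V^{\star,0}_\eta]_+\le 1+|\xi_V|$. More simply, convert the population-span bound you already have, as the paper does.
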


\paragraph*{Anchor certificates.}
The next two lemmas connect the population anchors used in the analysis with
the certificates computed from $\mathcal D_{\mathrm{nom}}$. The first
associates each discount factor with a population anchor. The second collects
the validity, oracle, and selection properties of the empirical certificates
in a single calibration result.

We begin with the population anchors. The following lemma identifies a valid
anchor for each discount factor $\lambda\in(0,1)$. Its proof is deferred to
Appendix~\ref{subsec:proof-discounted-nominal-anchors}.
\begin{lemma}[Population nominal anchors]
    \label{lem:discounted-nominal-anchors}
    Fix $\lambda\in(0,1)$ and let
    \[
        h_\lambda^0
        =
        V_\lambda^{\star,0}-\min_s V_\lambda^{\star,0}(s),
    \]
    and
    \[
        \bar\rho_\lambda^0
        =
        (1-\lambda)\max_s V_\lambda^{\star,0}(s).
    \]
    Then, for every $s\in\cS$,
    \[
        \bar\rho_\lambda^0+h_\lambda^0(s)
        \ge
        \max_{a\in\cA}
        \left\{
        r(s,a)+P^0_{s,a}h_\lambda^0
        \right\}.
    \]
\end{lemma}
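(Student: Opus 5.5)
The plan is to start from the nominal discounted Bellman optimality equation for $V_\lambda^{\star,0}$ and rewrite it in average-reward form. This is the same maneuver used in the proof of \eqref{eq:robust-discounted-average-sandwich-optimal} in Lemma~\ref{lem:robust-discounted-average-sandwich}, now with the degenerate uncertainty set $\{P^0\}$. By Lemma~\ref{lem:rectangular-discounted-bellman-representation} applied to the singleton set, for every $s$,
\[
    V_\lambda^{\star,0}(s)
    =
    \max_{a\in\cA}\left\{r(s,a)+\lambda P^0_{s,a}V_\lambda^{\star,0}\right\}.
\]

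The first step is to replace $\lambda P^0_{s,a}V_\lambda^{\star,0}$ by $P^0_{s,a}V_\lambda^{\star,0}$. The difference is $(1-\lambda)P^0_{s,a}V_\lambda^{\star,0}$, and since $P^0_{s,a}$ is a probability vector this is at most $(1-\lambda)\max_{s'}V_\lambda^{\star,0}(s')=\bar\rho_\lambda^0$. Here we use $V_\lambda^{\star,0}\ge 0$ only implicitly; the max bound holds regardless. Hence for every $(s,a)$,
\[
    r(s,a)+P^0_{s,a}V_\lambda^{\star,0}
    \le
    r(s,a)+\lambda P^0_{s,a}V_\lambda^{\star,0}+\bar\rho_\lambda^0.
\]
Taking the maximum over $a$ and using the Bellman equation, the right-hand side becomes $V_\lambda^{\star,0}(s)+\bar\rho_\lambda^0$.

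The second step is to subtract the constant $m\coloneqq\min_sV_\lambda^{\star,0}(s)$ from both sides. On the left, $P^0_{s,a}$ is stochastic, so $P^0_{s,a}(V_\lambda^{\star,0}-m\mathbf 1)=P^0_{s,a}V_\lambda^{\star,0}-m$. This converts the inequality into
\[
    \max_a\{r(s,a)+P^0_{s,a}h_\lambda^0\}\le \bar\rho_\lambda^0+h_\lambda^0(s),
\]
which is the claim.

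There is no real obstacle; the only point requiring care is the direction of the inequality. The correction term $(1-\lambda)P^0_{s,a}V_\lambda^{\star,0}$ must be upper bounded by the maximum of $V_\lambda^{\star,0}$, not the minimum. This is exactly why the anchor level is defined with $\max_s$ rather than $\min_s$.
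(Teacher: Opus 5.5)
Your proof is correct and takes essentially the same approach as the paper: both start from the nominal discounted Bellman equation and bound the $(1-\lambda)P^0_{s,a}(\cdot)$ correction term by a maximum. The only difference is ordering. The paper first shifts by $\min_s V_\lambda^{\star,0}(s)$, bounds $(1-\lambda)P^0_{s,a}h_\lambda^0\le(1-\lambda)\|h_\lambda^0\|_{\mathrm{span}}$, and then recombines the result into $\bar\rho_\lambda^0$. You instead bound $(1-\lambda)P^0_{s,a}V_\lambda^{\star,0}\le\bar\rho_\lambda^0$ directly and shift afterward, which is exactly the computation the paper itself uses in Step~1 of the proof of Lemma~\ref{lem:fixed-discount-nominal-plugin}.
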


Recall from \eqref{eq:anchored-nominal-certificate} that
$(\widehat\rho_\lambda^+,\widehat H_\lambda^+)$ is the candidate certificate
computed at $\lambda$. The following lemma gives all properties of these
certificates used below. Its proof is deferred to
Appendix~\ref{subsec:proof-anchor-calibration-guarantees}.
\begin{lemma}[Anchor calibration guarantees]
    \label{lem:anchor-calibration-guarantees}
    \label{lem:nominal-anchor-certification}
    \label{lem:efficient-nominal-anchor}
    \label{lem:selected-nominal-anchor}
    Let $N\ge16$ and, for each $\lambda\in\Lambda_N$, let
    \[
        H_\lambda^0
        =
        \max\{1,\|h_\lambda^0\|_{\mathrm{span}}\}.
    \]
    If $C_{\mathrm{anc}}$ in
    \eqref{eq:anchored-nominal-certificate} is sufficiently large, then,
    with probability at least $1-O(\delta)$, simultaneously for every
    $\lambda\in\Lambda_N$,
    \begin{subequations}
    \label{eq:nominal-anchor-certificate-validity}
    \begin{align}
        \left|
        \bar\rho_\lambda^0
        -(1-\lambda)\max_s
        \widehat V_\lambda^{\widehat\pi_\lambda^0,0}(s)
        \right|
        &\le
        \widehat\rho_\lambda^+
        -(1-\lambda)\max_s
        \widehat V_\lambda^{\widehat\pi_\lambda^0,0}(s),
        \label{eq:nominal-anchor-baseline-certificate}
        \\
        \bar\rho_\lambda^0
        &\le
        \widehat\rho_\lambda^+,
        \label{eq:nominal-anchor-level-certificate}
        \\
        H_\lambda^0
        &\le
        \widehat H_\lambda^+.
        \label{eq:nominal-anchor-span-certificate}
    \end{align}
    \end{subequations}
    Moreover, there exist a sufficiently large universal constant $C>0$ and
    a sufficiently small universal constant $c_{\mathrm{opt}}>0$ such that,
    if
    \begin{equation}
        \label{eq:efficient-nominal-anchor-sample-condition}
        N_{\mathrm{nom}}
        \ge
        C\frac{H_0\cdot\iota_N}{\varepsilon^2},
        \qquad
        \varepsilon_{\mathrm{opt}}
        \le c_{\mathrm{opt}}\varepsilon,
    \end{equation}
    then the following conclusions hold on the same event:
    \begin{enumerate}[label=\textup{(\roman*)},leftmargin=2.5em]
        \item there exists $\lambda^\dagger\in\Lambda_N$ satisfying
        \[
            \frac{1}{1-\lambda^\dagger}
            \asymp
            \min\left\{
            \sqrt{\frac{N_{\mathrm{nom}}H_0}{\iota_N}},
            \frac{H_0}{\varepsilon_{\mathrm{opt}}}
            \right\},
        \]
        \[
            \widehat H_{\lambda^\dagger}^+
            \le C H_0,
            \qquad
            \widehat\rho_{\lambda^\dagger}^+
            \le
            \rho^\star
            +C\sqrt{\frac{H_0\cdot\iota_N}{N_{\mathrm{nom}}}}
            +C\varepsilon_{\mathrm{opt}};
        \]
        \item the selected certificate satisfies
        \begin{subequations}
        \label{eq:selected-anchor-certificate-bounds}
        \begin{align}
            \widehat Q_{\widehat\lambda_{\mathrm{anc}}}
            &\le C\varepsilon,
            \label{eq:selected-anchor-quality-bound}
            \\
            \widehat H_{\widehat\lambda_{\mathrm{anc}}}^+
            &\le C H_0,
            \label{eq:selected-anchor-span-bound}
            \\
            \rho^\star
            \le
            \bar\rho_{\widehat\lambda_{\mathrm{anc}}}^0
            \le
            \widehat\rho_{\widehat\lambda_{\mathrm{anc}}}^+
            &\le
            \rho^\star+C\varepsilon.
            \label{eq:selected-anchor-level-bound}
        \end{align}
        \end{subequations}
    \end{enumerate}
    When $\varepsilon_{\mathrm{opt}}=0$, the second quantity in the minimum
    is interpreted as $+\infty$.
\end{lemma}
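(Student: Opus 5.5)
The proof runs in three stages: validity of each certificate, existence of a good oracle discount factor $\lambda^\dagger$, and properties of the selection rule \eqref{eq:anchored-selected-anchor}. All of it happens on the single event of Lemma~\ref{lem:fixed-discount-nominal-plugin}, applied with $\mathcal G_N=\Lambda_N$. This application is legitimate because $\Lambda_N$ has $O(\log N)$ elements, every $\lambda\in\Lambda_N$ satisfies $(1-\lambda)^{-1}=2^j\le N_{\mathrm{nom}}$, and the solver condition \eqref{eq:anchored-nominal-solver-tolerance} gives the required $1/N_{\mathrm{nom}}$ accuracy.

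\textbf{Validity.} Write $\widehat V=\widehat V_\lambda^{\widehat\pi_\lambda^0,0}$.
\begin{itemize}
\item The gap $\|\widehat V-V_\lambda^{\star,0}\|_\infty$ is at most $\|\widehat V_\lambda^{\star,0}-V_\lambda^{\star,0}\|_\infty$ plus the solver error.
\item By \eqref{eq:fixed-discount-nominal-empirical-span}, the first term is controlled by the \emph{empirical} span $\|\widehat V^{\star,0}_\lambda\|_{\mathrm{span}}$. This span is within $2/N_{\mathrm{nom}}$ of $\|\widehat V\|_{\mathrm{span}}$, hence $\lesssim\widehat H_\lambda^0$.
\item Multiplying by $(1-\lambda)$ and comparing with the maxima gives \eqref{eq:nominal-anchor-baseline-certificate}, provided $C_{\mathrm{anc}}$ is large. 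The bound \eqref{eq:nominal-anchor-level-certificate} follows immediately.
\item For the span, use $\|h^0_\lambda\|_{\mathrm{span}}\le\|\widehat V\|_{\mathrm{span}}+2\|\widehat V-V^{\star,0}_\lambda\|_\infty$. The same deviation bound, now without the $(1-\lambda)$ factor, matches the correction term in \eqref{eq:anchored-H-lambda-plus} and gives \eqref{eq:nominal-anchor-span-certificate}.
\end{itemize}

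\textbf{Part (i).} Let $\lambda^\dagger$ be the dyadic point with $(1-\lambda^\dagger)^{-1}\asymp\min\{\sqrt{N_{\mathrm{nom}}H_0/\iota_N},H_0/\varepsilon_{\mathrm{opt}}\}$. Under \eqref{eq:efficient-nominal-anchor-sample-condition} this horizon is at least of order $H_0$ and at most $N_{\mathrm{nom}}$, so $\lambda^\dagger\in\Lambda_N$.
\begin{itemize}
\item Lemma~\ref{lem:robust-value-function-span-bound} with $\sigma=0$ gives $\|V^{\star,0}_{\lambda^\dagger}\|_{\mathrm{span}}\le2H_0$.
\item By \eqref{eq:fixed-discount-nominal-population-span}, the empirical deviation is then $\lesssim(1-\lambda^\dagger)^{-1}\sqrt{H_0\iota_N/N_{\mathrm{nom}}}+\iota_N/(N_{\mathrm{nom}}(1-\lambda^\dagger)^2)$. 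At this horizon both terms are $O(H_0)$, and the optimization term $\varepsilon_{\mathrm{opt}}/(1-\lambda^\dagger)$ is also $O(H_0)$. Hence $\widehat H^0_{\lambda^\dagger}\lesssim H_0$.
\item Substituting into \eqref{eq:anchored-H-lambda-plus} gives $\widehat H^+_{\lambda^\dagger}\lesssim H_0$.
\item For the level, \eqref{eq:bias-discounted-comparison} with $\sigma=0$ gives $(1-\lambda)\max_s V_\lambda^{\star,0}\le\rho^\star+(1-\lambda)H_0$. The estimation error contributes a further $(1-\lambda)\cdot O(H_0)$. With the chosen horizon, $(1-\lambda^\dagger)H_0\lesssim\sqrt{H_0\iota_N/N_{\mathrm{nom}}}+\varepsilon_{\mathrm{opt}}$, which yields the stated bound on $\widehat\rho^+_{\lambda^\dagger}$.
\end{itemize}

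\textbf{Part (ii).}
\begin{itemize}
\item From (i), $\widehat Q_{\lambda^\dagger}=(1-\lambda^\dagger)\widehat H^+_{\lambda^\dagger}+2[\cdots]\lesssim\sqrt{H_0\iota_N/N_{\mathrm{nom}}}+\varepsilon_{\mathrm{opt}}\lesssim\varepsilon$. Hence $\min_\mu\widehat Q_\mu\le C\varepsilon$, which gives \eqref{eq:selected-anchor-quality-bound}.
\item Since $\lambda^\dagger$ is admissible in \eqref{eq:anchored-selected-anchor}, the minimizing rule gives $\widehat H^+_{\widehat\lambda}\le\widehat H^+_{\lambda^\dagger}\le CH_0$, which is \eqref{eq:selected-anchor-span-bound}.
\item For \eqref{eq:selected-anchor-level-bound}, the lower bound $\rho^\star\le(1-\lambda)\max_sV^{\star,0}_\lambda$ holds for every $\lambda$. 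It follows from Lemma~\ref{lem:robust-discounted-average-sandwich} applied to an optimal policy, or directly from \eqref{eq:bias-discounted-comparison}. The middle inequality is validity.
\item For the upper bound, write $\widehat\rho^+_{\widehat\lambda}=(1-\widehat\lambda)\max_s\widehat V+[\widehat\rho^+-(1-\widehat\lambda)\max_s\widehat V]$. The bracket is at most $\widehat Q_{\widehat\lambda}/2$. It then suffices to show $(1-\lambda)\max_s\widehat V\le\rho^\star+O(\widehat Q_\lambda)$. By the baseline certificate, $(1-\lambda)\max_s\widehat V\le\bar\rho^0_\lambda+[\cdot]$. For $\bar\rho^0_\lambda$, use $\bar\rho^0_\lambda\le(1-\lambda)\min_sV^{\star,0}_\lambda+(1-\lambda)\|h^0_\lambda\|_{\mathrm{span}}\le\rho^\star+(1-\lambda)\widehat H^+_\lambda$. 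Here $(1-\lambda)\min_s V_\lambda^{\star,0}\le\rho^\star$ comes from the sandwich lower bound with $\sigma=0$. All remainders are therefore controlled by $\widehat Q_{\widehat\lambda}\le C\varepsilon$.
\end{itemize}

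\textbf{Main obstacle.} The delicate step is validity: the certificate must be computable, so the deviation bound has to be expressed through the empirical span via \eqref{eq:fixed-discount-nominal-empirical-span}. The discrepancy between $\|\widehat V^{\star,0}\|_{\mathrm{span}}$ and $\|\widehat V^{\widehat\pi,0}\|_{\mathrm{span}}$ has to be absorbed carefully into $C_{\mathrm{anc}}$. The selection step is then a bookkeeping exercise once $(1-\lambda)\widehat H^+_\lambda$ is recognized as bounding the level bias.
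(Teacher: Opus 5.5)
Your proposal is correct and follows the paper's proof essentially step for step. Validity comes from Lemma~\ref{lem:fixed-discount-nominal-plugin} on $\Lambda_N$ plus the span triangle inequality, the oracle is a dyadic $\lambda^\dagger$ at the balancing horizon, and selection rests on $\rho^\star\le\widehat\rho^+_\lambda\le\rho^\star+\widehat Q_\lambda$. Two of your variations are harmless: you transfer spans through the $1/N_{\mathrm{nom}}$ solver accuracy, and at $\lambda^\dagger$ you use the population-span bound \eqref{eq:fixed-discount-nominal-population-span} instead of the paper's self-bounding absorption of $C\sqrt{H_0\widehat H^0_{\lambda^\dagger}}$.

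One step to tighten: admissibility of $\lambda^\dagger$ in \eqref{eq:anchored-selected-anchor} needs $\widehat Q_{\lambda^\dagger}\le\varepsilon$ exactly, not just $\lesssim\varepsilon$ (together with $\widehat Q_\mu\ge0$). You get this by enlarging the sample constant and shrinking $c_{\mathrm{opt}}$, since the constant in $\widehat Q_{\lambda^\dagger}\lesssim\sqrt{H_0\iota_N/N_{\mathrm{nom}}}+\varepsilon_{\mathrm{opt}}$ does not depend on either of them.
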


\paragraph*{Shared robust comparison.}
The nominal and robust policy candidates require different discounted-value
comparisons. Conditional on the nominal batch, each nominal candidate is fixed
independently of $\mathcal D_{\mathrm{rob}}$. The fixed-policy extension of
Lemma~\ref{lem:V-hat-V-part-1-decomp} therefore controls
\[
    \left\|
    \widehat V_\gamma^{\pi,\sigma}
    -
    V_\gamma^{\pi,\sigma}
    \right\|_\infty
\]
for these candidates. By contrast, $\widehat\pi_\gamma$ is computed from
$\mathcal D_{\mathrm{rob}}$ itself, so the learned-policy bound in
Lemma~\ref{lem:V-hat-V-pi-hat-bound} is needed to control
\[
    \left\|
    \widehat V_\gamma^{\widehat\pi_\gamma,\sigma}
    -
    V_\gamma^{\widehat\pi_\gamma,\sigma}
    \right\|_\infty.
\]
The former is used with the trivial anchor, whereas the latter is used with
both the population anchor indexed by $\widehat\lambda_{\mathrm{anc}}$ and the
trivial anchor.

Both bounds require a deterministic number $R$ that upper-bounds the relevant
value spans; see \eqref{eq:discounted-localized-span-condition}. The required
radius depends on empirical robust values and therefore cannot be selected
after observing $\mathcal D_{\mathrm{rob}}$. We instead fix a dyadic grid of
candidate radii independently of $\mathcal D_{\mathrm{rob}}$ and make both
bounds hold simultaneously at every grid point. The following lemma formalizes
this simultaneous comparison.
Its proof is deferred to
Appendix~\ref{subsec:proof-raw-localized-robust-comparison}.
\begin{lemma}[Simultaneous comparison over candidate span bounds]
    \label{lem:raw-localized-robust-comparison}
    Fix $\gamma\in\Gamma_N$. Independently of $\mathcal D_{\mathrm{rob}}$,
    choose a deterministic policy $\pi$, an anchor pair
    $(\bar\rho,\bar h)$ satisfying
    \mainNominalAnchorSupersolution{}, and a dyadic grid
    \[
        R_j=2^jR_{\min},
        \qquad
        j=0,\ldots,J,
    \]
    where
    \[
        R_{\min}
        \ge
        \max\{1,\|\bar h\|_{\mathrm{span}}\},
        \qquad
        J=O(\log N).
    \]
    Then, with probability at least $1-O(\delta/N^3)$, the following
    conclusions hold simultaneously for every $j=0,\ldots,J$. Whenever
    \[
        R_j
        \ge
        \max\left\{
        1,
        \|\bar h\|_{\mathrm{span}},
        \|V_\gamma^{\pi,\sigma}\|_{\mathrm{span}},
        \|\widehat V_\gamma^{\pi,\sigma}\|_{\mathrm{span}}
        \right\},
    \]
    the fixed-policy bound
    \eqref{eq:generic-fixed-policy-localized-comparison} from
    Lemma~\ref{lem:V-hat-V-part-1-decomp} holds for $\pi$.
    If, in addition, $\pi=\pi_\gamma^\star$ and
    \[
        R_j
        \ge
        \max\left\{
        \|V_\gamma^{\widehat\pi_\gamma,\sigma}\|_{\mathrm{span}},
        \|\widehat V_\gamma^{\widehat\pi_\gamma,\sigma}\|_{\mathrm{span}},
        \|\widehat V_\gamma^{\star,\sigma}\|_{\mathrm{span}}
        \right\},
    \]
    then the learned-policy bound from
    Lemma~\ref{lem:V-hat-V-pi-hat-bound} also holds for the policy
    $\widehat\pi_\gamma$ returned by the empirical robust solver. In both
    bounds, $N_{\mathrm{rob}}$ and $\iota_N$ take the places of $N$ and $\iota$.
\end{lemma}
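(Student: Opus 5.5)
The plan is to rerun the proofs of Lemmas~\ref{lem:V-hat-V-part-1-decomp} and~\ref{lem:V-hat-V-pi-hat-bound} on the robust batch $\mathcal D_{\mathrm{rob}}$, at a fixed $\gamma\in\Gamma_N$. We rebuild the high-probability event over a grid that is fixed in advance and tighten the confidence parameter so that the failure probability becomes $O(\delta/N^3)$. We first condition on $\mathcal D_{\mathrm{nom}}$. Since $\pi$, $(\bar\rho,\bar h)$, $R_{\min}$ and the grid $\{R_j\}$ are chosen independently of $\mathcal D_{\mathrm{rob}}$, they act as deterministic objects with respect to the empirical kernel $\widehat P_{\mathrm{rob}}^0$. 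Every concentration statement used below is therefore applied to vectors that do not depend on the samples it is applied to, exactly as in Appendix~\ref{subsec:discounted-preliminaries}.

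\textbf{Step 1: the per-radius event.} For each $j$, let $\mathcal E(R_j)$ be the intersection of the following events:
\begin{itemize}
\item Lemma~\ref{lem:one-step-robust-concentration} applied to $V=V_\gamma^{\pi,\sigma}$, and also to $V=U=V_\gamma^{\star,\sigma}$ when $\pi=\pi_\gamma^\star$;
\item Lemma~\ref{lem:nominal-fixed-vector-concentration} applied to $\bar h$;
\item Lemma~\ref{lem:empirical-std-perturbation} applied to the same fixed vectors;
\item the leave-one-out bound Lemma~\ref{lem:shi-original-lemma-11} instantiated at radius $R_j$.
\end{itemize}
Each event is invoked with confidence parameter $\delta'=\delta/(N^3(J+1))$. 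Since $J=O(\log N)$, $\log(1/\delta')\le C\iota_N$, so every logarithmic factor $\iota$ can be replaced by $C\iota_N$. The other ingredient is $1/(1-\gamma)\le N_{\mathrm{rob}}/\iota_N$, which holds by the definition of $\Gamma_N$. In the leave-one-out union bound the factor $(1-\gamma)^{-1}$ inside the log is at most $N$, so it is also absorbed; this gives $\log(SAN/\delta')\le C\iota_N$. A union bound over the $J+1$ grid points and the constantly many events then gives probability $1-O(\delta/N^3)$.

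\textbf{Step 2: the sample-size conditions.} The definition of $\Gamma_N$ with $C_\Gamma$ large gives both $N_{\mathrm{rob}}\ge C\iota_N/(1-\gamma)$ and $N_{\mathrm{rob}}\ge C\sigma\iota_N/(1-\gamma)^2$. These are exactly the radius-independent conditions \eqref{eq:recursive-comparison-sample-conditions}, and also $N\ge C_0\iota/(1-\gamma)$ in Lemma~\ref{lem:shi-original-lemma-11}. On $\mathcal E(R_j)$, whenever $R_j$ dominates the spans listed in the statement, the proofs of Lemmas~\ref{lem:V-hat-V-part-1-decomp} and~\ref{lem:V-hat-V-pi-hat-bound} are deterministic consequences of the concentration inequalities. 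They yield \eqref{eq:generic-fixed-policy-localized-comparison} for $\pi$, and, when $\pi=\pi_\gamma^\star$, the learned-policy bound for $\widehat\pi_\gamma$. Here the localization condition \eqref{eq:discounted-localized-span-condition} is exactly the displayed span hypothesis. The fixed-policy version needs only the spans of $V_\gamma^{\pi,\sigma}$ and $\widehat V_\gamma^{\pi,\sigma}$, plus $R\ge H_{\mathrm{anc}}$, and the latter holds because $R_j\ge R_{\min}\ge\max\{1,\|\bar h\|_{\mathrm{span}}\}$.

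\textbf{Main obstacle.} The delicate point is that the span hypotheses involve data-dependent quantities, while the localized lemmas were proved for a deterministic $R$. The remedy is to note that those lemmas use the span condition only deterministically, after the concentration event is fixed, and the event itself depends only on $R_j$. Hence a data-driven choice among the preset $R_j$ is harmless. One more check is needed: the data-dependent policy $\widehat\pi_\gamma$ enters only through Lemma~\ref{lem:shi-original-lemma-11}, whose leave-one-out construction already holds uniformly over policies. I would verify this step carefully, and also confirm that the $N^3$ tightening does not disturb the leave-one-out union bound, whose size is polynomial in $N$.
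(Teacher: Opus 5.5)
Your proposal is correct and follows essentially the same route as the paper. The paper likewise treats $\pi$, the anchor, and the radius grid as fixed with respect to $\mathcal D_{\mathrm{rob}}$ and reruns the localized comparisons of Lemmas~\ref{lem:V-hat-V-part-1-decomp} and~\ref{lem:V-hat-V-pi-hat-bound} with a tightened confidence level ($\delta/N^5$ per bound, versus your $\delta/(N^3(J+1))$ per event), absorbs the logarithm into $C\iota_N$, reads both sample-size conditions off the cutoffs in $\Gamma_N$, and union bounds over the $O(\log N)$ radii so that a data-dependent choice of $R_j$ is legitimate. One small point to tidy: the statement only assumes independence from $\mathcal D_{\mathrm{rob}}$, so you should condition on the realization of $(\pi,\bar\rho,\bar h,\{R_j\})$ itself, as the paper does, rather than on $\mathcal D_{\mathrm{nom}}$, and then average over that realization.
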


\paragraph*{Nominal policy candidates.}
The following lemma collects the validity and existence properties of the
nominal policy candidates. Its proof is deferred to
Appendix~\ref{subsec:proof-nominal-policy-candidate-guarantees}.
\begin{lemma}[Nominal-policy candidate guarantees]
    \label{lem:nominal-policy-candidate-guarantees}
    \label{lem:nominal-policy-suitable-grid-point}
    \label{lem:observable-fixed-policy-robust-evaluation}
    \label{lem:existence-good-nominal-policy-candidate}
    With probability at least $1-O(\delta)$, simultaneously for every
    $\gamma\in\Gamma_N$,
    \begin{align}
        (1-\gamma)
        \left\|
        \widehat V_\gamma^{\widehat\pi_\gamma^0,\sigma}
        -V_\gamma^{\widehat\pi_\gamma^0,\sigma}
        \right\|_\infty
        &\le
        \operatorname{pen}_\gamma^{\mathrm{nom}}(\widehat\pi_\gamma^0),
        \label{eq:observable-fixed-policy-robust-evaluation}
        \\
        \rho^{\widehat\pi_\gamma^0,\sigma}
        &\ge
        \operatorname{LCB}_{\gamma}^{\mathrm{nom}}(\widehat\pi_\gamma^0).
        \label{eq:nominal-lower-certificate-validity}
    \end{align}
    Moreover, there exist sufficiently large universal constants
    $C,C_{\mathrm{nom}}>0$ and a sufficiently small universal constant $c>0$
    such that, if
    \begin{equation}
        \label{eq:nominal-policy-grid-sample-condition}
        \sigma H_0\le c\varepsilon,
        \qquad
        N\ge C\frac{H_0\cdot\iota_N}{\varepsilon^2},
    \end{equation}
    and $N_{\mathrm{nom}}$ and $N_{\mathrm{rob}}$ are constant fractions of
    $N$, then $\Gamma_N$ contains a discount factor
    $\gamma_{\mathrm{nom}}^\dagger$ satisfying
    \begin{equation}
        \label{eq:nominal-grid-horizon}
        \frac{C_{\mathrm{nom}}H_0}{\varepsilon}
        \le
        \frac{1}{1-\gamma_{\mathrm{nom}}^\dagger}
        \le
        \frac{2C_{\mathrm{nom}}H_0}{\varepsilon}.
    \end{equation}
    On the same event,
    \begin{align}
        (1-\gamma_{\mathrm{nom}}^\dagger)
        \min_s V_{\gamma_{\mathrm{nom}}^\dagger}^{
        \widehat\pi_{\gamma_{\mathrm{nom}}^\dagger}^0,0}(s)
        &\ge \rho^\star-c\varepsilon,
        \label{eq:nominal-grid-policy-discounted-reward}
        \\
        \left\|V_{\gamma_{\mathrm{nom}}^\dagger}^{
        \widehat\pi_{\gamma_{\mathrm{nom}}^\dagger}^0,0}
        \right\|_{\mathrm{span}}
        &\le C H_0,
        \label{eq:nominal-grid-policy-span}
        \\
        \operatorname{LCB}_{\gamma_{\mathrm{nom}}^\dagger}^{\mathrm{nom}}
        (\widehat\pi_{\gamma_{\mathrm{nom}}^\dagger}^0)
        &\ge \rho^{\star,\sigma}-\varepsilon.
        \label{eq:good-nominal-policy-certificate}
    \end{align}
\end{lemma}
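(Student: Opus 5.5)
The plan is to prove the two parts of Lemma~\ref{lem:nominal-policy-candidate-guarantees} in order: first the validity statements \eqref{eq:observable-fixed-policy-robust-evaluation}--\eqref{eq:nominal-lower-certificate-validity}, which need no sample-size condition, and then the existence statement at a well-chosen grid point. For validity, we condition on $\mathcal D_{\mathrm{nom}}$. Then each $\widehat\pi_\gamma^0$ is a deterministic policy fixed independently of $\mathcal D_{\mathrm{rob}}$.

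\textbf{Validity.} We apply Lemma~\ref{lem:raw-localized-robust-comparison} with $\pi=\widehat\pi_\gamma^0$, the trivial anchor $(1,0)$, and $R_{\min}=1$, on a dyadic grid reaching $2/(1-\gamma)$. A union bound over the $O(\log N)$ elements of $\Gamma_N$ keeps the failure probability at $O(\delta)$.

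\begin{itemize}
\item We pick the smallest grid radius $R_j$ exceeding $\max\{1,\|V\|_{\mathrm{span}},\|\widehat V\|_{\mathrm{span}}\}$, where $V,\widehat V$ denote the true and empirical robust values of $\widehat\pi_\gamma^0$.
\item Since $\|V\|_{\mathrm{span}}\le\|\widehat V\|_{\mathrm{span}}+2\|\widehat V-V\|_\infty$, this gives $R_j\le C(\widehat R^{\mathrm{nom}}_\gamma+\|\widehat V-V\|_\infty)$.
\item Similarly, $[1-(1-\gamma)\min V]_+\le\widehat\beta^{\mathrm{nom}}_\gamma+(1-\gamma)\|\widehat V-V\|_\infty$.
\end{itemize}

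Substituting both into \eqref{eq:generic-fixed-policy-localized-comparison} leaves the error $\|\widehat V-V\|_\infty$ on both sides. It appears through terms of the form $\sqrt{\cdot\,\|\widehat V-V\|_\infty}$ and $\|\widehat V-V\|_\infty\sqrt{(\sigma+1-\gamma)\iota_N/N_{\mathrm{rob}}}$. The cutoff \eqref{eq:anchored-Gamma-N} makes the coefficients $\sigma\iota_N/(N_{\mathrm{rob}}(1-\gamma)^2)$ and $\iota_N/(N_{\mathrm{rob}}(1-\gamma))$ at most $1/C_\Gamma^2$, so Young's inequality absorbs these terms. What remains, after multiplying by $1-\gamma$, is the observable penalty with $\mathcal B,\mathcal R,\beta$ computed from $\widehat V$. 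This is \eqref{eq:observable-fixed-policy-robust-evaluation}, provided $C_{\mathrm{pen}}$ is large; the $\beta\iota_N/(N_{\mathrm{rob}}(1-\eta))$ term in the penalty covers the Young remainder coming from the defect. Then \eqref{eq:nominal-lower-certificate-validity} follows from Lemma~\ref{lem:robust-discounted-average-sandwich}:
\[
\rho^{\widehat\pi,\sigma}\ge(1-\gamma)\min V\ge(1-\gamma)\min\widehat V-\operatorname{pen}.
\]

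\textbf{Existence.} We take $\gamma^\dagger_{\mathrm{nom}}$ as the dyadic point with horizon in $[C_{\mathrm{nom}}H_0/\varepsilon,\,2C_{\mathrm{nom}}H_0/\varepsilon]$. It lies in $\Gamma_N$ for two reasons. First, $N\ge CH_0\iota_N/\varepsilon^2$ gives $N_{\mathrm{rob}}/\iota_N\gtrsim H_0/\varepsilon^2\ge H_0/\varepsilon$. Second, $\sqrt{N_{\mathrm{rob}}/(\sigma\iota_N)}\gtrsim\sqrt{H_0/(\sigma\varepsilon^2)}\ge H_0/\varepsilon$ because $\sigma H_0\le c\varepsilon$.

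At this discount factor we rerun Steps~1--2 of the proof of Lemma~\ref{lem:nominal-branch-guarantee} using batch $\mathcal D_{\mathrm{nom}}$ and target discounted accuracy $\varepsilon_\gamma=c'H_0$. These steps rest on Corollary~\ref{cor:anchored-robust-dmdp-accuracy} with $\sigma=0$ and the anchor $(\rho^\star,h^\star_{P^0})$. They give $\|V^{\star,0}-V^{\widehat\pi^0,0}\|_\infty\le c'H_0$, hence \eqref{eq:nominal-grid-policy-span}. Combined with $(1-\gamma)\min V^{\star,0}\ge\rho^\star-2(1-\gamma)H_0$, they also give \eqref{eq:nominal-grid-policy-discounted-reward} once $C_{\mathrm{nom}}$ is large. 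The tolerance \eqref{eq:anchored-nominal-solver-tolerance} supplies the solver condition.

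Next comes the robust transfer. As in Step~3 of that proof, $P\in\mathcal P$ perturbs $V^{\widehat\pi^0,0}$ by at most $\sigma\cdot CH_0$ per step. Comparing discounted fixed points then yields
\[
\|V^{\widehat\pi^0,\sigma}-V^{\widehat\pi^0,0}\|_\infty\le C\sigma H_0/(1-\gamma),
\]
so $(1-\gamma)\min V^{\widehat\pi^0,\sigma}\ge\rho^\star-c\varepsilon-C\sigma H_0$, and the span of the robust value is $O(H_0)$. It remains to bound the penalty by $c\varepsilon$:
\begin{itemize}
\item From the validity step, $\widehat R=O(H_0)$ and $\widehat\beta\le O(\varepsilon)+1-\rho^\star$.
\item The defect term $\widehat\beta\le 1$ is harmless: $\mathcal B\lesssim H_0+H_0\cdot1+\sigma H_0^2+\varepsilon H_0=O(H_0)$, giving a square-root term $\sqrt{H_0\iota_N/N_{\mathrm{rob}}}\le c\varepsilon$.
\item The remaining terms are $H_0\iota_N/N_{\mathrm{rob}}$ and $\beta\iota_N/(N_{\mathrm{rob}}(1-\gamma))\lesssim H_0\iota_N/(\varepsilon N_{\mathrm{rob}})\le c\varepsilon$.
\end{itemize}
Finally, $\rho^{\star,\sigma}\le\rho^\star$ gives \eqref{eq:good-nominal-policy-certificate}. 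I expect the main obstacle to be the self-referential absorption in the validity step, where the span and defect of the true robust value must be replaced by observable quantities without losing the $H_0$ scale.
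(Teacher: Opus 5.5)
Your proposal is correct and follows essentially the paper's route. Validity matches it: trivial anchor, Lemma~\ref{lem:raw-localized-robust-comparison} on a dyadic radius grid, true span and defect replaced by $\widehat R_\gamma^{\mathrm{nom}}+2D_\gamma$ and $\widehat\beta_\gamma^{\mathrm{nom}}+(1-\gamma)D_\gamma$, absorption via Young and the two $\Gamma_N$ cutoffs, then Lemma~\ref{lem:robust-discounted-average-sandwich}. Existence also matches, with one difference: you get the nominal guarantee at $\gamma_{\mathrm{nom}}^\dagger$ from Corollary~\ref{cor:anchored-robust-dmdp-accuracy} with $\sigma=0$ and the anchor $(\rho^\star,h_{P^0}^\star)$ instead of Lemma~\ref{lem:fixed-discount-nominal-plugin}, which works equally well because the $1/N_{\mathrm{nom}}$ part of \eqref{eq:anchored-nominal-solver-tolerance} meets the corollary's solver condition.

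One step you compress: $\widehat R^{\mathrm{nom}}_{\gamma_{\mathrm{nom}}^\dagger}=O(H_0)$ does not follow from \eqref{eq:observable-fixed-policy-robust-evaluation} as stated, since its right-hand side contains $\widehat R$ itself. As the paper does, substitute $\widehat R\le CH_0+2D^\dagger$ and $\widehat\beta\le1$ (with $D^\dagger$ the robust evaluation error at $\gamma_{\mathrm{nom}}^\dagger$) back into that bound and absorb once more to get $(1-\gamma_{\mathrm{nom}}^\dagger)D^\dagger\le c\varepsilon$; this also justifies passing from $\min_s V$ to $\min_s\widehat V$ in the final LCB bound.
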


\paragraph*{Robust policy candidates.}
The following lemma collects the validity and existence properties of the
policies learned from the robust batch. Its proof is deferred to
Appendix~\ref{subsec:proof-robust-policy-candidate-guarantees}.
\begin{lemma}[Robust-policy candidate guarantees]
    \label{lem:robust-policy-candidate-guarantees}
    \label{lem:observable-budget-anchored-dmdp-comparison}
    \label{lem:anchored-lower-certificate-validity}
    \label{lem:existence-good-anchored-pair}
    There exists a sufficiently large universal constant $C>0$ such that the
    following holds.
    For each $\gamma\in\Gamma_N$, let $\pi_\gamma^\star$ be a deterministic
    discounted-optimal policy for the true robust MDP and define
    \begin{equation}
        \label{eq:anchored-Delta-gamma}
        \Delta_\gamma
        =
        \left\|
        \widehat V_\gamma^{\widehat\pi_\gamma,\sigma}
        -V_\gamma^{\widehat\pi_\gamma,\sigma}
        \right\|_\infty
        +
        \|\widehat V_\gamma^{\pi_\gamma^\star,\sigma}
        -V_\gamma^{\star,\sigma}\|_\infty.
    \end{equation}
    Assume $\varepsilon_{\mathrm{opt}}\le1$. Then, with probability at least
    $1-O(\delta)$, the following two bounds hold simultaneously for every
    $\gamma\in\Gamma_N$. The selected anchor certificate gives
    \begin{equation}
        \label{eq:observable-budget-selected-anchor-bound}
        (1-\gamma)\Delta_\gamma
        \le
        C\sqrt{
        \frac{\widehat B_{\gamma,\widehat\lambda_{\mathrm{anc}}}
        \cdot\iota_N}{N_{\mathrm{rob}}}}
        +C\frac{
        \widehat R_{\gamma,\widehat\lambda_{\mathrm{anc}}}^{\mathrm{anc}}
        \cdot\iota_N}
        {N_{\mathrm{rob}}}
        +C\frac{
        \widehat\beta_{\gamma,\widehat\lambda_{\mathrm{anc}}}\cdot\iota_N}
        {N_{\mathrm{rob}}(1-\gamma)}
        +C\varepsilon_{\mathrm{opt}}.
    \end{equation}
    The trivial anchor $(\rho,h)=(1,0)$ gives
    \begin{equation}
        \label{eq:observable-budget-trivial-anchor-bound}
        (1-\gamma)\Delta_\gamma
        \le
        C\sqrt{
        \frac{\widehat B_{\gamma,\mathrm{triv}}\cdot\iota_N}{N_{\mathrm{rob}}}}
        +C\frac{\widehat R_\gamma\cdot\iota_N}{N_{\mathrm{rob}}}
        +C\frac{\widehat\beta_{\gamma,\mathrm{triv}}\cdot\iota_N}
        {N_{\mathrm{rob}}(1-\gamma)}
        +C\varepsilon_{\mathrm{opt}}.
    \end{equation}
    On the same event, the following holds for every
    $\gamma\in\Gamma_N$:
    \[
        \rho^{\widehat\pi_\gamma,\sigma}
        \ge
        \operatorname{LCB}_{\gamma}^{\mathrm{rob}}(\widehat\pi_\gamma).
    \]
    Moreover, there exist sufficiently large universal constants
    $C,C_{\mathrm{rob}}>0$ and a sufficiently small universal constant
    $c_{\mathrm{opt}}>0$ such that the following holds.
    Let $N\ge16$ and $\varepsilon\in(0,1]$. Suppose that
    \begin{equation}
        \label{eq:good-robust-candidate-sample-condition}
        N
        \ge
        C
        \left[
        \frac{\min\{H_0,H_\sigma\}+\sigma H_\sigma^2}{\varepsilon^2}
        +\frac{H_\sigma}{\varepsilon}
        \right]\cdot\iota_N
    \end{equation}
    and that
    $N_{\mathrm{nom}}$ and $N_{\mathrm{rob}}$ are constant
    fractions of $N$. Assume also
    $\varepsilon_{\mathrm{opt}}\le
    c_{\mathrm{opt}}\varepsilon$. Then the policy grid
    $\Gamma_N$ contains a discount factor $\gamma^\dagger$ such that
    \[
        \frac{C_{\mathrm{rob}} H_\sigma}{\varepsilon}
        \le
        \frac{1}{1-\gamma^\dagger}
        \le
        \frac{2C_{\mathrm{rob}} H_\sigma}{\varepsilon}.
    \]
    Moreover, with probability at least $1-O(\delta)$,
    \begin{equation}
        \label{eq:good-robust-policy-certificate}
        \operatorname{LCB}_{\gamma^\dagger}^{\mathrm{rob}}
        (\widehat\pi_{\gamma^\dagger})
        \ge
        \rho^{\star,\sigma}-\varepsilon.
    \end{equation}
\end{lemma}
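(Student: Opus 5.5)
The proof has three parts: validity of the lower-confidence bounds, existence of the good grid point, and the lower bound on its LCB. \textbf{Validity} is the main work.

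\textbf{Step 1 (two comparison bounds).} We condition on $\mathcal D_{\mathrm{nom}}$. On the event of Lemma~\ref{lem:anchor-calibration-guarantees}, the population anchor $(\bar\rho^0_{\widehat\lambda},h^0_{\widehat\lambda})$ is fixed independently of $\mathcal D_{\mathrm{rob}}$. By Lemma~\ref{lem:discounted-nominal-anchors} it satisfies \mainNominalAnchorSupersolution{}, and \eqref{eq:nominal-anchor-level-certificate}--\eqref{eq:nominal-anchor-span-certificate} give $\bar\rho\le\widehat\rho^+_{\mathrm{anc}}$ and $\max\{1,\|h\|_{\mathrm{span}}\}\le\widehat H^+_{\mathrm{anc}}$. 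The trivial anchor $(1,0)$ is likewise fixed and valid.

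For each anchor and each $\gamma\in\Gamma_N$ we apply Lemma~\ref{lem:raw-localized-robust-comparison} with $\pi=\pi^\star_\gamma$ and $R_{\min}=\max\{1,\bar H\}$. A union bound over $\gamma$ and over the two anchors keeps the failure probability at $O(\delta)$. We then rerun the peeling argument from the proof of Theorem~\ref{thm:TV-upper-bound-discount}, this time localizing at the empirically observable radius. Lemma~\ref{lem:span-value-approx} shows that every relevant span is at most $\|\widehat V^{\widehat\pi_\gamma,\sigma}_\gamma\|_{\mathrm{span}}+C(\Delta_\gamma+\varepsilon_{\mathrm{opt}})$. We pick the smallest grid radius exceeding this quantity.

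We replace $\beta_\star$ by the empirical defect, using $\beta_\star\le\beta_{\mathrm{emp}}+(1-\gamma)(\Delta_\gamma+\varepsilon_{\mathrm{opt}})$, and replace the true anchor level by its certified upper bound, which is monotone in the budget. We absorb the $\Delta_\gamma$-dependent terms by Young's inequality. The grid cutoffs in $\Gamma_N$ supply exactly the sample-size conditions \eqref{eq:recursive-comparison-sample-conditions}. Young's inequality applied to the $\sqrt{\beta\Delta}$ term produces the extra summand $\beta\iota_N/(N_{\mathrm{rob}}(1-\gamma))$. The result is \eqref{eq:observable-budget-selected-anchor-bound} and \eqref{eq:observable-budget-trivial-anchor-bound}.

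This conversion from population budgets to observable budgets, keeping the radius grid deterministic, is the step I expect to be the main obstacle.

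\textbf{Step 2 (validity of the LCB).} Lemma~\ref{lem:robust-discounted-average-sandwich} gives $\rho^{\widehat\pi_\gamma,\sigma}\ge(1-\gamma)\min_s V^{\widehat\pi_\gamma,\sigma}_\gamma\ge(1-\gamma)\min_s\widehat V^{\widehat\pi_\gamma,\sigma}_\gamma-(1-\gamma)\Delta_\gamma$. Each of the two Step~1 bounds is dominated by the corresponding penalty once $C_{\mathrm{pen}}$ is large. Since both penalties dominate, so does their minimum, which yields $\rho^{\widehat\pi_\gamma,\sigma}\ge\operatorname{LCB}^{\mathrm{rob}}_\gamma$.

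\textbf{Step 3 (existence of $\gamma^\dagger$).} Under \eqref{eq:good-robust-candidate-sample-condition}, the horizon $C_{\mathrm{rob}}H_\sigma/\varepsilon$ satisfies both cutoffs in $\Gamma_N$. The term $\sigma H_\sigma^2/\varepsilon^2$ covers the $\sqrt{N/\sigma}$ cutoff, and $H_\sigma/\varepsilon$ covers the $N/\iota_N$ cutoff. So a dyadic point $\gamma^\dagger$ lies within a factor of two.

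\textbf{Step 4 (lower bound on the LCB at $\gamma^\dagger$).} We show $\operatorname{LCB}^{\mathrm{rob}}_{\gamma^\dagger}\ge\rho^{\star,\sigma}-\varepsilon$. By \eqref{eq:robust-discounted-average-sandwich-optimal}, $(1-\gamma)\min_s V^{\star,\sigma}_\gamma\ge\rho^{\star,\sigma}-2(1-\gamma)H_\sigma\ge\rho^{\star,\sigma}-\varepsilon/4$ for large $C_{\mathrm{rob}}$. Theorem~\ref{thm:TV-upper-bound-discount} (or Corollary~\ref{cor:anchored-robust-dmdp-accuracy}) gives $\|\widehat V^{\widehat\pi,\sigma}-V^{\star,\sigma}\|_\infty\le\Delta+\varepsilon_{\mathrm{opt}}\lesssim H_\sigma$, so $(1-\gamma)\min_s\widehat V^{\widehat\pi_\gamma,\sigma}_\gamma\ge\rho^{\star,\sigma}-\varepsilon/2$. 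This also yields $\widehat R\lesssim H_\sigma$ and $\widehat\beta\lesssim\varepsilon$.

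It remains to bound the smaller penalty by $\varepsilon/2$, treating the two span orderings separately.
\begin{itemize}
\item If $H_\sigma\le H_0$, we use the trivial anchor. Then $\widehat B\lesssim H_\sigma+\sigma H_\sigma^2+\varepsilon H_\sigma$, and the sample condition makes the penalty at most $\varepsilon/2$.
\item If $H_0<H_\sigma$, we use the selected anchor. Lemma~\ref{lem:anchor-calibration-guarantees} gives $\widehat H^+\lesssim H_0$, and \eqref{eq:selected-anchor-level-bound} together with Proposition~\ref{prop:perturbation-bound-average-reward} gives $\widehat\beta\lesssim\varepsilon+\sigma H_0$. Hence $\widehat B\lesssim H_0+H_\sigma(\varepsilon+\sigma H_0)+\sigma H_\sigma^2$, and $H_\sigma\varepsilon\cdot\iota_N/N\lesssim\varepsilon^2$ by the $H_\sigma/\varepsilon$ term of the sample condition.
\end{itemize}
In both cases the term $\beta\iota_N/(N(1-\gamma))\lesssim(\varepsilon+\sigma H_0)H_\sigma\iota_N/(\varepsilon N)$ is also $O(\varepsilon)$ under the sample condition.
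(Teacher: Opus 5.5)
Your proposal is correct and follows essentially the same route as the paper: you localize on a deterministic dyadic radius grid (made simultaneous by Lemma~\ref{lem:raw-localized-robust-comparison}) at the first point dominating the observable span plus $C(\Delta_\gamma+\varepsilon_{\mathrm{opt}})$, replace the population defect by $\widehat\beta+(1-\gamma)\Delta_\gamma$, and absorb the $\Delta_\gamma$ terms with Young's inequality and the two $\Gamma_N$ cutoffs. You then invoke Theorem~\ref{thm:TV-upper-bound-discount} at $\gamma^\dagger$ with the selected anchor when $H_0<H_\sigma$ and the trivial anchor otherwise; the only cosmetic difference is that the paper takes the anchor level to be $\widehat\rho^+_{\widehat\lambda_{\mathrm{anc}}}$ outright rather than appealing to monotonicity.

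One small slip: for the trivial anchor the observable defect is only bounded by $1$, not by $\varepsilon$ or $\varepsilon+\sigma H_0$. In that case ($H_\sigma\le H_0$), however, the sample condition gives $N\gtrsim H_\sigma\iota_N/\varepsilon^2$, which still makes the $\widehat\beta\,\iota_N/(N_{\mathrm{rob}}(1-\gamma^\dagger))$ term $O(\varepsilon)$.
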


\subsection{Proof of Theorem~\mainSpanAgnosticUpperBoundTheorem{}}
\label{subsec:proof-anchored-calibrated-policy-upper-bound}

We prove Theorem~\mainSpanAgnosticUpperBoundTheorem{} by combining
the candidate guarantees established in
Appendix~\ref{subsec:anchored-calibration-lemmas}.

\paragraph*{Sample-complexity check in different regimes.}
Recall that the theorem's sample bound is
\[
    NSA
    \ge
    CSA\frac{\min\{H_0,H_\sigma\}}{\varepsilon^2}\cdot\iota_N
\]
in the high-tolerance regime and
\[
    NSA
    \ge
    CSA\frac{\min\{H_0,H_\sigma\}+\sigma H_\sigma^2}
    {\varepsilon^2}\cdot\iota_N
\]
in the low-tolerance regime.

We first verify the sample-complexity conditions required by the
nominal- and robust-policy candidate guarantees
(Lemmas~\ref{lem:nominal-policy-candidate-guarantees} and
\ref{lem:robust-policy-candidate-guarantees}).

Consider the low-tolerance regime in which $\sigma H_0>c\varepsilon$. We verify the sample-complexity condition of
Lemma~\ref{lem:robust-policy-candidate-guarantees} in
\eqref{eq:good-robust-candidate-sample-condition}. After dividing the
theorem's total-sample bound by $SA$, its low-tolerance bound already controls
the first two terms in that requirement. It remains to control
$H_\sigma/\varepsilon$. If $H_\sigma\le H_0$, then
\[
    \frac{H_\sigma}{\varepsilon}
    \le
    \frac{\min\{H_0,H_\sigma\}}{\varepsilon^2},
\]
where we used $\varepsilon\le1$. If instead $H_0<H_\sigma$, the condition
$\sigma H_0>c\varepsilon$ gives
\[
    \frac{H_\sigma}{\varepsilon}
    \le
    \frac{1}{c}
    \frac{\sigma H_\sigma^2}{\varepsilon^2}.
\]
Thus the theorem's low-tolerance sample bound implies
\eqref{eq:good-robust-candidate-sample-condition}.

Consider the high-tolerance regime in which $\sigma H_0\le c\varepsilon$. We first look at the case with $H_0<H_\sigma$. We verify the sample-complexity condition of
Lemma~\ref{lem:nominal-policy-candidate-guarantees} in
\eqref{eq:nominal-policy-grid-sample-condition}. Its tolerance
condition holds by assumption, and
$\min\{H_0,H_\sigma\}=H_0$. Hence, after dividing the theorem's total-sample
bound by $SA$, its high-tolerance bound gives
\[
    N
    \ge
    C\frac{H_0\iota_N}{\varepsilon^2},
\]
which verifies \eqref{eq:nominal-policy-grid-sample-condition}.

We now look at the case with $H_\sigma\le H_0$. We verify the sample-complexity condition of
Lemma~\ref{lem:robust-policy-candidate-guarantees} in \eqref{eq:good-robust-candidate-sample-condition}. Since
$\min\{H_0,H_\sigma\}=H_\sigma$, the theorem's high-tolerance sample bound
controls $H_\sigma/\varepsilon^2$. Moreover,
\[
    \frac{H_\sigma}{\varepsilon}
    \le
    \frac{H_\sigma}{\varepsilon^2},
    \qquad
    \frac{\sigma H_\sigma^2}{\varepsilon^2}
    \le
    c\frac{H_\sigma}{\varepsilon}
    \le
    c\frac{H_\sigma}{\varepsilon^2},
\]
where the second inequality uses
$\sigma H_\sigma\le\sigma H_0\le c\varepsilon$. Thus the theorem's
high-tolerance sample bound implies
\eqref{eq:good-robust-candidate-sample-condition}.

In addition, $N_{\mathrm{nom}}$ and $N_{\mathrm{rob}}$ are constant fractions
of $N$, and
$\varepsilon_{\mathrm{opt}}\le c_{\mathrm{opt}}\varepsilon\le1$.
Therefore all supporting lemmas invoked below apply. Throughout the rest of the
proof, we work on the intersection of the high-probability events from
Lemmas~\ref{lem:nominal-policy-candidate-guarantees} and
\ref{lem:robust-policy-candidate-guarantees}, using the nominal existence
conclusion in the high-tolerance case $H_0<H_\sigma$ and the robust existence
conclusion otherwise. In every case, this intersection has probability at
least $1-O(\delta)$.

\paragraph*{Candidate validity and selection.}
On the event fixed above, the following two bounds hold for every
$\gamma\in\Gamma_N$. Lemma~\ref{lem:robust-policy-candidate-guarantees} gives
\begin{equation}
    \label{eq:all-robust-candidate-certificates-valid}
    \rho^{\widehat\pi_\gamma,\sigma}
    \ge
    \operatorname{LCB}_{\gamma}^{\mathrm{rob}}(\widehat\pi_\gamma),
\end{equation}
and Lemma~\ref{lem:nominal-policy-candidate-guarantees} gives
\begin{equation}
    \label{eq:all-nominal-candidate-certificates-valid}
    \rho^{\widehat\pi_\gamma^0,\sigma}
    \ge
    \operatorname{LCB}_{\gamma}^{\mathrm{nom}}(\widehat\pi_\gamma^0).
\end{equation}

The candidate-existence lemma selected above ensures that $\Gamma_N$ is
nonempty. 
The algorithm returns a candidate with the largest lower-confidence bound.
Therefore
\begin{equation}
    \label{eq:combined-certificate-selector-bound}
    \rho^{\widehat\pi,\sigma}
    \ge
    \max\left\{
    \max_{\gamma\in\Gamma_N}
    \operatorname{LCB}_{\gamma}^{\mathrm{rob}}(\widehat\pi_\gamma),
    \max_{\gamma\in\Gamma_N}
    \operatorname{LCB}_{\gamma}^{\mathrm{nom}}(\widehat\pi_\gamma^0)
    \right\}.
\end{equation}

To lower-bound the maximum in
\eqref{eq:combined-certificate-selector-bound}, suppose first that either the
low-tolerance condition holds or the
high-tolerance condition holds together with $H_\sigma\le H_0$.
Lemma~\ref{lem:robust-policy-candidate-guarantees} supplies
$\gamma^\dagger\in\Gamma_N$ satisfying
\eqref{eq:good-robust-policy-certificate}. Hence
\eqref{eq:combined-certificate-selector-bound} and
\eqref{eq:good-robust-policy-certificate} give
\[
    \rho^{\widehat\pi,\sigma}
    \ge
    \operatorname{LCB}_{\gamma^\dagger}^{\mathrm{rob}}
    (\widehat\pi_{\gamma^\dagger})
    \ge
    \rho^{\star,\sigma}-\varepsilon.
\]

Suppose instead that the high-tolerance condition holds and $H_0<H_\sigma$.
Lemma~\ref{lem:nominal-policy-candidate-guarantees} supplies
$\gamma_{\mathrm{nom}}^\dagger\in\Gamma_N$ satisfying
\eqref{eq:good-nominal-policy-certificate}. Hence
\eqref{eq:combined-certificate-selector-bound} and
\eqref{eq:good-nominal-policy-certificate} give
\[
    \rho^{\widehat\pi,\sigma}
    \ge
    \operatorname{LCB}_{\gamma_{\mathrm{nom}}^\dagger}^{\mathrm{nom}}
    (\widehat\pi_{\gamma_{\mathrm{nom}}^\dagger}^0)
    \ge
    \rho^{\star,\sigma}-\varepsilon.
\]
Thus, in every case,
\[
    \rho^{\star,\sigma}-\rho^{\widehat\pi,\sigma}
    \le
    \varepsilon.
\]
This completes the proof of
Theorem~\mainSpanAgnosticUpperBoundTheorem{}.

\subsection{Proof of auxiliary lemmas}
\label{subsec:proof-anchored-calibration-auxiliary-lemmas}

\subsubsection{Proof of Lemma~\ref{lem:fixed-discount-nominal-plugin}}
\label{subsec:proof-fixed-discount-nominal-plugin}

If $\mathcal G_N$ is empty, the conclusion is vacuous. Otherwise, fix
$\eta\in\mathcal G_N$. The proof has three steps. First, we construct a
population nominal anchor and simultaneously bound the returned policy's
plug-in evaluation error and nominal discounted suboptimality. Second, we use
these bounds to prove the population- and empirical-span forms
\eqref{eq:fixed-discount-nominal-population-span} and
\eqref{eq:fixed-discount-nominal-empirical-span}. Third, we prove the
fixed-policy bound \eqref{eq:fixed-discount-nominal-policy-evaluation} and take
a union bound over the discount-factor grid.

\paragraph*{Step 1: Nominal anchor and comparison bound.}
We specialize the proof of
Theorem~\ref{thm:TV-upper-bound-discount} to $\sigma=0$. Use the population
nominal anchor
\[
    \bar h
    =
    V_\eta^{\star,0}
    -\min_sV_\eta^{\star,0}(s)\bm 1_S,
    \qquad
    \bar\rho
    =
    (1-\eta)\max_sV_\eta^{\star,0}(s).
\]
For every $(s,a)\in\cS\times\cA$, the discounted Bellman equation gives
\begin{align*}
    r(s,a)+P^0_{s,a}\bar h
    &=
    r(s,a)+P^0_{s,a}V_\eta^{\star,0}
    -\min_xV_\eta^{\star,0}(x)                                      \\
    &\le
    V_\eta^{\star,0}(s)
    +(1-\eta)P^0_{s,a}V_\eta^{\star,0}
    -\min_xV_\eta^{\star,0}(x)                                      \\
    &\le
    \bar h(s)+(1-\eta)\max_xV_\eta^{\star,0}(x)                     \\
    &=
    \bar\rho+\bar h(s).
\end{align*}
Thus $(\bar\rho,\bar h)$ satisfies
\mainNominalAnchorSupersolution{}. In the notation of
Theorem~\ref{thm:TV-upper-bound-discount}, take
\[
    H_{\mathrm{anc}}
    =R_0
    =
    \max\{1,\|V_\eta^{\star,0}\|_{\mathrm{span}}\}.
\]
The anchor defect then satisfies
\begin{equation}
    \beta_\star
    =
    (1-\eta)\|V_\eta^{\star,0}\|_{\mathrm{span}}
    \le
    (1-\eta)R_0,
    \label{eq:nominal-fixed-discount-anchor-defect}
\end{equation}
and the discounted-value range gives
$R_0\le(1-\eta)^{-1}$.

Allocate failure probability $\delta/|\mathcal G_N|$ to this discount factor.
After replacing $N$ by $N_{\mathrm{nom}}$ and $\delta$ by
$\delta/|\mathcal G_N|$ in the proof of
Theorem~\ref{thm:TV-upper-bound-discount}, its confidence factor becomes
\[
    \log\!\left(
    \frac{54SA N_{\mathrm{nom}}^2|\mathcal G_N|}
    {(1-\eta)\delta}
    \right).
\]
This factor already accounts for the union bound over the radius grid in
\eqref{eq:global-localized-event}. Since $N_{\mathrm{nom}}\le N$,
$(1-\eta)^{-1}\le N_{\mathrm{nom}}$, and
$|\mathcal G_N|=O(\log N)$, it is at most $C\iota_N$.

Suppose first that
\[
    N_{\mathrm{nom}}
    \ge
    C\frac{\iota_N}{1-\eta}.
\]
Then the recursive comparison conditions
\eqref{eq:recursive-comparison-sample-conditions} hold with $\sigma=0$.
Moreover, the empirical solver error $1/N_{\mathrm{nom}}$ is at most
$c(1-\eta)R_0$. Apply
\eqref{eq:recursive-comparison-before-young} with $\sigma=0$, $\gamma=\eta$,
$N=N_{\mathrm{nom}}$, and
$\varepsilon_{\mathrm{opt}}=1/N_{\mathrm{nom}}$. Applying Young's inequality
to the square-root term involving the comparison error and then absorbing all
terms proportional to that error using
\eqref{eq:recursive-comparison-sample-conditions} gives
\[
\begin{aligned}
    &\max\Big\{
    \|\widehat V_\eta^{\widehat\pi_\eta^0,0}
      -V_\eta^{\widehat\pi_\eta^0,0}\|_\infty,
    \|V_\eta^{\star,0}
      -V_\eta^{\widehat\pi_\eta^0,0}\|_\infty
    \Big\}
    \\
    &\qquad\le
    C\sqrt{
    \frac{B_\star(R_0)\cdot\iota_N}
    {N_{\mathrm{nom}}(1-\eta)^2}}
    +C\frac{\beta_\star\cdot\iota_N}
    {N_{\mathrm{nom}}(1-\eta)^2}
    \\
    &\qquad\quad
    +C\frac{R_0\cdot\iota_N}
    {N_{\mathrm{nom}}(1-\eta)}
    +C\frac{1}{N_{\mathrm{nom}}(1-\eta)}.
\end{aligned}
\]
Here we retain the remainder
\[
    C\frac{\beta_\star\cdot\iota_N}
    {N_{\mathrm{nom}}(1-\eta)^2}
\]
from Young's inequality instead of invoking the additional sample condition
used there to absorb it into the leading square-root term. By
\eqref{eq:nominal-fixed-discount-anchor-defect} and
$R_0\le(1-\eta)^{-1}$, this remainder and the linear-radius term are both at
most
\[
    C\frac{\iota_N}{N_{\mathrm{nom}}(1-\eta)^2}.
\]
The final solver-error term is absorbed into the same bound because
$\iota_N\ge 1$ and $1-\eta\le 1$.
Also, the nominal specialization of the localized budget satisfies
\[
    B_\star(R_0)
    =
    R_0+R_0\beta_\star+(1-\eta)R_0^2
    \le
    3R_0.
\]
Consequently,
\begin{equation}
\label{eq:nominal-fixed-discount-population-comparison}
\begin{aligned}
    &\max\Big\{
    \|\widehat V_\eta^{\widehat\pi_\eta^0,0}
      -V_\eta^{\widehat\pi_\eta^0,0}\|_\infty,
    \|V_\eta^{\star,0}
      -V_\eta^{\widehat\pi_\eta^0,0}\|_\infty
    \Big\}
    \\
    &\qquad\le
    \frac{C}{1-\eta}
    \sqrt{
    \frac{R_0\cdot\iota_N}{N_{\mathrm{nom}}}}
    +C\frac{\iota_N}
    {N_{\mathrm{nom}}(1-\eta)^2}.
\end{aligned}
\end{equation}
If instead
$N_{\mathrm{nom}}<C\iota_N/(1-\eta)$, the second term on the
right-hand side of
\eqref{eq:nominal-fixed-discount-population-comparison} dominates
$(1-\eta)^{-1}$. Both norms on the left-hand
side are at most $(1-\eta)^{-1}$ by the discounted-value range. Thus
\eqref{eq:nominal-fixed-discount-population-comparison} holds in this case as
well.

\paragraph*{Step 2: Optimal-value error bounds.}
The assumed empirical near-optimality and the triangle inequality now give
\begin{align*}
    \|\widehat V_\eta^{\star,0}-V_\eta^{\star,0}\|_\infty
    &\le
    \|\widehat V_\eta^{\star,0}
      -\widehat V_\eta^{\widehat\pi_\eta^0,0}\|_\infty+
    \|\widehat V_\eta^{\widehat\pi_\eta^0,0}
      -V_\eta^{\widehat\pi_\eta^0,0}\|_\infty
    \\
    &\quad+
    \|V_\eta^{\widehat\pi_\eta^0,0}
      -V_\eta^{\star,0}\|_\infty.
\end{align*}
Substituting \eqref{eq:nominal-fixed-discount-population-comparison} proves
\eqref{eq:fixed-discount-nominal-population-span}; the additional
$1/N_{\mathrm{nom}}$ solver term is the final term displayed there.

For the empirical-span form, apply
\eqref{eq:span-triangle-inequality} with
$V_1=V_\eta^{\star,0}$ and $V_2=\widehat V_\eta^{\star,0}$ to obtain
\[
    R_0
    \le
    \|\widehat V_\eta^{\star,0}\|_{\mathrm{span}}
    +1
    +2\|\widehat V_\eta^{\star,0}-V_\eta^{\star,0}\|_\infty.
\]
Substitute this inequality into
\eqref{eq:fixed-discount-nominal-population-span},
use $\sqrt{x+y}\le\sqrt{x}+\sqrt y$, and apply Young's inequality to the
term containing the square root of
$\|\widehat V_\eta^{\star,0}-V_\eta^{\star,0}\|_\infty$.
After moving one half of this error to the left-hand side, we obtain
\eqref{eq:fixed-discount-nominal-empirical-span}.

\paragraph*{Step 3: Returned-policy evaluation.}
Finally, two applications of \eqref{eq:span-triangle-inequality}, together
with \eqref{eq:nominal-fixed-discount-population-comparison}, give
\[
    R_0
    \le
    \|\widehat V_\eta^{\widehat\pi_\eta^0,0}\|_{\mathrm{span}}
    +1
    +4\max\Big\{
    \|\widehat V_\eta^{\widehat\pi_\eta^0,0}
      -V_\eta^{\widehat\pi_\eta^0,0}\|_\infty,
    \|V_\eta^{\star,0}
      -V_\eta^{\widehat\pi_\eta^0,0}\|_\infty
    \Big\}.
\]
Substituting this inequality back into
\eqref{eq:nominal-fixed-discount-population-comparison} and applying the same
Young-inequality absorption proves
\eqref{eq:fixed-discount-nominal-policy-evaluation}. A union bound over
$\eta\in\mathcal G_N$ completes the proof.

\subsubsection{Proof of Lemma~\ref{lem:discounted-nominal-anchors}}
\label{subsec:proof-discounted-nominal-anchors}

By the discounted Bellman equation,
\[
    V_\lambda^{\star,0}(s)
    =
    \max_{a\in\cA}
    \left\{
    r(s,a)+\lambda P^0_{s,a}V_\lambda^{\star,0}
    \right\}.
\]
By the definition of $h_\lambda^0$,
$V_\lambda^{\star,0}
=h_\lambda^0+(\min_{s'}V_\lambda^{\star,0}(s'))\bm{1}_S$,
so substituting this identity into the preceding display and subtracting
$\lambda\min_{s'}V_\lambda^{\star,0}(s')$ from both sides gives
\[
    h_\lambda^0(s)+(1-\lambda)\min_{s'}V_\lambda^{\star,0}(s')
    =
    \max_{a\in\cA}
    \left\{
    r(s,a)+\lambda P^0_{s,a}h_\lambda^0
    \right\}.
\]
Moreover, the definition of $h_\lambda^0$ implies
\[
    0
    \le
    h_\lambda^0
    \le
    \|h_\lambda^0\|_{\mathrm{span}}\bm{1}_S,
    \qquad
    \max_{s'}V_\lambda^{\star,0}(s')
    =
    \min_{s'}V_\lambda^{\star,0}(s')
    +
    \|h_\lambda^0\|_{\mathrm{span}}.
\]
Thus, for every $s\in\cS$ and $a\in\cA$,
\[
    r(s,a)+P^0_{s,a}h_\lambda^0
    =
    r(s,a)+\lambda P^0_{s,a}h_\lambda^0
    +(1-\lambda)P^0_{s,a}h_\lambda^0
    \le
    r(s,a)+\lambda P^0_{s,a}h_\lambda^0
    +(1-\lambda)\|h_\lambda^0\|_{\mathrm{span}}.
\]
Taking the maximum over $a$ and using the preceding identities, we obtain
\begin{align*}
    \max_{a\in\cA}
    \left\{
    r(s,a)+P^0_{s,a}h_\lambda^0
    \right\}
    &\le
    \max_{a\in\cA}
    \left\{
    r(s,a)+\lambda P^0_{s,a}h_\lambda^0
    \right\}
    +(1-\lambda)\|h_\lambda^0\|_{\mathrm{span}}
    \\
    &=
    h_\lambda^0(s)
    +(1-\lambda)\min_{s'}V_\lambda^{\star,0}(s')
    +(1-\lambda)\|h_\lambda^0\|_{\mathrm{span}}
    \\
    &=
    h_\lambda^0(s)
    +(1-\lambda)\max_{s'}V_\lambda^{\star,0}(s')
    \\
    &=
    h_\lambda^0(s)+\bar\rho_\lambda^0.
\end{align*}
This is precisely the claimed nominal anchor supersolution inequality.

\subsubsection{Proof of Lemma~\ref{lem:anchor-calibration-guarantees}}
\label{subsec:proof-anchor-calibration-guarantees}
\label{subsec:proof-nominal-anchor-certification}
\label{subsec:proof-efficient-nominal-anchor}
\label{subsec:proof-selected-nominal-anchor}

Apply Lemma~\ref{lem:fixed-discount-nominal-plugin} with
$\mathcal G_N=\Lambda_N$. The grid and solver conditions required there
follow from \eqref{eq:anchored-Lambda-N} and
\eqref{eq:anchored-nominal-solver-tolerance}. Hence, simultaneously for
every $\lambda\in\Lambda_N$, the span triangle inequality and Young's
inequality give
\begin{align*}
    &\left|
    \bar\rho_\lambda^0
    -(1-\lambda)\max_s
    \widehat V_\lambda^{\widehat\pi_\lambda^0,0}(s)
    \right|
    \le
    C_{\mathrm{anc}}\left(
    \sqrt{\frac{\widehat H_\lambda^0\iota_N}{N_{\mathrm{nom}}}}
    +\frac{\iota_N}{N_{\mathrm{nom}}(1-\lambda)}
    +\varepsilon_{\mathrm{opt}}
    \right),                                                    \\
    &\left\|
    V_\lambda^{\star,0}
    -\widehat V_\lambda^{\widehat\pi_\lambda^0,0}
    \right\|_\infty
    \le
    \frac{C_{\mathrm{anc}}}{2}\left(
    \frac{1}{1-\lambda}
    \sqrt{\frac{\widehat H_\lambda^0\iota_N}{N_{\mathrm{nom}}}}
    +\frac{\iota_N}{N_{\mathrm{nom}}(1-\lambda)^2}
    +\frac{\varepsilon_{\mathrm{opt}}}{1-\lambda}
    \right).
\end{align*}
Indeed, the solver error transfers the exact empirical bounds in
Lemma~\ref{lem:fixed-discount-nominal-plugin} to the returned value, while
\[
    \|\widehat V_\lambda^{\star,0}\|_{\mathrm{span}}+1
    \le
    2\widehat H_\lambda^0
    +\frac{2\varepsilon_{\mathrm{opt}}}{1-\lambda};
\]
the resulting mixed square-root terms are absorbed by Young's inequality.
The first display is exactly the baseline certificate after substituting
\eqref{eq:anchored-rho-lambda-plus}. It also implies the level certificate.
The second display and the span triangle inequality give
\[
    H_\lambda^0
    \le
    \widehat H_\lambda^0
    +2\left\|
    V_\lambda^{\star,0}
    -\widehat V_\lambda^{\widehat\pi_\lambda^0,0}
    \right\|_\infty
    \le
    \widehat H_\lambda^+.
\]
This proves \eqref{eq:nominal-anchor-certificate-validity}.

We next construct an oracle certificate. When
$\varepsilon_{\mathrm{opt}}=0$, interpret
$H_0/\varepsilon_{\mathrm{opt}}=+\infty$. The sample condition and the
dyadic grid give $\lambda^\dagger\in\Lambda_N$ such that
\[
    \frac{1}{1-\lambda^\dagger}
    \asymp
    \min\left\{
    \sqrt{\frac{N_{\mathrm{nom}}H_0}{\iota_N}},
    \frac{H_0}{\varepsilon_{\mathrm{opt}}}
    \right\}.
\]
Consequently,
\begin{align*}
    (1-\lambda^\dagger)H_0
    &\le
    C\left(
    \sqrt{\frac{H_0\iota_N}{N_{\mathrm{nom}}}}
    +\varepsilon_{\mathrm{opt}}
    \right),                                                        \\
    \frac{1}{1-\lambda^\dagger}
    \sqrt{\frac{H_0\iota_N}{N_{\mathrm{nom}}}}
    +\frac{\iota_N}{N_{\mathrm{nom}}(1-\lambda^\dagger)^2}
    +\frac{\varepsilon_{\mathrm{opt}}}{1-\lambda^\dagger}
    &\le CH_0,                                                       \\
    \frac{\iota_N}{N_{\mathrm{nom}}(1-\lambda^\dagger)}
    &\le
    C\sqrt{\frac{H_0\iota_N}{N_{\mathrm{nom}}}}.
\end{align*}
Lemma~\ref{lem:robust-value-function-span-bound}, applied with $\sigma=0$,
gives $\|V_{\lambda^\dagger}^{\star,0}\|_{\mathrm{span}}\le2H_0$.
Substituting the preceding bounds into the value comparison above and
absorbing $C\sqrt{H_0\widehat H_{\lambda^\dagger}^0}$ yields
$\widehat H_{\lambda^\dagger}^0\le CH_0$, and hence
$\widehat H_{\lambda^\dagger}^+\le CH_0$. Moreover,
Lemma~\ref{lem:robust-discounted-average-sandwich} gives
\[
    \bar\rho_{\lambda^\dagger}^0
    \le
    \rho^\star+C(1-\lambda^\dagger)H_0.
\]
The baseline comparison and the last three displayed bounds therefore imply
\[
    \widehat\rho_{\lambda^\dagger}^+
    \le
    \rho^\star
    +C\sqrt{\frac{H_0\iota_N}{N_{\mathrm{nom}}}}
    +C\varepsilon_{\mathrm{opt}},
\]
which proves the oracle assertion.

It remains to apply the selection rule. For every $\lambda\in\Lambda_N$,
discounted optimality, Lemma~\ref{lem:robust-discounted-average-sandwich}, and
the three certificate inequalities give
\[
    \rho^\star
    \le
    \bar\rho_\lambda^0
    \le
    \widehat\rho_\lambda^+,
    \qquad
    0\le
    \widehat\rho_\lambda^+-\rho^\star
    \le
    \widehat Q_\lambda.
\]
At $\lambda^\dagger$, the oracle bounds and
\eqref{eq:anchored-anchor-quality} show that
\[
    \widehat Q_{\lambda^\dagger}
    \le
    C\left(
    \sqrt{\frac{H_0\iota_N}{N_{\mathrm{nom}}}}
    +\varepsilon_{\mathrm{opt}}
    \right)
    \le\varepsilon
\]
after enlarging the sample constant and reducing $c_{\mathrm{opt}}$.
Thus $\lambda^\dagger$ is admissible in
\eqref{eq:anchored-selected-anchor}. Minimality of the selected span and its
$\varepsilon$-approximate quality give
\[
    \widehat H_{\widehat\lambda_{\mathrm{anc}}}^+
    \le CH_0,
    \qquad
    \widehat Q_{\widehat\lambda_{\mathrm{anc}}}
    \le C\varepsilon.
\]
The preceding level and quality inequalities, evaluated at
$\widehat\lambda_{\mathrm{anc}}$, now give
\eqref{eq:selected-anchor-level-bound} and complete the proof.

\subsubsection{Proof of Lemma~\ref{lem:raw-localized-robust-comparison}}
\label{subsec:proof-raw-localized-robust-comparison}

Consider an arbitrary realization of the policy $\pi$, the anchor, and the
radius grid. Since all three objects are constructed independently of
$\mathcal D_{\mathrm{rob}}$, fixing their realization does not change the
distribution of the robust-batch samples. We first check the logarithmic and
sample-size requirements for the two comparison bounds.
Since $N_{\mathrm{rob}}\le N$ and $(1-\gamma)^{-1}\le N$ for
$\gamma\in\Gamma_N$, replacing their failure probability by $\delta/N^5$
changes the logarithmic factor to at most
\[
    \log\left(
    \frac{54SA N_{\mathrm{rob}}^2}
    {(1-\gamma)(\delta/N^5)}
    \right)
    \le
    \log\left(\frac{54SA N^8}{\delta}\right)
    \le
    C\cdot\iota_N.
\]
The policy-grid cutoff \eqref{eq:anchored-Gamma-N} also gives
\[
    N_{\mathrm{rob}}
    \ge
    C_\Gamma^2
    \frac{\sigma\cdot\iota_N}{(1-\gamma)^2},
    \qquad
    N_{\mathrm{rob}}
    \ge
    C_\Gamma
    \frac{\iota_N}{1-\gamma}.
\]
Since $C_\Gamma$ is sufficiently large, the two conditions in
\eqref{eq:recursive-comparison-sample-conditions} hold with
$N_{\mathrm{rob}}$ in place of $N$ and with the logarithmic factor associated
with failure probability $\delta/N^5$ in place of $\iota$.

Now fix a radius $R_j$ that upper-bounds
$1$, $\|\bar h\|_{\mathrm{span}}$,
$\|V_\gamma^{\pi,\sigma}\|_{\mathrm{span}}$, and
$\|\widehat V_\gamma^{\pi,\sigma}\|_{\mathrm{span}}$.
By \eqref{eq:anchor-span-scale},
$H_{\mathrm{anc}}=\max\{1,\|\bar h\|_{\mathrm{span}}\}$, so $R_j$ satisfies
the radius requirement for the fixed-policy bound
\eqref{eq:generic-fixed-policy-localized-comparison} from
Lemma~\ref{lem:V-hat-V-part-1-decomp}. Hence that bound holds for $\pi$ with
failure probability $O(\delta/N^5)$.

Suppose, in addition, that $\pi=\pi_\gamma^\star$ and that $R_j$ upper-bounds
$\|V_\gamma^{\widehat\pi_\gamma,\sigma}\|_{\mathrm{span}}$,
$\|\widehat V_\gamma^{\widehat\pi_\gamma,\sigma}\|_{\mathrm{span}}$, and
$\|\widehat V_\gamma^{\star,\sigma}\|_{\mathrm{span}}$. Then $\pi$ is the
fixed optimal policy used in the definitions of $U$ and $\widehat U$ under
which Lemma~\ref{lem:V-hat-V-pi-hat-bound} is stated. Together with the
fixed-policy radius condition, the displayed bounds ensure that all five spans
in \eqref{eq:discounted-localized-span-condition} are at most $R_j$.
Consequently, the learned-policy bound from
Lemma~\ref{lem:V-hat-V-pi-hat-bound} holds directly at this radius with failure
probability $O(\delta/N^5)$.

There are two comparison bounds at each of the $J+1=O(\log N)$ radii. A union
bound gives a total conditional failure probability of at most
\[
    C(J+1)\frac{\delta}{N^5}
    \le
    C\frac{\delta\log N}{N^5}
    =
    O\left(\frac{\delta}{N^3}\right).
\]
This estimate is uniform over the fixed policy, anchor, and radius grid.
Averaging over their distribution gives the same unconditional probability
bound and completes the proof.

\subsubsection{Proof of Lemma~\ref{lem:nominal-policy-candidate-guarantees}}
\label{subsec:proof-nominal-policy-candidate-guarantees}
\label{subsec:proof-nominal-policy-suitable-grid-point}
\label{subsec:proof-observable-fixed-policy-robust-evaluation}
\label{subsec:proof-existence-good-nominal-policy-candidate}

We first establish the simultaneous evaluation guarantee. Fix
$\gamma\in\Gamma_N$ and condition on the nominal batch. Then
$\widehat\pi_\gamma^0$ is independent of $\mathcal D_{\mathrm{rob}}$.
Write
\[
    D_\gamma
    =
    \left\|
    \widehat V_\gamma^{\widehat\pi_\gamma^0,\sigma}
    -V_\gamma^{\widehat\pi_\gamma^0,\sigma}
    \right\|_\infty.
\]
Apply Lemma~\ref{lem:raw-localized-robust-comparison} with the trivial anchor
$(\bar\rho,\bar h)=(1,0)$ and a dyadic radius grid starting at $1$ and
extending beyond $8/(1-\gamma)$. Let $R_\gamma$ be its first point satisfying
\[
    R_\gamma
    \ge
    \widehat R_\gamma^{\mathrm{nom}}+2D_\gamma.
\]
The span triangle inequality makes this a valid radius, and minimality gives
$R_\gamma\le2(\widehat R_\gamma^{\mathrm{nom}}+2D_\gamma)$. Moreover, the
1-Lipschitz property of $[\,\cdot\,]_+$ gives
\[
    \left[
    1-(1-\gamma)\min_s
    V_\gamma^{\widehat\pi_\gamma^0,\sigma}(s)
    \right]_+
    \le
    \widehat\beta_\gamma^{\mathrm{nom}}+(1-\gamma)D_\gamma.
\]
Substitution in the localized budget therefore yields
\[
    B(R_\gamma)
    \le
    C\left[
    \widehat B_\gamma^{\mathrm{nom}}
    +\widehat\beta_\gamma^{\mathrm{nom}}D_\gamma
    +(\sigma+1-\gamma)D_\gamma^2
    \right].
\]
After multiplying the localized comparison by $1-\gamma$, Young's inequality
and the two cutoffs in \eqref{eq:anchored-Gamma-N} absorb all terms involving
$D_\gamma$ on the right. Thus
\begin{equation*}
    (1-\gamma)D_\gamma
    \le
    \operatorname{pen}_\gamma^{\mathrm{nom}}
    (\widehat\pi_\gamma^0),
\end{equation*}
where the inequality follows by choosing $C_{\mathrm{pen}}$
sufficiently large. The fixed-policy discounted-to-average comparison then
gives
\begin{align*}
    \rho^{\widehat\pi_\gamma^0,\sigma}
    &\ge
    (1-\gamma)\min_s
    V_\gamma^{\widehat\pi_\gamma^0,\sigma}(s)                         \\
    &\ge
    (1-\gamma)\min_s
    \widehat V_\gamma^{\widehat\pi_\gamma^0,\sigma}(s)
    -\operatorname{pen}_\gamma^{\mathrm{nom}}
    (\widehat\pi_\gamma^0)
    =
    \operatorname{LCB}_\gamma^{\mathrm{nom}}
    (\widehat\pi_\gamma^0).
\end{align*}
A union bound over $|\Gamma_N|=O(\log N)$ proves the simultaneous assertions
\eqref{eq:observable-fixed-policy-robust-evaluation} and
\eqref{eq:nominal-lower-certificate-validity}.

We now prove the existence assertion. Under
\eqref{eq:nominal-policy-grid-sample-condition}, the two cutoffs defining
$\Gamma_N$ exceed $2C_{\mathrm{nom}}H_0/\varepsilon$. Hence the dyadic grid
contains $\gamma_{\mathrm{nom}}^\dagger$ satisfying
\eqref{eq:nominal-grid-horizon}. For brevity write
$\gamma^\dagger=\gamma_{\mathrm{nom}}^\dagger$ and
$\widehat\pi^\dagger=\widehat\pi_{\gamma^\dagger}^0$.
Lemma~\ref{lem:robust-value-function-span-bound}, with $\sigma=0$, and
Lemma~\ref{lem:fixed-discount-nominal-plugin} give
\[
    \|V_{\gamma^\dagger}^{\star,0}\|_{\mathrm{span}}\le2H_0,
    \qquad
    \left\|
    V_{\gamma^\dagger}^{\star,0}
    -V_{\gamma^\dagger}^{\widehat\pi^\dagger,0}
    \right\|_\infty
    \le CH_0.
\]
The second bound follows by inserting the horizon and sample conditions into
the population and fixed-policy comparisons of that lemma and adding the
solver error. The span triangle inequality and
Lemma~\ref{lem:robust-discounted-average-sandwich} now give, after choosing
$C_{\mathrm{nom}}$ sufficiently large,
\begin{align*}
    \left\|V_{\gamma^\dagger}^{\widehat\pi^\dagger,0}
    \right\|_{\mathrm{span}}
    &\le CH_0,                                                        \\
    (1-\gamma^\dagger)\min_s
    V_{\gamma^\dagger}^{\widehat\pi^\dagger,0}(s)
    &\ge \rho^\star-c\varepsilon.
\end{align*}
These are \eqref{eq:nominal-grid-policy-span} and
\eqref{eq:nominal-grid-policy-discounted-reward}.

The total-variation constraint, translation equivariance, and monotonicity of
the fixed-policy Bellman operator give
\[
    V_{\gamma^\dagger}^{\widehat\pi^\dagger,0}
    -\frac{\gamma^\dagger\sigma}{1-\gamma^\dagger}
    \left\|V_{\gamma^\dagger}^{\widehat\pi^\dagger,0}
    \right\|_{\mathrm{span}}\bm 1_S
    \le
    V_{\gamma^\dagger}^{\widehat\pi^\dagger,\sigma}
    \le
    V_{\gamma^\dagger}^{\widehat\pi^\dagger,0}.
\]
Together with $\sigma H_0\le c\varepsilon$ and
$(1-\gamma^\dagger)^{-1}\asymp H_0/\varepsilon$, this implies
\[
    \left\|V_{\gamma^\dagger}^{\widehat\pi^\dagger,\sigma}
    \right\|_{\mathrm{span}}
    \le CH_0.
\]
The lower comparison also gives
\[
\begin{aligned}
    (1-\gamma^\dagger)\min_s
    V_{\gamma^\dagger}^{\widehat\pi^\dagger,\sigma}(s)
    &\ge
    (1-\gamma^\dagger)\min_s
    V_{\gamma^\dagger}^{\widehat\pi^\dagger,0}(s)
    -\gamma^\dagger\sigma
    \left\|V_{\gamma^\dagger}^{\widehat\pi^\dagger,0}
    \right\|_{\mathrm{span}}\\
    &\ge \rho^\star-c\varepsilon-C\sigma H_0
    \ge \rho^{\star,\sigma}-c\varepsilon,
\end{aligned}
\]
where the final inequality uses $P^0\in\mathcal P$ and absorbs
$C\sigma H_0$ into the displayed $c\varepsilon$ loss.

Finally, set
$D^\dagger=\|\widehat V_{\gamma^\dagger}^{
\widehat\pi^\dagger,\sigma}-V_{\gamma^\dagger}^{
\widehat\pi^\dagger,\sigma}\|_\infty$. The span bound just proved and the
definitions of the observable radius, defect, and budget give
\begin{align*}
    \widehat R_{\gamma^\dagger}^{\mathrm{nom}}
    &\le CH_0+2D^\dagger,
    &
    \widehat\beta_{\gamma^\dagger}^{\mathrm{nom}}
    &\le1,                                                           \\
    \widehat B_{\gamma^\dagger}^{\mathrm{nom}}
    &\le C\left[H_0+D^\dagger
    +(\sigma+1-\gamma^\dagger)(D^\dagger)^2\right].
\end{align*}
Substituting these inequalities into the simultaneous evaluation bound and
using Young's inequality and \eqref{eq:anchored-Gamma-N} once more gives
\[
    (1-\gamma^\dagger)D^\dagger
    \le
    C\left(
    \sqrt{\frac{H_0\iota_N}{N_{\mathrm{rob}}}}
    +\frac{H_0\iota_N}{N_{\mathrm{rob}}}
    +\frac{\iota_N}{N_{\mathrm{rob}}(1-\gamma^\dagger)}
    \right)
    \le c\varepsilon.
\]
It follows in turn that
$\widehat R_{\gamma^\dagger}^{\mathrm{nom}}\le CH_0$,
$\widehat B_{\gamma^\dagger}^{\mathrm{nom}}\le CH_0$, and
\[
    \operatorname{pen}_{\gamma^\dagger}^{\mathrm{nom}}
    (\widehat\pi^\dagger)
    \le c\varepsilon.
\]
Combining the last two displays with the true robust discounted-reward bound
in the definition \mainAnchoredNominalCandidateScore{} proves
\eqref{eq:good-nominal-policy-certificate} and completes the proof.

\subsubsection{Proof of Lemma~\ref{lem:robust-policy-candidate-guarantees}}
\label{subsec:proof-robust-policy-candidate-guarantees}
\label{subsec:proof-observable-budget-anchored-dmdp-comparison}
\label{subsec:proof-anchored-lower-certificate-validity}
\label{subsec:proof-existence-good-anchored-pair}

We first prove both observable comparisons through one anchor-indexed
argument. Fix $\gamma\in\Gamma_N$ and condition on the nominal batch. Let
$a\in\{\mathrm{anc},\mathrm{triv}\}$ index the selected and trivial anchors,
and define
\[
\begin{aligned}
 a=\mathrm{anc}:\quad
 (H_a,\widehat R_{\gamma,a})
 &=\left(
 \widehat H_{\widehat\lambda_{\mathrm{anc}}}^+,
 \widehat R_{\gamma,\widehat\lambda_{\mathrm{anc}}}^{\mathrm{anc}}
 \right), \\
 (\widehat\beta_{\gamma,a},\widehat B_{\gamma,a})
 &=\left(
 \widehat\beta_{\gamma,\widehat\lambda_{\mathrm{anc}}},
 \widehat B_{\gamma,\widehat\lambda_{\mathrm{anc}}}
 \right); \\[2pt]
 a=\mathrm{triv}:\quad
 (H_a,\widehat R_{\gamma,a})
 &=\left(1,\widehat R_\gamma\right), \\
 (\widehat\beta_{\gamma,a},\widehat B_{\gamma,a})
 &=\left(
 \widehat\beta_{\gamma,\mathrm{triv}},
 \widehat B_{\gamma,\mathrm{triv}}
 \right).
\end{aligned}
\]
For $a=\mathrm{anc}$, use the population anchor
$(\widehat\rho_{\widehat\lambda_{\mathrm{anc}}}^+,
h_{\widehat\lambda_{\mathrm{anc}}}^0)$. It is valid by
Lemmas~\ref{lem:discounted-nominal-anchors} and
\ref{lem:anchor-calibration-guarantees}, and its span is at most $H_a$.
For $a=\mathrm{triv}$, use $(1,0)$, which is valid because $r\in[0,1]$.

For either $a$, take a dyadic grid starting at $H_a$ and let
$R_{\Delta,a}$ be the first point satisfying
\[
    R_{\Delta,a}
    \ge
    8\bigl(
    \widehat R_{\gamma,a}+\Delta_\gamma
    +\varepsilon_{\mathrm{opt}}
    \bigr).
\]
The discounted-value range ensures that this point exists on an
$O(\log N)$ grid, and minimality gives the reverse bound with the factor
$16$. The span triangle inequality, the solver guarantee, and
\eqref{eq:anchored-Delta-gamma} show that $R_{\Delta,a}$ dominates all value
spans required in Lemma~\ref{lem:raw-localized-robust-comparison}. The true
anchor defect also satisfies
\[
    \beta_{\star,a}
    \le
    \widehat\beta_{\gamma,a}+(1-\gamma)\Delta_\gamma.
\]
Substituting these two comparisons in the generic localized budgets gives,
for both anchors,
\[
    B_{\star,a}(R_{\Delta,a})+B_{\mathrm{com},a}(R_{\Delta,a})
    \le
    C\left[
    \widehat B_{\gamma,a}
    +\widehat\beta_{\gamma,a}\Delta_\gamma
    +(\sigma+1-\gamma)\Delta_\gamma^2
    \right].
\]
We may choose $R_{\Delta,a}$ after observing the robust batch because the
event in Lemma~\ref{lem:raw-localized-robust-comparison} is simultaneous over
the radius grid. Summing its fixed-policy and learned-policy comparisons and
multiplying by $1-\gamma$ therefore yields
\begin{align*}
    (1-\gamma)\Delta_\gamma
    &\le
    C\sqrt{\frac{\widehat B_{\gamma,a}\iota_N}{N_{\mathrm{rob}}}}
    +C\frac{\widehat R_{\gamma,a}\iota_N}{N_{\mathrm{rob}}}
    +C\varepsilon_{\mathrm{opt}}                                      \\
    &\quad
    +C\sqrt{\frac{\widehat\beta_{\gamma,a}\Delta_\gamma\iota_N}
    {N_{\mathrm{rob}}}}
    +C\Delta_\gamma
    \sqrt{\frac{(\sigma+1-\gamma)\iota_N}{N_{\mathrm{rob}}}}
    +C\frac{\Delta_\gamma\iota_N}{N_{\mathrm{rob}}}
    +\frac{7}{100}(1-\gamma)\Delta_\gamma.
\end{align*}
Young's inequality absorbs the first mixed term at the cost of
$C\widehat\beta_{\gamma,a}\iota_N/
[N_{\mathrm{rob}}(1-\gamma)]$. The two cutoffs in
\eqref{eq:anchored-Gamma-N} absorb the remaining copies of
$\Delta_\gamma$. Thus, simultaneously for both $a$ and every
$\gamma\in\Gamma_N$,
\[
    (1-\gamma)\Delta_\gamma
    \le
    C\sqrt{\frac{\widehat B_{\gamma,a}\iota_N}{N_{\mathrm{rob}}}}
    +C\frac{\widehat R_{\gamma,a}\iota_N}{N_{\mathrm{rob}}}
    +C\frac{\widehat\beta_{\gamma,a}\iota_N}
    {N_{\mathrm{rob}}(1-\gamma)}
    +C\varepsilon_{\mathrm{opt}}.
\]
Taking $a=\mathrm{anc}$ and $a=\mathrm{triv}$ proves
\eqref{eq:observable-budget-selected-anchor-bound} and
\eqref{eq:observable-budget-trivial-anchor-bound}, respectively.

Choose $C_{\mathrm{pen}}$ larger than the preceding universal constant. Both
penalties then dominate $(1-\gamma)\Delta_\gamma$. Hence
Lemma~\ref{lem:robust-discounted-average-sandwich} gives
\begin{align*}
    \rho^{\widehat\pi_\gamma,\sigma}
    &\ge
    (1-\gamma)\min_s
    \widehat V_\gamma^{\widehat\pi_\gamma,\sigma}(s)
    -(1-\gamma)\Delta_\gamma                                        \\
    &\ge
    (1-\gamma)\min_s
    \widehat V_\gamma^{\widehat\pi_\gamma,\sigma}(s)
    -\min\left\{
    \operatorname{pen}_\gamma^{\mathrm{anc}}(\widehat\pi_\gamma),
    \operatorname{pen}_\gamma^{\mathrm{triv}}(\widehat\pi_\gamma)
    \right\}                                                         \\
    &=
    \operatorname{LCB}_\gamma^{\mathrm{rob}}(\widehat\pi_\gamma).
\end{align*}

It remains to construct a good candidate. The sample condition makes both
cutoffs defining $\Gamma_N$ at least
$2C_{\mathrm{rob}}H_\sigma/\varepsilon$, so the grid contains
$\gamma^\dagger$ with
\[
    \frac{C_{\mathrm{rob}}H_\sigma}{\varepsilon}
    \le
    \frac{1}{1-\gamma^\dagger}
    \le
    \frac{2C_{\mathrm{rob}}H_\sigma}{\varepsilon}.
\]
Use the selected anchor when $H_0<H_\sigma$ and the trivial anchor when
$H_\sigma\le H_0$. In the first case,
Lemma~\ref{lem:anchor-calibration-guarantees} gives
\[
    H_{\mathrm{anc}}\le CH_0,
    \qquad
    \beta_\star\le C(\sigma H_0+\varepsilon),
\]
whereas in the second case $H_{\mathrm{anc}}=1$ and
$\beta_\star\le1$. In both cases, with $R_0=2H_\sigma$,
\[
    H_{\mathrm{anc}}+R_0\beta_\star
    +\sigma R_0^2+(1-\gamma^\dagger)R_0^2
    \le
    C\left[
    \min\{H_0,H_\sigma\}
    +\sigma H_\sigma^2+\varepsilon H_\sigma
    \right].
\]
The sample and optimization conditions therefore verify the hypotheses of
Theorem~\ref{thm:TV-upper-bound-discount}. Together with
Lemma~\ref{lem:robust-value-function-span-bound}, that theorem gives, on an
event of probability at least $1-O(\delta)$,
\[
    \left\|
    \widehat V_{\gamma^\dagger}^{\widehat\pi_{\gamma^\dagger},\sigma}
    -V_{\gamma^\dagger}^{\widehat\pi_{\gamma^\dagger},\sigma}
    \right\|_\infty
    +\left\|
    V_{\gamma^\dagger}^{\star,\sigma}
    -V_{\gamma^\dagger}^{\widehat\pi_{\gamma^\dagger},\sigma}
    \right\|_\infty
    \le cH_\sigma.
\]
The robust discounted-to-average comparison and the choice of
$\gamma^\dagger$ now imply
\[
    (1-\gamma^\dagger)\min_s
    \widehat V_{\gamma^\dagger}^{
    \widehat\pi_{\gamma^\dagger},\sigma}(s)
    \ge
    \rho^{\star,\sigma}-\frac{c\varepsilon}{2}.
\]
The same error bound and the span triangle inequality give the following
observable bounds. In the selected-anchor case,
\begin{align*}
    \widehat R_{\gamma^\dagger,
    \widehat\lambda_{\mathrm{anc}}}^{\mathrm{anc}}
    &\le CH_\sigma,
    &
    \widehat\beta_{\gamma^\dagger,
    \widehat\lambda_{\mathrm{anc}}}
    &\le C(\sigma H_0+\varepsilon),                                  \\
    \widehat B_{\gamma^\dagger,
    \widehat\lambda_{\mathrm{anc}}}
    &\le C(H_0+\sigma H_\sigma^2+\varepsilon H_\sigma),
\end{align*}
while in the trivial-anchor case,
\[
    \widehat R_{\gamma^\dagger}\le CH_\sigma,
    \qquad
    \widehat\beta_{\gamma^\dagger,\mathrm{triv}}\le1,
    \qquad
    \widehat B_{\gamma^\dagger,\mathrm{triv}}
    \le C(H_\sigma+\sigma H_\sigma^2).
\]
Substitution in the corresponding penalty, using
\eqref{eq:good-robust-candidate-sample-condition}, the constant-fraction
sample split, and
$\varepsilon_{\mathrm{opt}}\le c_{\mathrm{opt}}\varepsilon$, gives
\[
    \min\left\{
    \operatorname{pen}_{\gamma^\dagger}^{\mathrm{anc}}
    (\widehat\pi_{\gamma^\dagger}),
    \operatorname{pen}_{\gamma^\dagger}^{\mathrm{triv}}
    (\widehat\pi_{\gamma^\dagger})
    \right\}
    \le\frac{c\varepsilon}{2}.
\]
Combining the last two displays in
\mainAnchoredRobustCandidateScore{} proves
\eqref{eq:good-robust-policy-certificate} and completes the proof.

\section{Additional experiments and simulation details}
\label{app:experiment-details}

This appendix provides supplementary results and implementation details for the
experiments in Section~\mainExperimentsSection{}.
Section~\ref{app:rate-experiment-details} describes the instances used to test
the sample-complexity rates and reports an additional minimum-span experiment.
Section~\ref{app:selector-experiment-details} describes the span-agnostic
adaptation experiment.
The experiment repository documents the exact sample-size grids and
trial allocations, the code and data organization, and the reproduction
instructions.

For every instance, we compute the nominal and robust average rewards and
bias functions from the underlying AMDP to a tolerance of $10^{-10}$. The
resulting bias spans are used in all sample-size calculations,
normalizations, and rate fits. For readability, the figure labels report
these spans rounded to the nearest integer.

At each state-action pair, the learner receives $N$ independent next-state
samples. As in the main text, $N_{95}$ denotes the sample size at which the
estimated probability of returning a robustly $\varepsilon$-optimal policy
reaches $0.95$. We estimate this crossing by fitting a success curve constrained
to be nondecreasing in $N$. For the rate experiments, the $95\%$ intervals for
$N_{95}$ use $500$ parametric bootstrap repetitions. In each repetition, we
draw new success counts from the observed success rate at every tested sample
size, refit the curve, and recalculate its crossing.

\subsection{Sample-complexity experiments}
\label{app:rate-experiment-details}

Section~\mainExperimentsSection{} tests the high-tolerance rate
\[
    \frac{\min\{H_0,H_\sigma\}}{\varepsilon^2}
\]
and separately tests the two components of the low-tolerance rate,
\[
    \frac{\min\{H_0,H_\sigma\}}{\varepsilon^2}
    \qquad\text{and}\qquad
    \frac{\sigma H_\sigma^2}{\varepsilon^2}.
\]
Each tested sample size uses at least $1000$ independent trials, increased
to $2000$ near the estimated $N_{95}$ crossing.

\paragraph*{Instance constructions.}
For the high-tolerance experiment in
Figure~\mainComponentRatePanel{a}, we use a three-state AMDP consisting of
a decision state and two rewarding states. The rewarding states have reward
$1$ and return to the decision state with probability $1/(2H_0)$. Of the two
informative actions, the better action enters its rewarding state with
probability $1/(2H_0)$ and has TV radius $\sigma$ at the decision state. The
other action has radius zero and is calibrated so that its robust average
reward is $1.5\varepsilon$ below that of the better action. A third action
self-loops at the decision state and has reward zero. This gives
$H_0$ exactly and
\[
    H_\sigma=\frac{H_0}{1-\sigma H_0}>H_0.
\]
We fix $\varepsilon=0.02$ and $\sigma=2.5\times10^{-5}$ and vary
\[
    H_0\in\{8,10,12,16,20,28,40,56,80\}.
\]
The nominal and robust optimal actions agree in every setting, and
$7\sigma H_0/\varepsilon$ is at most $0.7$, so all settings lie within the
formal high-tolerance condition.

The low-tolerance minimum-span experiment in
Figure~\mainComponentRatePanel{b} uses the four-state layout shown in
Figure~\ref{fig:minimum-span-instance}. This is the core of the lower-bound
instance in Figure~\ref{fig:hard-mdp-diagram}: three actions at a decision
state lead to three different rewarding states. The padding states used to
extend the lower-bound construction to larger state spaces are unnecessary
for this comparison and are omitted. Unlike the lower-bound proof, where the
instances are constructed in pairs, each simulation uses a single instance
with a fixed optimal action. We use $\varepsilon = 0.001$ for this family.

\tikzset{
    app instance diagram/.style={
        ->,
        >=stealth,
        thick,
        every node/.style={font=\small}
    },
    app state/.style={circle, draw, minimum size=10mm},
    app uncertain state/.style={app state, densely dashed},
    app uncertain action/.style={densely dashed},
    app edge label/.style={fill=white, inner sep=1pt}
}

\begin{figure}
    \centering
    \begin{tikzpicture}[app instance diagram]
        \node[app state, label=below:{decision state}]
            (decision) at (0,0) {$S$};
        \node[app uncertain state, label=above:{reward $0.04$},
            label={[font=\scriptsize]left:{TV radius $0.002$}}]
            (one) at (-4.6,2.35) {$R_1$};
        \node[app state, label=above:{reward $0.04$}]
            (two) at (0,2.9) {$R_2$};
        \node[app state, label=above:{reward $0.04$}]
            (three) at (4.6,2.35) {$R_3$};

        \draw[app uncertain action, bend left=10] (decision) to
            node[app edge label, pos=0.48, below left]
            {$a_1:p_1$} (one);
        \draw[bend left=10] (one) to
            node[app edge label, pos=0.52, above right]
            {$q_1$} (decision);
        \draw[bend left=10] (decision) to
            node[app edge label, pos=0.48, left]
            {$a_2:p_2$} (two);
        \draw[bend left=10] (two) to
            node[app edge label, pos=0.52, right]
            {$q_2$} (decision);
        \draw[bend left=10] (decision) to
            node[app edge label, pos=0.55, above left, yshift=4pt]
            {$a_3:p_3$} (three);
        \draw[bend left=10] (three) to
            node[app edge label, pos=0.45, below right, yshift=-4pt]
            {$q_3$} (decision);
    \end{tikzpicture}

    \vspace{0.5em}
    \small
    \begin{tabular}{@{}c@{\qquad}ccccc@{}}
        \toprule
        action & $p_i$ & $q_i$ &
        \makecell{nominal average\\reward} &
        \makecell{TV radius\\at $S$} &
        \makecell{TV radius at $R_i$\\(all actions)} \\
        \midrule
        $a_1$ & $0.00030$ & $0.00020$ & $0.024$
            & $0.002$ & $0.002$ \\
        $a_2$ & $0.00115$ & $0.00085$ & $0.023$
            & $0$ & $0$ \\
        $a_3$ & $0.00105$ & $0.00095$ & $0.021$
            & $0$ & $0$ \\
        \bottomrule
    \end{tabular}
    \caption{Representative minimum-span instance for
        $H_\sigma\le H_0$, at $\varepsilon=0.001$ and $\sigma=0.002$.
        Action $a_i$ enters $R_i$ with probability $p_i$, and every action at
        $R_i$ returns to $S$ with probability $q_i$; remaining probability is
        assigned to a self-loop. Dashed elements mark positive TV radius. The
        spans are $H_0=80$ and $H_\sigma=20$.}
    \label{fig:minimum-span-instance}
\end{figure}
\FloatBarrier

For the other settings with $H_\sigma\le H_0$, $p_2$ and $q_2$ are
$0.023/H_\sigma$ and $0.017/H_\sigma$, while $p_3$ and $q_3$ are
$0.021/H_\sigma$ and $0.019/H_\sigma$; all other displayed quantities remain
fixed.

To study the minimum-span term $\min\{H_0,H_\sigma\}$ when $H_\sigma<H_0$, we fix $H_0=80$ and vary
\[
    H_\sigma
    \in
    \{8,10,12,14,16,20,24,28,32,36,40\}.
\]
Action $a_1$ is nominally best, whereas uncertainty makes $a_2$ robustly
optimal. The difference between $a_2$ and $a_3$ in robust average reward is
$2\varepsilon$. Across these settings,
$\sigma H_\sigma^2$ is at most $8.1\%$ of
$\min\{H_0,H_\sigma\}=H_\sigma$. 
Therefore the expected sample-complexity dependence is dominated by $H_\sigma$.

To study the robustness-specific term $\sigma H_\sigma^2$, we instead keep $H_0$ near $5$. We
vary $H_\sigma$ at fixed $\sigma=0.1$,
\[
    H_\sigma
    \in
    \{40,45,50,60,70,80,90,100,120,140\},
\]
and vary $\sigma$ at fixed $H_\sigma=50$,
\[
    \sigma
    \in
    \{0.02,0.025,0.035,0.05,0.07,0.1,0.14,0.18,0.22\}.
\]
Across both sets of settings, $\min\{H_0,H_\sigma\}$ is at most $10\%$ of
$\sigma H_\sigma^2$. The robustness-specific component therefore determines
most of the predicted sample complexity in these instances.

\paragraph*{Normalized success curves.}
The low-tolerance panels of Figure~\mainComponentRateFigure{} summarize
each setting by the single crossing $N_{95}$.
Figure~\ref{fig:app-component-collapses} shows the corresponding
normalized success curves. Figure~\hyperref[fig:app-component-collapses]
{\ref*{fig:app-component-collapses}a} plots success against
$N\varepsilon^2/H_\sigma$ for the minimum-span settings, while
Figure~\hyperref[fig:app-component-collapses]
{\ref*{fig:app-component-collapses}b} uses
$N\varepsilon^2/(\sigma H_\sigma^2)$ for the robustness-specific settings. The
near alignment within each panel shows that these normalizations capture the
main change in the required sample size, supporting the rate
\[\frac{\min \{H_0,H_\sigma\} + \sigma H_\sigma^2}{\varepsilon^2}\]
predicted by the theory.

\begin{figure}[H]
    \centering
    \includegraphics[width=\linewidth]{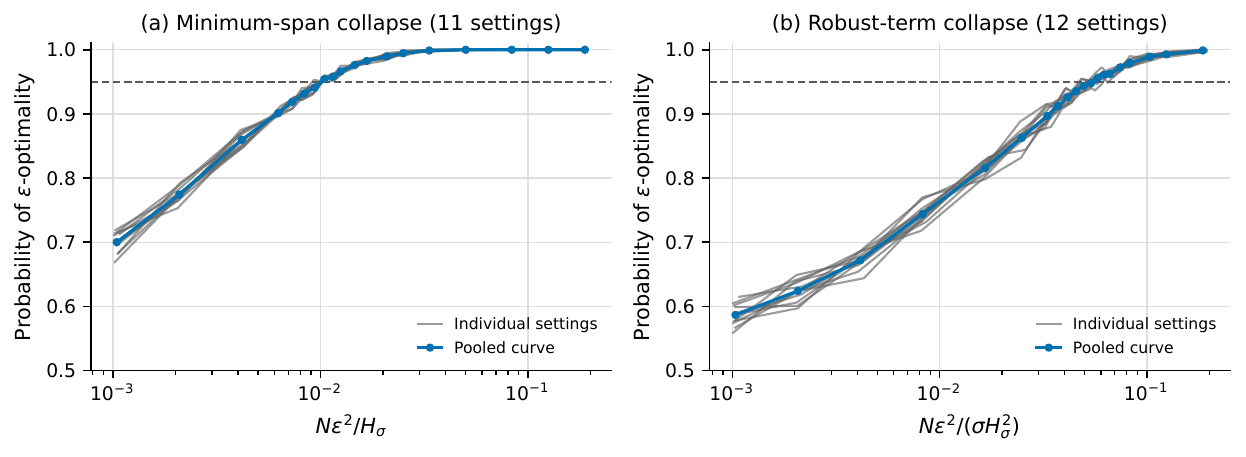}
    \caption{Normalized success curves for the two rate components.
        (a) The $11$ minimum-span settings with $H_\sigma\le H_0$, plotted
        against $N\varepsilon^2/H_\sigma$.
        (b) The $12$ robustness-specific settings, plotted against
        $N\varepsilon^2/(\sigma H_\sigma^2)$. The dashed line marks the
        $0.95$ target defining $N_{95}$.}
    \label{fig:app-component-collapses}
\end{figure}
\FloatBarrier

\phantomsection
\label{app:minimum-span-complementary}
\paragraph*{Additional experiment in the case $H_0<H_\sigma$.}
Figure~\mainComponentRatePanel{b} considers the case $H_\sigma \le H_0$, in which
$\min\{H_0,H_\sigma\}=H_\sigma$. To check this in the complementary scenario where $H_0<H_\sigma$, we fix $H_\sigma=50$ and vary
\[
    H_0\in\{20,23,26,29,32,35,38,41,44\}.
\]
This check uses a simpler three-state instance and
$\varepsilon=2\times10^{-5}$. Figure~\ref{fig:minimum-span-complementary-instance}
shows the setting with $H_0=32$.

\begin{figure}[H]
    \centering
    \begin{tikzpicture}[app instance diagram]
        \node[app state] (decision) at (0,0) {$S$};
        \node[app state, label=above:{reward $0.01$}]
            (reward) at (4.3,2.1) {$R$};
        \node[app uncertain state, label=above:{reward $0$},
            label={[font=\scriptsize]left:{TV radius $9/54400$}}]
            (transient) at (-4.3,2.1) {$T$};

        \draw[bend left=24] (decision) to
            node[app edge label, pos=0.55, above left]
            {$a_2:p_2$} (reward);
        \draw[bend right=24] (decision) to
            node[app edge label, pos=0.55, below right]
            {$a_3:p_3$} (reward);
        \draw (reward) to
            node[app edge label, pos=0.50, above right]
            {$q_R$} (decision);
        \draw (transient) to
            node[app edge label, pos=0.50, above left]
            {$q_T^0$} (decision);
        \path (decision) edge[loop below]
            node[app edge label, below] {$a_1:1$} (decision);
        \node[anchor=north east, inner sep=0pt]
            at ([xshift=-2mm,yshift=-1mm]decision.south west)
            {decision state};
    \end{tikzpicture}
    \caption{Representative complementary minimum-span instance
        ($H_0=32<H_\sigma=50$ and $\varepsilon=2\times10^{-5}$).
        Actions $a_2$ and $a_3$ enter the rewarding state $R$; the unreachable
        state $T$ changes the robust bias span without changing their
        comparison. Remaining probability is assigned to self-loops. The full
        transition table is recorded in the experiment documentation.}
    \label{fig:minimum-span-complementary-instance}
\end{figure}
\FloatBarrier

In this setting, Figure~\hyperref[fig:app-minimum-span-complementary]{\ref*{fig:app-minimum-span-complementary}a} again shows an approximately
linear increase in $N_{95}$ as $H_0$ increases.
Figure~\hyperref[fig:app-minimum-span-complementary]
{\ref*{fig:app-minimum-span-complementary}b} gives the corresponding
curve-level check: after normalizing the sample size by $H_0 \varepsilon^{-2}$, the success-probability
transitions in different settings nearly align. Both the $H_\sigma \le H_0$
case and the $H_0<H_\sigma$ case are consistent with the predicted
$\min\{H_0,H_\sigma\}$ component in the sample complexity.

\begin{figure}[ht]
    \centering
    \includegraphics[width=.88\linewidth]{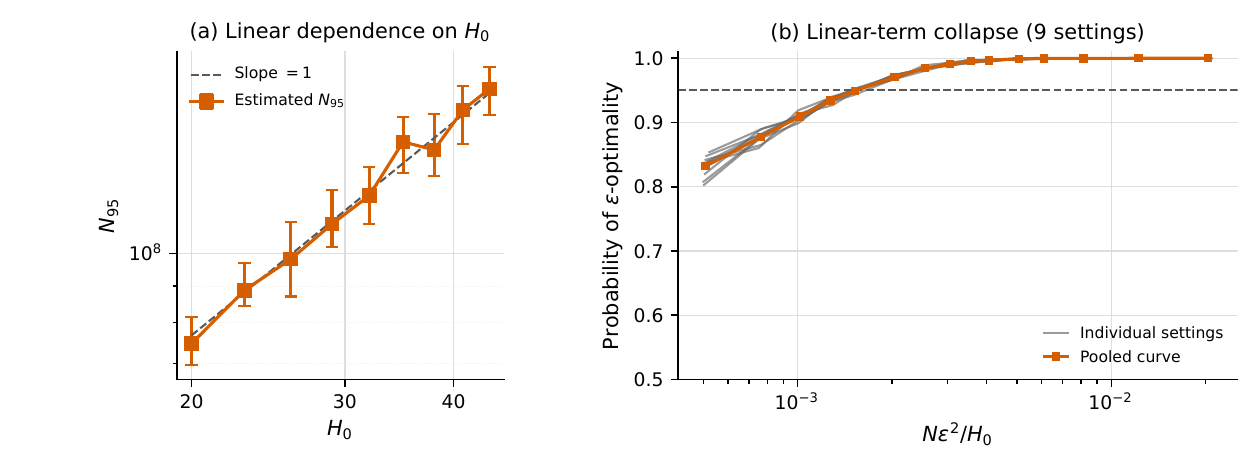}
    \caption{Complementary minimum-span experiment for nine settings
        with $H_0<H_\sigma$. (a) $N_{95}$ versus $H_0$, with the proportional
        reference $N_{95}\propto H_0$. (b) Success curves normalized by
        $N\varepsilon^2/H_0$; the dashed line marks the $0.95$ target.}
    \label{fig:app-minimum-span-complementary}
\end{figure}
\FloatBarrier

\subsection{Span-agnostic adaptation}
\label{app:selector-experiment-details}

Section~\mainSpanAgnosticAdaptationSection{} illustrates how
Algorithm~\mainAnchoredSpanAgnosticAlgorithm{} adapts its policy
family and effective horizon without knowing $H_0$ or $H_\sigma$. This
subsection describes the instance, implementation, and simulation design
behind Figure~\mainSelectorAdaptivityFigure{}.

\paragraph*{Instance for span-agnostic adaptation.}
We use a four-state instance in which action $a_i$ at the decision
state $S$ enters rewarding state $R_i$. With common scale $\tau$, the entry
probability, return probability, and reward for $a_1,a_2,a_3$ are respectively
$(0.2/\tau,0.04/\tau,0.992)$, $(2/\tau,2/\tau,1)$, and
$(0.1/\tau,0.02/\tau,0.9916)$. The TV radius is $\sigma$ at $(S,a_1)$ and
at every state-action pair in $R_1$, and is zero elsewhere; unshown
probability is assigned to a self-loop. Thus the three actions trade off
nominal value, robustness, and effective horizon.

\paragraph*{Span-agnostic algorithm implementation.}
We implement Algorithm~\mainAnchoredSpanAgnosticAlgorithm{} as stated
in Appendix~\ref{app:span-agnostic-horizon-calibration}. 
We divide the samples equally between the two policy families, setting
$N_\mathrm{nom}=N_\mathrm{rob}=N/2$.

The theorem leaves three universal numerical constants unspecified. We use
the fixed setting
\[
    C_{\mathrm{anc}}
    =
    2,
    \qquad
    C_{\Gamma}
    =
    1,
    \qquad
    C_{\mathrm{pen}}
    =
    6.
\]
This is an implementation convention, not an estimate of the best possible
constants.
The experiment documentation records the complete per-trial diagnostic
schema.

\paragraph*{Main adaptation experiment.}
We evaluate this instance and implementation in the experiment behind
Figure~\mainSelectorAdaptivityFigure{}. The experiment uses
$\varepsilon=0.002$ and nine $H_0$ settings obtained from
\[
    \tau\in\{16,20,25,32,40,50,64,80,100\}.
\]
For each setting, we test $15$ values of $\sigma H_0/\varepsilon$ between
$0.2$ and $2$. The sample budget per state-action pair is ten times
\[
    \frac{\min\{H_0,H_\sigma\}+\sigma H_\sigma^2}
    {\varepsilon^2}.
\]
We use $300$ trials at ratios between $0.7$ and $1.3$, where the
policy-family transition occurs, and $100$ trials elsewhere. Across
all $26{,}100$ trials, every numerical solver converges, every selected
policy is robustly $\varepsilon$-optimal, and every selected candidate has a
valid lower-confidence bound.

For panel~\mainSelectorAdaptivityPanel{d}, we use a separate $19$-point sample grid
on five minimum-span and five robustness-specific settings. Both methods
receive the same empirical transition counts in each trial. Each grid point
has at least $1000$ paired trials, with $2000$ near the two $N_{95}$
crossings. The uncertainty intervals use $1000$ paired bootstrap repetitions.
Each repetition resamples the joint outcome at every sample size: both methods
succeed, only the span-agnostic method succeeds, only the span-informed method
succeeds, or neither succeeds. We then recompute both crossings from the same
resampled trials. The comparison uses $270{,}000$ paired trials in total.

\end{document}